\documentclass[journal]{IEEEtran}
\usepackage{cite}
\usepackage[cmex10]{amsmath}

\usepackage{comment}

\usepackage{graphicx}
\usepackage{amsmath, amssymb}
\usepackage{amsfonts}
\usepackage{ascmac}
\usepackage{subcaption}
\usepackage{bm}
\usepackage{url}
\usepackage{authblk}
\usepackage{amsthm}
\usepackage{empheq}

\usepackage[
  colorlinks=true,
  linkcolor=blue,
  urlcolor=blue,
  citecolor=blue
]{hyperref}

\newcommand{\ali}[1]{\begin{align} #1 \end{align}}
\newcommand{\nn}{\nonumber}

\newcommand{\mat}[1]{\begin{bmatrix} #1 \end{bmatrix}}

\newcommand{\pa}{\mathop{}\!\partial}

\newtheorem{theorem}{Theorem}
\newtheorem{lemma}{Lemma}

\newtheorem{proposition}{Proposition}
\usepackage{mdframed}
\newmdtheoremenv{theorembox}{Theorem}

\begin{document}
\title{\Huge Closed-Form Steepest Descent Direction toward Flat Minima: Reducing Upper Bounds on the Loss Hessian Eigenspectrum in Neural Networks}

\author{
Yuto Omae$^1$, Kazuki Sakai$^2$, Yohei Kakimoto$^1$, 
Makoto Sasaki$^1$, Yusuke Sakai$^3$, Hirotaka Takahashi$^3$
\thanks{1: Nihon University (Japan)}
\thanks{2: National Institute of Technology, Nagaoka College (Japan)}
\thanks{3: Tokyo City University (Japan)}
}

\markboth{Preprint}%
{Y. Omae \MakeLowercase{\textit{et al.}}}
\maketitle

\begin{abstract}
One influential theory for explaining the generalization ability of neural networks (NNs) is the flatness hypothesis, which suggests that the flatness of the loss landscape is related to generalization performance.
In general, the flatness of the loss landscape is quantified by the eigenvalues of the Hessian matrix of the Taylor-expanded loss function. 
Several training algorithms have been proposed to reduce the eigenvalues of the loss Hessian. However, most existing studies focus on the design of training algorithms and do not clarify how the training data distribution and the internal parameters of NNs contribute to directions leading to flatter minima.
One direct way to achieve this is to analytically characterize the directions in which the eigenvalues decrease; however, deriving such directions is generally difficult. 
On the other hand, recent studies have reported the Wolkowicz-Styan (WS) upper bound, a theorem that analytically describes an upper bound on the largest eigenvalue of the cross-entropy (CE) loss Hessian in a three-layer hierarchical NN.
However, that study was limited to deriving the upper bound and did not derive its gradient. 
Therefore, in this study, we analytically derive the gradient of the WS upper bound and use its closed-form expression to characterize directions leading to flatter minima.
To examine whether this direction facilitates convergence to flatter minima, we propose a regularization method that updates the network parameters along the steepest descent direction of the WS upper bound. 
Results from several numerical experiments show that this regularization narrows the range of the Hessian eigenvalue spectrum, avoids both sharp minima and saddle points, and promotes convergence to flatter minima.
Therefore, we name this method Hessian Spectral Range (HSR) Regularization. 
Comparisons with existing regularization methods show that HSR Regularization outperforms Hessian Regularization and achieves solutions that are as flat as those obtained by Sharpness-Aware Minimization (SAM).
The applicability of the proposed method is limited because it only works for the combination of the CE loss and a three-layer hierarchical NN. 
However, to the best of the authors' knowledge, no previous study has reported a closed-form gradient that promotes convergence to flatter minima without relying on numerical approximations. 
Therefore, this study contributes to the theoretical development of NNs.
\end{abstract}

\begin{IEEEkeywords}
Neural network, Cross-entropy, Flat minima, Hessian, Eigenspectrum
\end{IEEEkeywords}

\section{Introduction}
Neural networks (NNs) are widely utilized in a broad range of tasks and have achieved state-of-the-art performance in numerous domains~\cite{chaiDeep2021}\cite{mehrishReview2023}\cite{arkhangelskayaDeep2023}.
On the other hand, the theoretical understanding of their generalization capabilities is still under development.
As a prominent theory concerning the generalization of NNs, the flatness hypothesis is widely recognized~\cite{hochreiterFlat1997}.
According to the flatness hypothesis, if the loss function exhibits a sharp landscape in the vicinity of a solution obtained through training, the generalization error tends to be large; conversely, if the loss function has a flat shape, the generalization error is expected to be small~\cite{liuHessian2023}\cite{aroraUnderstanding2022}.
To quantify the sharpness of the loss, the eigenspectrum of the Hessian matrix has been widely employed as a representative metric.
This is because when the loss function is Taylor-expanded around a critical point, its local curvature is characterized by the Hessian matrix appearing in the quadratic term~\cite{liuHessian2023}\cite{lyuUnderstanding2023}.
Previous studies have proposed several optimization algorithms aimed at reaching flat minima by mitigating sharpness.
Representative examples of such approaches include Hessian regularization~\cite{liuHessian2023} and sharpness-aware minimization (SAM)~\cite{foretSharpnessAware2021}, both of which have demonstrated improvements in test performance across various tasks.
However, these methods focus primarily on algorithmic design, and the factors that dictate the direction toward flat minima are not yet fully understood.
To theoretically comprehend the structural mechanisms that form flat minima, it is essential to analytically describe the direction leading toward them.

Therefore, the objective of this study is to derive a closed-form solution for the direction toward flat minima.
In this paper, we define this direction as the steepest descent direction of sharpness.
In other words, the research objective is to derive the parameter gradient of the maximum eigenvalue. However, because it is generally difficult to analytically express the maximum eigenvalue, obtaining its gradient analytically is inherently challenging.
Nevertheless, a recent study~\cite{omaeWolkowiczStyan2026} derived a closed-form solution for the upper bound of the maximum eigenvalue under the condition of the cross-entropy (CE) loss in three-layer hierarchical NNs.
This theorem describes the upper bound of the maximum eigenvalue based on the traces of the Hessian and the squared Hessian, which is referred to as the ``Wolkowicz-Styan (WS) upper bound.''
Although this function is expected to possess an analytical derivative, the aforementioned study did not extend to its derivation.
Therefore, this study attempts to describe the steepest descent direction of sharpness as a closed-form function by deriving the parameter gradient of the WS upper bound.
This enables us to investigate how the training data distribution and the internal network parameters influence the direction toward flat minima.
To the best of the authors' knowledge, no prior study has reported the derivation of the direction toward flat minima in a closed form.
This work provides a new foundation for the analytical understanding of flat minima.

In this paper, we name the optimization approach that moves in the steepest descent direction of the WS upper bound ``Hessian Spectral Range (HSR) Regularization.''
HSR regularization has the effects of decreasing the maximum eigenvalue and increasing the minimum eigenvalue of the Hessian matrix.
That is, this method has the effect of narrowing the range of the eigenspectrum, which is expected to prevent the model from falling into both sharp minima and saddle points.
In this work, we verify whether HSR regularization can achieve flatness comparable to existing methods, such as Hessian regularization and SAM.
At present, HSR regularization can only be applied to three-layer hierarchical NNs, which poses a significant limitation in practical applications.
However, because it can achieve flatness comparable to existing methods without relying on numerical approximations, the direction toward flat minima derived in this study can be regarded as a closed-form function of considerable value.
The above constitutes the academic contribution of this study to the NN domain.

\section{Related works}
\subsection{Flat minima}
In 1997, Hochreiter et al.~\cite{hochreiterFlat1997} argued that as a requirement for NNs with high generalization performance, it is important not only that the error is low but also that the errors in its vicinity are low, meaning that the loss landscape is flat.
Regarding this hypothesis, some studies have questioned its validity due to issues such as invariance under reparameterization~\cite{dinhSharp2017}.
On the other hand, numerous practical applications have reported that reaching flat minima improves generalization performance~\cite{chenWhen2022}\cite{huangSharpness2025}\cite{liVisualizing2018}.
Furthermore, it has been reported that several empirical techniques considered effective for improving the generalization performance of NNs may be related to the reduction of sharpness.
For instance, the verification of sharpness reduction effects achieved by batch normalization~\cite{keskarLargeBatch2017}\cite{ghorbaniInvestigation2019}\cite{lyuUnderstanding2023}, stochastic gradient descent~\cite{wuImplicit2023}\cite{weiHow2019}, and skip connections~\cite{liVisualizing2018} constitutes an intriguing area of research.
As can be seen from these examples, the pursuit of flat minima is considered an important perspective for constructing deep learning models with high generalization capabilities.

\subsection{Numerical Approaches to Eigenspectrum Analysis}
This sharpness is evaluated by the quadratic term when the loss function is Taylor-expanded around a critical point\cite{liuHessian2023}.
The reason for this is that the eigenvalues of the Hessian matrix represent the curvature\cite{aroraUnderstanding2022}.
While the eigenspectrum consists of multiple eigenvalues, the maximum eigenvalue in particular is utilized as a crucial indicator representing the curvature of the loss landscape\cite{lyuUnderstanding2023}.
The eigenspectrum can be obtained by solving the characteristic equation of the Hessian.
Letting $D$ denote the parameter size of the NN, the Hessian becomes a matrix of size $D \times D$.
When $D\ge 5$, the eigenvalues of the Hessian cannot be obtained analytically because a characteristic equation of degree five or higher does not possess a closed-form solution.
However, in modern deep learning, which is currently the mainstream, the parameter size $D$ of networks is exceedingly large.
For instance, in the implementation using PyTorch~\cite{Torchvision}, $D \sim 1.38\times10^8$ for VGG16~\cite{simonyanVery2014} and $D \sim 1.17\times10^7$ for ResNet18~\cite{kaimingDeep2016}.
To determine the eigenspectrum of such a massive Hessian matrix, numerical approximations are employed.
As prominent approaches for this purpose, Hutchinson's method\cite{hutchinsonStochastic1989} and the Lanczos method~\cite{lanczosIteration1950} are well known.
Hutchinson's method is a technique for estimating the Hessian trace, whereas the Lanczos method is used to estimate the eigenspectrum; by leveraging these methods, it is possible to numerically evaluate the sharpness.
In fact, several studies have proposed methods to compute the eigenspectrum of deep learning models using the Lanczos method~\cite{ghorbaniInvestigation2019}\cite{yaoPyHessian2020}, as well as techniques to calculate the Hessian trace via Hutchinson's method~\cite{liuHessian2023}\cite{dongHAWQV22020}.

\subsection{Analytical Approaches to Eigenspectrum Analysis}
On the other hand, the numerical approximation approach suffers from an inherent limitation in that it cannot clarify what causes the loss landscape to become sharp.
To achieve this, it is necessary to express the eigenvalues analytically.
As a pioneering study on the analytical computation of the Hessian, Bishop~\cite{bishopExact1992} proposed an extended backpropagation algorithm that precisely calculates all components of the Hessian matrix for feedforward networks with arbitrary topologies.
However, this study provides a foundational method for computing the Hessian and does not delve into the analytical representation of the eigenvalues themselves.
To express eigenvalues analytically, it is necessary to impose certain structural constraints on the network, making it difficult to achieve this for general networks with arbitrary layer structures.
For this reason, analytical studies have been conducted using simplified network architectures.
For instance, Singh et al.~\cite{singhAnalytic2021} derived a closed-form expression for the rank of the Hessian in networks utilizing linear activations.
Additionally, Wu et al.~\cite{wuDissecting2022} proposed a separation conjecture that approximates the layer-wise Hessian using Kronecker products, analytically explaining common structures such as the low-rank properties of the Hessian and the overlapping of eigenspaces among different models.
Furthermore, Singh et al.~\cite{singhCracking2025} obtained closed-form representations of the eigenvalues of the loss Hessian in linear networks with identity or ReLU activations.
While obtaining closed-form representations of eigenvalues is inherently difficult for non-linear activations, it is possible to derive upper bounds for them.
For example, Omae et al.~\cite{omaeWolkowiczStyan2026} derived an upper bound for the maximum eigenvalue under the CE loss in a three-layer hierarchical NN.
A key advantage of this approach is that it allows the activation function of the hidden layer to be chosen arbitrarily.

\subsection{Optimization Algorithms for Sharpness Reduction}
Based on various reports indicating that a sharper critical point of the loss function leads to a larger generalization error, several methods aimed at mitigating sharpness have been devised.
For instance, Yue et al.~\cite{yueSALR2024} proposed the Sharpness-Aware Learning Rate Scheduler, which dynamically adjusts the learning rate according to the sharpness of the loss landscape to promote convergence to flat minima.
Liu et al.~\cite{liuHessian2023} proposed Hessian regularization, which suppresses curvature by regularizing the trace of the Hessian matrix.
Sankar et al.~\cite{sankarDeeper2021} proposed Layerwise Hessian Trace Regularization to reduce the Hessian trace of each layer.
Luo et al.~\cite{luoExplicit2025} proposed an eigenvalue regularization method that explicitly suppresses the maximum eigenvalue of the Hessian.
In addition, Sharpness-Aware Minimization (SAM) is widely recognized as a representative approach for suppressing large eigenvalues of the Hessian~\cite{foretSharpnessAware2021}.
SAM is an optimization method that avoids sharp local minima by finding a perturbation that maximizes the loss in the vicinity of the parameters and subsequently minimizing this worst-case loss.
As theoretical advancements of SAM, modified versions have been proposed, such as a variant that overcomes the vulnerability to weight parameter scaling~\cite{kwonASAM2021} and another that functions effectively even with imbalanced data~\cite{zhouImbSAM2023a}.
Research dedicated to theoretically elucidating the effectiveness of SAM is also progressing~\cite{andriushchenkoUnderstanding2022}.
Beyond theoretical studies, it has been reported that both SAM and Hessian regularization have the effect of enhancing test performance in practical application tasks~\cite{chenWhen2022}\cite{huangSharpness2025}\cite{zhangNoise2024}.

\subsection{Originality of This Study}
Most of the aforementioned methods adopt an approach that numerically searches for the direction that reduces the sharpness of the loss function and subsequently updates the parameters in that direction. In other words, they can be regarded as techniques for estimating the direction toward flat minima through numerical computation. These methods are applicable to large-scale neural networks and are highly effective for the practical purpose of improving generalization performance.
On the other hand, it is not straightforward to understand how the numerically obtained direction relates to the distribution of training data or the internal parameter structure of the network. Therefore, an inherent limitation exists in terms of clarifying the underlying mechanisms behind the direction toward flat minima.

In this study, we adopt an analytical approach to address this issue. Specifically, we analytically derive the direction toward flat minima based on the eigenvalue statistics of the Hessian matrix. While analytical methods are generally applicable only to small-scale models or under specific assumptions, and thus may be inferior to numerical methods in terms of applicability, they offer a distinct advantage. Specifically, they allow the relationship between sharpness reduction and network structure to be described mathematically, which is expected to facilitate a theoretical understanding of neural networks.
Therefore, the numerical and analytical approaches are not in competition with each other, but rather serve different purposes. The former is useful as a practical optimization method for large-scale models, whereas the latter is valuable as a theoretical framework for understanding the behavior of neural networks. The originality of this study lies in analytically expressing the direction toward flat minima, thereby enabling its theoretical interpretation.

\begin{figure}[t] 
    \centering
    \includegraphics[scale=0.3]{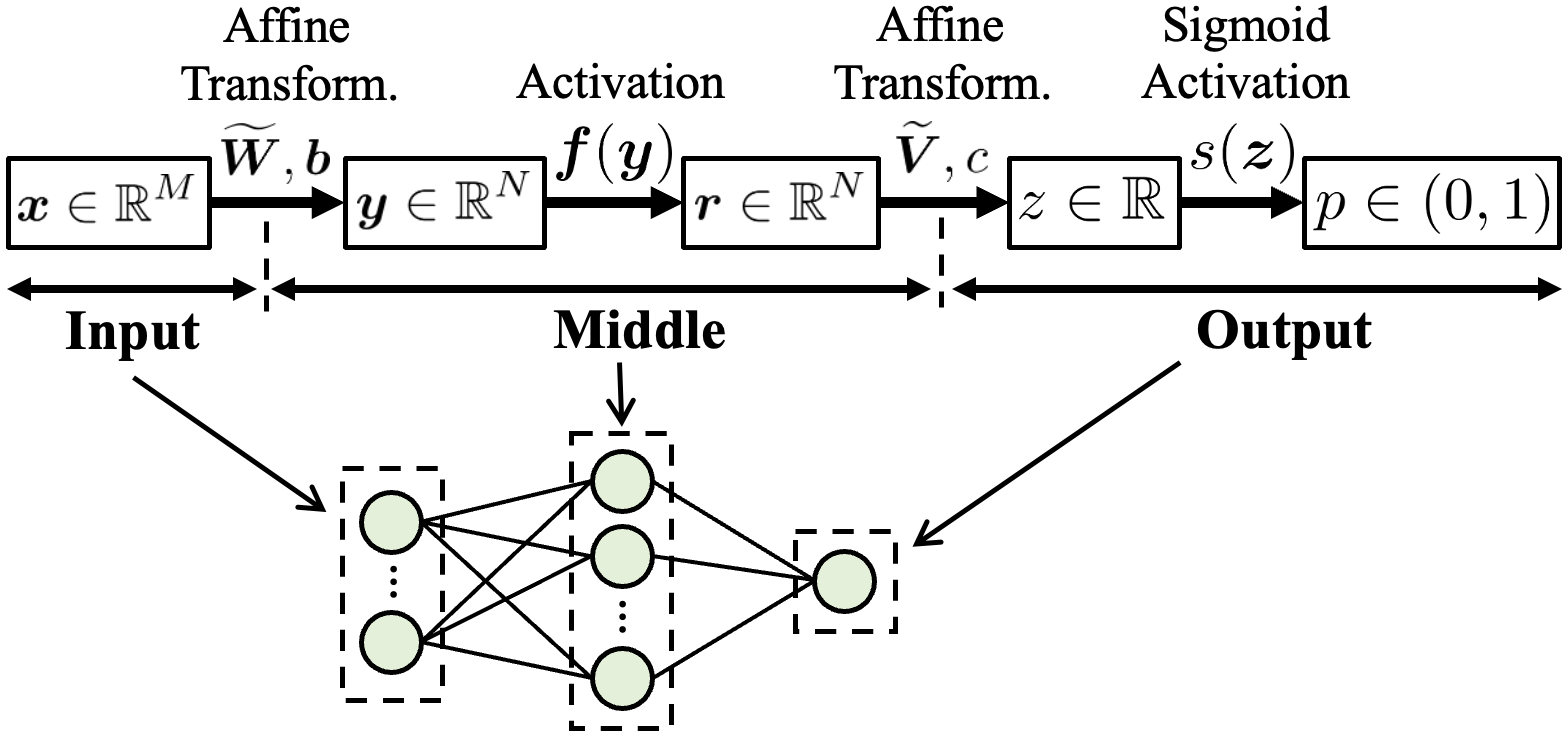}
    \caption{Three-layer hierarchical NN analyzed in this study.}
    \label{figg_nnimg}
\end{figure}

\section{Overview of the Previous Model}
In this study, we adopt the denominator layout for the arrangement of derivatives within matrices and vectors.
For further details, please refer to Appendix~\ref{secc_layout}.
Additionally, this work is a continuation of Omae et al.~\cite{omaeWolkowiczStyan2026}, and the definitions of all variables and functions are identical to those in the previous study.
Therefore, we provide a brief overview here.

\subsection{Model Assumptions and Loss Function}
The target model considered in this study is a three-layer hierarchical NN for binary classification.
Given an input $\bm{x}$, the estimated probability $p$ is computed as
\ali{
\bm{y} = \bm{W} \bm{h}(\bm{x}), \ 
\bm{r} = \bm{f}(\bm{y}), \
z = \bm{V} \bm{h}(\bm{r}), \
p &= s(z), \nn
}
where $s$ denotes the sigmoid function, which outputs the estimated probability.
Specifically, an input is classified into class 1 if $p \ge 0.5$, and into class 0 if $p < 0.5$.
Here, $\bm{x} \in \mathbb{R}^M$, $\bm{y} \in \mathbb{R}^N$, and $\bm{r} \in \mathbb{R}^N$, where $M$ and $N$ denote the dimensionalities of the input and the hidden layer, respectively.
Note that $\bm{f}(\bm{y})$ represents the activation function of the hidden layer; for further details, refer to Eq.~\eqref{eqq_bm_f}.
We also note that various expressions for the activation functions used in this paper are summarized in Appendix~\ref{secc_act}.

The function $\bm{h}$ prepends a 1 to the 0-th dimension of the input vector.
Specifically, $\bm{h}(\bm{x}) = [1 \quad \bm{x}^\top]^\top  \in \mathbb{R}^{M+1}$ and $\bm{h}(\bm{r}) = [1 \quad \bm{r}^\top]^\top  \in \mathbb{R}^{N+1}$.
The matrices $\bm{W}$ and $\bm{V}$ represent the affine mapping parameters, defined as $\bm{W} = [\bm{b} \quad \widetilde{\bm{W}}] \in \mathbb{R}^{N \times (M+1)}$ and $\bm{V} = [c \quad \widetilde{\bm{V}}] \in \mathbb{R}^{1 \times (N+1)}$.
Here, $\widetilde{\bm{W}} \in \mathbb{R}^{N \times M}$ and $\widetilde{\bm{V}} \in \mathbb{R}^{1 \times N}$ are the weight parameters, whereas $\bm{b} \in \mathbb{R}^N$ and $c \in \mathbb{R}$ correspond to the bias parameters.
The network architecture using this notation is illustrated in Fig.~\ref{figg_nnimg}.
As can be seen, the object of analysis in this study is a fundamental NN with a single hidden layer.

Letting $\bm{w}$ and $\bm{v}$ denote the vectors obtained by vertically concatenating the column vectors of $\bm{W}$ and $\bm{V}$, respectively, the vector $\bm{\theta}$ formed by vertically stacking these vectors constitutes the full parameter set of the NN. That is,
\ali{
\bm{\theta} = \mat{\bm{w} \\ \bm{v}} \in \mathbb{R}^{D}, \ 
\bm{w} \in \mathbb{R}^{(M+1)N}, \ 
\bm{v} \in \mathbb{R}^{N+1}. \nn
}
Note that in this paper, the set of real matrices with $D$ rows and 1 column, denoted as $\mathbb{R}^{D \times 1}$, is abbreviated as $\mathbb{R}^{D}$.
In addition, the total number of dimensions is given by $D = MN+2N+1$.
For the specific arrangement of the elements in $\bm{w}$ and $\bm{v}$, please refer to Eq.~(3) in~\cite{omaeWolkowiczStyan2026}.

Since this model is an NN designed for binary classification, the loss function is given by the binary cross-entropy.
Specifically, letting $p_i$ denote the estimated probability that the $i$-th data sample belongs to class 1, and $q_i \in \{0, 1\}$ denote its ground-truth label, the loss function is expressed as
 \ali{
L(\bm{\theta}) = - \sum_{i=1}^{I} \big( q_i \log p_i(\bm{\theta})  + (1-q_i) \log(1- p_i(\bm{\theta})) \big), \nn
}
where $I$ represents the total number of training data samples.

\subsection{Wolkowicz-Styan Upper Bound}

While the previous study~\cite{omaeWolkowiczStyan2026} denoted the Hessian of the loss $L$ with respect to $\bm{\theta}$ as $\bm{H}_L(\bm{\theta}, \bm{\theta})$, this paper denotes it as $\bm{H}_L(\bm{\theta})$ to save space.
Furthermore, let $\lambda_1 \geq \lambda_2 \geq \cdots \geq \lambda_{D}$ represent the eigenvalues of $\bm{H}_L(\bm{\theta})$.
That is, $\lambda_1$ and $\lambda_{D}$ correspond to the maximum and minimum eigenvalues, respectively.
By applying Eq.~(2.3) of Wolkowicz and Styan in \cite{wolkowiczBounds1980}, the upper bound of the maximum eigenvalue $\lambda_1$ of $\bm{H}_L(\bm{\theta})$ can be expressed as
\ali{
&\lambda_1 \leq \lambda_\mathrm{sup}(\bm{\theta}) = \mu(\bm{\theta}) + \sqrt{D-1} \sigma(\bm{\theta}), \label{eq_main_theorem} \\
&\mu(\bm{\theta}) = \frac{1}{D}\mathrm{tr} ( \bm{H}_L(\bm{\theta}) ), \sigma(\bm{\theta})^2 = \frac{1}{D}\mathrm{tr} ( \bm{H}_L(\bm{\theta})^2 ) 
- \mu(\bm{\theta})^2. \label{eq_main_theorem_musig}
}
Here, $\mu(\bm{\theta})$ and $\sigma(\bm{\theta})^2$ represent the mean and variance of the eigenspectrum, respectively.
The term $\lambda_\mathrm{sup}(\bm{\theta})$ denotes the WS upper bound, which is a function comprising three arguments: $D$, $\mathrm{tr} ( \bm{H}_L(\bm{\theta}))$, and $\mathrm{tr} ( \bm{H}_L(\bm{\theta})^2)$.
Omae et al.~\cite{omaeWolkowiczStyan2026} obtained the WS upper bound of the NN as a closed-form function by deriving $\mathrm{tr} ( \bm{H}_L(\bm{\theta}) )$ and $\mathrm{tr} ( \bm{H}_L(\bm{\theta})^2 )$ in closed form under the CE loss in a three-layer hierarchical NN.
According to their work~\cite{omaeWolkowiczStyan2026}, $\mathrm{tr} ( \bm{H}_L(\bm{\theta}))$ is given by
\ali{
&\mathrm{tr}(\bm{H}_L(\bm{\theta})) = \sum_{i=1}^I s^\prime(z_i)(1+\bm{f}(\bm{y}_i)^\top\bm{f}(\bm{y}_i)) \nn \\  
&+\sum_{i=1}^I (1+\bm{x}_i^\top \bm{x}_i) \Big( s^\prime(z_i) \| \bm{F}^{\prime}(\bm{y}_i) \widetilde{\bm{V}}^\top\|^2 + \delta_i \widetilde{\bm{V}} \bm{f}^{\prime\prime}(\bm{y}_i) \Big). \label{eqq_tr_h1_origin}
}
The terms $s^\prime(z_i)$ and $\delta_i$ correspond to $s^\prime(z)$ and $\delta$ for the $i$-th data sample, respectively, which are given by Eqs. (52) and (53) in~\cite{omaeWolkowiczStyan2026} as
\ali{
s^\prime(z) = p(1-p), \ \delta = p - q. \nn
}
Similarly, $\mathrm{tr} ( \bm{H}_L(\bm{\theta})^2 )$ is given by
\ali{
\mathrm{tr} ( \bm{H}_L(\bm{\theta})^2 ) 
&= \sum_{i=1}^I \sum_{j=1}^I (1+ \bm{x}_i^\top\bm{x}_j)^2 \phi_{ij} \nn \\
&+ 2\sum_{i=1}^I \sum_{j=1}^I (1+ \bm{x}_i^\top\bm{x}_j) \psi_{ij} \nn \\
&+\sum_{i=1}^I \sum_{j=1}^I (1+ \bm{f}(\bm{y}_i)^\top\bm{f}(\bm{y}_j))^2 \omega_{ij}. \label{eqq_tr_h2_origin}
}
This expression was derived by Omae et al.~\cite{omaeWolkowiczStyan2026}. 
Here, the component $\phi_{ij}$ is defined as
\ali{
\phi_{ij} &= \bm{o}_{i}^\top \bm{\Phi}_{ij} \bm{o}_{j} \in \mathbb{R}, \ \bm{\Phi}_{ij} \in \mathbb{R}^{2 \times 2},  \label{eq_phi_ij} \\
(\bm{\Phi}_{ij})_{11} &= 
(\widetilde{\bm{V}} \bm{F}^\prime(\bm{y}_j) \bm{F}^\prime(\bm{y}_i)\widetilde{\bm{V}}^\top)^2, \label{eqq_Phi_11} \\
(\bm{\Phi}_{ij})_{12} &= \widetilde{\bm{V}}  \bm{F}^\prime(\bm{y}_i) \mathrm{diag}(\widetilde{\bm{V}}^\top) \bm{F}^{\prime\prime}(\bm{y}_j) \bm{F}^\prime(\bm{y}_i)\widetilde{\bm{V}}^\top, \label{eqq_Phi_12} \\
(\bm{\Phi}_{ij})_{21} &=  \widetilde{\bm{V}} \bm{F}^\prime(\bm{y}_j) \mathrm{diag}(\widetilde{\bm{V}}^\top)\bm{F}^{\prime\prime}(\bm{y}_i) \bm{F}^\prime(\bm{y}_j)\widetilde{\bm{V}}^\top, \label{eqq_Phi_21} \\
 (\bm{\Phi}_{ij})_{22} &= \widetilde{\bm{V}} \bm{F}^{\prime\prime}(\bm{y}_i) \bm{F}^{\prime\prime}(\bm{y}_j) \widetilde{\bm{V}}^\top. \label{eqq_Phi_22}
}
For the detailed definitions of $\bm{F}^\prime(\bm{y})$ and $\bm{F}^{\prime\prime}(\bm{y})$, refer to Eq.~\eqref{eqq_bm_matF_k}.
Furthermore, $\bm{o}$ is defined as
\ali{
\bm{o} =
\mat{
s^\prime(z) &
\delta
}^\top, \label{eqq_o_def}
}
where $\bm{o}_i$ and $\bm{o}_j$ correspond to $\bm{o}$ for the $i$-th and $j$-th data samples, respectively.
$\psi_{ij}$ is defined as
\ali{
\psi_{ij} &= \bm{o}_{i}^\top \bm{\Psi}_{ij} \bm{o}_{j} \in \mathbb{R}, \ 
\bm{\Psi}_{ij} \in \mathbb{R}^{2 \times 2}, \label{eq_psi_ij}\\
(\bm{\Psi}_{ij})_{11} &= (1+\bm{f}(\bm{y}_i)^\top \bm{f}(\bm{y}_j)) \widetilde{\bm{V}} \bm{F}^\prime(\bm{y}_j) \bm{F}^\prime(\bm{y}_i) \widetilde{\bm{V}}^\top, \label{eqq_Psi_11} \\
(\bm{\Psi}_{ij})_{12} &= \bm{f}(\bm{y}_i)^\top \bm{F}^\prime(\bm{y}_j) \bm{F}^\prime(\bm{y}_i) \widetilde{\bm{V}}^\top, \label{eqq_Psi_12} \\
(\bm{\Psi}_{ij})_{21} &= \widetilde{\bm{V}} \bm{F}^\prime(\bm{y}_j) \bm{F}^\prime(\bm{y}_i) \bm{f}(\bm{y}_j), \label{eqq_Psi_21} \\
 (\bm{\Psi}_{ij})_{22} &= \bm{f}^\prime(\bm{y}_i)^\top \bm{f}^\prime(\bm{y}_j). \label{eqq_Psi_22}
}
For $\bm{f}^\prime(\bm{y})$, refer to Eq.~\eqref{eqq_bm_vecF_k}.
$\omega_{ij}$ is defined as
\ali{
\omega_{ij} &= \bm{o}_{i}^\top \bm{\Omega}_{ij} \bm{o}_{j} \in (0, 1/16], \ 
\bm{\Omega}_{ij} = \mat{1 & 0\\0 & 0}. \label{eq_ome_ij}
}

\section{Steepest Descent Direction of the WS Upper Bound}
\subsection{Main Theorem}
In our previous study~\cite{omaeWolkowiczStyan2026}, we derived the upper bound of the maximum eigenvalue, $\lambda_\mathrm{sup}(\bm{\theta})$, as a closed-form function, as shown in Eq.~\eqref{eq_main_theorem}.
As a continuation of that work, this study derives the steepest descent direction of the WS upper bound as a closed-form function.
Since this direction corresponds to the negative gradient of the WS upper bound, it is given as follows.
\begin{theorem}[]\label{}
\ali{
&- \frac{\pa \lambda_\mathrm{sup}(\bm{\theta})}{\pa \bm{\theta}} = 
- \frac{\pa \mu(\bm{\theta})}{\pa \bm{\theta}} -
\sqrt{D-1} \frac{\pa \sigma(\bm{\theta})}{\pa \bm{\theta}}, \label{eqq_main_theo_direction}\\
&\frac{\pa \mu(\bm{\theta})}{\pa \bm{\theta}} = \frac{1}{D} \frac{\pa \mathrm{tr} (\bm{H}_L(\bm{\theta}))}{\pa \bm{\theta}} \label{eqq_main_theo_mu}, \\
&\frac{\pa \sigma(\bm{\theta})}{\pa \bm{\theta}} 
=  
\frac{1}{2 \sigma(\bm{\theta}) D} \frac{\pa \mathrm{tr} (\bm{H}_L(\bm{\theta})^2)}{\pa \bm{\theta}} 
- \frac{\mu(\bm{\theta})}{\sigma(\bm{\theta})} \frac{\pa \mu(\bm{\theta})}{\pa \bm{\theta}} \label{eqq_main_theo_sigma}. 
}
\end{theorem}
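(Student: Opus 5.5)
The theorem is essentially a chain-rule computation applied to the definitions in Eqs.~\eqref{eq_main_theorem} and \eqref{eq_main_theorem_musig}. The traces $\mathrm{tr}(\bm{H}_L(\bm{\theta}))$ and $\mathrm{tr}(\bm{H}_L(\bm{\theta})^2)$ are given in closed form by Eqs.~\eqref{eqq_tr_h1_origin} and \eqref{eqq_tr_h2_origin}, and $D$ is a constant independent of $\bm{\theta}$. The plan is therefore to treat $\lambda_\mathrm{sup}$ as a composite of these two scalar functions of $\bm{\theta}$ and differentiate in the denominator layout. Every derivative of a scalar with respect to $\bm{\theta}$ is then a column vector in $\mathbb{R}^D$, so sums and scalar multiples combine directly.

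\textbf{Step 1: linearity.} Since $\sqrt{D-1}$ is a constant, differentiating Eq.~\eqref{eq_main_theorem} gives
\begin{align}
\frac{\pa \lambda_\mathrm{sup}}{\pa \bm{\theta}} = \frac{\pa \mu}{\pa \bm{\theta}} + \sqrt{D-1}\,\frac{\pa \sigma}{\pa \bm{\theta}}. \nn
\end{align}
Negating both sides yields Eq.~\eqref{eqq_main_theo_direction}. Applying the same linearity to $\mu = \frac{1}{D}\mathrm{tr}(\bm{H}_L)$ gives Eq.~\eqref{eqq_main_theo_mu} immediately.

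\textbf{Step 2: the $\sigma$ term.} Rather than differentiating a square root directly, I will differentiate the identity
\begin{align}
\sigma^2 = \frac{1}{D}\mathrm{tr}(\bm{H}_L^2) - \mu^2 \nn
\end{align}
on both sides. The left side gives $2\sigma\,\pa\sigma/\pa\bm{\theta}$. The right side gives
\begin{align}
\frac{1}{D}\,\frac{\pa\,\mathrm{tr}(\bm{H}_L^2)}{\pa\bm{\theta}} - 2\mu\,\frac{\pa\mu}{\pa\bm{\theta}}. \nn
\end{align}
Dividing by $2\sigma$ produces exactly Eq.~\eqref{eqq_main_theo_sigma}.

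\textbf{Main obstacle: when the derivatives exist.} The only delicate point is justifying differentiability and the division by $\sigma$. Two conditions are needed.

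First, the two traces must be differentiable in $\bm{\theta}$. From Eqs.~\eqref{eqq_tr_h1_origin}--\eqref{eq_ome_ij}, they are finite sums of products of $s'(z_i)$, $\delta_i$, $\bm{f}(\bm{y}_i)$, $\bm{F}'(\bm{y}_i)$, $\bm{F}''(\bm{y}_i)$ and entries of $\widetilde{\bm{V}}$. These are smooth provided the hidden activation is sufficiently differentiable, requiring roughly $f\in C^3$ so that $\bm{F}''$ is differentiable. I would state this as a standing assumption on the activation.

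Second, $\sigma(\bm{\theta})>0$ is required. Since $\sigma^2$ is the variance of the eigenvalues of the real symmetric matrix $\bm{H}_L$, we have $\sigma\ge 0$, and $\sigma=0$ exactly when $\bm{H}_L$ is a multiple of the identity. On the open set where $\sigma>0$, the map $\sigma=\sqrt{\sigma^2}$ is smooth, so the chain-rule step is valid there. The theorem should be read on that set, with $\sigma$ taken as the nonnegative root. The explicit formulas for $\pa\,\mathrm{tr}(\bm{H}_L)/\pa\bm{\theta}$ and $\pa\,\mathrm{tr}(\bm{H}_L^2)/\pa\bm{\theta}$ are not needed for this theorem; I expect them to be derived in subsequent results.
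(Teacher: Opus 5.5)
Your proof is correct and follows the paper's own argument: linearity gives the first two identities, and the $\sigma$ identity is the chain rule on $\sigma=(\sigma^2)^{1/2}$ together with $\pa\mu^2/\pa\bm{\theta}=2\mu\,\pa\mu/\pa\bm{\theta}$. Your implicit differentiation of $\sigma^2$ is the same computation, and your conditions ($f\in C^3$, $\sigma(\bm{\theta})>0$) make explicit what the paper leaves implicit.
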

\begin{proof} See Appendix \ref{secc_h2_tr_grad}. \end{proof}
By clarifying the gradients of the Hessian trace and the squared Hessian trace, the closed-form expression for the steepest descent direction of the WS upper bound can be obtained.
Furthermore, since $\pa / \pa \bm{\theta}$ comprises $\pa / \pa \bm{w}$ and $\pa / \pa \bm{v}$ as its components, it is sufficient to clarify $\pa \mathrm{tr}(\bm{H}_L(\bm{\theta})) / \pa \bm{w}$, $\pa \mathrm{tr}(\bm{H}_L(\bm{\theta})) / \pa \bm{v}$, $\pa \mathrm{tr}(\bm{H}_L(\bm{\theta})^2) / \pa \bm{w}$, and $\pa \mathrm{tr}(\bm{H}_L(\bm{\theta})^2) / \pa \bm{v}$.

Observing the main theorem reveals that the steepest descent direction of $\lambda_\mathrm{sup}(\bm{\theta})$ is composed of the steepest descent directions of the mean $\mu(\bm{\theta})$ and the standard deviation $\sigma(\bm{\theta})$ of the eigenspectrum.
That is, moving in the direction of $-\pa \lambda_\mathrm{sup}(\bm{\theta}) / \pa \bm{\theta}$ represents an operation that reduces the WS upper bound $\lambda_\mathrm{sup}(\bm{\theta})$ by shifting down the entire eigenspectrum and suppressing its variance.
The operation that contributes most to reducing both the mean and the standard deviation is bringing the maximum eigenvalue $\lambda_1$ closer to the mean.
Therefore, moving in the steepest descent direction of the WS upper bound can be expected to have the effect of decreasing the maximum eigenvalue $\lambda_1$.
Consequently, $-\pa \lambda_\mathrm{sup}(\bm{\theta}) / \pa \bm{\theta}$ can be regarded as a direction toward flat minima.

\begin{figure}[t] 
    \centering
    \includegraphics[scale=0.7]{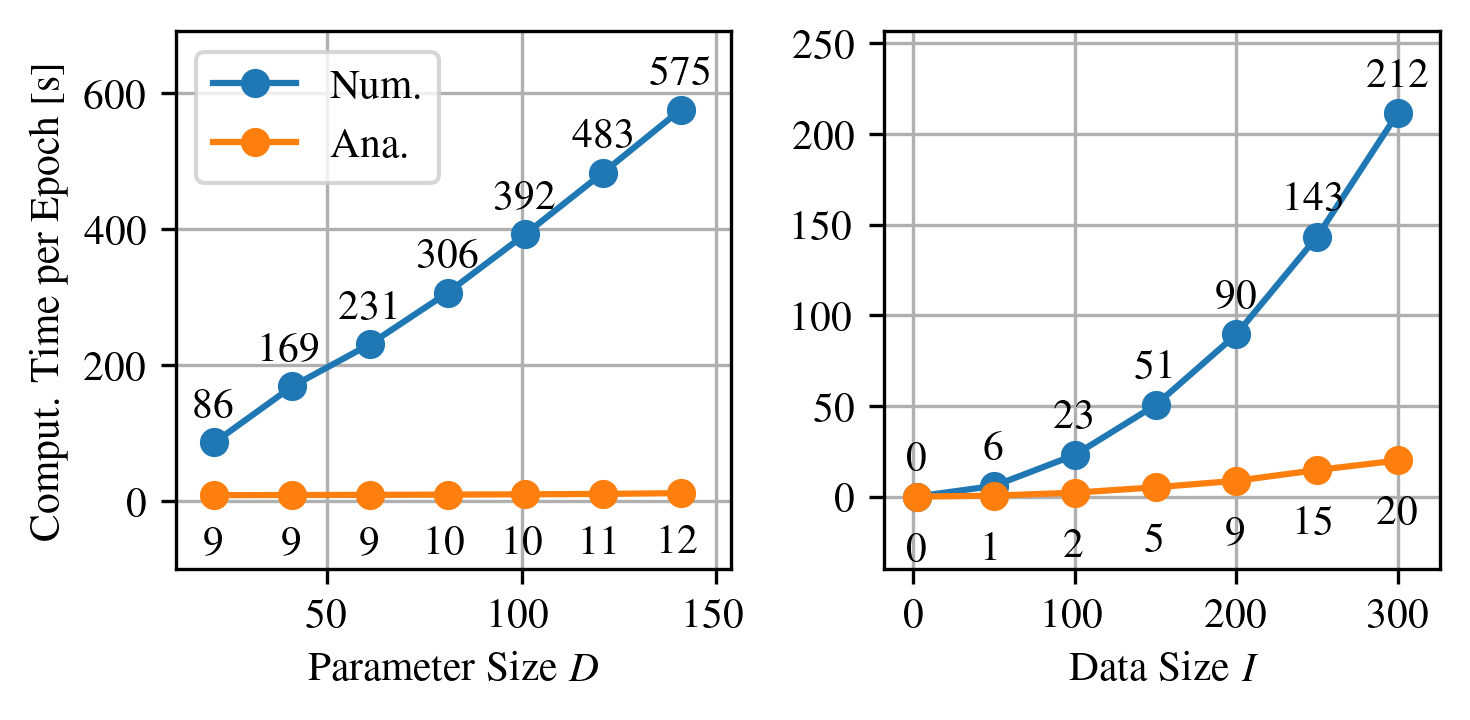}
    \caption{
    Computation time of $\pa \lambda_\mathrm{sup}(\bm{\theta}) / \pa \bm{\theta}$.
    ``Num.'' denotes the numerical solution, and ``Ana.'' denotes the analytical solution.
    Left: Variation with respect to the dimensionality $D$, with the training data size fixed at $I=200$.
    Right: Variation with respect to the data size $I$, with the dimensionality fixed at $D=21$.
    All computations were executed via serial processing on an Apple M2 CPU (clock frequency: 3.49 GHz).
    The three-point finite difference method was used for numerical differentiation.
    The activation function of the hidden layer is sigmoid.
    }
    \label{figg_compt_time}
\end{figure}

Although it is possible to solve $\pa \lambda_\mathrm{sup}(\bm{\theta}) / \pa \bm{\theta}$ using numerical differentiation, this approach does not allow us to analyze how the direction toward flat minima is characterized.
Therefore, this study aims to clarify the analytical solution, which is established in Theorems \ref{theoo_h1_w}, \ref{theoo_h1_v}, \ref{theoo_h2_w}, and \ref{theoo_h2_v}.
From another perspective, the analytical solution offers a distinct advantage in terms of computation time.
Fig.~\ref{figg_compt_time} shows the computation times of the numerical and analytical solutions required to derive $\pa \lambda_\mathrm{sup}(\bm{\theta}) / \pa \bm{\theta}$.
Fig.~\ref{figg_compt_time} (left) represents the case where the parameter size $D$ of the NN is increased by increasing the dimensionality of the hidden layer $N$ while fixing the input dimensionality at $M=2$.
It can be observed that the computation time increases linearly with respect to $D$.
The WS upper bound is a function based on the traces of the Hessian and the squared Hessian, and the computation time of the trace is affected by the number of diagonal components $D$.
Therefore, it is considered that the computation time increases linearly with $D$.
Fig.~\ref{figg_compt_time} (right) illustrates the case where the training data size $I$ is increased.
Although the details are described later, $\pa \lambda_\mathrm{sup}(\bm{\theta}) / \pa \bm{\theta}$ includes a double sum with respect to $I$.
Thus, the computation time is expected to be proportional to $I^2$.
Although an increasing trend in computation time is observed with respect to both $D$ and $I$, the analytical solution completes the computation in less time than the numerical solution in either case.

\begin{figure}[t] 
    \centering
    \includegraphics[scale=0.63]{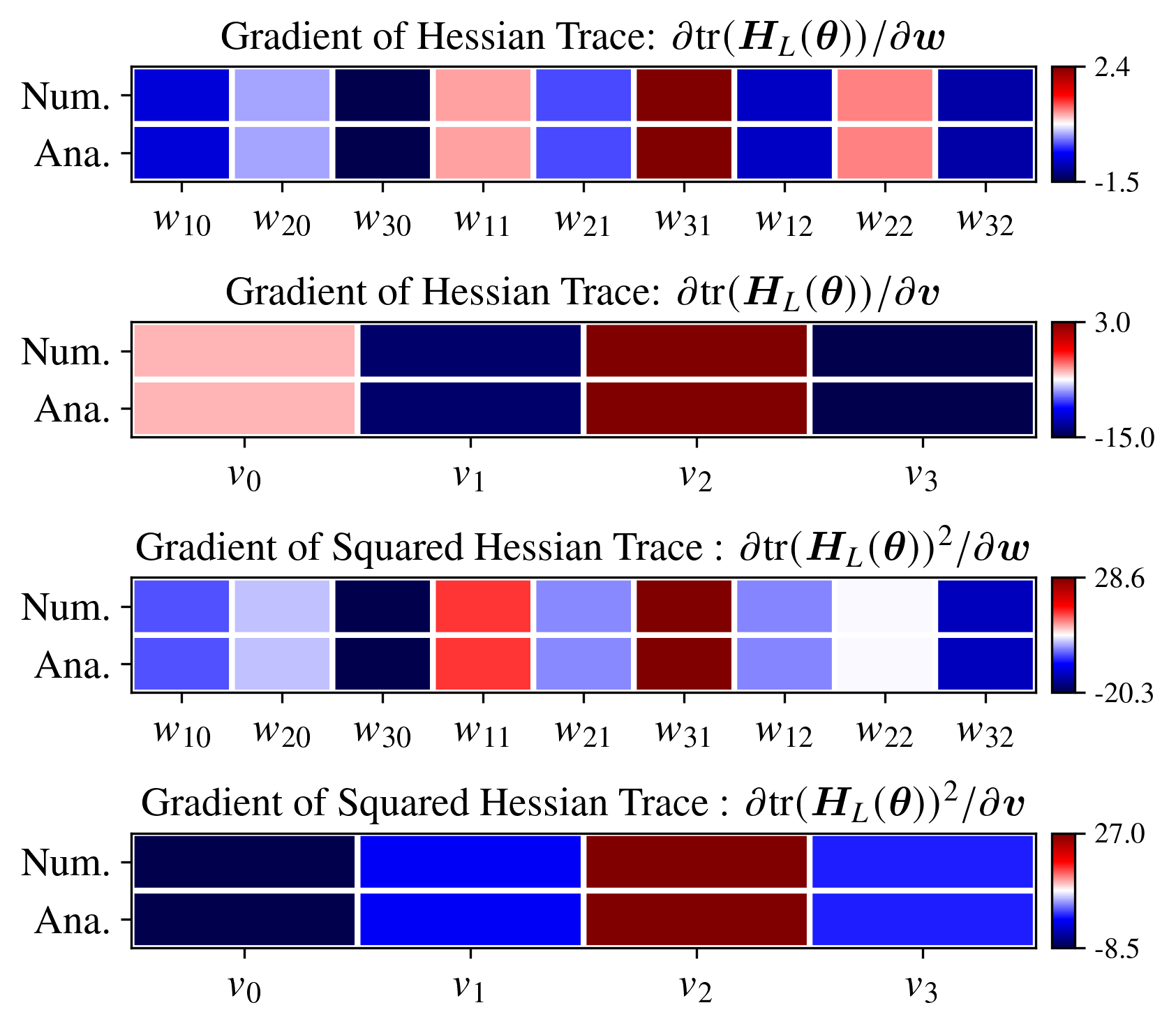}
    \caption{
    Comparison between numerical and analytical solutions. 
    ``Num.'' denotes the numerical solution, and ``Ana.'' denotes the analytical solution.
    The three-point finite difference method was used for numerical differentiation.
    The activation function of the hidden layer is sigmoid.
     }
    \label{figg_error}
\end{figure}

\subsection{Gradients of the Hessian Trace}
Here, we address the derivation of $\pa \mathrm{tr}(\bm{H}_L(\bm{\theta})) / \pa \bm{w}$ and $\pa \mathrm{tr}(\bm{H}_L(\bm{\theta})) / \pa \bm{v}$, which are the components of the steepest descent direction of the WS upper bound.
The gradient of $\mathrm{tr} (\bm{H}_L(\bm{\theta}))$ with respect to the affine parameters $\bm{w}$ from the input layer to the hidden layer is given as follows.
\begin{theorem}\label{theoo_h1_w}
\ali{
\frac{\pa \mathrm{tr}(\bm{H}_L(\bm{\theta}))}{\pa \bm{w}} 
&= \sum_{i=1}^I \bigg( \bm{h}(\bm{x}_i) \otimes \sum_{a \in \mathfrak{A}} \bm{\mathcal{W}}_{i}^a \bigg).\label{eqq_h1_w_div_theorem}
}
\end{theorem}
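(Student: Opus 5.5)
The plan is to differentiate the closed form \eqref{eqq_tr_h1_origin} term by term with respect to $\bm{w}$, exploiting the fact that $\bm{w}$ enters each sample's contribution only through the pre-activation $\bm{y}_i=\bm{W}\bm{h}(\bm{x}_i)$. Writing $\mathrm{tr}(\bm{H}_L(\bm{\theta}))=\sum_{i=1}^I T_i(\bm{y}_i,z_i)$, where $T_i$ is the $i$-th summand, each $T_i$ depends on $\bm{W}$ through two routes: directly via $\bm{y}_i$ (in $\bm{f}(\bm{y}_i)$, $\bm{F}^\prime(\bm{y}_i)$, $\bm{f}^{\prime\prime}(\bm{y}_i)$), and indirectly via $z_i=\bm{V}\bm{h}(\bm{f}(\bm{y}_i))$, which affects $s^\prime(z_i)=p_i(1-p_i)$ and $\delta_i=p_i-q_i$. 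By the chain rule, in the denominator layout,
\begin{align}
\frac{\pa T_i}{\pa \bm{w}} = \frac{\pa \bm{y}_i}{\pa \bm{w}}\frac{\pa T_i}{\pa \bm{y}_i}, \qquad \frac{\pa \bm{y}_i}{\pa \bm{w}} = \bm{h}(\bm{x}_i)\otimes \bm{I}_N, \nn
\end{align}
where the second identity follows from the column-stacking convention for $\bm{w}$ (Eq.~(3) of \cite{omaeWolkowiczStyan2026}). Since $(\bm{h}(\bm{x}_i)\otimes\bm{I}_N)\bm{g}=\bm{h}(\bm{x}_i)\otimes\bm{g}$ for any $\bm{g}\in\mathbb{R}^N$, the result has exactly the Kronecker form of \eqref{eqq_h1_w_div_theorem}, with $\sum_{a\in\mathfrak{A}}\bm{\mathcal{W}}_i^a=\pa T_i/\pa\bm{y}_i\in\mathbb{R}^N$.

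It then remains to compute $\pa T_i/\pa\bm{y}_i$, and I would define the index set $\mathfrak{A}$ by the product-rule pieces. These are:
\begin{enumerate}
\item the derivative of $s^\prime(z_i)$ and of $\delta_i$ through $z_i$, using $\pa z_i/\pa\bm{y}_i=\bm{F}^\prime(\bm{y}_i)\widetilde{\bm{V}}^\top$, $\pa s^\prime/\pa z=s^\prime(1-2p)$ and $\pa\delta/\pa z=s^\prime$;
\item the derivative of $\bm{f}(\bm{y}_i)^\top\bm{f}(\bm{y}_i)$, which gives $2\bm{F}^\prime(\bm{y}_i)\bm{f}(\bm{y}_i)$;
\item the derivative of $\|\bm{F}^\prime(\bm{y}_i)\widetilde{\bm{V}}^\top\|^2=\sum_n \widetilde V_n^2 f^\prime(y_n)^2$, which gives $2\,\mathrm{diag}(\widetilde{\bm{V}}^\top)^2\bm{F}^\prime\bm{F}^{\prime\prime}\bm{1}$, i.e.\ the vector with entries $2\widetilde V_n^2 f^\prime f^{\prime\prime}$;
\item the derivative of $\widetilde{\bm{V}}\bm{f}^{\prime\prime}(\bm{y}_i)$, which gives $\bm{F}^{\prime\prime\prime}(\bm{y}_i)\widetilde{\bm{V}}^\top$.
\end{enumerate}
Because $\bm{f}$ acts elementwise, all Jacobians involved are diagonal, so these pieces reduce to elementwise products expressible with the $\bm{F}^{(k)}$ and $\mathrm{diag}$ notation of the appendix. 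The factor $(1+\bm{x}_i^\top\bm{x}_i)$ is constant in $\bm{w}$.

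I expect the main obstacle to be bookkeeping rather than analysis. First, the Jacobian $\pa\bm{y}/\pa\bm{w}$ must be matched correctly with the paper's vectorization order and denominator layout so that the Kronecker factor appears as $\bm{h}(\bm{x}_i)\otimes(\cdot)$ and not $(\cdot)\otimes\bm{h}(\bm{x}_i)$. Second, one must avoid missing the indirect $z_i$-dependence inside $s^\prime$ and $\delta$. I would verify each $\bm{\mathcal{W}}_i^a$ against finite differences, as in Fig.~\ref{figg_error}.
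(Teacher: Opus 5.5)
Your proposal is correct and follows essentially the same route as the paper. The paper also differentiates \eqref{eqq_tr_h1_origin} summand by summand with the product rule, using $\pa\bm{y}/\pa\bm{w}=\bm{h}(\bm{x})\otimes\bm{E}_N$ and the identity \eqref{eqq_kron_1}; you factor this Jacobian out once and work with $\pa T_i/\pa\bm{y}_i$, whereas the paper applies it separately inside each of its three lemmas. Note that $\bm{\mathcal{W}}_i^{\mathrm{I}},\bm{\mathcal{W}}_i^{\mathrm{II}},\bm{\mathcal{W}}_i^{\mathrm{III}}$ are already fixed by Eqs.~\eqref{eqq_W_i_I}--\eqref{eqq_W_i_III}, so rather than redefining $\mathfrak{A}$, assemble your four pieces with their coefficients $s^{\prime\prime}(z_i)$, $s^\prime(z_i)$, $\delta_i$, $(1+\bm{x}_i^\top\bm{x}_i)$, $(1+\bm{f}(\bm{y}_i)^\top\bm{f}(\bm{y}_i))$ and check that the total matches their sum (it does).
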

\begin{proof}
See Appendix \ref{secc_H1_grad_w}.
\end{proof}
Where
\ali{
a \in \mathfrak{A} := \{ \mathrm{I}, \mathrm{II}, \mathrm{III} \} \nn
}
and
\ali{
\bm{\mathcal{W}}_{i}^\mathrm{I} &= (1+\bm{x}_i^\top \bm{x}_i) \Big(s^{\prime\prime}(z_i) \bm{F}^{\prime}(\bm{y}_i) \widetilde{\bm{V}}^\top \widetilde{\bm{V}} \bm{F}^{\prime}(\bm{y}_i) \nn \\
&+ 2 s^\prime(z_i) \mathrm{diag}(\widetilde{\bm{V}}^\top ) \bm{F}^{\prime\prime}(\bm{y}_i) \Big)
\bm{F}^{\prime}(\bm{y}_i) \widetilde{\bm{V}}^\top, \label{eqq_W_i_I}\\
\bm{\mathcal{W}}_{i}^\mathrm{II} &= (1+\bm{x}_i^\top \bm{x}_i) \nn \\
&\times \Big(s^\prime(z_i) \bm{F}^\prime (\bm{y}_i) \widetilde{\bm{V}}^\top \widetilde{\bm{V}} \bm{f}^{\prime\prime}(\bm{y}_i) + \delta_i \bm{F}^{\prime\prime\prime}(\bm{y}_i) \widetilde{\bm{V}}^\top \Big), \label{eqq_W_i_II} \\
\bm{\mathcal{W}}_{i}^\mathrm{III} &= s^{\prime\prime}(z_i) (1+\bm{f}(\bm{y}_i)^\top \bm{f}(\bm{y}_i) )\bm{F}^{\prime}(\bm{y}_i) \widetilde{\bm{V}}^\top \nn \\
&+ 2 s^\prime(z_i) \bm{F}^\prime(\bm{y}_i)  \bm{f}(\bm{y}_i). \label{eqq_W_i_III}
}
For $\bm{f}^{\prime\prime}(\bm{y})$ and $\bm{F}^{\prime\prime\prime}(\bm{y})$, refer to Eqs.~\eqref{eqq_bm_vecF_k} and \eqref{eqq_bm_matF_k}.
The term $s^{\prime\prime}(z)$ represents the second derivative of the sigmoid function, which is given by
\ali{
s^{\prime\prime}(z) = (1-2p) (1-p) p. \nn
}
The dimensions of these variables are given by
\ali{
\frac{\pa \mathrm{tr}(\bm{H}_L(\bm{\theta}))}{\pa \bm{w}}  \in \mathbb{R}^{(M+1)N}, \ 
\bm{\mathcal{W}}_{i}^\mathrm{I}, \bm{\mathcal{W}}_{i}^\mathrm{II}, \bm{\mathcal{W}}_{i}^\mathrm{III} 
\in \mathbb{R}^{N}, \forall i \in \mathbb{N}_{\le I}.  \nn
}

The gradient of the Hessian trace with respect to the affine parameters $\bm{v}$ from the hidden layer to the output layer is given as follows.
\begin{theorem}\label{theoo_h1_v}
\ali{
\frac{\pa \mathrm{tr}(\bm{H}_L(\bm{\theta}))}{\pa \bm{v}}
&= \sum_{i=1}^I \sum_{a \in \mathfrak{A}} \bm{\mathcal{V}}_{i}^a.
\label{eqq_h1_v_div_theorem}
}
\end{theorem}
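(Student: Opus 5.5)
We differentiate the closed form \eqref{eqq_tr_h1_origin} of $\mathrm{tr}(\bm{H}_L(\bm{\theta}))$ term by term with respect to $\bm{v} = [c \ \widetilde{\bm{V}}]^\top \in \mathbb{R}^{N+1}$. The expression decomposes into three summands per sample $i$, and these match the index set $\mathfrak{A}$. Term I is $(1+\bm{x}_i^\top\bm{x}_i)s^\prime(z_i)\|\bm{F}^\prime(\bm{y}_i)\widetilde{\bm{V}}^\top\|^2$. Term II is $(1+\bm{x}_i^\top\bm{x}_i)\delta_i\widetilde{\bm{V}}\bm{f}^{\prime\prime}(\bm{y}_i)$. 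Term III is $s^\prime(z_i)(1+\bm{f}(\bm{y}_i)^\top\bm{f}(\bm{y}_i))$. Each $\bm{\mathcal{V}}_i^a \in \mathbb{R}^{N+1}$ will be defined as $\pa(\text{term } a)/\pa\bm{v}$. Linearity of differentiation over $\sum_i$ then gives \eqref{eqq_h1_v_div_theorem} directly.

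The key observation is that $\bm{y}_i = \bm{W}\bm{h}(\bm{x}_i)$ does not depend on $\bm{v}$. Hence $\bm{f}(\bm{y}_i)$, $\bm{F}^\prime(\bm{y}_i)$, $\bm{f}^{\prime\prime}(\bm{y}_i)$ and $\bm{x}_i$ are all constants with respect to $\bm{v}$. The only $\bm{v}$-dependence enters through two channels:
\begin{itemize}
\item through $z_i = \bm{V}\bm{h}(\bm{r}_i)$, which gives $\pa z_i/\pa\bm{v} = \bm{h}(\bm{f}(\bm{y}_i))$;
\item through the explicit $\widetilde{\bm{V}}$, whose derivative is $\pa \widetilde{\bm{V}}^\top/\pa\bm{v} = [\bm{0}\ \bm{I}_N]$ in denominator layout, i.e. an embedding $\bm{E} = [\bm{0}_N \ \bm{I}_N]^\top$ mapping $\mathbb{R}^N \to \mathbb{R}^{N+1}$.
\end{itemize}
By the chain rule we have $\pa s^\prime(z_i)/\pa\bm{v} = s^{\prime\prime}(z_i)\bm{h}(\bm{f}(\bm{y}_i))$. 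Since $\delta_i = p_i - q_i$, we also have $\pa\delta_i/\pa\bm{v} = s^\prime(z_i)\bm{h}(\bm{f}(\bm{y}_i))$.

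Applying the product rule then gives the three pieces.
\begin{itemize}
\item Term I: $\bm{\mathcal{V}}_i^{\mathrm{I}} = (1+\bm{x}_i^\top\bm{x}_i)\big(s^{\prime\prime}(z_i)\|\bm{F}^\prime(\bm{y}_i)\widetilde{\bm{V}}^\top\|^2\bm{h}(\bm{f}(\bm{y}_i)) + 2s^\prime(z_i)\bm{E}\bm{F}^\prime(\bm{y}_i)^2\widetilde{\bm{V}}^\top\big)$, using that $\bm{F}^\prime$ is diagonal, hence symmetric.
\item Term II: $\bm{\mathcal{V}}_i^{\mathrm{II}} = (1+\bm{x}_i^\top\bm{x}_i)\big(s^\prime(z_i)\widetilde{\bm{V}}\bm{f}^{\prime\prime}(\bm{y}_i)\bm{h}(\bm{f}(\bm{y}_i)) + \delta_i\bm{E}\bm{f}^{\prime\prime}(\bm{y}_i)\big)$.
\item Term III: $\bm{\mathcal{V}}_i^{\mathrm{III}} = s^{\prime\prime}(z_i)(1+\bm{f}(\bm{y}_i)^\top\bm{f}(\bm{y}_i))\bm{h}(\bm{f}(\bm{y}_i))$.
\end{itemize}
Summing these over $a$ and $i$ yields the theorem.

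The only delicate point is bookkeeping rather than substance. We must respect the denominator layout and the ordering of $\bm{v}$ (bias $c$ first, as in Eq.~(3) of \cite{omaeWolkowiczStyan2026}), so that the bias component receives only the $\bm{h}(\bm{f}(\bm{y}_i))$ first entry $1$ and not the $\bm{E}$-terms. To guard against sign and transpose slips, I would verify the result against the finite-difference comparison of Fig.~\ref{figg_error}.
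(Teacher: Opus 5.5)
Your proposal is correct and follows essentially the same route as the paper. Both split \eqref{eqq_tr_h1_origin} by the product rule into the three per-sample terms, use $\pa s^\prime(z)/\pa\bm{v} = s^{\prime\prime}(z)\bm{h}(\bm{f}(\bm{y}))$ and $\pa\delta/\pa\bm{v} = s^\prime(z)\bm{h}(\bm{f}(\bm{y}))$, and note that $\bm{f}(\bm{y})$ and its derivatives do not depend on $\bm{v}$. Your embedding $\bm{E}$ is the paper's $[\bm{0}_N \ \bm{E}_N]^\top$, so the identities $\bm{E}\bm{F}^\prime(\bm{y}_i)=\bm{F}^\prime_0(\bm{y}_i)$, $\bm{E}\bm{f}^{\prime\prime}(\bm{y}_i)=\bm{f}^{\prime\prime}_0(\bm{y}_i)$ and $\|\bm{F}^\prime(\bm{y}_i)\widetilde{\bm{V}}^\top\|^2=\widetilde{\bm{V}}\bm{F}^\prime(\bm{y}_i)\bm{F}^\prime(\bm{y}_i)\widetilde{\bm{V}}^\top$ turn your three expressions into \eqref{eqq_V_i_I}--\eqref{eqq_V_i_III} (write $\pa\widetilde{\bm{V}}^\top/\pa\bm{v}$ directly as the $(N+1)\times N$ matrix $\bm{E}$, not as $[\bm{0}\ \bm{I}_N]$, to stay in denominator layout).
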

\begin{proof}
See Appendix \ref{secc_H1_grad_v}.
\end{proof}
Where
\ali{
\bm{\mathcal{V}}_{i}^\mathrm{I} &= 
(1+\bm{x}_i^\top \bm{x}_i) \Big(s^{\prime\prime}(z_i) \bm{h}(\bm{f}(\bm{y}_i)) \widetilde{\bm{V}} \bm{F}^{\prime}(\bm{y}_i) \nn \\ 
&+  2 s^\prime(z_i) \bm{F}^\prime_0(\bm{y}_i) \Big)  \bm{F}^{\prime}(\bm{y}_i) \widetilde{\bm{V}}^\top,   \label{eqq_V_i_I}\\
\bm{\mathcal{V}}_{i}^\mathrm{II} &= 
 (1+\bm{x}_i^\top \bm{x}_i) \nn \\ 
 & \times \big( s^\prime(z_i) \bm{h}(\bm{f}(\bm{y}_i)) \widetilde{\bm{V}} \bm{f}^{\prime\prime}(\bm{y}_i)
+ \delta_i \bm{f}^{\prime\prime}_0(\bm{y}_i) \big),  \label{eqq_V_i_II}\\
\bm{\mathcal{V}}_{i}^\mathrm{III} &= 
s^{\prime\prime}(z_i) (1+\bm{f}(\bm{y}_i)^\top \bm{f}(\bm{y}_i) ) \bm{h}(\bm{f}(\bm{y}_i)) \label{eqq_V_i_III}.
}
For $\bm{F}^\prime_0(\bm{y})$ and $\bm{f}^{\prime\prime}_0(\bm{y})$, refer to Eq.~\eqref{eqq_bm_matvecF_k_0}.
The dimensions of these variables are given by
\ali{
\frac{\pa \mathrm{tr}(\bm{H}_L(\bm{\theta}))}{\pa \bm{v}}, 
\bm{\mathcal{V}}_{i}^\mathrm{I}, 
\bm{\mathcal{V}}_{i}^\mathrm{II}, 
\bm{\mathcal{V}}_{i}^\mathrm{III} 
\in \mathbb{R}^{N+1}, \forall i \in \mathbb{N}_{\le I}. \nn
}
The top two panels of Fig.~\ref{figg_error} compare the numerical and analytical solutions of $\pa \mathrm{tr}(\bm{H}_L(\bm{\theta}))/\pa \bm{w}$ and $\pa \mathrm{tr}(\bm{H}_L(\bm{\theta}))/\pa \bm{v}$.
The difference norms between the numerical and analytical solutions are both on the order of $\sim 10^{-9}$, confirming that they are strongly consistent.
This trend remains identical for parameters generated with different random seeds.
These results demonstrate that the analytical solutions derived in this study are correct.

\subsection{Gradients of the Squared Hessian Trace}
Here, we address $\pa\mathrm{tr} (\bm{H}_L(\bm{\theta})^2) / \pa \bm{w}$ and $\pa\mathrm{tr} (\bm{H}_L(\bm{\theta})^2) / \pa \bm{v}$, which are the components of the steepest descent direction of the WS upper bound.
The gradient of the squared Hessian trace with respect to the affine parameters $\bm{w}$ from the input layer to the hidden layer is given as follows.
\begin{theorem}\label{theoo_h2_w}
\ali{
&\frac{\pa \mathrm{tr} (\bm{H}_L(\bm{\theta})^2)}{\pa \bm{w}} =
2\sum_{i=1}^I \sum_{j=1}^I 
\bigg( \bm{h}(\bm{x}_i) \otimes  \sum_{a \in \mathfrak{B}}
\bm{\mathcal{W}}_{ij}^a \bigg). \label{eqq_H2_w_div_fin}
}
\end{theorem}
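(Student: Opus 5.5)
The plan is to differentiate the closed form of $\mathrm{tr}(\bm{H}_L(\bm{\theta})^2)$ in Eq.~\eqref{eqq_tr_h2_origin} term by term with respect to $\bm{w}$.

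The $\bm{w}$-dependence enters only through the pre-activations $\bm{y}_i=\bm{W}\bm{h}(\bm{x}_i)$, for $i=1,\dots,I$. These in turn drive $\bm{f}(\bm{y}_i)$, $\bm{F}^\prime(\bm{y}_i)$, $\bm{F}^{\prime\prime}(\bm{y}_i)$, $\bm{f}^\prime(\bm{y}_i)$, and, through $z_i=\bm{V}\bm{h}(\bm{f}(\bm{y}_i))$, the outputs $s^\prime(z_i)$ and $\delta_i$. The first step is the chain-rule identity valid in the denominator layout used here: for any scalar $g$,
\begin{align}
\frac{\pa g}{\pa \bm{w}}=\sum_{k=1}^I \bm{h}(\bm{x}_k)\otimes \frac{\pa g}{\pa \bm{y}_k}. \nn
\end{align}
This follows from $\bm{y}_k=(\bm{h}(\bm{x}_k)^\top\otimes \bm{I}_N)\bm{w}$ and the column-stacking of $\bm{W}$. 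It reduces the theorem to computing the $N$-vectors $\pa\,\mathrm{tr}(\bm{H}_L^2)/\pa\bm{y}_k$. The factor $2$ and the form $\bm{h}(\bm{x}_i)\otimes\sum_a\bm{\mathcal{W}}_{ij}^a$ should then arise from the symmetry of the double sums in $(i,j)$.

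Concretely, each summand is a product $\kappa_{ij}\,\bm{o}_i^\top\bm{K}_{ij}\bm{o}_j$, where $\kappa_{ij}$ is one of the data-kernel factors $(1+\bm{x}_i^\top\bm{x}_j)^2$, $2(1+\bm{x}_i^\top\bm{x}_j)$, or $(1+\bm{f}(\bm{y}_i)^\top\bm{f}(\bm{y}_j))^2$. The first two are $\bm{w}$-independent. The matrix $\bm{K}_{ij}\in\{\bm{\Phi}_{ij},\bm{\Psi}_{ij},\bm{\Omega}_{ij}\}$ depends on $\bm{y}_i$ and $\bm{y}_j$.

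Differentiating with respect to $\bm{y}_k$ only produces terms with $k=i$ or $k=j$. Relabelling indices in the $k=j$ terms, using $\bm{\Phi}_{ji}$ versus $\bm{\Phi}_{ij}$ (and similarly for $\bm{\Psi}$), folds everything into $2\sum_{i,j}$ with the derivative taken with respect to $\bm{y}_i$ alone. Here I must check how $\bm{\Phi}_{ij}^\top$ relates to $\bm{\Phi}_{ji}$; from Eqs.~\eqref{eqq_Phi_11}--\eqref{eqq_Phi_22} and \eqref{eqq_Psi_11}--\eqref{eqq_Psi_22} one has $(\bm{\Phi}_{ij})_{12}=(\bm{\Phi}_{ji})_{21}$, and likewise for $\bm{\Psi}$, so $\bm{o}_i^\top\bm{K}_{ij}\bm{o}_j=\bm{o}_j^\top\bm{K}_{ji}\bm{o}_i$ and the folding is legitimate.

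Next I compute the elementary derivatives.
\begin{itemize}
\item The output factors satisfy $\pa\bm{o}_i/\pa\bm{y}_i = \bm{F}^\prime(\bm{y}_i)\widetilde{\bm{V}}^\top\,[s^{\prime\prime}(z_i)\;\; s^\prime(z_i)]$, since $\pa z_i/\pa\bm{y}_i=\bm{F}^\prime(\bm{y}_i)\widetilde{\bm{V}}^\top$ and $\pa\delta_i/\pa z_i=s^\prime(z_i)$.
\item The diagonal activation matrices differentiate via $\pa(\bm{a}^\top\bm{F}^{(k)}(\bm{y})\bm{b})/\pa\bm{y}=\mathrm{diag}(\bm{a})\bm{F}^{(k+1)}(\bm{y})\bm{b}$. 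This is exactly the mechanism that produced $\bm{F}^{\prime\prime\prime}$ and $\mathrm{diag}(\widetilde{\bm{V}}^\top)$ in Theorem~\ref{theoo_h1_w}.
\item The kernel factor satisfies $\pa(1+\bm{f}(\bm{y}_i)^\top\bm{f}(\bm{y}_j))^2/\pa\bm{y}_i = 2(1+\bm{f}_i^\top\bm{f}_j)\bm{F}^\prime(\bm{y}_i)\bm{f}(\bm{y}_j)$.
\end{itemize}

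Applying the product rule to each of the three families gives, in turn:
\begin{itemize}
\item a term from differentiating $\bm{o}_i$;
\item a term from differentiating the entries of $\bm{K}_{ij}$ in $\bm{y}_i$;
\item for the $\omega$-family, a term from the kernel factor.
\end{itemize}
These are collected into the vectors $\bm{\mathcal{W}}_{ij}^a$, $a\in\mathfrak{B}$, one label per (family, source-of-derivative) pair.

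The main obstacle is the entrywise differentiation of $\bm{\Phi}_{ij}$ and $\bm{\Psi}_{ij}$. Entries such as $(\bm{\Phi}_{ij})_{12}$ contain $\bm{F}^\prime(\bm{y}_i)$ twice, on both sides of $\mathrm{diag}(\widetilde{\bm{V}}^\top)\bm{F}^{\prime\prime}(\bm{y}_j)$, while $(\bm{\Phi}_{ij})_{11}$ is a square. Each occurrence must be handled and rewritten in a clean matrix form using diagonal commutativity, e.g.\ $\bm{F}^\prime\mathrm{diag}(\bm{a})=\mathrm{diag}(\bm{a})\bm{F}^\prime$. I will do this entry by entry, treating $\bm{y}_j$ as fixed, and finally verify the result against the numerical gradient as in Fig.~\ref{figg_error}.
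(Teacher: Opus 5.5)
Your plan is correct and is essentially the paper's proof. The paper likewise applies the product rule to $\bm{o}_i^\top\bm{\Phi}_{ij}\bm{o}_j$ (and to the $\bm{\Psi}$ and $\omega$ analogues), uses $\pa\bm{y}/\pa\bm{w}=\bm{h}(\bm{x})\otimes\bm{E}_N$ to write each entry's derivative as $\bm{h}(\bm{x}_i)\otimes(\cdot)_{ij}+\bm{h}(\bm{x}_j)\otimes(\cdot)_{ji}$, and folds the double sum via $\bm{\Phi}_{ij}^\top=\bm{\Phi}_{ji}$, $\bm{\Psi}_{ij}^\top=\bm{\Psi}_{ji}$ and $\omega_{ij}=\omega_{ji}$ to get the factor $2$. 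One bookkeeping note: in the statement $\mathfrak{B}=\{\Phi,\Psi,\Omega\}$ labels only the three families, so $\bm{\mathcal{W}}_{ij}^a=\bm{\mathcal{A}}_{ij}^a+\bm{\mathcal{B}}_{ij}^a$ combines both derivative sources (exactly your $\bm{y}_i$-partial of the $a$-summand), rather than there being one label per (family, source) pair.
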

\begin{proof}
See Appendix \ref{secc_H2_grad_w}.
\end{proof}
Where
\ali{
&\bm{\mathcal{W}}_{ij}^a = \bm{\mathcal{A}}_{ij}^a + \bm{\mathcal{B}}_{ij}^a, \ 
a \in \mathfrak{B} := \{\Phi, \Psi, \Omega\}, \nn \\
&\bm{\mathcal{A}}_{ij}^\Phi =
(1+ \bm{x}_i^\top\bm{x}_j)^2 \bm{G}^\Phi_{ij} (\bm{o}_i \otimes \bm{o}_j), \label{eqq_Aij_Phi} \\
&\bm{\mathcal{A}}_{ij}^\Psi =
2(1+ \bm{x}_i^\top\bm{x}_j) \bm{G}^\Psi_{ij} (\bm{o}_i \otimes \bm{o}_j), \label{eqq_Aij_Psi} \\
&\bm{\mathcal{A}}_{ij}^\Omega = (1+ \bm{f}(\bm{y}_i)^\top \bm{f}(\bm{y}_j))^2 s^{\prime\prime}(z_i) s^\prime(z_j) \bm{F}^{\prime}(\bm{y}_i) \widetilde{\bm{V}}^\top, \label{eqq_Aij_Ome}\\
&\bm{\mathcal{B}}_{ij}^\Phi =
(1+ \bm{x}_i^\top\bm{x}_j)^2 \bm{F}^\prime (\bm{y}_i) \widetilde{\bm{V}}^\top \bm{s}^{\prime\prime/\prime}(z_i)^\top \bm{\Phi}_{ij} \bm{o}_j, \label{eqq_Bij_Phi} \\
&\bm{\mathcal{B}}_{ij}^\Psi =
2(1+ \bm{x}_i^\top\bm{x}_j) \bm{F}^\prime (\bm{y}_i) \widetilde{\bm{V}}^\top \bm{s}^{\prime\prime/\prime}(z_i)^\top \bm{\Psi}_{ij} \bm{o}_j, \label{eqq_Bij_Psi} \\
&\bm{\mathcal{B}}_{ij}^\Omega = 2 s^\prime(z_i) s^\prime(z_j) (1+ \bm{f}(\bm{y}_i)^\top \bm{f}(\bm{y}_j)) \bm{F}^\prime(\bm{y}_i) \bm{f}(\bm{y}_j), \label{eqq_Bij_Ome}\\
&\bm{s}^{\prime\prime/\prime}(z) = \mat{s^{\prime\prime}(z) & s^{\prime}(z)}^\top. \label{eqq_spp_sp}
}
Here, $\bm{G}^\Phi_{ij}$ is given by
\ali{
\bm{G}^\Phi_{ij} &= \mat{
\bm{\mathfrak{a}}^\Phi_{ij} &
\bm{\mathfrak{b}}^\Phi_{ij} & 
\bm{\mathfrak{c}}^\Phi_{ij} & 
\bm{\mathfrak{d}}^\Phi_{ij}  
}\in \mathbb{R}^{N \times 4}, \label{eqq_G_phi_ij}\\
 \bm{\mathfrak{a}}^\Phi_{ij} &= 2 (\bm{\Phi}_{ij})_{11}^{1/2} \bm{F}^{\prime\prime}(\bm{y}_{i}) \bm{F}^\prime(\bm{y}_{j}) \widetilde{\bm{V}}^{\odot 2 \top}, \label{eqq_a_phi_ij}\\
 \bm{\mathfrak{b}}^\Phi_{ij} &= 2
\bm{F}^\prime(\bm{y}_{i})
\bm{F}^{\prime\prime}(\bm{y}_{i}) 
\bm{F}^{\prime\prime}(\bm{y}_{j}) 
\widetilde{\bm{V}}^{\odot 3 \top}, \label{eqq_b_phi_ij}\\
 \bm{\mathfrak{c}}^\Phi_{ij} &= \bm{F}^{\prime\prime\prime}(\bm{y}_{i})
\bm{F}^\prime(\bm{y}_{j})^{2}
\widetilde{\bm{V}}^{\odot 3 \top}, \label{eqq_c_phi_ij}\\
 \bm{\mathfrak{d}}^\Phi_{ij}&= \bm{F}^{\prime\prime\prime}(\bm{y}_{i})
\bm{F}^{\prime\prime}(\bm{y}_{j})
\widetilde{\bm{V}}^{\odot 2 \top}, \label{eqq_d_phi_ij}
}
and $\bm{G}^\Psi_{ij}$ is given by
\ali{
&\bm{G}^\Psi_{ij} = \mat{
\bm{\mathfrak{a}}^\Psi_{ij} &
\bm{\mathfrak{b}}^\Psi_{ij} & 
\bm{\mathfrak{c}}^\Psi_{ij} & 
\bm{\mathfrak{d}}^\Psi_{ij}  
}\in \mathbb{R}^{N \times 4} , \label{eqq_G_psi_ij}\\
&\bm{\mathfrak{a}}^\Psi_{ij} =
\Big(\bm{F}^\prime(\bm{y}_i) \bm{f}(\bm{y}_j) \widetilde{\bm{V}} \bm{F}^\prime(\bm{y}_i) \nn \\
&+
\mathrm{diag}(\widetilde{\bm{V}}^\top) \bm{F}^{\prime\prime}(\bm{y}_i)(1+\bm{f}(\bm{y}_i)^\top \bm{f}(\bm{y}_j)) \Big) \bm{F}^\prime(\bm{y}_j)\widetilde{\bm{V}}^\top, \label{eqq_a_psi_ij}\\
&\bm{\mathfrak{b}}^\Psi_{ij} = 
(\bm{F}^\prime(\bm{y}_i)^2 + \bm{F}^{\prime\prime}(\bm{y}_i) \bm{F}(\bm{y}_i) ) \bm{F}^{\prime}(\bm{y}_j) \widetilde{\bm{V}}^\top, \label{eqq_b_psi_ij}\\
&\bm{\mathfrak{c}}^\Psi_{ij} = 
\bm{F}^{\prime\prime}(\bm{y}_i) \bm{F}(\bm{y}_j) \bm{F}^\prime(\bm{y}_j) \widetilde{\bm{V}}^\top, \label{eqq_c_psi_ij}\\
&\bm{\mathfrak{d}}^\Psi_{ij} = \bm{F}^{\prime\prime}(\bm{y}_i) \bm{f}^\prime(\bm{y}_j). \label{eqq_d_psi_ij}
}
The dimensions of these variables are given by
\ali{
&\frac{\pa \mathrm{tr} (\bm{H}_L(\bm{\theta})^2)}{\pa \bm{w}} \in \mathbb{R}^{(M+1)N}, 
\bm{\mathcal{A}}_{ij}^a, 
\bm{\mathcal{B}}_{ij}^a, 
\in \mathbb{R}^{N}, 
\forall a \in \mathfrak{B}, \nn \\
&\bm{\mathfrak{a}}^{b}_{ij}, 
\bm{\mathfrak{b}}^{b}_{ij},  
\bm{\mathfrak{c}}^{b}_{ij}, 
\bm{\mathfrak{d}}^{b}_{ij}
\in \mathbb{R}^{N}, 
\forall b \in \mathfrak{B}\setminus\{\Omega\}, 
\forall i, j \in \mathbb{N}_{\le I}. \nn
}

The gradient of the squared Hessian trace with respect to the affine parameters $\bm{v}$ from the hidden layer to the output layer is given as follows.
\begin{theorem}[]\label{theoo_h2_v}
\ali{
&\frac{\pa \mathrm{tr} (\bm{H}_L(\bm{\theta})^2)}{\pa \bm{v}}
=
2\sum_{i=1}^I \sum_{j=1}^I \sum_{a \in \mathfrak{B}}
\bm{\mathcal{V}}_{ij}^a.
\label{eqq_H2_v_div_fin}
}
\end{theorem}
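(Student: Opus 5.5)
We will differentiate the closed form \eqref{eqq_tr_h2_origin} term by term with respect to $\bm{v}=[c\ \ \widetilde{\bm{V}}]^\top$. The $\bm{x}$-kernels $(1+\bm{x}_i^\top\bm{x}_j)^k$ do not depend on $\bm{v}$, and neither do $\bm{y}_i$ or $\bm{f}(\bm{y}_i)$. The $\bm{v}$-dependence therefore enters only through two routes: the scalar $z_i=\bm{V}\bm{h}(\bm{f}(\bm{y}_i))$ inside $\bm{o}_i$, and the explicit occurrences of $\widetilde{\bm{V}}$ inside $\bm{\Phi}_{ij}$ and $\bm{\Psi}_{ij}$ ($\bm{\Omega}_{ij}$ is constant).

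The basic identities are
\begin{itemize}
\item $\pa z_i/\pa\bm{v}=\bm{h}(\bm{f}(\bm{y}_i))$,
\item $\pa s'(z_i)/\pa\bm{v}=s''(z_i)\bm{h}(\bm{f}(\bm{y}_i))$,
\item $\pa\delta_i/\pa\bm{v}=s'(z_i)\bm{h}(\bm{f}(\bm{y}_i))$.
\end{itemize}
Together these give $\pa\bm{o}_i/\pa\bm{v}=\bm{h}(\bm{f}(\bm{y}_i))\,\bm{s}^{\prime\prime/\prime}(z_i)^\top$ in denominator layout.

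For each $a\in\mathfrak{B}$ we write the summand as $\kappa^a_{ij}\,\bm{o}_i^\top\bm{M}^a_{ij}\bm{o}_j$, where $\kappa^a_{ij}$ is the $\bm{v}$-independent kernel. The product rule then splits the gradient into three parts:
\begin{itemize}
\item an $\bm{o}_i$-part, $\kappa^a_{ij}\bm{h}(\bm{f}(\bm{y}_i))\,\bm{s}^{\prime\prime/\prime}(z_i)^\top\bm{M}^a_{ij}\bm{o}_j$;
\item an $\bm{o}_j$-part;
\item a matrix part, $\kappa^a_{ij}\,\pa(\bm{o}_i^\top\bm{M}^a_{ij}\bm{o}_j)/\pa\bm{v}$ with $\bm{o}$ held fixed.
\end{itemize}
We then use symmetry of the double sum under $i\leftrightarrow j$. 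The transposition relations visible in \eqref{eqq_Phi_12}--\eqref{eqq_Phi_21} and \eqref{eqq_Psi_12}--\eqref{eqq_Psi_21} give $\bm{\Phi}_{ji}=\bm{\Phi}_{ij}^\top$ and $\bm{\Psi}_{ji}=\bm{\Psi}_{ij}^\top$, and the kernels are symmetric. Hence the $\bm{o}_j$-part equals the $\bm{o}_i$-part after relabeling. Similarly, the matrix part can be symmetrized so that only "half" of it, the derivative acting on factors indexed by $i$, is kept. This yields the overall factor $2$ in \eqref{eqq_H2_v_div_fin}, exactly as in Theorem~\ref{theoo_h2_w}. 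We accordingly define $\bm{\mathcal{V}}^a_{ij}=\bm{\mathcal{C}}^a_{ij}+\bm{\mathcal{D}}^a_{ij}$, with $\bm{\mathcal{D}}^a_{ij}$ the $\bm{o}_i$-part (the analogue of $\bm{\mathcal{B}}^a_{ij}$, with $\bm{F}'(\bm{y}_i)\widetilde{\bm{V}}^\top$ replaced by $\bm{h}(\bm{f}(\bm{y}_i))$) and $\bm{\mathcal{C}}^a_{ij}$ the matrix part.

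For $a=\Omega$ the matrix part vanishes. We get
\[\bm{\mathcal{V}}^\Omega_{ij}=(1+\bm{f}(\bm{y}_i)^\top\bm{f}(\bm{y}_j))^2s''(z_i)s'(z_j)\bm{h}(\bm{f}(\bm{y}_i)).\]

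For $\Phi$ and $\Psi$, the matrix-part derivatives are taken with respect to $\widetilde{\bm{V}}$ only: the $c$-component is zero, so each such vector is embedded as $[0\ \ \cdot^\top]^\top\in\mathbb{R}^{N+1}$. Each entry of $\bm{\Phi}_{ij}$ is a quadratic or quartic form in $\widetilde{\bm{V}}$ built from diagonal matrices, so it can be differentiated directly:
\begin{itemize}
\item $\pa(\bm{\Phi}_{ij})_{11}/\pa\widetilde{\bm{V}}^\top=4(\bm{\Phi}_{ij})_{11}^{1/2}\bm{F}'(\bm{y}_j)\bm{F}'(\bm{y}_i)\widetilde{\bm{V}}^\top$;
\item $(\bm{\Phi}_{ij})_{12}=\sum_n F'_n(\bm{y}_i)^2F''_n(\bm{y}_j)\widetilde V_n^3$ is a cubic in each coordinate, with gradient $3\bm{F}'(\bm{y}_i)^2\bm{F}''(\bm{y}_j)\widetilde{\bm{V}}^{\odot2\top}$;
\item $(\bm{\Phi}_{ij})_{22}$ gives $2\bm{F}''(\bm{y}_i)\bm{F}''(\bm{y}_j)\widetilde{\bm{V}}^\top$.
\end{itemize}
The $\bm{\Psi}$ entries are linear or quadratic in $\widetilde{\bm{V}}$, and $(\bm{\Psi}_{ij})_{22}$ is constant. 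Collecting these into matrices $\bm{K}^\Phi_{ij},\bm{K}^\Psi_{ij}\in\mathbb{R}^{N\times4}$ and contracting with $\bm{o}_i\otimes\bm{o}_j$ gives $\bm{\mathcal{C}}^\Phi_{ij}$ and $\bm{\mathcal{C}}^\Psi_{ij}$, in the same format as \eqref{eqq_Aij_Phi}--\eqref{eqq_Aij_Psi}.

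The main obstacle is the bookkeeping in the symmetrization step. The entries that are quadratic or quartic in $\widetilde{\bm{V}}$ with mixed $i,j$ factors must be split consistently so that the factor $2$ is legitimate. We would handle this by verifying $\pa(\bm{o}_i^\top\bm{M}_{ij}\bm{o}_j)=\pa(\bm{o}_j^\top\bm{M}_{ji}\bm{o}_i)$ entrywise, and we would confirm the result numerically against finite differences, as was done in Fig.~\ref{figg_error}.
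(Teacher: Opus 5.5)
Your proposal is correct and is essentially the paper's proof. It uses the same product-rule split of each quadratic form into a matrix part plus $\bm{o}_i$- and $\bm{o}_j$-parts via $\pa\bm{o}/\pa\bm{v}=\bm{h}(\bm{f}(\bm{y}))\bm{s}^{\prime\prime/\prime}(z)^\top$, the same entrywise $\widetilde{\bm{V}}$-derivatives of $\bm{\Phi}_{ij}$ and $\bm{\Psi}_{ij}$, and the same $i\leftrightarrow j$ relabeling (justified by $\bm{\Phi}_{ji}=\bm{\Phi}_{ij}^\top$, $\bm{\Psi}_{ji}=\bm{\Psi}_{ij}^\top$) to produce the factor $2$; the paper's $\bm{\mathfrak{e}}^\Phi_{ij}+\bm{\mathfrak{e}}^\Phi_{ji}$ and $\bm{\mathfrak{g}}^\Phi_{ij}+\bm{\mathfrak{g}}^\Phi_{ji}$ are just equal halves of your $4(\cdot)$ and $2(\cdot)$ expressions, and $\bm{\mathfrak{f}}^\Phi_{ij}$, $\bm{\mathfrak{f}}^\Phi_{ji}$ come from the $(1,2)$ and $(2,1)$ entries. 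The only differences are bookkeeping: for $\bm{v}$ the ``half'' of the matrix part is this symmetric split rather than ``the derivative acting on $i$-indexed factors'' (the derivative acts on $\widetilde{\bm{V}}$), $\bm{K}_{ij}$ is $(N+1)\times 4$ rather than $N\times 4$, and the paper files the $\Omega$ contribution under $\bm{\mathcal{C}}^\Omega_{ij}$ with $\bm{\mathcal{D}}^\Omega_{ij}=\bm{0}_{N+1}$, which leaves your $\bm{\mathcal{V}}^\Omega_{ij}$ unchanged.
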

\begin{proof}
See Appendix \ref{secc_H2_grad_v}.
\end{proof}
Where
\ali{
\bm{\mathcal{V}}_{ij}^a &= \bm{\mathcal{C}}_{ij}^a + \bm{\mathcal{D}}_{ij}^a, \ a \in \mathfrak{B}, \nn \\
\bm{\mathcal{C}}_{ij}^\Phi &= 
(1+ \bm{x}_i^\top\bm{x}_j)^2 \bm{K}_{ij}^\Phi (\bm{o}_i \otimes \bm{o}_j), \label{eqq_Cij_Phi} \\
\bm{\mathcal{C}}_{ij}^\Psi &= 
2(1+ \bm{x}_i^\top\bm{x}_j) \bm{K}_{ij}^\Psi (\bm{o}_i \otimes \bm{o}_j), \label{eqq_Cij_Psi}  \\
\bm{\mathcal{C}}_{ij}^\Omega &= s^{\prime\prime}(z_i)s^\prime(z_j) (1+ \bm{f}(\bm{y}_i)^\top\bm{f}(\bm{y}_j))^2 \bm{h}(\bm{f}(\bm{y}_i)), \label{eqq_Cij_Omega} \\
\bm{\mathcal{D}}_{ij}^\Phi &= 
(1+ \bm{x}_i^\top\bm{x}_j)^2 \big( 
\bm{h}(\bm{f}(\bm{y}_i)) \bm{s}^{\prime\prime/\prime}(z_i)^\top \bm{\Phi}_{ij} \bm{o}_j \big),  \label{eqq_Dij_Phi} \\
\bm{\mathcal{D}}_{ij}^\Psi &= 
2(1+ \bm{x}_i^\top\bm{x}_j) \big( 
\bm{h}(\bm{f}(\bm{y}_i)) \bm{s}^{\prime\prime/\prime}(z_i)^\top \bm{\Psi}_{ij} \bm{o}_j \big), \label{eqq_Dij_Psi} \\
\bm{\mathcal{D}}_{ij}^\Omega &= \bm{0}_{N+1}. \label{eqq_Dij_Omega} 
}
Here, $\bm{0}_{N+1}$ denotes an $(N+1)$-dimensional zero vector.
$\bm{K}_{ij}^\Phi$ is given by
\ali{
\bm{K}_{ij}^\Phi &= \mat{\bm{\mathfrak{e}}^{\Phi}_{ij} & \bm{\mathfrak{f}}^{\Phi}_{ij} & \bm{0}_{N+1} &\bm{\mathfrak{g}}^{\Phi}_{ij}} \in \mathbb{R}^{(N+1) \times 4} \label{eqq_K_Phi_ij}, \\
\bm{\mathfrak{e}}^{\Phi}_{ij} &= 2 (\bm{\Phi}_{ij})_{11}^{1/2}  \bm{V}^\top \odot 
\bm{f}^\prime_0(\bm{y}_j) \odot \bm{f}^\prime_0(\bm{y}_i), \label{eqq_e_phi_ij} \\
 \bm{\mathfrak{f}}^{\Phi}_{ij} &= 3 \bm{V}^{\odot 2\top} \odot \bm{f}^\prime_0(\bm{y}_i)^{\odot 2} \odot \bm{f}^{\prime\prime}_0(\bm{y}_j), \label{eqq_f_phi_ij}\\
   \bm{\mathfrak{g}}^{\Phi}_{ij} &= \bm{V}^{\top} \odot \bm{f}^{\prime\prime}_0(\bm{y}_i) \odot \bm{f}^{\prime\prime}_0(\bm{y}_j), \label{eqq_g_phi_ij}
}
and $\bm{K}_{ij}^\Psi$ is given by
\ali{
\bm{K}_{ij}^\Psi &= 
\mat{\bm{\mathfrak{e}}^{\Psi}_{ij} & \bm{\mathfrak{f}}^{\Psi}_{ij} & \bm{0}_{N+1} &\bm{0}_{N+1}} \in \mathbb{R}^{(N+1) \times 4}, \label{eqq_K_Psi_ij}\\
  \bm{\mathfrak{e}}^{\Psi}_{ij} &= (1+\bm{f}(\bm{y}_i)^\top \bm{f}(\bm{y}_j)) 
\bm{F}^\prime_0(\bm{y}_i) \bm{F}^\prime(\bm{y}_j) 
\widetilde{\bm{V}}^\top, \label{eqq_e_psi_ij} \\
 \bm{\mathfrak{f}}^{\Psi}_{ij} &= \bm{F}^\prime_0(\bm{y}_i) \bm{F}^\prime(\bm{y}_j) \bm{f}(\bm{y}_i).\label{eqq_f_psi_ij}
}
The dimensions of these variables are given by
\ali{
&\frac{\pa \mathrm{tr}(\bm{H}_L(\bm{\theta})^2)}{\pa \bm{v}}, 
\bm{\mathcal{C}}_{ij}^a, 
\bm{\mathcal{D}}_{ij}^a, 
\bm{\mathfrak{e}}^{b}_{ij}, 
\bm{\mathfrak{f}}^{b}_{ij}, 
\bm{\mathfrak{g}}^{\Phi}_{ij} 
\in \mathbb{R}^{(N+1) \times 1}, \nn \\
&\forall a \in \mathfrak{B}, 
\forall b \in \mathfrak{B}\setminus\{\Omega\}, 
\forall i, j \in \mathbb{N}_{\le I}. \nn
}
The bottom two panels of Fig.~\ref{figg_error} compare the numerical and analytical solutions of $\pa \mathrm{tr}(\bm{H}_L(\bm{\theta})^2)/\pa \bm{w}$ and $\pa \mathrm{tr}(\bm{H}_L(\bm{\theta})^2)/\pa \bm{v}$.
The difference norms between the numerical and analytical solutions are both on the order of $\sim 10^{-9}$, confirming that they are strongly consistent.
This trend remains identical for parameters generated with different random seeds.
These results demonstrate that the analytical solutions derived in this study are correct.

\begin{figure}[t] 
    \centering
    \includegraphics[scale=0.73]{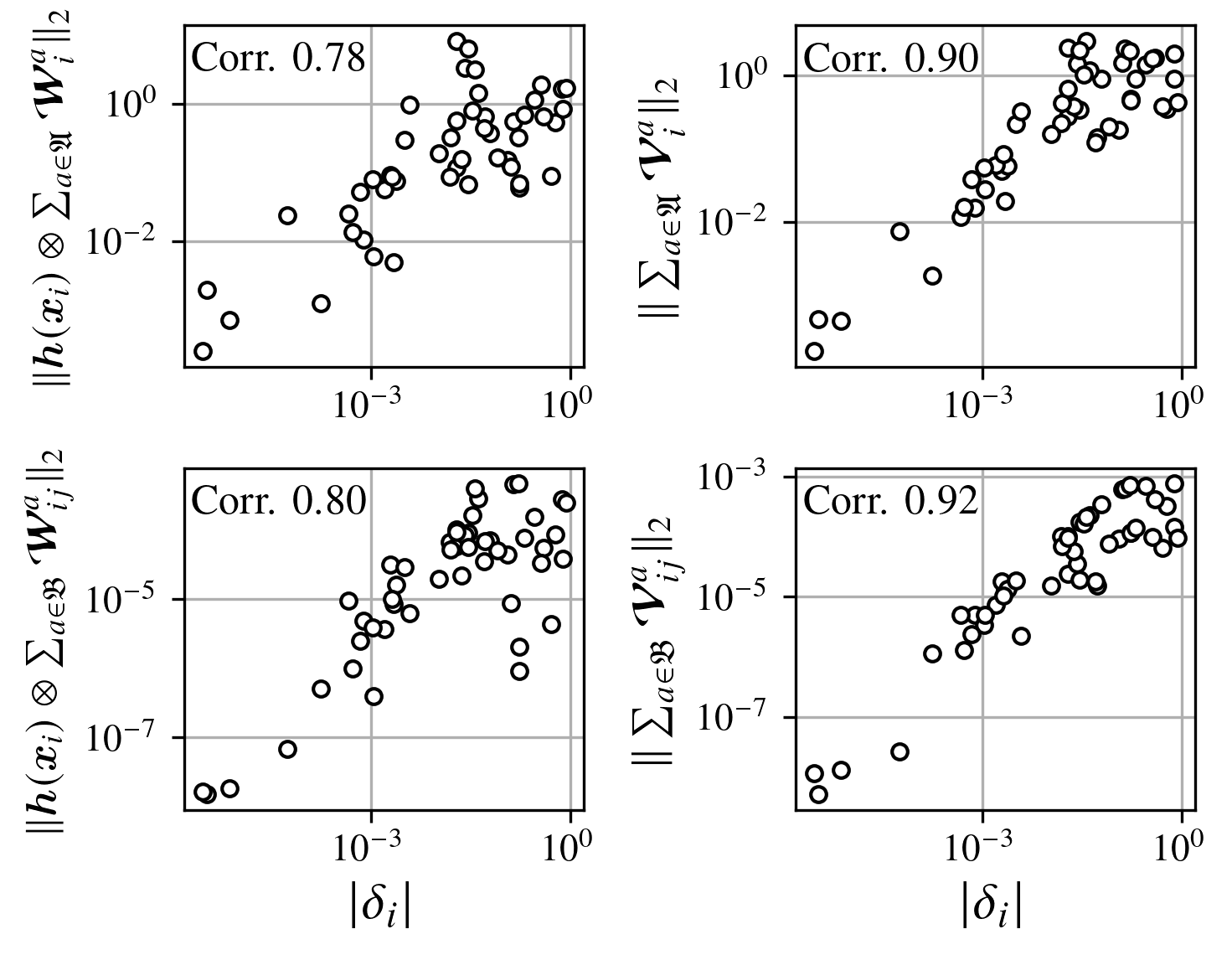}
    \caption{
    Relationship between $|\delta_i|$ and gradient norms. 
    A single NN trained in the experiment in Section~\ref{sec_expe} (described later) was used. 
    Since $I=50$, there are 50 data points. 
    The bottom two panels show the results with $j$ fixed.
    The activation function of the hidden layer is sigmoid.
    }
    \label{fig_delta_vs_norm}
\end{figure}

\section{Discussion}
\subsection{Influence of Individual Data Samples on the Gradient Norm}
All four gradients derived in this study possess the following property.
\begin{proposition}
\ali{
&p_i \rightarrow q_i \nn\\
&\Rightarrow \bigg( \bm{h}(\bm{x}_i) \otimes \sum_{a \in \mathfrak{A}} \bm{\mathcal{W}}_{i}^a  \rightarrow \bm{0}_{(M+1)N} \bigg) 
\land
\bigg( \sum_{a \in \mathfrak{A}} \bm{\mathcal{V}}_{i}^a \rightarrow \bm{0}_{N+1} \bigg) \nn \\
&\land
\bigg( \bm{h}(\bm{x}_i) \otimes  \sum_{a \in \mathfrak{B}}
\bm{\mathcal{W}}_{ij}^a
 \rightarrow \bm{0}_{(M+1)N} \bigg)  \land
\bigg( \sum_{a \in \mathfrak{B}}
\bm{\mathcal{V}}_{ij}^a
 \rightarrow \bm{0}_{N+1} \bigg). \label{eqq_data_i_conv}
}
\end{proposition}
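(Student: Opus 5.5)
The plan is to reduce everything to the behaviour of three scalar quantities attached to the $i$-th sample: $s^\prime(z_i)=p_i(1-p_i)$, $s^{\prime\prime}(z_i)=(1-2p_i)(1-p_i)p_i$, and $\delta_i=p_i-q_i$. Since $q_i\in\{0,1\}$, the limit $p_i\to q_i$ means $p_i\to 0$ or $p_i\to 1$. In either case $s^\prime(z_i)\to 0$, $s^{\prime\prime}(z_i)\to 0$ and $\delta_i\to 0$. Hence $\bm{o}_i\to\bm{0}_2$ and $\bm{s}^{\prime\prime/\prime}(z_i)\to\bm{0}_2$. The main step is to verify that each of the four vectors is a finite sum of terms, each multiplied by at least one of $s^\prime(z_i)$, $s^{\prime\prime}(z_i)$, $\delta_i$, $\bm{o}_i$ or $\bm{s}^{\prime\prime/\prime}(z_i)$, with a cofactor that stays bounded.

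For Theorems~\ref{theoo_h1_w} and \ref{theoo_h1_v} I would inspect Eqs.~\eqref{eqq_W_i_I}--\eqref{eqq_W_i_III} and \eqref{eqq_V_i_I}--\eqref{eqq_V_i_III} term by term.
\begin{itemize}
\item $\bm{\mathcal{W}}_i^\mathrm{I}$ and $\bm{\mathcal{V}}_i^\mathrm{I}$ carry $s^{\prime\prime}(z_i)$ or $s^\prime(z_i)$.
\item $\bm{\mathcal{W}}_i^\mathrm{II}$ and $\bm{\mathcal{V}}_i^\mathrm{II}$ carry $s^\prime(z_i)$ or $\delta_i$.
\item $\bm{\mathcal{W}}_i^\mathrm{III}$ and $\bm{\mathcal{V}}_i^\mathrm{III}$ carry $s^{\prime\prime}(z_i)$ or $s^\prime(z_i)$.
\end{itemize}
The Kronecker product with the fixed vector $\bm{h}(\bm{x}_i)$ is linear, so it preserves convergence to zero.

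For Theorems~\ref{theoo_h2_w} and \ref{theoo_h2_v} the structure is analogous, and here $j$ is arbitrary but fixed.
\begin{itemize}
\item $\bm{\mathcal{A}}_{ij}^\Phi$, $\bm{\mathcal{A}}_{ij}^\Psi$, $\bm{\mathcal{C}}_{ij}^\Phi$ and $\bm{\mathcal{C}}_{ij}^\Psi$ contain $\bm{o}_i\otimes\bm{o}_j$, which tends to zero because $\bm{o}_i\to\bm{0}$ and $\bm{o}_j$ is bounded (its entries lie in $[0,1/4]$ and $[-1,1]$).
\item $\bm{\mathcal{B}}_{ij}^\Phi$, $\bm{\mathcal{B}}_{ij}^\Psi$, $\bm{\mathcal{D}}_{ij}^\Phi$ and $\bm{\mathcal{D}}_{ij}^\Psi$ contain the factor $\bm{s}^{\prime\prime/\prime}(z_i)^\top$.
\item $\bm{\mathcal{A}}_{ij}^\Omega$ and $\bm{\mathcal{C}}_{ij}^\Omega$ contain $s^{\prime\prime}(z_i)$, $\bm{\mathcal{B}}_{ij}^\Omega$ contains $s^\prime(z_i)$, and $\bm{\mathcal{D}}_{ij}^\Omega=\bm{0}$.
\end{itemize}

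The main obstacle is making the boundedness of the cofactors precise. The limit $p_i\to q_i$ is attained by letting $z_i\to\pm\infty$, which requires $\bm{\theta}$, in particular $\widetilde{\bm{V}}$ or $c$, to vary. Cofactors such as $\bm{\Phi}_{ij}$, $\bm{G}^\Phi_{ij}$ and $\bm{K}^\Phi_{ij}$ are polynomial in $\widetilde{\bm{V}}$, up to $\widetilde{\bm{V}}^{\odot 3}$, and could grow. I would first state the proposition in the natural reading: the limit is taken with the cofactors held at their current values, i.e.\ treating $p_i$ (equivalently $z_i$) as the varying quantity with $\bm{x}_i$, $\bm{y}_i$, $\widetilde{\bm{V}}$ bounded. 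Under this reading the activation derivatives in Appendix~\ref{secc_act} evaluated at bounded $\bm{y}_i$ are bounded, and the conclusion follows by the product rule for limits.

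If a stronger statement is desired, I would try to realize the limit through the bias $c$ alone. Then $z_i\to\pm\infty$ while $\bm{W}$, $\widetilde{\bm{V}}$ and all $\bm{y}$ are fixed, so every cofactor is constant. Moreover $\bm{o}_j$ stays bounded for all $j$ uniformly, since $s^\prime\le 1/4$ and $|\delta|\le 1$. This yields the same conclusion rigorously.
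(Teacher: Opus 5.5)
Your proposal is correct and follows essentially the same route as the paper: both reduce everything to $s^\prime(z_i)\to 0$, $s^{\prime\prime}(z_i)\to 0$ and $\delta_i\to 0$, hence $\bm{o}_i\otimes\bm{o}_j\to\bm{0}_4$ and $\bm{s}^{\prime\prime/\prime}(z_i)\to\bm{0}_2$, and then observe that every $\bm{\mathcal{W}}_i^a$, $\bm{\mathcal{V}}_i^a$, $\bm{\mathcal{A}}_{ij}^a$, $\bm{\mathcal{B}}_{ij}^a$, $\bm{\mathcal{C}}_{ij}^a$, $\bm{\mathcal{D}}_{ij}^a$ carries one of these vanishing factors. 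Your explicit treatment of the cofactors' boundedness (in particular, realizing the limit through the bias $c$ alone) makes precise an assumption the paper leaves implicit. This works because $\bm{\Phi}_{ij}$, $\bm{\Psi}_{ij}$, $\bm{G}^\Phi_{ij}$, $\bm{G}^\Psi_{ij}$, $\bm{K}^\Phi_{ij}$, $\bm{K}^\Psi_{ij}$ and $\bm{h}(\bm{f}(\bm{y}_i))$ do not depend on $c$, while $\bm{o}_j$ stays bounded.
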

\begin{proof}
See Appendix \ref{secc_data_i_conv}.
\end{proof}
From Eq.~(53) of the predecessor study~\cite{omaeWolkowiczStyan2026}, $|\delta|$ represents the degree of proximity between $p$ and $q$.
Therefore, this proposition asserts that when the NN's estimate $p$ for a given training data sample is sufficiently close to the ground-truth label $q$, resulting in a sufficiently small $|\delta|$, that specific data sample does not affect the gradient of the WS upper bound.
To verify whether this phenomenon actually occurs, we extracted a single NN trained in the experiments described later in Section~\ref{sec_expe} and plotted a scatter plot illustrating the relationship between $|\delta|$ and the gradient norm of the WS upper bound.
This result is shown in Fig.~\ref{fig_delta_vs_norm}.
As can be seen from the figure, the closer $|\delta|$ is to 0, that is, the closer $p$ is to $q$, the smaller the gradient norm becomes.
From this observation, it can be concluded that data samples that can be correctly estimated with high confidence do not influence the direction toward a flat minimum.
As a practical advantage, it is conceivable that training data samples whose $|\delta|$ is close to 0 can be removed when computing the direction to achieve a flat minimum.

\begin{figure}[t] 
    \centering
    \includegraphics[scale=0.7]{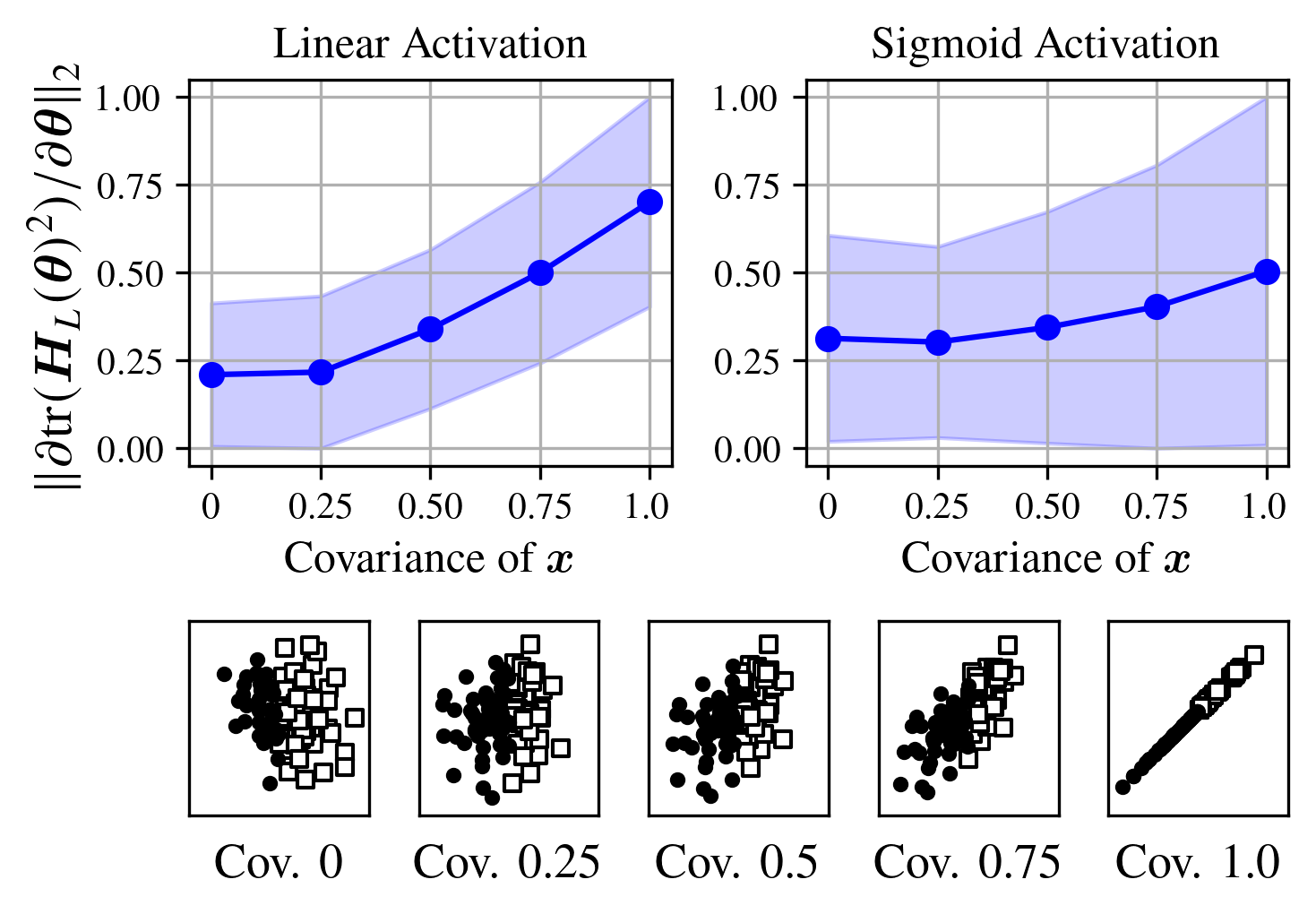}
    \caption{
Relationship between the covariance of the bivariate Gaussian distribution generating the input data $\bm{x}$ and the gradient of the squared Hessian trace.
The left panel represents the case of linear activation, and the right panel represents the case of sigmoid activation.
The values are normalized to a range of 0 to 1 to clearly illustrate the scale of variation.
Solid lines represent the median, and shaded areas represent the IQR.
    }
    \label{fig_ip_and_norm}
\end{figure}

\subsection{Inner Product of Input Data and the Norm of the Steepest Descent Direction}
By observing $\pa \mathrm{tr} (\bm{H}_L(\bm{\theta})^2) / \pa \bm{w}$ and $\pa \mathrm{tr} (\bm{H}_L(\bm{\theta})^2) / \pa \bm{v}$, one can confirm the presence of the inner product between input data samples, $\bm{x}_i^\top \bm{x}_j$, within terms such as $\bm{\mathcal{A}}_{ij}^\Phi$ and $\bm{\mathcal{B}}_{ij}^\Phi$ in Eqs.~\eqref{eqq_Aij_Phi} and \eqref{eqq_Bij_Phi}, as well as $\bm{\mathcal{C}}_{ij}^\Phi$ and $\bm{\mathcal{D}}_{ij}^\Phi$ in Eqs.~\eqref{eqq_Cij_Phi} and \eqref{eqq_Dij_Phi}.
Therefore, it is considered that the directional similarity among training data samples affects the scaling factor of the gradient of the squared Hessian trace.
To verify this, we observed the variation in the gradient norm of the squared Hessian trace with respect to changes in the covariance of the Gaussian distribution used to generate the training data $\bm{x}$.
The training data samples were centered at the origin of a two-dimensional space, where a sample was assigned to class 0 if $x_1 < 0$ and to class 1 if $x_1 \geq 0$.
The structure of the NN and the size of the training dataset were configured as $(M, N, I) = (2, 3, 100)$.
The obtained results are shown in Fig.~\ref{fig_ip_and_norm}.
The left panel represents the case with linear activation, while the right panel represents the case with sigmoid activation.
As can be seen from the figure, training data with a higher inner product leads to a larger gradient norm of the squared Hessian trace.
In light of Eq.~\eqref{eq_main_theorem_musig}, this implies that pairs of similar training data samples exert a strong influence in the direction that reduces the standard deviation of the eigenspectrum.
However, the scaling effect due to the inner product is large for linear activation and small for sigmoid activation.
In the case of saturating activation functions, the influence exerted by the inner product of the training data appears to diminish.

\begin{figure}[t] 
    \centering
    \includegraphics[scale=0.75]{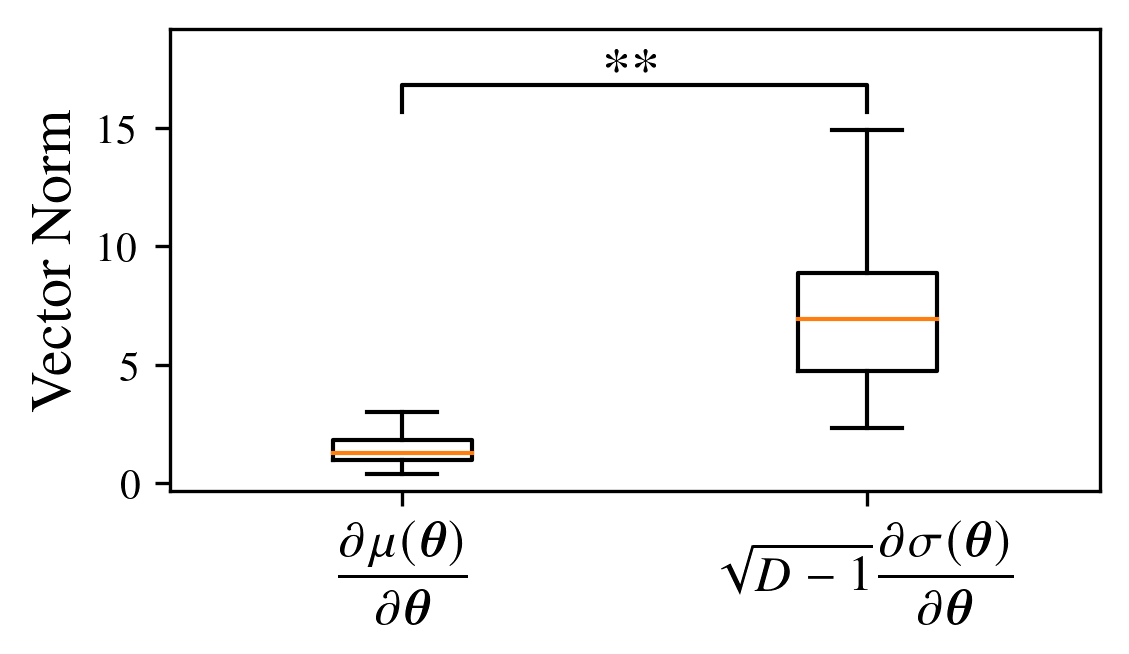}
    \caption{
    Comparison of gradient norms between the mean and standard deviation terms of $\pa \lambda_\mathrm{sup}(\bm{\theta}) / \pa \bm{\theta}$.
Calculated using 100 randomly generated NN patterns under the condition of $(I, M, N) = (50, 2, 3)$.
Asterisks indicate p-values from the two-sided Wilcoxon signed-rank test (**: 0.1\% level, *: 1\% level).
Boxes represent the IQR, and whiskers indicate the $\pm 1.5 \times \mathrm{IQR}$ range.
The activation function of the hidden layer is sigmoid.
    }
    \label{fig_musig_norm}
\end{figure}

\subsection{Dominant Terms for the Direction Toward Flat Minima}
From Eq.~\eqref{eqq_main_theo_direction}, the direction toward flat minima, $- \pa \lambda_\mathrm{sup}(\bm{\theta}) / \pa \bm{\theta}$, can be decomposed into terms that reduce the mean and standard deviation of the eigenspectrum, which are given by
\ali{
- \frac{\pa \mu(\bm{\theta})}{\pa \bm{\theta}}, \ 
- \sqrt{D-1} \frac{\pa \sigma(\bm{\theta})}{\pa \bm{\theta}}. \nn
}
From Eqs.~\eqref{eqq_h1_w_div_theorem} and \eqref{eqq_h1_v_div_theorem}, $\pa \mu(\bm{\theta}) / \pa \bm{\theta}$ is expressed by a single summation over the training dataset size $I$, whereas from Eqs.~\eqref{eqq_H2_w_div_fin} and \eqref{eqq_H2_v_div_fin}, $\pa \sigma(\bm{\theta}) / \pa \bm{\theta}$ is expressed by a double summation over the training dataset size $I$.
Furthermore, the architecture is structured such that $\pa \sigma(\bm{\theta}) / \pa \bm{\theta}$ is scaled by the NN parameter size $D$.
Therefore, it is expected that the direction toward flat minima, $- \pa \lambda_\mathrm{sup}(\bm{\theta}) / \pa \bm{\theta}$, is strongly influenced by the direction that reduces the standard deviation of the eigenspectrum rather than its mean.
To verify whether this hypothesis is correct, we calculated the norms of $\pa \mu(\bm{\theta}) / \pa \bm{\theta}$ and $\sqrt{D-1} \pa \sigma(\bm{\theta})/\pa \bm{\theta}$ using multiple randomly generated initial parameters.
The results are shown in Fig.~\ref{fig_musig_norm}.
As can be seen from the figure, as expected, it can be determined that the norm of $\sqrt{D-1} \pa \sigma(\bm{\theta})/\pa \bm{\theta}$ is larger than that of $\pa \mu(\bm{\theta}) / \pa \bm{\theta}$.

\begin{figure}[t] 
    \centering
    \includegraphics[scale=0.7]{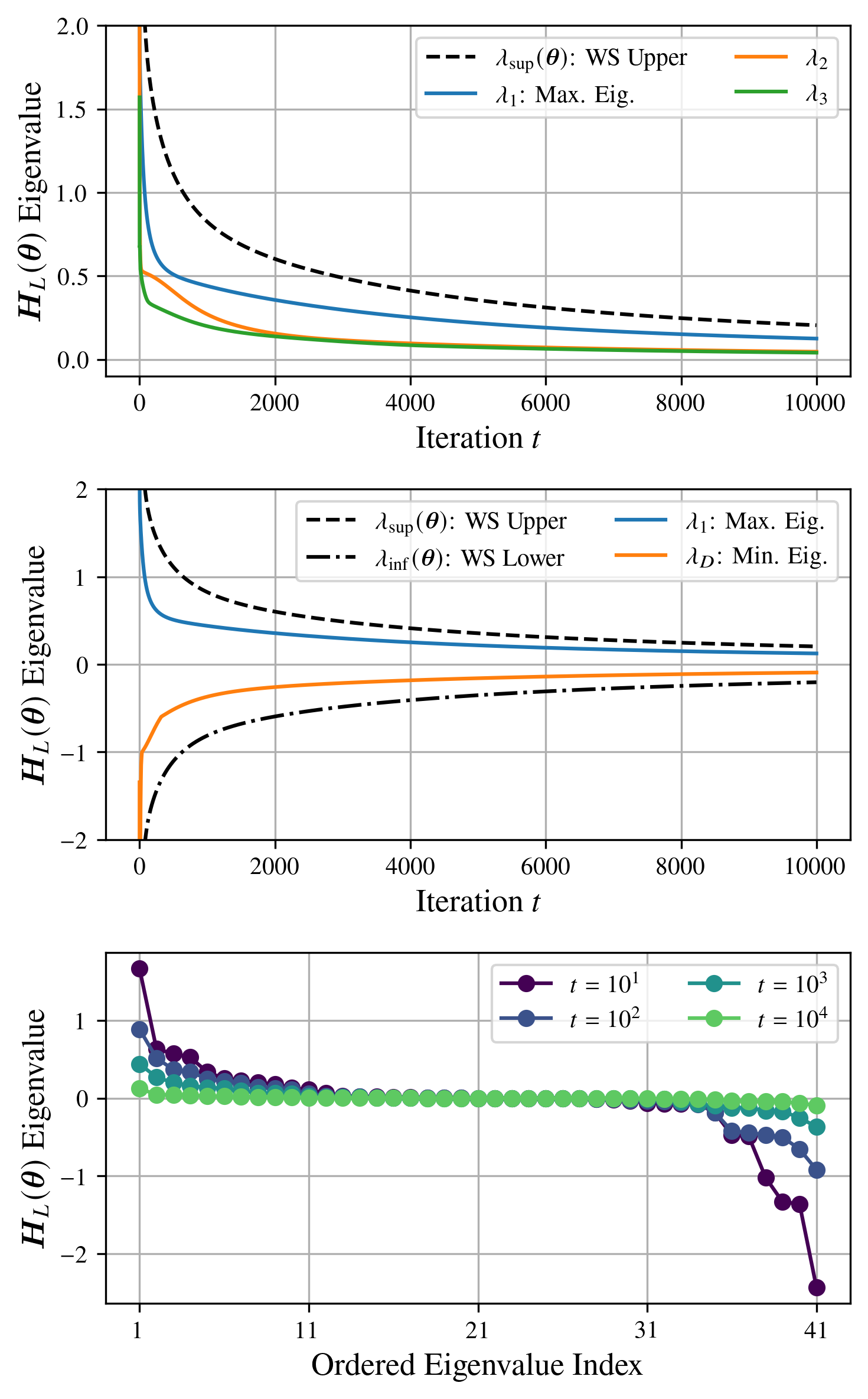}
    \caption{
    Top: Training dynamics of the WS upper bound and top eigenvalues.
    Middle: Training dynamics of the WS upper and lower bounds, maximum eigenvalue, and minimum eigenvalue.
    Bottom: Details of the eigenspectrum during training.
    Here, $(\gamma_1, \gamma_2) = (0, 0.01)$, which represents the case where only the WS upper bound is reduced without decreasing the loss.
    The activation function of the hidden layer is sigmoid.
}
    \label{fig_ws_ul}
\end{figure}

\subsection{HSR Regularization}
Although moving the NN parameters in the direction of $- \pa \lambda_\mathrm{sup}(\bm{\theta}) / \pa \bm{\theta}$ is expected to decrease the WS upper bound and consequently reduce the eigenvalues, it is practically necessary to simultaneously reduce the loss $L(\bm{\theta})$.
Therefore, in this study, we refer to $\lambda_\mathrm{sup}(\bm{\theta})$ as the HSR regularization term, and we propose a method termed ``HSR Regularization'' that minimizes the weighted sum of $L(\bm{\theta})$ and $\lambda_\mathrm{sup}(\bm{\theta})$.
That is, the update equation is given by
\ali{
\bm{\theta}^{(t+1)} = \bm{\theta}^{(t)} - \bigg(
\gamma_1 \left. \frac{\pa L(\bm{\theta})}{\pa \bm{\theta}} \right|_{\bm{\theta} = \bm{\theta}^{(t)}} + 
\gamma_2 \left. \frac{\pa \lambda_\mathrm{sup}(\bm{\theta})}{\pa \bm{\theta}} \right|_{\bm{\theta} = \bm{\theta}^{(t)}}
\bigg), \nn
}
where $\gamma_1$ is the weight for the loss reduction and $\gamma_2$ is the weight for the reduction of the WS upper bound.

To verify whether moving in the steepest descent direction of the WS upper bound decreases both the WS upper bound and the eigenvalues, we executed the gradient descent method under the conditions of $\gamma_1 = 0$ and $\gamma_{2} = 0.01$.
The results are shown in the upper panel of Fig.~\ref{fig_ws_ul}.
As can be seen from the figure, both the WS upper bound and the eigenvalues decrease as the training progresses.
Therefore, it can be said that HSR regularization has the effect of lowering the second-order derivatives of the loss function, and it is considered to have the effect of facilitating the attainment of flat minima.

\subsection{Effect of Elevating the Wolkowicz-Styan Lower Bound}\label{secc_inf_lambda_up}
Wolkowicz et al.~\cite{wolkowiczBounds1980} derived not only the upper bound for the maximum eigenvalue but also a closed-form expression for the lower bound of the minimum eigenvalue.
By applying Eq.~(2.2) in \cite{wolkowiczBounds1980}, the lower bound $\lambda_\mathrm{inf}(\bm{\theta})$ for the minimum eigenvalue $\lambda_D$ is given by
\ali{
\lambda_D \geq \lambda_\mathrm{inf}(\bm{\theta}) &= \mu(\bm{\theta}) - \sqrt{D-1} \sigma(\bm{\theta}).\label{eqq_lambda_inf}
}
Moreover, Fig.~\ref{fig_musig_norm} indicates that the direction toward flat minima is more easily determined by reducing $\sigma(\bm{\theta})$ rather than $\mu(\bm{\theta})$.
As an extreme example of this, let us consider a case where $\mu(\bm{\theta})$ remains unchanged while only $\sigma(\bm{\theta})$ decreases.
Under this condition, the following holds true.
\begin{proposition}
\ali{
&\Big( \sigma(\bm{\theta}^{(t+1)}) < \sigma(\bm{\theta}^{(t)}) \Big) \land 
\Big( \mu(\bm{\theta}^{(t+1)}) = \mu(\bm{\theta}^{(t)}) \Big)
\Rightarrow \nn \\
&\Big( \lambda_\mathrm{sup}(\bm{\theta}^{(t+1)}) < \lambda_\mathrm{sup}(\bm{\theta}^{(t)}) \Big)
\land
\Big( \lambda_\mathrm{inf}(\bm{\theta}^{(t+1)}) > \lambda_\mathrm{inf}(\bm{\theta}^{(t)}) \Big).
\label{eqq_inf_sup_rel}
}
\end{proposition}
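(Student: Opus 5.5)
The plan is to use the explicit forms of the two bounds, Eq.~\eqref{eq_main_theorem} for $\lambda_\mathrm{sup}(\bm{\theta})$ and Eq.~\eqref{eqq_lambda_inf} for $\lambda_\mathrm{inf}(\bm{\theta})$, and to subtract their values at consecutive iterates. In both bounds the parameter enters only through $\mu(\bm{\theta})$ and $\sigma(\bm{\theta})$. The dimension $D$ does not change between $\bm{\theta}^{(t)}$ and $\bm{\theta}^{(t+1)}$, because the architecture, and hence $D=MN+2N+1$, is fixed. So the constant $\sqrt{D-1}$ is the same at both iterates.

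First, write
\begin{align}
\lambda_\mathrm{sup}(\bm{\theta}^{(t+1)}) - \lambda_\mathrm{sup}(\bm{\theta}^{(t)}) &= \big(\mu(\bm{\theta}^{(t+1)})-\mu(\bm{\theta}^{(t)})\big) + \sqrt{D-1}\,\big(\sigma(\bm{\theta}^{(t+1)})-\sigma(\bm{\theta}^{(t)})\big), \nn \\
\lambda_\mathrm{inf}(\bm{\theta}^{(t+1)}) - \lambda_\mathrm{inf}(\bm{\theta}^{(t)}) &= \big(\mu(\bm{\theta}^{(t+1)})-\mu(\bm{\theta}^{(t)})\big) - \sqrt{D-1}\,\big(\sigma(\bm{\theta}^{(t+1)})-\sigma(\bm{\theta}^{(t)})\big). \nn
\end{align}

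Second, apply the hypothesis $\mu(\bm{\theta}^{(t+1)})=\mu(\bm{\theta}^{(t)})$. This makes the first bracket in each line vanish, so both differences reduce to $\pm\sqrt{D-1}\,\Delta\sigma$ with $\Delta\sigma=\sigma(\bm{\theta}^{(t+1)})-\sigma(\bm{\theta}^{(t)})<0$ by the other hypothesis.

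Third, we need $\sqrt{D-1}>0$. Since $M,N\ge1$, we have $D\ge 4$. Then $\sqrt{D-1}\,\Delta\sigma<0$, which gives $\lambda_\mathrm{sup}(\bm{\theta}^{(t+1)})<\lambda_\mathrm{sup}(\bm{\theta}^{(t)})$. Likewise $-\sqrt{D-1}\,\Delta\sigma>0$, which gives $\lambda_\mathrm{inf}(\bm{\theta}^{(t+1)})>\lambda_\mathrm{inf}(\bm{\theta}^{(t)})$.

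There is no real obstacle. The only point needing care is that $\sigma$ is the nonnegative square root of the variance in Eq.~\eqref{eq_main_theorem_musig}, so the comparison of $\sigma$ values is meaningful. It is also worth stating explicitly that $D$ is independent of $t$ and exceeds $1$, so that $\sqrt{D-1}$ is a positive constant.
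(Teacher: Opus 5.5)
Your proposal is correct and follows essentially the same route as the paper's proof. Both substitute the explicit forms of $\lambda_\mathrm{sup}$ and $\lambda_\mathrm{inf}$, use $\mu(\bm{\theta}^{(t+1)})=\mu(\bm{\theta}^{(t)})$ to cancel the mean, and use $\sigma(\bm{\theta}^{(t+1)})<\sigma(\bm{\theta}^{(t)})$ with $\sqrt{D-1}>0$. You state explicitly that $D\ge 4$ is fixed, which the paper leaves implicit.
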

\begin{proof}
See Appendix \ref{secc_pr_sup_inf}.
\end{proof}
This proposition asserts that training to decrease the WS upper bound to reduce the maximum eigenvalue inherently increases the WS lower bound.
To verify whether this phenomenon actually occurs, we performed the gradient descent method with $\gamma_1 = 0$ and $\gamma_{2} = 0.01$ to minimize only the WS upper bound.
The results are shown in the middle panel of Fig.~\ref{fig_ws_ul}.
As can be seen from the figure, as the training progresses, the WS upper bound decreases while the WS lower bound increases.
Concurrently, it can be observed that the maximum eigenvalue approaches 0 from the positive direction, and the minimum eigenvalue approaches 0 from the negative direction.
The lower panel of Fig.~\ref{fig_ws_ul} illustrates the details of the eigenspectrum, confirming that the range of the eigenvalues narrows as the training progresses.
Generally, when executing an operation to reduce the maximum eigenvalue, there is a potential risk of the eigenvalues becoming negative.
However, because HSR regularization also has the effect of increasing the WS lower bound, this risk is mitigated.
Since HSR regularization can be interpreted as having the effect of narrowing the range of the eigenspectrum, it is considered to facilitate the attainment of flat minima.
From the perspective of the flatness hypothesis, this is a highly desirable property.

\begin{figure}[t] 
    \centering
    \includegraphics[scale=0.65]{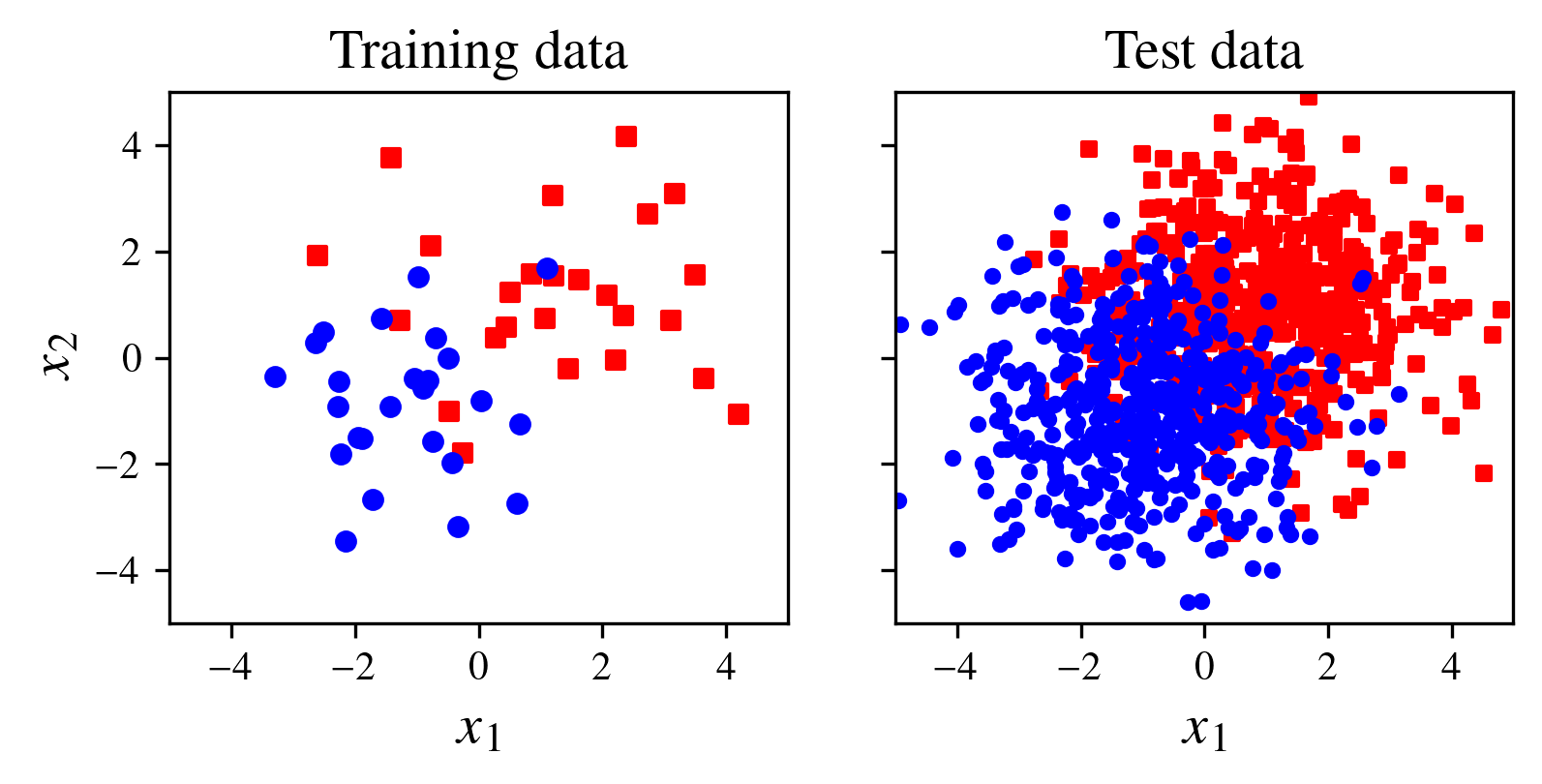}
    \caption{
    Left: Training data (50 samples), Right: Test data (1,000 samples). Both are balanced datasets.
    }
    \label{fig_sc_data}
\end{figure}

\section{Experiments} \label{sec_expe}
\subsection{Determination of Critical Points for Analysis}
Experiments were conducted with the objective of verifying whether the steepest descent direction of the WS upper bound derived in this study has the effect of improving generalization.
In this verification, a key factor is whether the attainment of sharp minima can be avoided.
Therefore, we searched for initial parameters that lead to sharp minima using a plain gradient descent method without any modifications.
The specific procedure for this process is described below.

First, we constructed a three-layer feedforward NN with input layer dimension $M=2$ and hidden layer dimension $N=3$.
The activation function of the hidden layer was set as the sigmoid function.
Specifically, by using
\ali{
f(y) &= 1/(1+\exp(-y)), \nn \\
f^\prime(y) &= (1-f(y))f(y), \nn \\
f^{\prime\prime}(y) &= (1-2f(y))(1-f(y))f(y), \nn \\
f^{\prime\prime\prime}(y) &= 6 (f(y) - a^+)(f(y) - a^-) (1-f(y))f(y), \nn \\
a^+ &= (3+\sqrt{3})/6, \ a^- = (3-\sqrt{3})/6 \nn
}
to construct $\bm{f}(\bm{y})$, $\bm{f}^{(k)}(\bm{y})$, $\bm{F}^{(k)}(\bm{y})$, $\bm{f}^{(k)}_0(\bm{y})$, and $\bm{F}^{(k)}_0(\bm{y})$ in accordance with Appendix~\ref{secc_act}, both the WS upper bound and its steepest descent direction can be obtained as closed-form functions.

The task assigned to the NN is a two-class classification problem aimed at estimating which of two Gaussian distributions a given data sample $\bm{x}$ in a two-dimensional space was generated from.
Regarding the distribution parameters, Class 0 has a mean of $\mat{1 & 1}^\top$, and Class 1 has a mean of $\mat{-1 & -1}^\top$.
This specific classification problem has also been utilized in prior research investigating the eigenspectrum of the Hessian~\cite{sagunEigenvalues2017}.
The variances along both the $x_1$ and $x_2$ axes were set to 2, and the covariance was set to 0.
The training dataset size was configured as $I = 50$, and the test dataset size was set to $10^3$, with an equal split between Class 0 and Class 1.
For reference, a scatter plot generated under these conditions is shown in Fig.~\ref{fig_sc_data}.

\begin{figure}[t] 
    \centering
    \includegraphics[scale=0.73]{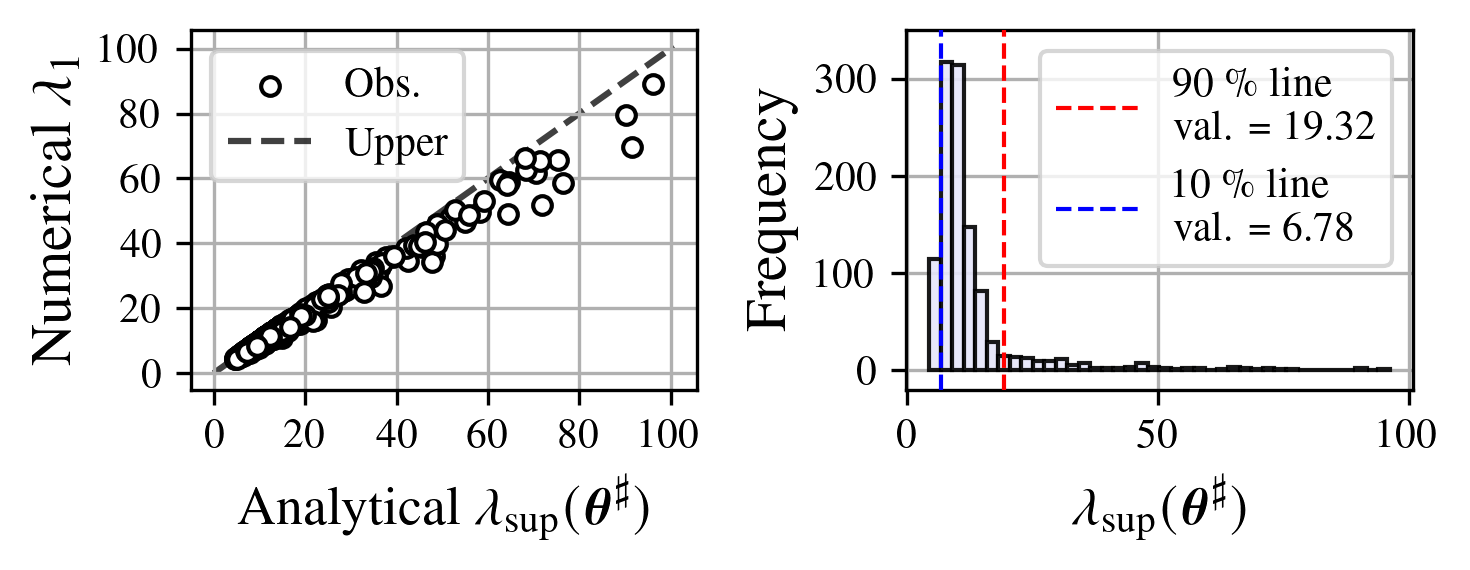}
    \caption{
    Left: Comparison between $\lambda_1$ and $\lambda_\mathrm{sup}(\bm{\theta}^\sharp)$. 
    Right: Histogram of $\lambda_\mathrm{sup}(\bm{\theta}^\sharp)$.
    Results are shown for 1,124 unique critical points that satisfy the convergence criteria.
    }
    \label{fig_sc_hist}
\end{figure}

To construct the NNs, the initial parameters $\bm{\theta} = \bm{\theta}^{(0)}$ were generated using 3,000 different random seeds.
For the random number generation, a uniform distribution over the open interval $(-10, 10)^D$ was adopted.
The reason for employing a wide range for random number generation was to observe a diverse set of critical points.
For the parameter update, a plain gradient descent method given by
\ali{
\bm{\theta}^{(t+1)} = \bm{\theta}^{(t)} - \gamma \frac{\pa L(\bm{\theta})}{\pa \bm{\theta}}\bigg|_{\bm{\theta}=\bm{\theta}^{(t)}}, \ t \in \{0, \cdots, T-1 \}, \nn
}
where $T$ denotes the maximum number of training iterations, and $T=10^4$ was utilized.
When the gradient norm of the loss $L$ fell below 0.1\% of its initial gradient norm, it was determined that convergence to a critical point had been achieved, and the training was terminated.
Furthermore, since it is inappropriate to include duplicate critical points in the analysis, only one of the closely located critical points was retained.
Specifically, duplicate points were defined as those where the distance between critical points was less than $\mathrm{Ave.} - 3 \times \mathrm{Std.}$, where ``$\mathrm{Ave.}$'' and ``$\mathrm{Std.}$'' denote the mean and standard deviation of the Euclidean distances among all critical points.
Critical points are represented as $\bm{\theta}^{\sharp}$.

As a result of executing the aforementioned procedure, 2,799 out of the 3,000 parameters satisfied the convergence condition to a critical point.
Furthermore, the number of non-duplicate critical points was found to be 1,124.
A scatter plot of the maximum eigenvalues and their upper bounds at these critical points is shown in the left panel of Fig.~\ref{fig_sc_hist}.
As can be seen from the figure, the WS upper bound is tight with respect to the maximum eigenvalue.
Therefore, the WS upper bound can be regarded as a function that evaluates the sharpness of the loss function.
Additionally, a histogram regarding the WS upper bound of the 1,124 critical points is shown in the right panel of Fig.~\ref{fig_sc_hist}.
From this plot, it can be confirmed that the distribution of the WS upper bound is right-skewed with a long tail.
Therefore, the 113 critical points whose WS upper bounds lie in the top 90\% or higher are termed ``Sharp Minima'' and serve as the primary subject of analysis in this study.
To perform a comparative analysis, the 113 critical points whose WS upper bounds lie in the bottom 10\% or lower are termed ``Flat Minima'' and are also included in the analysis of this study.
The red and blue dashed lines in the right panel of Fig.~\ref{fig_sc_hist} represent these respective thresholds.
That is, the analysis in this study targets a total of 226 independent critical points.

\begin{figure}[t]
    \centering
    \begin{subfigure}{\linewidth}
        \centering
        \includegraphics[width=1\linewidth]{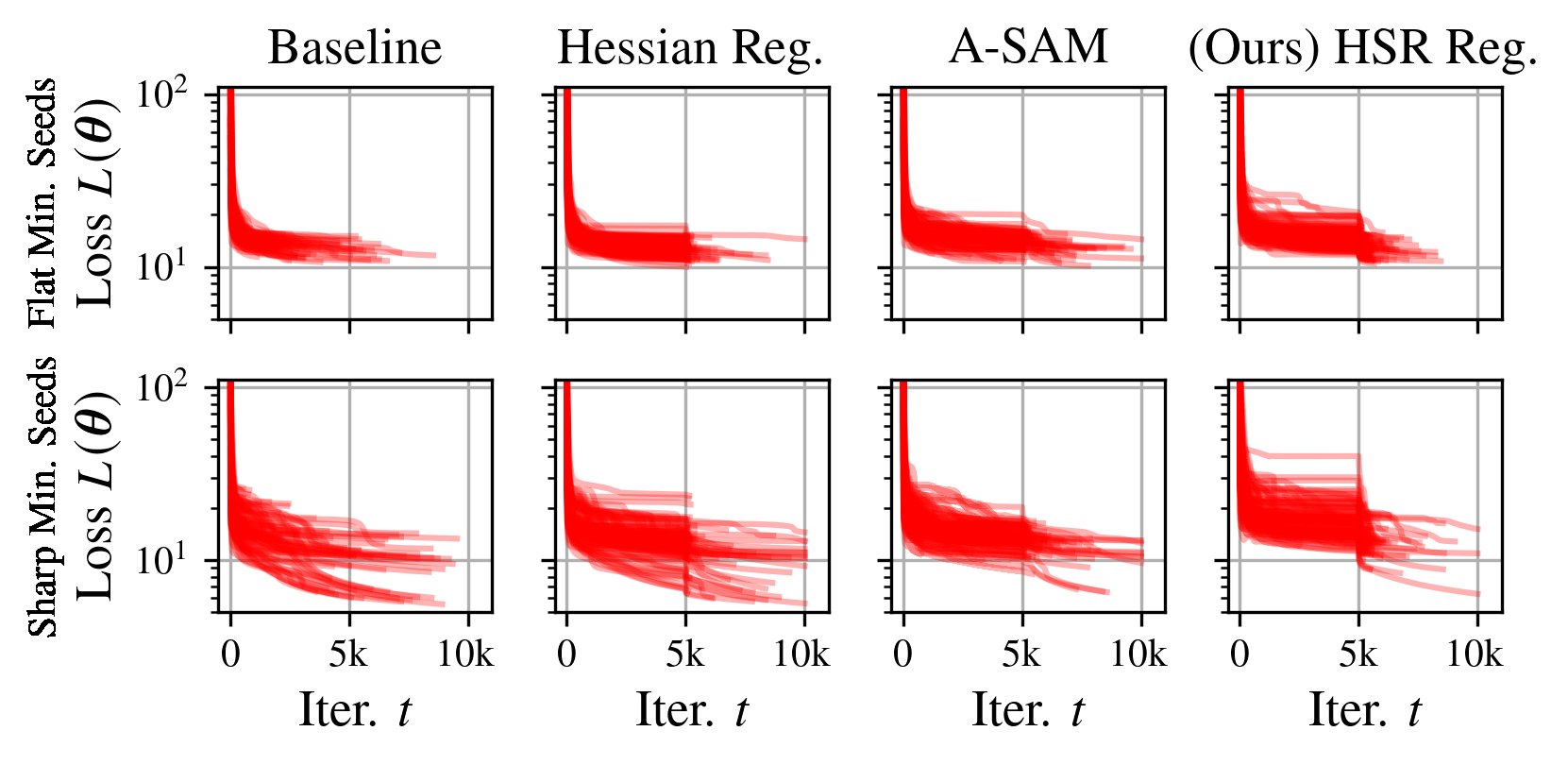}
        \caption{$L(\bm{\theta})$ Dynamics}
        \label{fig_eigen_dyn_1}
    \end{subfigure}

    \vspace{0.5em}

    \begin{subfigure}{\linewidth}
        \centering
        \includegraphics[width=1\linewidth]{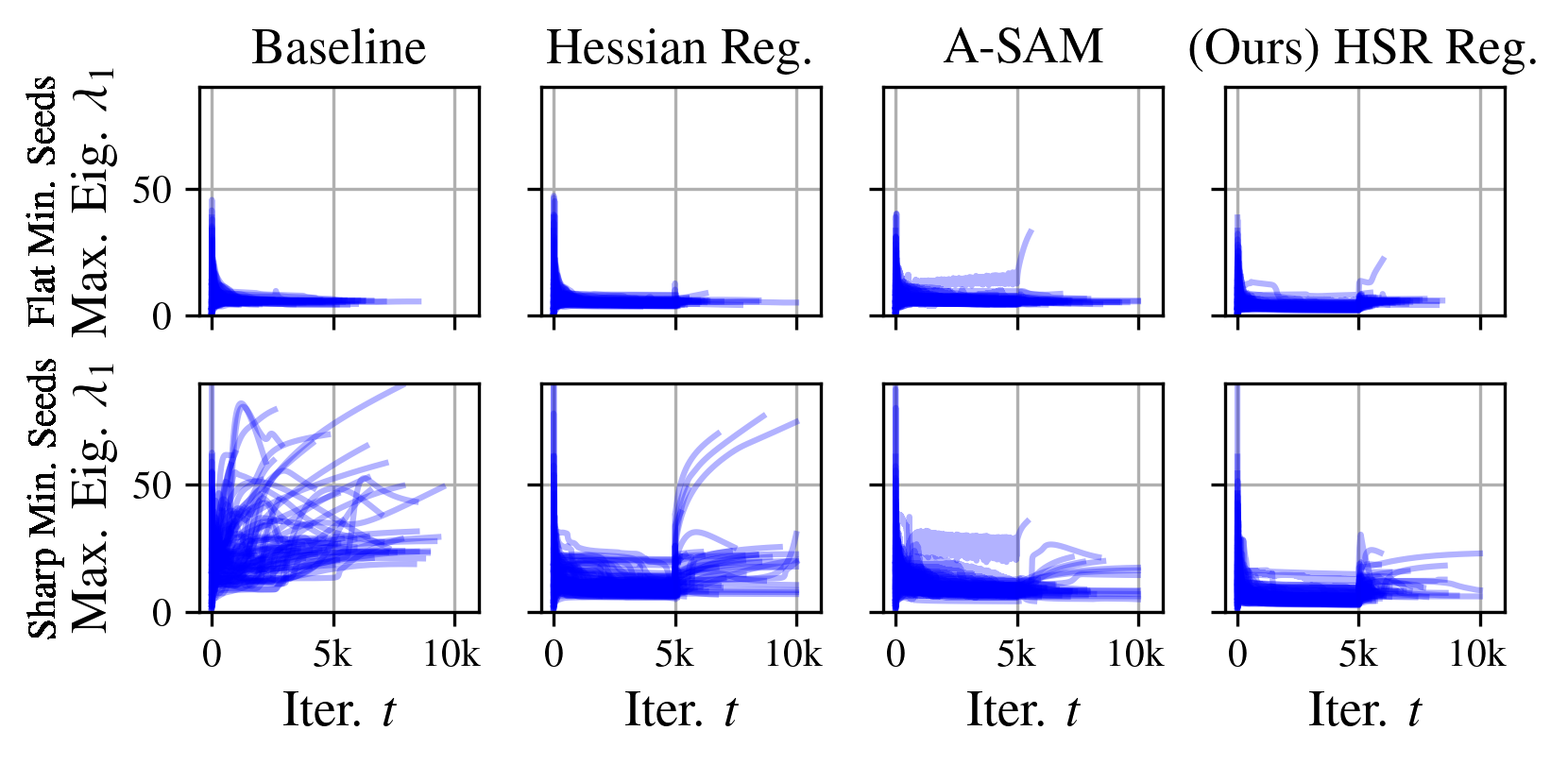}
        \caption{$\lambda_1$ Dynamics}
        \label{fig_eigen_dyn_2}
    \end{subfigure}
    \caption{
    Comparison of training dynamics for each method.
    Results for all seeds belonging to Sharp Minima Seeds and Flat Minima Seeds are shown.
    }
    \label{fig_eigen_dyn}
\end{figure}

\subsection{Controllability of the Eigenspectrum}
Here, we discuss the capability of HSR regularization to control the eigenspectrum.
For convenience, the 113 random seeds that resulted in sharp minima via the plain gradient descent method are referred to as ``Sharp Minima Seeds,'' while the 113 random seeds that resulted in flat minima are referred to as ``Flat Minima Seeds.''
If initializing training with parameters $\bm{\theta}^{(0)}$ from the Sharp Minima Seeds can successfully guide the model toward flat minima, it can be interpreted that HSR regularization is effective.
However, it is also necessary to verify whether executing training with initial parameters $\bm{\theta}^{(0)}$ from the Flat Minima Seeds induces any adverse effects.
Therefore, we applied HSR regularization to both the Sharp Minima Seeds and the Flat Minima Seeds.
Furthermore, to evaluate the relative effectiveness of HSR regularization, we also implemented existing methods designed to reduce eigenvalues, namely Hessian Regularization~\cite{liuHessian2023} and Adaptive Sharpness-Aware Minimization~\cite{kwonASAM2021} (A-SAM).
Although SAM~\cite{foretSharpnessAware2021} is available as a more basic approach, it should be noted that we employed its advanced variant, A-SAM~\cite{kwonASAM2021}, due to previously documented issues regarding parameter scaling in SAM.
Hessian Regularization is a method that minimizes the Hessian trace, whereas A-SAM identifies and moves toward a flatter region by introducing perturbations to the parameters.

For all methods, the maximum number of training iterations was set to $T=10^4$.
However, the operation to reduce the eigenvalues was executed only during the first 5,000 iterations, whereas the operation to minimize only the loss function $L(\bm{\theta})$ was performed during the remaining 5,000 iterations.
The reason for this strategy is that we prioritized reducing the eigenvalues in the first half of training and focused on driving the loss function to reach a critical point in the second half.
Specifically, during the first 5,000 training iterations, the weight for the loss gradient term was $\gamma_1=0.01$, and the weight for HSR regularization was $\gamma_2 = 0.01$.
Similarly, the weight for the Hessian trace term in Hessian Regularization was set to 0.01.
Additionally, the perturbation radius for A-SAM was configured to 0.1, as a larger perturbation radius failed to yield a noticeable reduction in the maximum eigenvalue.
In the subsequent 5,000 training iterations, a plain gradient descent method with a loss gradient term weight of $\gamma_1=0.01$ was implemented for all methods.
As before, when the gradient norm of the loss $L$ fell below 0.1\% of its initial gradient norm, it was determined that convergence to a critical point had been achieved, and the training was terminated.

The training dynamics obtained through these procedures are shown in Fig.~\ref{fig_eigen_dyn}.
Fig.~\ref{fig_eigen_dyn_1} represents the evolution of the loss, and Fig.~\ref{fig_eigen_dyn_2} illustrates the evolution of the maximum eigenvalue.
Here, ``Baseline'' refers to the plain gradient descent method that minimizes only the loss, ``Hessian Reg.'' denotes Hessian Regularization, ``A-SAM'' represents Adaptive SAM, and ``HSR Reg.'' signifies HSR regularization.
Furthermore, the upper panel of each figure displays the results for the Flat Minima Seeds, which are the initial parameters where even the plain gradient descent method reaches flat minima.
Looking at the results for the Flat Minima Seeds, it can be confirmed that both the loss and the eigenvalues exhibit almost identical behavior across all methods.
The lower panel of each figure shows the results for the Sharp Minima Seeds, indicating that the loss decreases across all methods.
On the other hand, it can be observed that the maximum eigenvalue reaches a large value at the critical point under the plain gradient descent method.
In contrast, the three methods that control the eigenvalues finish training while maintaining the maximum eigenvalue at a low value.

\begin{figure}[t] 
    \centering
    \includegraphics[scale=0.9]{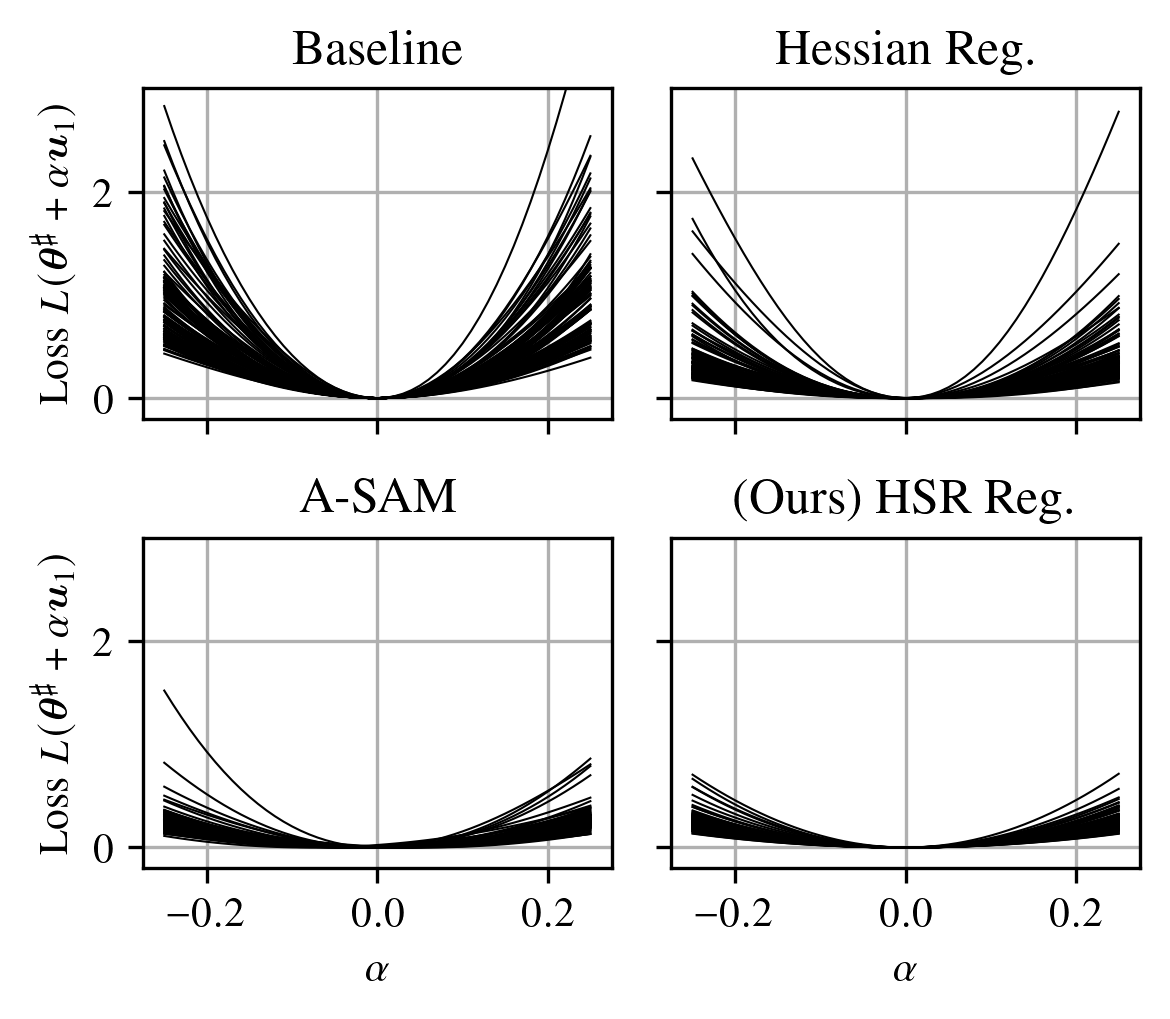}
    \caption{
    Loss landscapes at critical points for Sharp Minima Seeds (113 critical points).
    One-dimensional visualization along the direction of the eigenvector $\bm{u}_1$ corresponding to the maximum eigenvalue.
    For visual clarity, the curves are vertically shifted so that the loss at each critical point is aligned at 0.}
    \label{fig_loss_landscape}
\end{figure}

Fig.~\ref{fig_loss_landscape} visualizes the loss function at the converged critical points at the end of training for the Sharp Minima Seeds.
Since the loss function is high-dimensional, it cannot be visualized directly.
Therefore, we performed the visualization along the direction of the eigenvector $\bm{u}_1$ corresponding to the maximum eigenvalue $\lambda_1$.
From this plot, it can be seen that each method controlling the eigenvalues prefers flatter minima compared to the plain gradient descent method.
Among them, it can be confirmed that HSR regularization consistently converges to flat minima.

\begin{figure*}[t]
    \centering
    \begin{subfigure}{0.49\linewidth}
        \centering
        \includegraphics[scale=0.8]{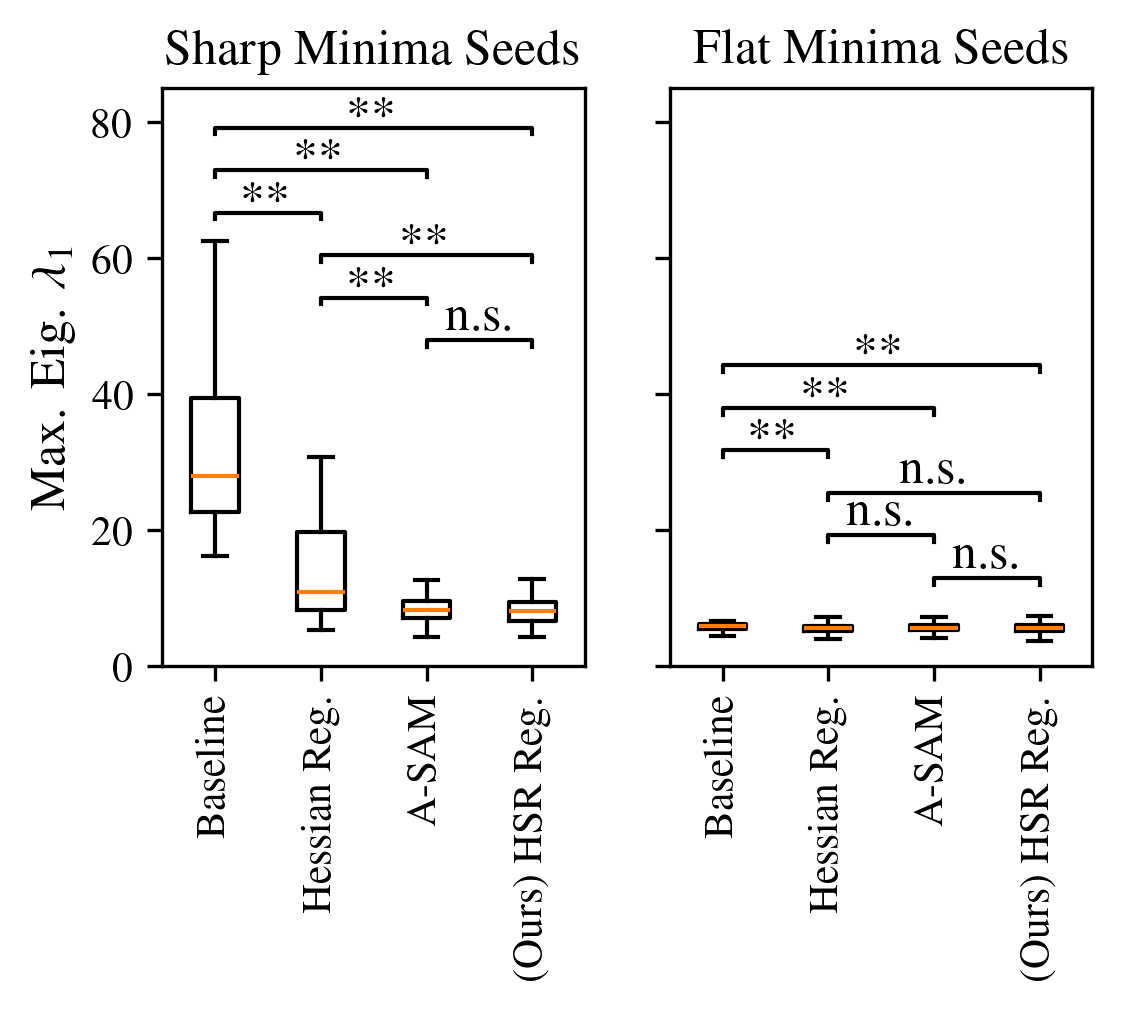}
        \caption{Maximum eigenvalue}
        \label{fig_e_max}
    \end{subfigure}
    \hfill
    \begin{subfigure}{0.49\linewidth}
        \centering
        \includegraphics[scale=0.8]{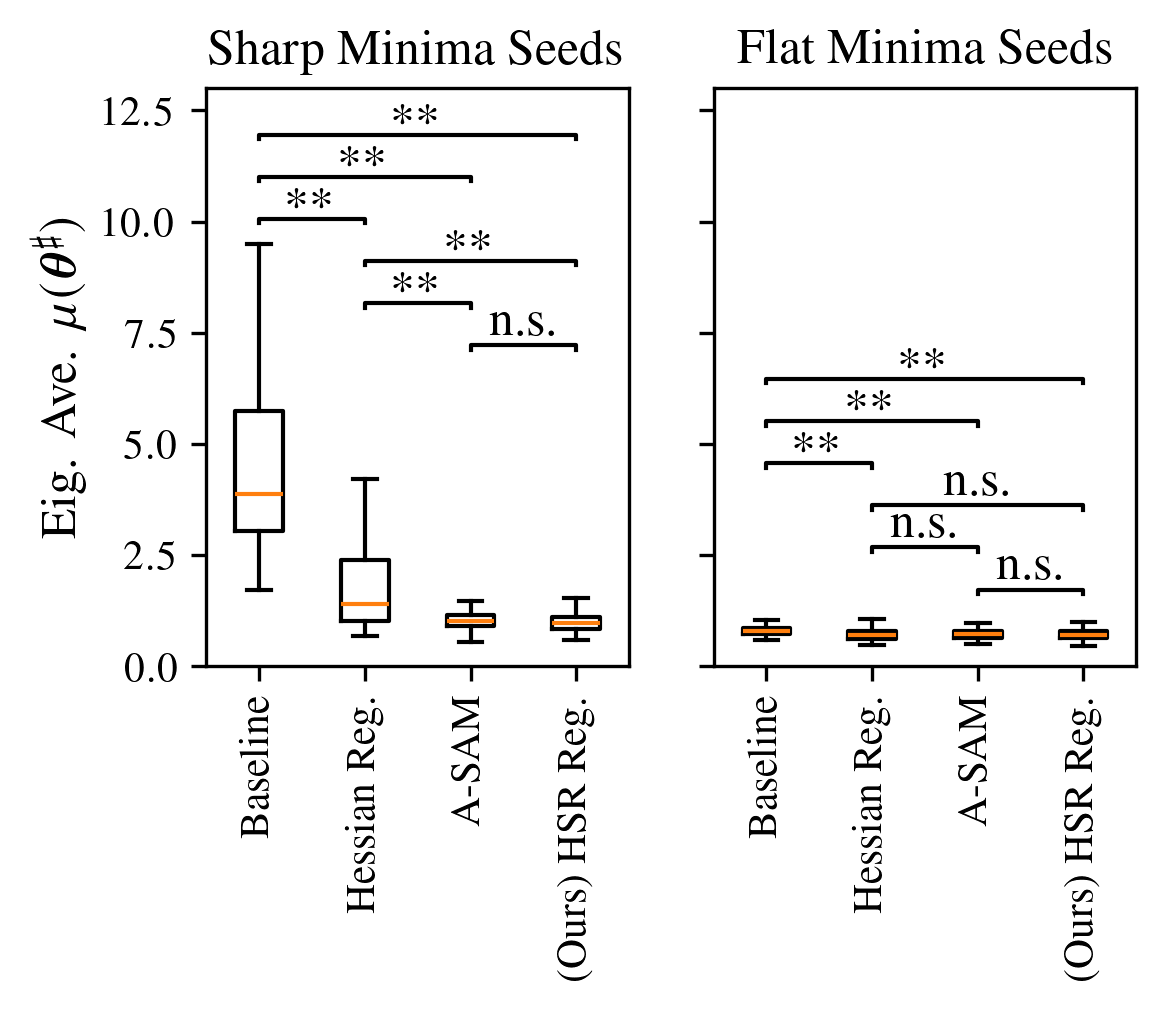}
        \caption{Average of eigenvalues}
        \label{fig_e_ave}
    \end{subfigure}

    \vspace{0.5em}

    \begin{subfigure}{0.49\linewidth}
        \centering
        \includegraphics[scale=0.8]{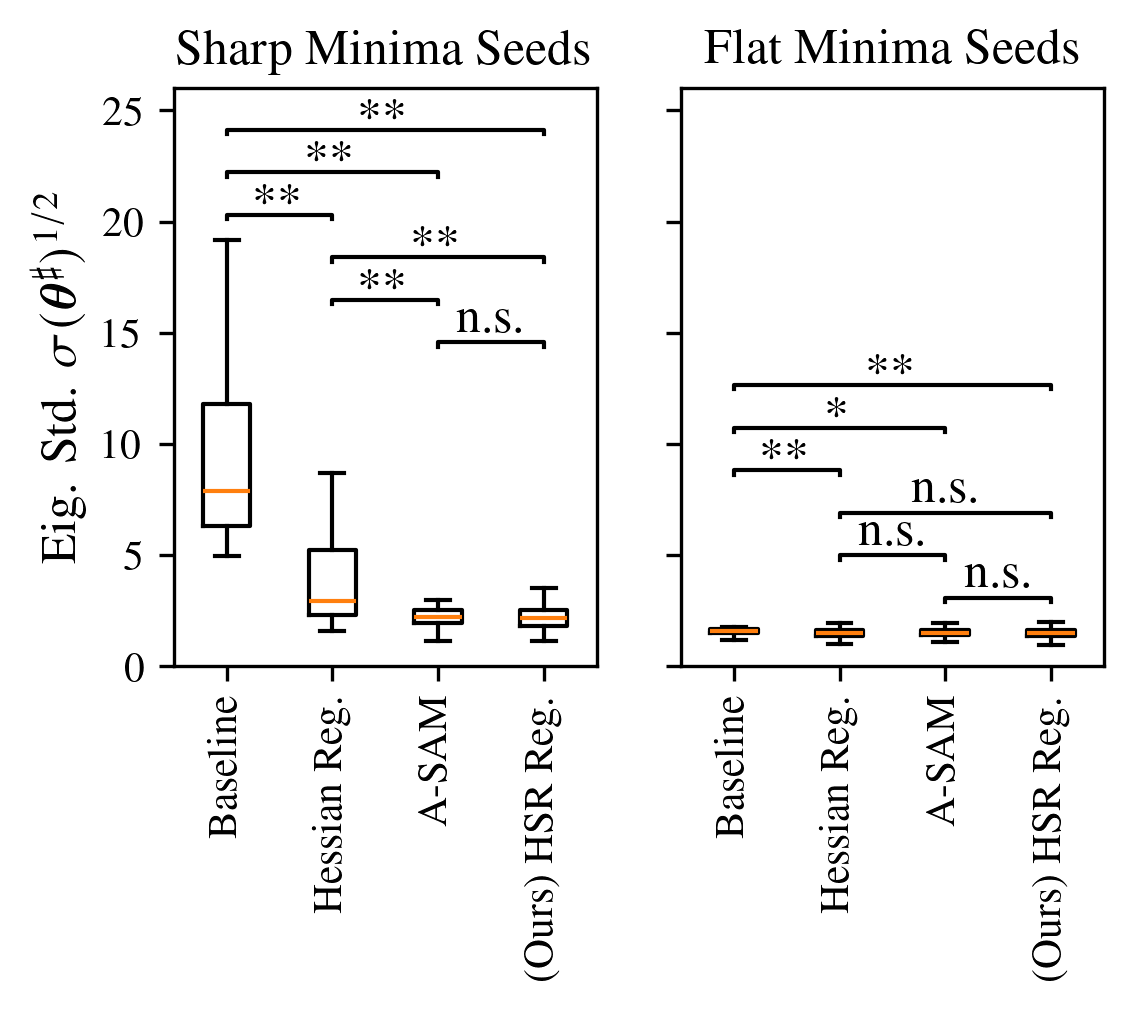}
        \caption{Standard deviation of eigenvalues}
        \label{fig_e_std}
    \end{subfigure}
    \hfill
    \begin{subfigure}{0.49\linewidth}
        \centering
        \includegraphics[scale=0.8]{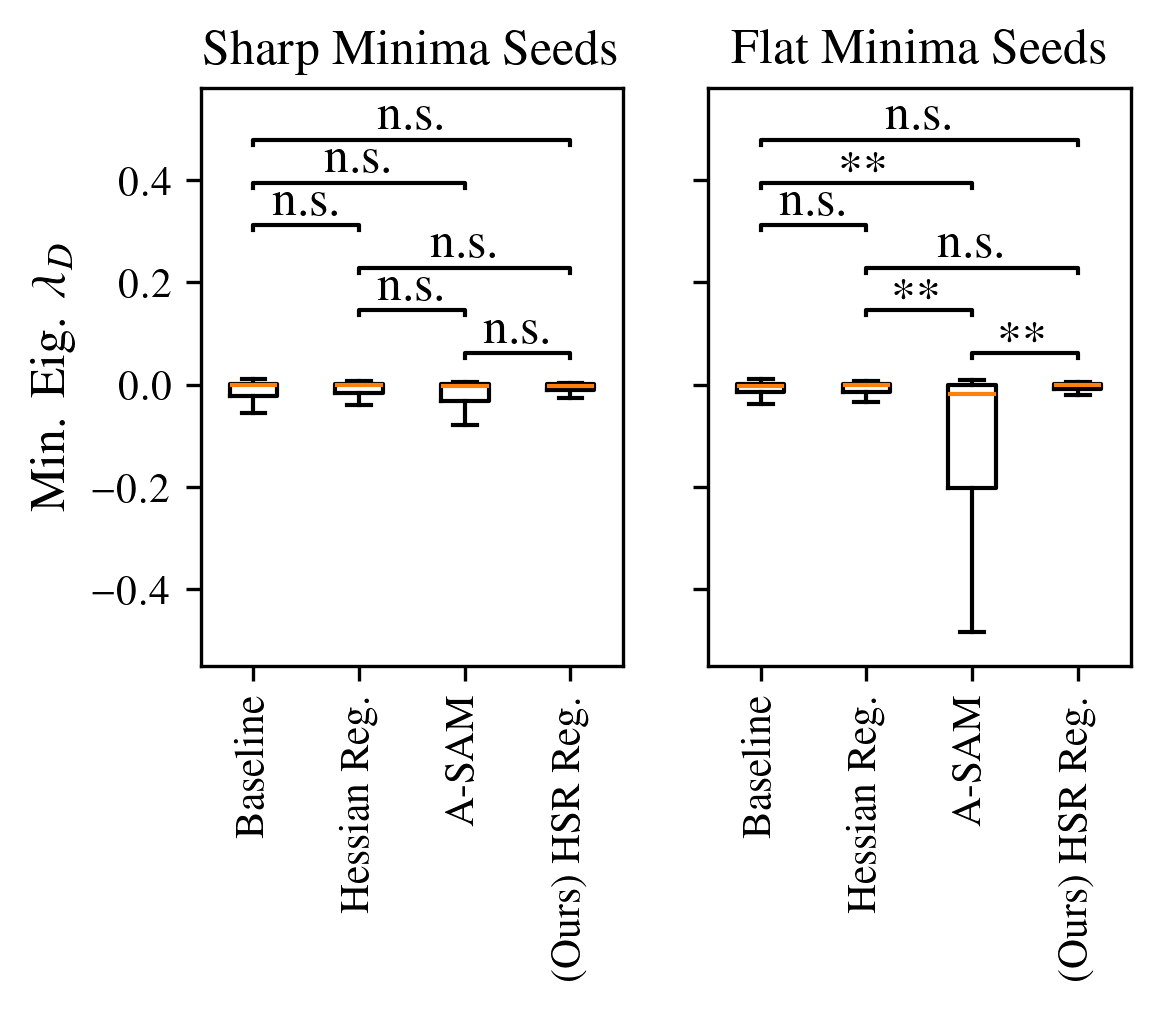}
        \caption{Minimum eigenvalue}
        \label{fig_e_min}
    \end{subfigure}

    \caption{
Statistics of the Hessian eigenspectrum at critical points.
Asterisks indicate p-values from the two-sided Wilcoxon signed-rank test (**: 0.1\% level, *: 1\% level).
Boxes represent the IQR, and whiskers indicate the $\pm 1.5 \times \mathrm{IQR}$ range.
    }
    \label{fig_eigen_stats}
\end{figure*}

To quantitatively compare the sharpness of the critical points, we examined several statistics regarding the eigenspectrum at the converged critical points.
The results are shown in Fig.~\ref{fig_eigen_stats}.
Fig.~\ref{fig_e_max} presents a comparison of the maximum eigenvalues.
In the case of the Sharp Minima Seeds, it can be seen that all methods successfully reduced the maximum eigenvalue compared to the plain gradient descent method without regularization.
Among these, A-SAM and HSR regularization were particularly effective.
The reason HSR regularization outperformed Hessian regularization is considered to be that while Hessian regularization relies solely on the Hessian trace, HSR regularization additionally incorporates the squared Hessian trace.
In the case of the Flat Minima Seeds, all methods achieved low values.
Therefore, no adverse effects of eigenvalue control on the maximum eigenvalue were observed.

Fig.~\ref{fig_e_ave} and Fig.~\ref{fig_e_std} present the mean and standard deviation of the eigenspectrum, respectively.
For these statistics, almost the same trends as the maximum eigenvalue were observed.
In deep learning, it is widely known that the resulting eigenspectrum tends to have only a few top eigenvalues with non-zero values, while the remaining eigenvalues are close to zero~\cite{wuDissecting2022}\cite{xiePowerLaw2022}\cite{papyanTraces2020}.
In such a situation, the operation to decrease the maximum eigenvalue is equivalent to the operation to lower the mean of the eigenvalues.
Furthermore, lowering the top eigenvalues implies that they approach the mean, which leads to a reduction in the standard deviation of the eigenspectrum.
That is, a low maximum eigenvalue and low values for both the mean and standard deviation of the eigenspectrum are considered to be strongly related properties.

\begin{figure}[t] 
    \centering
    \includegraphics[scale=0.88]{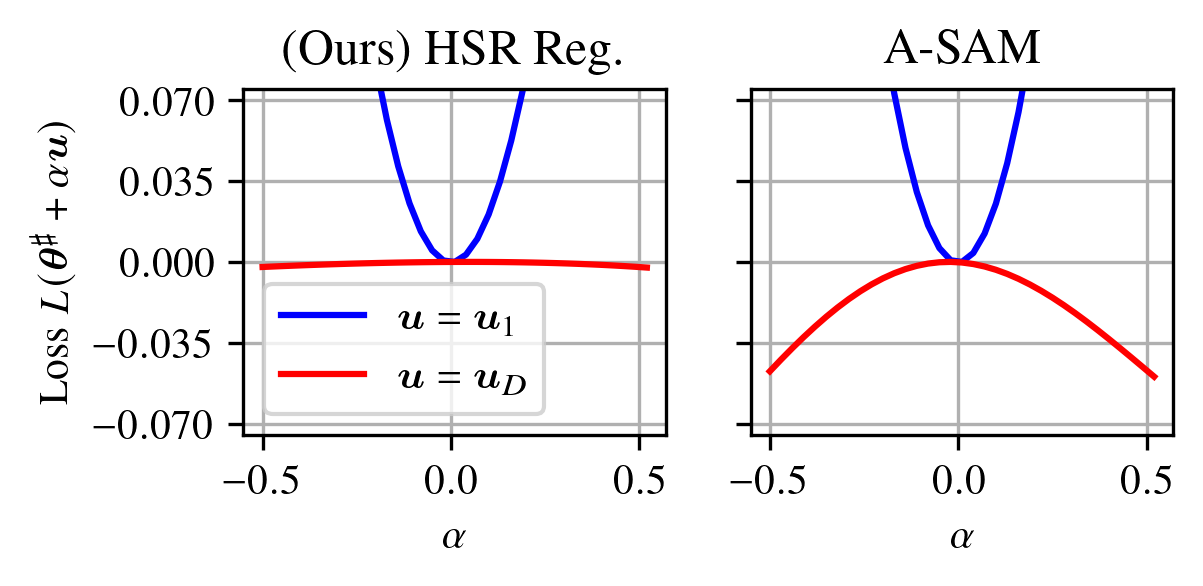}
    \caption{
    Loss landscape along the eigenvector directions at the critical point.
Here, $\bm{u}_1$ and $\bm{u}_D$ represent the eigenvectors corresponding to $\lambda_1$ and $\lambda_D$, respectively.
The result is obtained using a random seed where the minimum eigenvalue is located around $-1.5 \times \mathrm{IQR}$ in the distribution shown in Fig.~\ref{fig_e_min}.
For better visual clarity, the loss function is vertically shifted such that its minimum value is 0 along the $\bm{u}_1$ direction, and its maximum value is 0 along the $\bm{u}_D$ direction.
}
    \label{fig_l_critpoint}
\end{figure}

Fig.~\ref{fig_e_min} summarizes the minimum eigenvalues of the critical points reached by each method.
From this plot, it can be seen that the minimum eigenvalues are close to 0 in most cases.
On the other hand, for the case of A-SAM with the Flat Minima Seeds, the distribution of the minimum eigenvalues tended to be left-skewed with a long tail in the negative direction.
Since this suggests the presence of saddle points, we performed a one-dimensional visualization of the loss at the critical point located at the tip of the whisker for A-SAM.
For comparison, we also examined the loss shape at the critical point located at the tip of the whisker of the minimum eigenvalue for HSR regularization.
The resulting plots are shown in Fig.~\ref{fig_l_critpoint}.
The blue and red lines represent the loss shapes along the directions of the eigenvectors corresponding to the maximum and minimum eigenvalues, respectively.
From these results, a clear saddle point was observed for A-SAM.
In recent deep learning literature, this may not pose a significant issue because several methods exist to escape from saddle points.
However, since the termination of training at a saddle point carries a potential risk of training failure, it cannot be considered a desirable state.
In contrast, no clear saddle point was observed for HSR regularization.
The reason for this is considered to be that HSR regularization controls not only the maximum eigenvalue but also the lower bound of the minimum eigenvalue.

\begin{figure}[t] 
    \centering
    \includegraphics[scale=0.9]{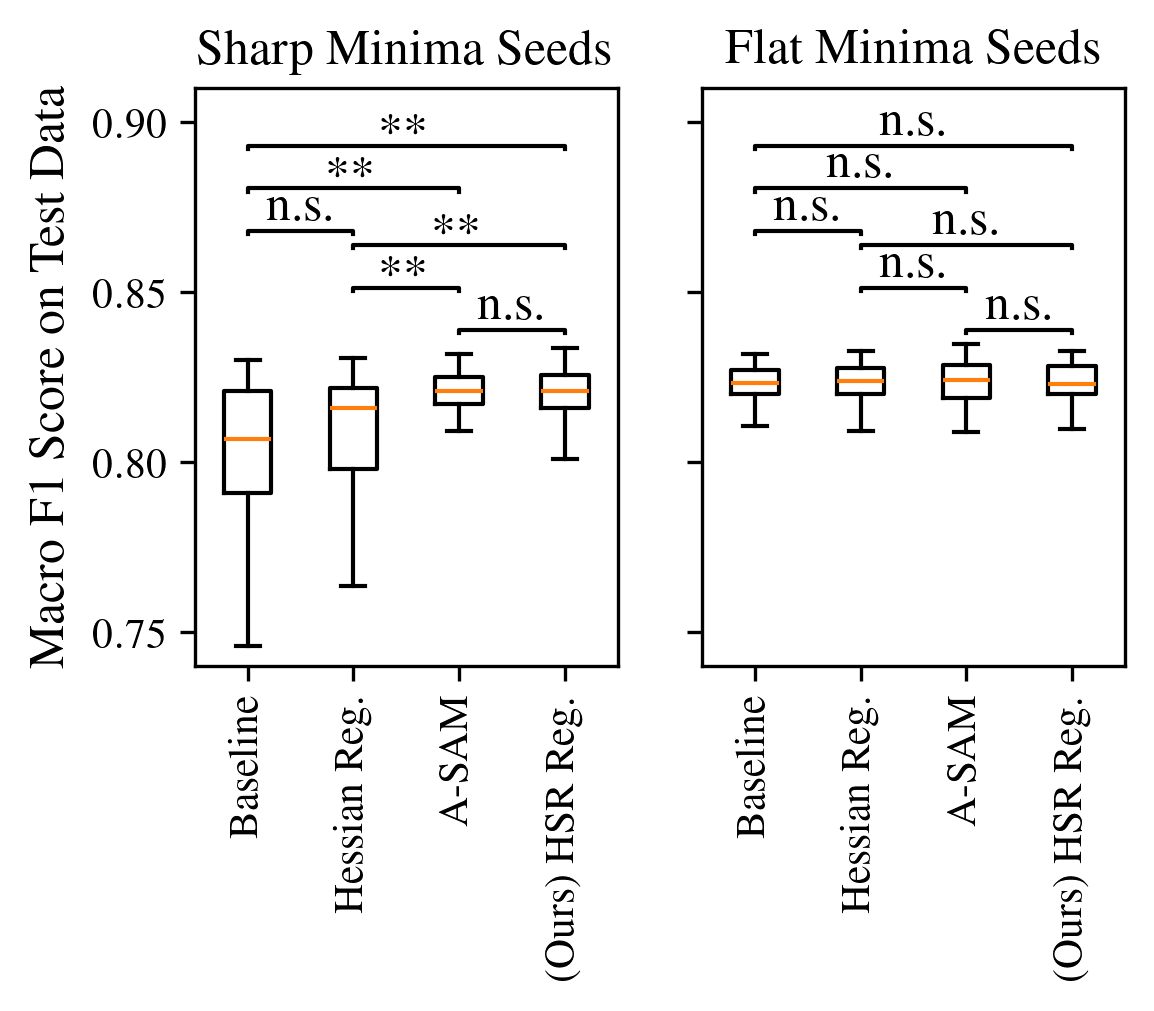}
    \caption{
Macro F1-scores at the critical point. 
Asterisks indicate statistical significance based on the two-sided Wilcoxon signed-rank test 
($^{**}\!$: 0.1\% level, $^{*}\!$: 1\% level). 
Boxes represent the IQR, and whiskers extend to $\pm 1.5 \times \text{IQR}$.}
    \label{fig_f1}
\end{figure}

\subsection{Impact on Generalization Performance}
In the previous section, it was confirmed that HSR regularization has the effect of reducing the maximum eigenvalue while avoiding convergence to saddle points.
Although this is a desirable property from the perspective of pursuing flat minima, it would be counterproductive if it did not lead to an improvement in generalization performance.
Therefore, in this section, we verify whether HSR regularization has the effect of improving generalization performance.

Fig.~\ref{fig_f1} summarizes the macro F1 scores at the critical points reached by each method, evaluated using the test dataset.
Looking at the results for the Sharp Minima Seeds, it can be confirmed that the three methods controlling the eigenvalues successfully improved the test performance compared to the plain gradient descent method without regularization.
Among these, A-SAM and HSR regularization exhibited a higher improvement than Hessian regularization.
Therefore, it can be said that HSR regularization is expected to achieve a performance improvement comparable to that of A-SAM.
Furthermore, the results for the Flat Minima Seeds suggest that HSR regularization does not induce any adverse effects on initial parameters that inherently lead to flat minima without any specific modifications.

To investigate the reasons behind the performance improvement of HSR regularization, we visualized the decision boundaries for both the unregularized method and HSR regularization in the case of the Sharp Minima Seeds.
The results are shown in Fig.~\ref{fig_bline}.
The upper panel represents the decision boundaries without regularization, while the lower panel shows those obtained with HSR regularization.
Since the results in the same column are derived from the identical initial parameters $\bm{\theta}^{(0)}$, they have a one-to-one correspondence.
From these plots, it can be seen that the application of HSR regularization has the effect of simplifying the decision boundaries.
This implies the suppression of overfitting, which is a desirable property from the perspective of generalization.

\begin{figure}[t] 
    \centering
    \includegraphics[scale=0.78]{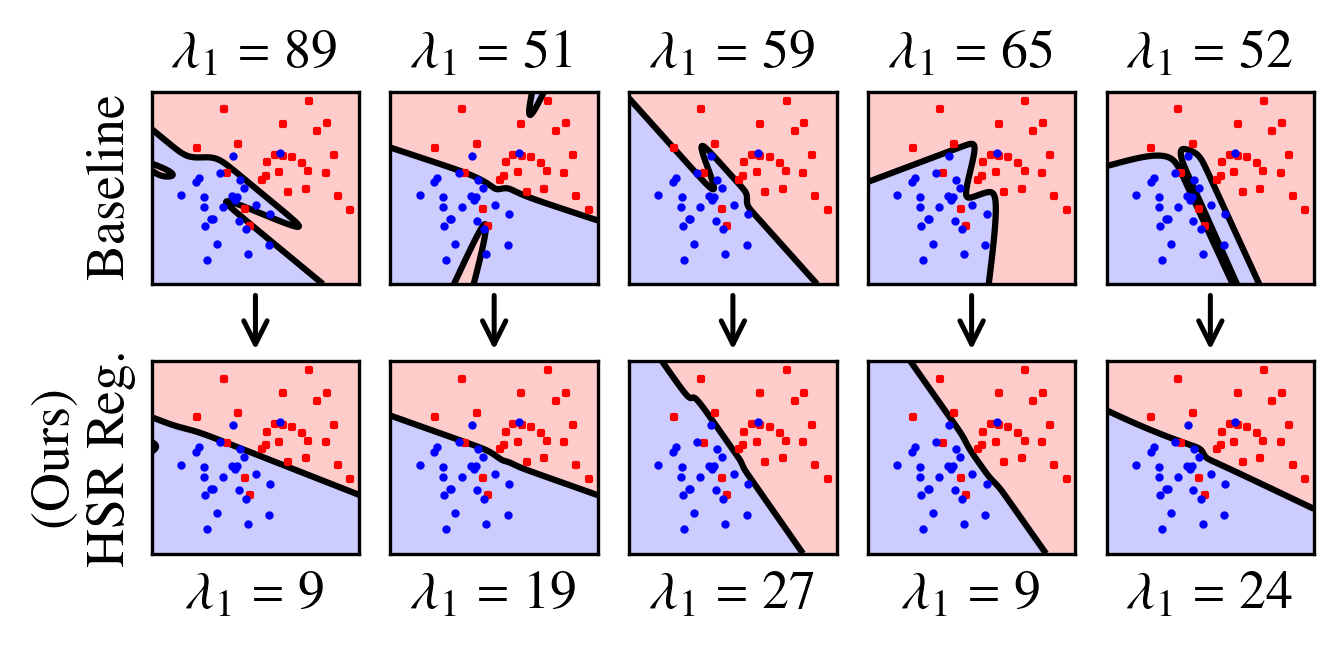}
    \caption{
    Relationship between decision boundaries and the presence of HSR regularization for Sharp Minima Seeds.
    Top: Without regularization (baseline). Bottom: With HSR regularization.
    }
    \label{fig_bline}
\end{figure}

\section{Limitations and Future Work}
In prior studies, a closed-form expression for the direction toward flat minima under the CE loss in feedforward NNs had not been discovered.
Therefore, in this study, we derived the steepest descent direction of the WS upper bound, which represents one such expression, in a closed form.
Through this derivation, it was confirmed that the eigenvalues can be controlled, and the proposed method demonstrated superior performance compared to the existing Hessian regularization.
Furthermore, it was confirmed to yield a performance improvement comparable to that of A-SAM.
These findings suggest that the closed-form expression of the steepest descent direction of the WS upper bound is a valuable function from the perspective of clarifying the relationship between NNs and generalization.
However, because the resulting function is extensively large, the theoretical investigation remains insufficient at this stage.
Accordingly, as one of our future tasks, we intend to conduct a detailed analysis of this closed-form function.

Furthermore, extending this approach toward practical applications is also crucial.
At present, the value of the WS upper bound demonstrated in this study remains limited to the theoretical aspect.
This limitation arises because the steepest descent direction of the WS upper bound can only be applied to a three-layer feedforward NN designed for solving two-class classification problems.
As a method intended for NNs capable of addressing a wide variety of tasks, such as multi-class classification, regression, and segmentation, this current formulation is overly restrictive.
In addition, HSR regularization cannot be applied to NNs with four or more layers.
Therefore, for future work, we plan to devise a methodology that enables the implementation of HSR regularization for arbitrary layer architectures and diverse tasks.

\appendices
\section{Preliminaries}
\subsection{Derivative Layout} \label{secc_layout}
In this study, we adopt the denominator layout for arranging derivatives in gradients and Jacobians. 
That is,
\ali{
&\bm{\varkappa}(\theta) \in \mathbb{R}^{D_1} \land \theta \in \mathbb{R} 
\Rightarrow
\frac{\pa \bm{\varkappa}(\theta)}{\pa \theta} = 
\mat{\frac{\pa \varkappa_1}{\pa \theta} \cdots \frac{\pa \varkappa_{D_1}}{\pa \theta} }, \nn \\
&\varkappa(\bm{\theta}) \in \mathbb{R} \land \bm{\theta} \in \mathbb{R}^{D_2} 
\Rightarrow
\frac{\pa \varkappa(\bm{\theta})}{\pa \bm{\theta}} = 
\mat{\frac{\pa \varkappa(\bm{\theta})}{\pa \theta_1} \cdots \frac{\pa \varkappa(\bm{\theta})}{\pa \theta_{D_2} } }^\top \nn
}
holds.
The Jacobian is defined as
\ali{
&\bm{\varkappa}(\bm{\theta}) \in \mathbb{R}^{\mathcal{D}_1} \land \bm{\theta} \in \mathbb{R}^{\mathcal{D}_2} \Rightarrow \nn \\
&\frac{\pa \bm{\varkappa}(\bm{\theta})}{\pa \bm{\theta}} = 
\frac{\pa }{\pa \bm{\theta}} \bm{\varkappa}(\bm{\theta})^\top
=
\mat{
\frac{\pa \varkappa_1}{\pa \theta_1} &\cdots& \frac{\pa \varkappa_{D_1}}{\pa \theta_1} \\
\vdots & \ddots & \vdots \\
\frac{\pa \varkappa_1}{\pa \theta_{D_2}} &\cdots& \frac{\pa \varkappa_{D_1}}{\pa \theta_{D_2}}
}.\nn
}
In this case, according to Eq.~(4) of \cite{khamisLearning2020}, the Leibniz rule for the gradient of an inner product is given by
\ali{
&\bm{\varkappa}(\bm{\theta}), \bm{\xi}(\bm{\theta}) \in \mathbb{R}^{\mathcal{D}_1} \land \bm{\theta} \in \mathbb{R}^{\mathcal{D}_2} \Rightarrow \nn \\
&\frac{\pa \bm{\varkappa}(\bm{\theta})^\top \bm{\xi}(\bm{\theta})}{\pa \bm{\theta}} 
= \frac{\pa \bm{\varkappa}(\bm{\theta})}{\pa \bm{\theta}} \bm{\xi}(\bm{\theta})
+ \frac{\pa \bm{\xi}(\bm{\theta})}{\pa \bm{\theta}} \bm{\varkappa}(\bm{\theta}) 
\in \mathbb{R}^{\mathcal{D}_2}.
\label{eqq_ripniz}
}

\subsection{Activation Functions} \label{secc_act}
Let $f(y) \in \mathbb{R}$ be an activation function for a scalar value $y \in \mathbb{R}$.  
For an intermediate-layer data vector $\bm{y} \in \mathbb{R}^{N}$, we define the vector obtained by applying the activation function element-wise as
\ali{ 
\bm{f}(\bm{y}) = \mat{f(y_1) \ \cdots \ f(y_N)}^\top \in \mathbb{R}^{N}. \label{eqq_bm_f}
}
Let $f^{(k)}(y)$ denote the $k$-th derivative of $f(y)$. We define the vector composed of these derivatives as
\ali{
\bm{f}^{(k)}(\bm{y}) = \mat{f^{(k)}(y_1)\ \cdots \ f^{(k)}(y_N)}^\top \in \mathbb{R}^{N}, \label{eqq_bm_vecF_k}
}
where $k \in \mathbb{N}$.
We also define the Jacobian matrix with respect to the input $\bm{y}$ as
\ali{
\bm{F}^{(k)}(\bm{y}) = \frac{\partial \bm{f}^{(k-1)}(\bm{y})}{\partial \bm{y}} 
=\mathrm{diag}(\bm{f}^{(k)}(\bm{y}) ), \label{eqq_bm_matF_k}
}
where $\bm{f}^{(0)} = \bm{f}$.
We further define the vector and matrix obtained by prepending a zero element or a zero row to the gradient and Jacobian, respectively, as
\ali{
\bm{f}^{(k)}_0(\bm{y}) = \mat{0  \\ \bm{f}^{(k)}(\bm{y})}, \ 
\bm{F}^{(k)}_0(\bm{y}) = \mat{\bm{0}_{N}^\top \\ \bm{F}^{(k)}(\bm{y})}, \label{eqq_bm_matvecF_k_0}
}
where $\bm{0}_{N}$ denotes the $N$-dimensional zero vector.
In this paper, however, only the cases $k \in \{1,2,3\}$ are considered.
Therefore, the value of $k$ is indicated using prime notation.
Specifically, $f^{(1)} = f^\prime$, $f^{(2)} = f^{\prime\prime}$, and $f^{(3)} = f^{\prime\prime\prime}$. 
Omae et al.~\cite{omaeWolkowiczStyan2026} summarize the first-, second-, and third-order derivatives of several activation functions, including the linear, sigmoid, tanh, SmoothReLU, and GELU activations. 
Readers may refer to that work for details when needed.

\subsection{Kronecker Product}
The Kronecker product appears frequently throughout this paper. In this subsection, we present several identities that will be used in subsequent derivations.

\begin{lemma}
\ali{
\bm{x} \otimes (\bm{A} \bm{B}) &= (\bm{x} \otimes \bm{A}) \bm{B}, \label{eqq_kron_1} \\
\bm{x} \otimes (\bm{A} + \bm{B}) &= \bm{x} \otimes \bm{A} + \bm{x} \otimes \bm{B}, \label{eqq_kron_1_5}  \\
\bm{x} \otimes \mat{\bm{A} & \bm{B}} &= \mat{\bm{x} \otimes \bm{A} & \bm{x} \otimes \bm{B}}. \label{eqq_kron_2}
}
\end{lemma}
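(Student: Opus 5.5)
The three identities follow directly from the definition of the Kronecker product. I will treat $\bm{x}\in\mathbb{R}^{n}$ as a column vector and write
\ali{
\bm{x}\otimes\bm{C} = \mat{x_1\bm{C} \\ \vdots \\ x_n\bm{C}} \nn
}
for any matrix $\bm{C}$; this is the special case of the general definition in which the left factor has a single column. Each identity then reduces to a statement about stacked scalar multiples of blocks, and can be verified block by block. The dimensions are taken to be compatible: $\bm{A}\bm{B}$ defined for the first identity, $\bm{A},\bm{B}$ of equal size for the second, and equal row counts for the third.

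For the first identity, the $k$-th block row of $\bm{x}\otimes(\bm{A}\bm{B})$ is $x_k\bm{A}\bm{B}$. Since scalar multiplication commutes with matrix multiplication, this equals $(x_k\bm{A})\bm{B}$. Stacking these blocks and using the fact that right-multiplying a vertically stacked matrix by $\bm{B}$ multiplies each block by $\bm{B}$ gives
\ali{
\mat{x_1\bm{A}\bm{B} \\ \vdots \\ x_n\bm{A}\bm{B}} = \mat{x_1\bm{A} \\ \vdots \\ x_n\bm{A}}\bm{B} = (\bm{x}\otimes\bm{A})\bm{B}. \nn
}
Alternatively, one could invoke the mixed-product property $(\bm{x}\otimes\bm{A})(1\otimes\bm{B}) = (\bm{x}\cdot 1)\otimes(\bm{A}\bm{B})$ with the $1\times1$ matrix $1$, but the blockwise argument is self-contained.

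For the second identity, the $k$-th block of $\bm{x}\otimes(\bm{A}+\bm{B})$ is $x_k(\bm{A}+\bm{B}) = x_k\bm{A}+x_k\bm{B}$, by distributivity of scalar multiplication. Block stacking is linear, so the stacked result is $\bm{x}\otimes\bm{A}+\bm{x}\otimes\bm{B}$.

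For the third identity, the $k$-th block row of $\bm{x}\otimes\mat{\bm{A} & \bm{B}}$ is $x_k\mat{\bm{A} & \bm{B}} = \mat{x_k\bm{A} & x_k\bm{B}}$. Stacking over $k$, the left column blocks together form $\bm{x}\otimes\bm{A}$ and the right column blocks together form $\bm{x}\otimes\bm{B}$.

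There is no real obstacle here. The only point requiring care is that all three identities depend on $\bm{x}$ being a single column, so that each block row of $\bm{x}\otimes\bm{C}$ is simply a scalar multiple of $\bm{C}$. This holds in every later use in the paper, where $\bm{x}$ is taken to be $\bm{h}(\bm{x}_i)$.
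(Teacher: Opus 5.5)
Your proposal is correct and takes essentially the same approach as the paper. Both write $\bm{x}\otimes\bm{C}$ as the vertical stack of blocks $x_k\bm{C}$ and check each identity block by block, using only that scalar multiplication commutes with matrix multiplication, distributes over addition, and acts columnwise on $\mat{\bm{A} & \bm{B}}$.
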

\begin{proof}
If $\bm{A}\bm{B}$ is well-defined, 
\ali{
(\bm{x} \otimes \bm{A}) \bm{B} 
= \mat{x_1 \bm{A} \\ \vdots \\ x_M \bm{A}} \bm{B}
= \mat{x_1 \bm{A}\bm{B} \\ \vdots \\ x_M \bm{A}\bm{B}} = \bm{x} \otimes (\bm{A} \bm{B}). \nn
}
If $\bm{A} + \bm{B}$ is well-defined, 
\ali{
\bm{x} \otimes \bm{A} + \bm{x} \otimes \bm{B} 
&= \mat{x_1 \bm{A} \\ \vdots \\ x_M \bm{A}} + \mat{x_1 \bm{B} \\ \vdots \\ x_M \bm{B}} = \mat{x_1 (\bm{A} + \bm{B}) \\ \vdots \\ x_M (\bm{A} + \bm{B})} \nn \\
&= \bm{x} \otimes (\bm{A} + \bm{B}). \nn
}
If $\mat{\bm{A} & \bm{B}}$ is well-defined, 
\ali{
\mat{\bm{x} \otimes \bm{A} & \bm{x} \otimes \bm{B}}
&= \mat{x_1 \bm{A} & x_1 \bm{B} \\
\vdots & \vdots \\
x_M \bm{A} & x_M \bm{B}} = \mat{x_1 \mat{\bm{A} & \bm{B}} \\
\vdots \\
x_M \mat{\bm{A} & \bm{B}}}\nn \\
& = \bm{x} \otimes \mat{\bm{A} & \bm{B}} \nn
}
is obtained.
\end{proof}

\section{Proofs}
\subsection{Proof for Eqs.~\eqref{eqq_main_theo_direction}, 
\eqref{eqq_main_theo_mu}, and \eqref{eqq_main_theo_sigma} } \label{secc_h2_tr_grad}
From Eq.~\eqref{eq_main_theorem}, Eq.~\eqref{eqq_main_theo_direction} follows directly.
Likewise, Eq.~\eqref{eqq_main_theo_mu} follows from Eq.~\eqref{eq_main_theorem_musig}.
From Eq.~\eqref{eq_main_theorem_musig}, the gradient $\pa \sigma(\bm{\theta})/\pa \bm{\theta}$ is given by
\ali{
\frac{\pa \sigma(\bm{\theta})}{\pa \bm{\theta}} &=  
\frac{\pa (\sigma(\bm{\theta})^2)^{1/2}}{\pa \sigma(\bm{\theta})^2} 
\frac{\pa \sigma(\bm{\theta})^2}{\pa \bm{\theta}} \nn \\
&= \frac{1}{2\sigma(\bm{\theta})} \bigg(
\frac{1}{D} \frac{\pa \mathrm{tr} ( \bm{H}_L(\bm{\theta})^2 ) }{\pa \bm{\theta}}
- \frac{\pa \mu(\bm{\theta})^2 }{\pa \bm{\theta}} \bigg)
\nn \\
&= \frac{1}{2\sigma(\bm{\theta})} \bigg(
\frac{1}{D} \frac{\pa \mathrm{tr} ( \bm{H}_L(\bm{\theta})^2 ) }{\pa \bm{\theta}}
- \frac{\pa \mu(\bm{\theta})^2 }{\pa \mu(\bm{\theta})} \frac{\pa \mu(\bm{\theta}) }{\pa \bm{\theta}} \bigg)
\nn \\
&= \frac{1}{2\sigma(\bm{\theta})} \bigg(
\frac{1}{D} \frac{\pa \mathrm{tr} ( \bm{H}_L(\bm{\theta})^2 ) }{\pa \bm{\theta}}
- 2\mu(\bm{\theta})  \frac{\pa \mu(\bm{\theta}) }{\pa \bm{\theta}} \bigg)
\nn \\
&=  
\frac{1}{2 \sigma(\bm{\theta}) D} \frac{\pa \mathrm{tr} (\bm{H}_L(\bm{\theta})^2)}{\pa \bm{\theta}} 
- \frac{\mu(\bm{\theta})}{\sigma(\bm{\theta})} \frac{\pa \mu(\bm{\theta})}{\pa \bm{\theta}}. \nn
}

\subsection{Proof for Eq.~\eqref{eqq_h1_w_div_theorem}} \label{secc_H1_grad_w}
\subsubsection{Gradient Decomposition}
From Eq.~\eqref{eqq_tr_h1_origin}, the gradient of the Hessian trace can be decomposed as
\ali{
&\frac{\pa \mathrm{tr}(\bm{H}_L(\bm{\theta}))}{\pa \bm{\theta}} = 
\sum_{i=1}^I \frac{\pa s^\prime(z_i)(1+\bm{f}(\bm{y}_i)^\top\bm{f}(\bm{y}_i)) }{\pa \bm{\theta}} \nn \\
&+\sum_{i=1}^I (1+\bm{x}_i^\top \bm{x}_i) \bigg( \frac{\pa s^\prime(z_i) \| \bm{F}^{\prime}(\bm{y}_i) \widetilde{\bm{V}}^\top\|^2}{\pa \bm{\theta}} + \frac{\pa \delta_i \widetilde{\bm{V}} \bm{f}^{\prime\prime}(\bm{y}_i)}{\pa \bm{\theta}}  \bigg).
\label{eqq_tr_h1_theta_div}
}
Applying Eq.~\eqref{eqq_ripniz} to the first gradient term, we obtain
\ali{
&\frac{\partial s^\prime(z)(1+\bm{f}(\bm{y})^\top \bm{f}(\bm{y}) )}{\partial \bm{\theta}} \nn\\
&= \frac{\partial s^\prime(z)}{\partial \bm{\theta}} (1+\bm{f}(\bm{y})^\top \bm{f}(\bm{y}) )
+ s^\prime(z) \frac{\partial \bm{f}(\bm{y})^\top \bm{f}(\bm{y})}{\partial \bm{\theta}} \nn \\
&= \frac{\partial s^\prime(z)}{\partial \bm{\theta}} (1+\bm{f}(\bm{y})^\top \bm{f}(\bm{y}) )
+ 2 s^\prime(z) \frac{\partial \bm{f}(\bm{y})}{\partial \bm{\theta}} \bm{f}(\bm{y}). \label{eqq_szinpro_theta_div}
}
Applying Eq.~\eqref{eqq_ripniz} to the second gradient term, we obtain
\ali{
&\frac{\pa s^\prime(z) \| \bm{F}^{\prime}(\bm{y}) \widetilde{\bm{V}}^\top\|^2}{\pa \bm{\theta}}
= \frac{\pa s^\prime(z) (\bm{F}^{\prime}(\bm{y}) \widetilde{\bm{V}}^\top)^\top \bm{F}^{\prime}(\bm{y}) \widetilde{\bm{V}}^\top}{\pa \bm{\theta}}\nn \\
&= \frac{\pa s^\prime(z)}{\pa \bm{\theta}} \widetilde{\bm{V}} \bm{F}^{\prime}(\bm{y})^2 \widetilde{\bm{V}}^\top
+s^\prime(z) \frac{\pa (\bm{F}^{\prime}(\bm{y}) \widetilde{\bm{V}}^\top)^\top \bm{F}^{\prime}(\bm{y}) \widetilde{\bm{V}}^\top}{\pa \bm{\theta}} \nn \\
&= \bigg(\frac{\pa s^\prime(z)}{\pa \bm{\theta}} \widetilde{\bm{V}} \bm{F}^{\prime}(\bm{y})
+ 2 s^\prime(z) \frac{\pa \bm{F}^{\prime}(\bm{y}) \widetilde{\bm{V}}^\top }{\pa \bm{\theta}}\bigg) \bm{F}^{\prime}(\bm{y}) \widetilde{\bm{V}}^\top. \label{eqq_spz_nrm2_theta_div}
}
Since $\widetilde{\bm{V}} \bm{f}^{\prime\prime}(\bm{y}_i)$ is the inner product of $\widetilde{\bm{V}}^\top$ and $\bm{f}^{\prime\prime}(\bm{y}_i)$, Eq.~\eqref{eqq_ripniz} gives
\ali{
&\frac{\pa \delta \widetilde{\bm{V}} \bm{f}^{\prime\prime}(\bm{y})}{\pa \bm{\theta}}
 = \frac{\pa \delta}{\pa \bm{\theta}} \widetilde{\bm{V}} \bm{f}^{\prime\prime}(\bm{y})
+ \delta \frac{\pa \widetilde{\bm{V}} \bm{f}^{\prime\prime}(\bm{y})}{\pa \bm{\theta}} \nn \\
& = \frac{\pa \delta}{\pa \bm{\theta}} \widetilde{\bm{V}} \bm{f}^{\prime\prime}(\bm{y})
+ \delta \frac{\pa \widetilde{\bm{V}}^\top}{\pa \bm{\theta}} \bm{f}^{\prime\prime}(\bm{y})
+ \delta \frac{\pa \bm{f}^{\prime\prime}(\bm{y})}{\pa \bm{\theta}} \widetilde{\bm{V}}^\top \label{eqq_vfpp_y_div}
}
for the third gradient term.

\subsubsection{Individual Gradients}
\begin{lemma}
\ali{
&\frac{\pa s^\prime(z) \| \bm{F}^{\prime}(\bm{y}) \widetilde{\bm{V}}^\top\|^2}{\pa \bm{w}}
= \bm{h}(\bm{x}) 
 \otimes \Big(s^{\prime\prime}(z) \bm{F}^{\prime}(\bm{y}) \widetilde{\bm{V}}^\top \widetilde{\bm{V}} \bm{F}^{\prime}(\bm{y}) \nn \\
 &+ 2 s^\prime(z) \mathrm{diag}(\widetilde{\bm{V}}^\top ) \bm{F}^{\prime\prime}(\bm{y})
 \Big) \bm{F}^{\prime}(\bm{y}) \widetilde{\bm{V}}^\top  \in \mathbb{R}^{(M+1)N}. \label{eqq_spzfpyv_w_div_fin}
}
\end{lemma}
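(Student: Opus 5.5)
The plan is to start from the factorization already obtained in Eq.~\eqref{eqq_spz_nrm2_theta_div}, specialized to $\bm{\theta}\to\bm{w}$:
\ali{
\frac{\pa s^\prime(z) \| \bm{F}^{\prime}(\bm{y}) \widetilde{\bm{V}}^\top\|^2}{\pa \bm{w}}
= \bigg(\frac{\pa s^\prime(z)}{\pa \bm{w}} \widetilde{\bm{V}} \bm{F}^{\prime}(\bm{y})
+ 2 s^\prime(z) \frac{\pa \bm{F}^{\prime}(\bm{y}) \widetilde{\bm{V}}^\top }{\pa \bm{w}}\bigg) \bm{F}^{\prime}(\bm{y}) \widetilde{\bm{V}}^\top. \nn
}
Two Jacobians then need to be identified: $\pa s^\prime(z)/\pa\bm{w}\in\mathbb{R}^{(M+1)N}$ and $\pa(\bm{F}^\prime(\bm{y})\widetilde{\bm{V}}^\top)/\pa\bm{w}\in\mathbb{R}^{(M+1)N\times N}$. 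Both will be reduced to the single basic Jacobian $\pa\bm{y}/\pa\bm{w}$. Since $\bm{y}=\bm{W}\bm{h}(\bm{x})$ and $\bm{w}$ is the column-stacking of $\bm{W}$, we have $\bm{y}=(\bm{h}(\bm{x})^\top\otimes\bm{I}_N)\bm{w}$. In the denominator layout this gives $\pa\bm{y}/\pa\bm{w}=\bm{h}(\bm{x})\otimes\bm{I}_N$, and this is the first step to record.

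For the first Jacobian, apply the chain rule $s^\prime(z)\to z\to\bm{r}\to\bm{y}\to\bm{w}$:
\ali{
\frac{\pa s^\prime(z)}{\pa\bm{w}}=\frac{\pa\bm{y}}{\pa\bm{w}}\frac{\pa\bm{f}(\bm{y})}{\pa\bm{y}}\frac{\pa z}{\pa\bm{r}}\,s^{\prime\prime}(z)=(\bm{h}(\bm{x})\otimes\bm{I}_N)\bm{F}^\prime(\bm{y})\widetilde{\bm{V}}^\top s^{\prime\prime}(z). \nn
}
This uses $\pa z/\pa\bm{r}=\widetilde{\bm{V}}^\top$ and Eq.~\eqref{eqq_bm_matF_k}.

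For the second Jacobian, use $\bm{F}^\prime(\bm{y})\widetilde{\bm{V}}^\top=\mathrm{diag}(\widetilde{\bm{V}}^\top)\bm{f}^\prime(\bm{y})$. Because $\widetilde{\bm{V}}$ does not depend on $\bm{w}$, its Jacobian is $(\pa\bm{f}^\prime(\bm{y})/\pa\bm{w})\,\mathrm{diag}(\widetilde{\bm{V}}^\top)$, which equals $(\bm{h}(\bm{x})\otimes\bm{I}_N)\bm{F}^{\prime\prime}(\bm{y})\mathrm{diag}(\widetilde{\bm{V}}^\top)$. Diagonal matrices commute, so this can be rewritten as $(\bm{h}(\bm{x})\otimes\bm{I}_N)\mathrm{diag}(\widetilde{\bm{V}}^\top)\bm{F}^{\prime\prime}(\bm{y})$.

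Finally, substitute both Jacobians back into the factorization. The common left factor $(\bm{h}(\bm{x})\otimes\bm{I}_N)$ can be pulled out. Then Eq.~\eqref{eqq_kron_1} gives $(\bm{x}\otimes\bm{I})\bm{B}=\bm{x}\otimes\bm{B}$, and Eq.~\eqref{eqq_kron_1_5} combines the two summands inside a single Kronecker product. What remains is $s^{\prime\prime}(z)\bm{F}^\prime(\bm{y})\widetilde{\bm{V}}^\top\widetilde{\bm{V}}\bm{F}^\prime(\bm{y})+2s^\prime(z)\mathrm{diag}(\widetilde{\bm{V}}^\top)\bm{F}^{\prime\prime}(\bm{y})$, multiplied on the right by $\bm{F}^\prime(\bm{y})\widetilde{\bm{V}}^\top$, which is the stated form. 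The dimension check $(M+1)N$ is immediate.

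The main obstacle is keeping the layout conventions consistent. This means confirming that $\pa\bm{y}/\pa\bm{w}$ is $\bm{h}(\bm{x})\otimes\bm{I}_N$ rather than $\bm{I}_N\otimes\bm{h}(\bm{x})$, which depends on the column-stacking order of Eq.~(3) in the prior work. It also means ordering the chain-rule factors correctly in the denominator layout, so that Eq.~\eqref{eqq_kron_1} applies on the correct side.
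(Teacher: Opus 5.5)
Your proposal is correct and takes essentially the paper's route. Like the paper, you start from the factorization in Eq.~\eqref{eqq_spz_nrm2_theta_div}, obtain $\pa s^\prime(z)/\pa\bm{w}=\bm{h}(\bm{x})\otimes\bigl(s^{\prime\prime}(z)\bm{F}^\prime(\bm{y})\widetilde{\bm{V}}^\top\bigr)$ and $\pa\bigl(\bm{F}^\prime(\bm{y})\widetilde{\bm{V}}^\top\bigr)/\pa\bm{w}=\bm{h}(\bm{x})\otimes\mathrm{diag}(\widetilde{\bm{V}}^\top)\bm{F}^{\prime\prime}(\bm{y})$, and recombine them with Eqs.~\eqref{eqq_kron_1} and \eqref{eqq_kron_1_5}. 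The only difference is cosmetic: the paper derives the first Jacobian block by block via $\pa z/\pa\bm{W}_{:m}$ and the second by an entrywise expansion, whereas you push both through $\pa\bm{y}/\pa\bm{w}=\bm{h}(\bm{x})\otimes\bm{I}_N$ and use that diagonal matrices commute.
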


\begin{proof}
Using Eqs.~(51) and (54) in~\cite{omaeWolkowiczStyan2026}, we obtain
\ali{
\frac{\pa z}{\pa \bm{W}_{:m}} = 
\frac{\pa \bm{y}}{\pa \bm{W}_{:m}}
\frac{\pa \bm{r}}{\pa \bm{y}}
\frac{\pa \bm{h}(\bm{r})}{\pa \bm{r}}
\frac{\pa z}{\pa \bm{h}(\bm{r})} = x_m \bm{F}^\prime (\bm{y}) \widetilde{\bm{V}}^\top.
\label{eqq_z_Wm_div}
}
Therefore,
\ali{
\frac{\partial s^\prime(z)}{\partial \bm{W}_{:m}} &= 
\frac{\partial z}{\partial \bm{W}_{:m}}
\frac{\partial s^\prime(z)}{\partial z}
 =
 x_{m} 
 s^{\prime\prime}(z)
 \bm{F}^{\prime}(\bm{y})  
\widetilde{\bm{V}}^\top, \nn \\
\frac{\partial s^\prime(z)}{\partial \bm{w}} 
& =
 \bm{h}(\bm{x}) 
 \otimes (s^{\prime\prime}(z) \bm{F}^{\prime}(\bm{y}) \widetilde{\bm{V}}^\top). \label{eqq_spz_w}
}
From Eq.~(51) in~\cite{omaeWolkowiczStyan2026}, we have
\ali{
\frac{\pa \bm{y}}{\pa \bm{w}} = \bm{h}(\bm{x})\otimes\bm{E}_N. \label{eqq_y_w_div}
}
Therefore, by Eq.~\eqref{eqq_kron_1}, 
\ali{
&\frac{\pa \bm{F}^{\prime}(\bm{y}) \widetilde{\bm{V}}^\top }{\pa \bm{w}}
= \frac{\pa \bm{f}^{\prime}(\bm{y}) \odot \widetilde{\bm{V}}^\top }{\pa \bm{w}}\nn \\
&=\mat{v_1 \frac{\pa f^{\prime}(y_1) }{\pa \bm{w}} & \cdots & v_N\frac{\pa f^{\prime}(y_N) }{\pa \bm{w}}} \nn \\
&=\mat{v_1 \frac{\pa f^{\prime}(y_1) }{\pa y_1} \frac{\pa y_1 }{\pa \bm{w}} & \cdots & v_N \frac{\pa f^{\prime}(y_N) }{\pa y_N} \frac{\pa y_N }{\pa \bm{w}} } \nn\\
&=\mat{v_1 f^{\prime\prime}(y_1) \frac{\pa y_1 }{\pa \bm{w}} & \cdots & v_N f^{\prime\prime}(y_N) \frac{\pa y_N }{\pa \bm{w}} }\nn \\
&=\mat{\frac{\pa y_1 }{\pa \bm{w}} & \cdots & \frac{\pa y_N }{\pa \bm{w}} } 
\mat{
v_1 f^{\prime\prime}(y_1) &\cdots &0 \\
\vdots & \ddots & \vdots \\
0 &\cdots &v_N f^{\prime\prime}(y_N) \\
}\nn \\
&= \frac{\pa \bm{y} }{\pa \bm{w}} \mathrm{diag}(\widetilde{\bm{V}}^\top) \bm{F}^{\prime\prime}(\bm{y})
= \bm{h}(\bm{x})\otimes \mathrm{diag}(\widetilde{\bm{V}}^\top) \bm{F}^{\prime\prime}(\bm{y}). \label{eqq_fyv_w}
}
Substituting Eqs.~\eqref{eqq_spz_w} and \eqref{eqq_fyv_w} into Eq.~\eqref{eqq_spz_nrm2_theta_div}, we obtain
\ali{
&\frac{\pa s^\prime(z) \| \bm{F}^{\prime}(\bm{y}) \widetilde{\bm{V}}^\top\|^2}{\pa \bm{w}}\nn \\
&= \bigg(\frac{\pa s^\prime(z)}{\pa \bm{w}} \widetilde{\bm{V}} \bm{F}^{\prime}(\bm{y})
+ 2 s^\prime(z) \frac{\pa \bm{F}^{\prime}(\bm{y}) \widetilde{\bm{V}}^\top }{\pa \bm{w}}\bigg) \bm{F}^{\prime}(\bm{y}) \widetilde{\bm{V}}^\top \nn \\
&=  \bm{h}(\bm{x}) 
 \otimes \Big(s^{\prime\prime}(z) \bm{F}^{\prime}(\bm{y}) \widetilde{\bm{V}}^\top \widetilde{\bm{V}} \bm{F}^{\prime}(\bm{y}) \nn \\
&+ 2 s^\prime(z) \mathrm{diag}(\widetilde{\bm{V}}^\top ) \bm{F}^{\prime\prime}(\bm{y})
\Big) \bm{F}^{\prime}(\bm{y}) \widetilde{\bm{V}}^\top,
}
where Eq.~\eqref{eqq_kron_1_5} was used.
\end{proof}

\begin{lemma}
\ali{
&\frac{\pa \delta \widetilde{\bm{V}} \bm{f}^{\prime\prime}(\bm{y})}{\pa \bm{w}} = \bm{h}(\bm{x}) \otimes \nn \\
& (s^\prime(z) \bm{F}^\prime (\bm{y}) \widetilde{\bm{V}}^\top \widetilde{\bm{V}} \bm{f}^{\prime\prime}(\bm{y}) + \delta \bm{F}^{\prime\prime\prime}(\bm{y}) \widetilde{\bm{V}}^\top ) \in \mathbb{R}^{(M+1)N }.\label{eqq_dvfpp_w_div_fin}
}
\end{lemma}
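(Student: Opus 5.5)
We start from the decomposition already obtained in Eq.~\eqref{eqq_vfpp_y_div}, specialized to $\bm{\theta}=\bm{w}$:
\ali{
\frac{\pa \delta \widetilde{\bm{V}} \bm{f}^{\prime\prime}(\bm{y})}{\pa \bm{w}}
= \frac{\pa \delta}{\pa \bm{w}} \widetilde{\bm{V}} \bm{f}^{\prime\prime}(\bm{y})
+ \delta \frac{\pa \widetilde{\bm{V}}^\top}{\pa \bm{w}} \bm{f}^{\prime\prime}(\bm{y})
+ \delta \frac{\pa \bm{f}^{\prime\prime}(\bm{y})}{\pa \bm{w}} \widetilde{\bm{V}}^\top. \nn
}
We evaluate the three terms one at a time.

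The middle term vanishes because $\widetilde{\bm{V}}$ does not depend on $\bm{w}$, so $\pa \widetilde{\bm{V}}^\top/\pa \bm{w}$ is the zero matrix.

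For the first term, $\delta = p - q$ and $q$ is a constant label, so $\pa \delta/\pa z = s^\prime(z)$. By the chain rule with Eq.~\eqref{eqq_z_Wm_div}, exactly as was done for $s^\prime(z)$ in Eq.~\eqref{eqq_spz_w}, we get
\ali{
\frac{\pa \delta}{\pa \bm{W}_{:m}} = x_m s^\prime(z) \bm{F}^\prime(\bm{y}) \widetilde{\bm{V}}^\top, \nn
}
with $x_0=1$ for the bias column. Stacking over $m$ gives $\pa\delta/\pa\bm{w} = \bm{h}(\bm{x}) \otimes (s^\prime(z)\bm{F}^\prime(\bm{y})\widetilde{\bm{V}}^\top)$. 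Multiplying on the right by the scalar $\widetilde{\bm{V}}\bm{f}^{\prime\prime}(\bm{y})$ and applying Eq.~\eqref{eqq_kron_1}, this becomes $\bm{h}(\bm{x})\otimes(s^\prime(z)\bm{F}^\prime(\bm{y})\widetilde{\bm{V}}^\top\widetilde{\bm{V}}\bm{f}^{\prime\prime}(\bm{y}))$.

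For the third term, we repeat the column-wise argument of Eq.~\eqref{eqq_fyv_w} with $f^\prime$ replaced by $f^{\prime\prime}$. Since the $n$-th entry of $\bm{f}^{\prime\prime}(\bm{y})$ depends only on $y_n$,
\ali{
\frac{\pa \bm{f}^{\prime\prime}(\bm{y})}{\pa \bm{w}} = \frac{\pa \bm{y}}{\pa \bm{w}} \bm{F}^{\prime\prime\prime}(\bm{y}). \nn
}
Using Eq.~\eqref{eqq_y_w_div}, $\pa\bm{y}/\pa\bm{w} = \bm{h}(\bm{x})\otimes\bm{E}_N$, and Eq.~\eqref{eqq_kron_1}, this equals $\bm{h}(\bm{x})\otimes\bm{F}^{\prime\prime\prime}(\bm{y})$. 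Multiplying by $\delta\widetilde{\bm{V}}^\top$ then gives $\bm{h}(\bm{x})\otimes(\delta\bm{F}^{\prime\prime\prime}(\bm{y})\widetilde{\bm{V}}^\top)$.

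Adding the two nonzero contributions with Eq.~\eqref{eqq_kron_1_5} yields the claimed expression. For the dimension check, the bracketed vector lies in $\mathbb{R}^N$, so the Kronecker product with $\bm{h}(\bm{x})\in\mathbb{R}^{M+1}$ lies in $\mathbb{R}^{(M+1)N}$.

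The only delicate step is keeping the layout conventions consistent. We must confirm that the derivative $\pa\bm{y}/\pa\bm{w}$ of shape $(M+1)N\times N$, multiplied by the diagonal $\bm{F}^{\prime\prime\prime}$, reproduces the element-wise chain rule. We must also confirm that the column ordering of $\bm{w}$, which stacks the columns of $\bm{W}$ (Eq.~(3) of \cite{omaeWolkowiczStyan2026}), matches $\bm{h}(\bm{x})\otimes\cdot$ rather than $\cdot\otimes\bm{h}(\bm{x})$. Both facts are the same ones already used in Eq.~\eqref{eqq_fyv_w}, so we rely on them directly.
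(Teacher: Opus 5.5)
Your proposal is correct and follows essentially the same route as the paper's proof. Like the paper, you start from the three-term split in Eq.~\eqref{eqq_vfpp_y_div}, drop the $\widetilde{\bm{V}}$-term, and get $\pa\delta/\pa\bm{w}$ by the chain rule through $z$ (Eq.~\eqref{eqq_z_Wm_div}) and $\pa\bm{f}^{\prime\prime}(\bm{y})/\pa\bm{w}=\bm{h}(\bm{x})\otimes\bm{F}^{\prime\prime\prime}(\bm{y})$ from Eq.~\eqref{eqq_y_w_div}, then combine them with Eqs.~\eqref{eqq_kron_1} and \eqref{eqq_kron_1_5}.
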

\begin{proof}
Combining Eq.~\eqref{eqq_z_Wm_div} with Eq.~(52) in~\cite{omaeWolkowiczStyan2026} yields
\ali{
\frac{\pa \delta}{\pa \bm{W}_{:m}} &=
\frac{\pa p}{\pa \bm{W}_{:m}} 
= \frac{\pa z}{\pa \bm{W}_{:m}} \frac{\pa p}{\pa z} 
= x_m s^\prime(z) \bm{F}^\prime (\bm{y}) \widetilde{\bm{V}}^\top, \nn \\
\frac{\pa \delta}{\pa \bm{w}} &=
\bm{h}(\bm{x}) \otimes (s^\prime(z) \bm{F}^\prime (\bm{y}) \widetilde{\bm{V}}^\top). \label{eqq_delta_Wm_div}
}
Since $\widetilde{\bm{V}}$ does not depend on $\bm{w}$,
\ali{
\frac{\pa \widetilde{\bm{V}}^\top}{\pa \bm{w}} = \bm{0}_{(M+1)N \times N}. \label{eqq_v_w_div}
}
From Eq.~\eqref{eqq_y_w_div} and Eq.~(35) in~\cite{omaeWolkowiczStyan2026}, we obtain
\ali{
\frac{\pa \bm{f}(\bm{y})}{\pa \bm{w}} 
&= \frac{\pa \bm{y}}{\pa \bm{w}} \frac{\pa \bm{f}(\bm{y}) }{\pa \bm{y}}
= \bm{h}(\bm{x}) \otimes \bm{F}^\prime(\bm{y}), \label{eqq_pfy_w} \\
\frac{\pa \bm{f}^\prime(\bm{y}) }{\pa \bm{w}}
&= \frac{\pa \bm{y}}{\pa \bm{w}} \frac{\pa \bm{f}^\prime(\bm{y}) }{\pa \bm{y}}
= \bm{h}(\bm{x})\otimes \bm{F}^{\prime\prime}(\bm{y}), \label{eqq_fpyy_and_yw_div} \\
\frac{\pa \bm{f}^{\prime\prime}(\bm{y}) }{\pa \bm{w}}
&= \frac{\pa \bm{y}}{\pa \bm{w}} \frac{\pa \bm{f}^{\prime\prime}(\bm{y}) }{\pa \bm{y}}
= \bm{h}(\bm{x})\otimes \bm{F}^{\prime\prime\prime}(\bm{y}). \label{eqq_fpp_div}
}
By substituting Eqs.~\eqref{eqq_delta_Wm_div}, \eqref{eqq_v_w_div}, and \eqref{eqq_fpp_div} into Eq.~\eqref{eqq_vfpp_y_div}, we obtain
\ali{
&\frac{\pa \delta \widetilde{\bm{V}} \bm{f}^{\prime\prime}(\bm{y})}{\pa \bm{w}}
= \frac{\pa \delta}{\pa \bm{w}} \widetilde{\bm{V}} \bm{f}^{\prime\prime}(\bm{y})
+ \delta \frac{\pa \widetilde{\bm{V}}^\top}{\pa \bm{w}} \bm{f}^{\prime\prime}(\bm{y})
+ \delta \frac{\pa \bm{f}^{\prime\prime}(\bm{y})}{\pa \bm{w}} \widetilde{\bm{V}}^\top \nn \\
&= \bm{h}(\bm{x}) \otimes (s^\prime(z) \bm{F}^\prime (\bm{y}) \widetilde{\bm{V}}^\top \widetilde{\bm{V}} \bm{f}^{\prime\prime}(\bm{y}) + \delta \bm{F}^{\prime\prime\prime}(\bm{y}) \widetilde{\bm{V}}^\top ), \nn
}
where Eq.~\eqref{eqq_kron_1_5} was used.
\end{proof}

\begin{lemma}
\ali{
&\frac{\partial s^\prime(z)(1+\bm{f}(\bm{y})^\top \bm{f}(\bm{y}) )}{\partial \bm{w}} = \bm{h}(\bm{x}) \otimes \nn \\
&\Big(
s^{\prime\prime}(z) (1+\bm{f}(\bm{y})^\top \bm{f}(\bm{y}) )\bm{F}^{\prime}(\bm{y}) \widetilde{\bm{V}}^\top
+
2 s^\prime(z) \bm{F}^\prime(\bm{y})  \bm{f}(\bm{y}) \Big)\nn \\
& \in \mathbb{R}^{(M+1)N}. \label{eqq_szfyfy_w_div_fin}
}
\end{lemma}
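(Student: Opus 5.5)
The plan is to start from the decomposition already obtained in Eq.~\eqref{eqq_szinpro_theta_div}, specialized to $\bm{\theta}=\bm{w}$:
\begin{align}
\frac{\partial s^\prime(z)(1+\bm{f}(\bm{y})^\top \bm{f}(\bm{y}) )}{\partial \bm{w}}
= \frac{\partial s^\prime(z)}{\partial \bm{w}} (1+\bm{f}(\bm{y})^\top \bm{f}(\bm{y}) )
+ 2 s^\prime(z) \frac{\partial \bm{f}(\bm{y})}{\partial \bm{w}} \bm{f}(\bm{y}). \nn
\end{align}
Both gradients appearing on the right-hand side have already been computed in the proofs of the preceding lemmas. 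I will substitute them and then collect terms using the Kronecker identities.

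For the first term I will use Eq.~\eqref{eqq_spz_w}, namely $\partial s^\prime(z)/\partial \bm{w} = \bm{h}(\bm{x})\otimes(s^{\prime\prime}(z)\bm{F}^\prime(\bm{y})\widetilde{\bm{V}}^\top)$. Multiplying by the scalar $1+\bm{f}(\bm{y})^\top\bm{f}(\bm{y})$ commutes with the Kronecker product, so this term becomes $\bm{h}(\bm{x})\otimes\big(s^{\prime\prime}(z)(1+\bm{f}(\bm{y})^\top\bm{f}(\bm{y}))\bm{F}^\prime(\bm{y})\widetilde{\bm{V}}^\top\big)$.

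For the second term I will use Eq.~\eqref{eqq_pfy_w}, $\partial \bm{f}(\bm{y})/\partial\bm{w} = \bm{h}(\bm{x})\otimes\bm{F}^\prime(\bm{y})$. Applying Eq.~\eqref{eqq_kron_1} with $\bm{A}=\bm{F}^\prime(\bm{y})$ and $\bm{B}=\bm{f}(\bm{y})$ gives $(\bm{h}(\bm{x})\otimes\bm{F}^\prime(\bm{y}))\bm{f}(\bm{y}) = \bm{h}(\bm{x})\otimes(\bm{F}^\prime(\bm{y})\bm{f}(\bm{y}))$. The scalar $2s^\prime(z)$ can then be moved inside the product.

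Finally, Eq.~\eqref{eqq_kron_1_5} merges the two Kronecker terms into the single bracket stated in Eq.~\eqref{eqq_szfyfy_w_div_fin}. As a dimension check, $\bm{F}^\prime(\bm{y})\widetilde{\bm{V}}^\top$ and $\bm{F}^\prime(\bm{y})\bm{f}(\bm{y})$ both lie in $\mathbb{R}^N$ and $\bm{h}(\bm{x})\in\mathbb{R}^{M+1}$, so the result lies in $\mathbb{R}^{(M+1)N}$.

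No step is a real obstacle, since everything reduces to results already established. The only point needing care is consistency with the denominator layout. I must confirm that $\partial \bm{f}(\bm{y})/\partial \bm{w}$ is the $(M+1)N\times N$ Jacobian, so that right-multiplying it by $\bm{f}(\bm{y})$ matches the Leibniz rule Eq.~\eqref{eqq_ripniz} used in Eq.~\eqref{eqq_szinpro_theta_div}. Since Eq.~\eqref{eqq_y_w_div} has that shape and Eq.~\eqref{eqq_pfy_w} was derived from it by the chain rule in the same layout, this holds.
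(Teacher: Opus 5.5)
Your proposal is correct and is the same argument as the paper's: substitute Eqs.~\eqref{eqq_spz_w} and \eqref{eqq_pfy_w} into Eq.~\eqref{eqq_szinpro_theta_div}, then merge the two terms with the Kronecker identities. The paper cites only Eq.~\eqref{eqq_kron_1_5}, whereas you also state the use of Eq.~\eqref{eqq_kron_1} for the factor $(\bm{h}(\bm{x})\otimes\bm{F}^\prime(\bm{y}))\bm{f}(\bm{y})$; that is a harmless clarification.
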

\begin{proof}
Substituting Eqs.~\eqref{eqq_spz_w} and \eqref{eqq_pfy_w} into Eq.~\eqref{eqq_szinpro_theta_div} yields
\ali{
&\frac{\partial s^\prime(z)(1+\bm{f}(\bm{y})^\top \bm{f}(\bm{y}) )}{\partial \bm{w}} \nn \\
&= \frac{\partial s^\prime(z)}{\partial \bm{w}} (1+\bm{f}(\bm{y})^\top \bm{f}(\bm{y}) )
+ 2 s^\prime(z) \frac{\partial \bm{f}(\bm{y})}{\partial \bm{w}} \bm{f}(\bm{y}) \nn \\
&= \bm{h}(\bm{x}) \otimes \Big(
s^{\prime\prime}(z) (1+\bm{f}(\bm{y})^\top \bm{f}(\bm{y}) )\bm{F}^{\prime}(\bm{y}) \widetilde{\bm{V}}^\top \nn \\
&+
2 s^\prime(z) \bm{F}^\prime(\bm{y})  \bm{f}(\bm{y})
\Big), \nn
}
where Eq.~\eqref{eqq_kron_1_5} was used.
\end{proof}

\subsubsection{Completion of the Proof}
From Eqs.~\eqref{eqq_W_i_I}, \eqref{eqq_W_i_II}, \eqref{eqq_W_i_III},
\eqref{eqq_spzfpyv_w_div_fin},
\eqref{eqq_dvfpp_w_div_fin}, and \eqref{eqq_szfyfy_w_div_fin}, 
\ali{
\bm{h}(\bm{x}_i) \otimes \bm{\mathcal{W}}_{i}^\mathrm{I} &=
(1+\bm{x}_i^\top \bm{x}_i) \frac{\pa s^\prime(z_i) \| \bm{F}^{\prime}(\bm{y}_i) \widetilde{\bm{V}}^\top\|^2}{\pa \bm{w}},  \nn\\
\bm{h}(\bm{x}_i) \otimes \bm{\mathcal{W}}_{i}^\mathrm{II} &=
(1+\bm{x}_i^\top \bm{x}_i) \frac{\pa \delta_i \widetilde{\bm{V}} \bm{f}^{\prime\prime}(\bm{y}_i)}{\pa \bm{w}},  \nn\\
\bm{h}(\bm{x}_i) \otimes \bm{\mathcal{W}}_{i}^\mathrm{III} &= 
\frac{\pa s^\prime(z_i)(1+\bm{f}(\bm{y}_i)^\top\bm{f}(\bm{y}_i)) }{\pa \bm{w}}\nn
}
hold.
Therefore, from Eq.~\eqref{eqq_tr_h1_theta_div}, we obtain
\ali{
&\frac{\pa \mathrm{tr}(\bm{H}_L(\bm{\theta}))}{\pa \bm{w}} = 
\sum_{i=1}^I \frac{\pa s^\prime(z_i)(1+\bm{f}(\bm{y}_i)^\top\bm{f}(\bm{y}_i)) }{\pa \bm{w}} \nn \\
&+\sum_{i=1}^I (1+\bm{x}_i^\top \bm{x}_i) \bigg( \frac{\pa s^\prime(z_i) \| \bm{F}^{\prime}(\bm{y}_i) \widetilde{\bm{V}}^\top\|^2}{\pa \bm{w}} + \frac{\pa \delta_i \widetilde{\bm{V}} \bm{f}^{\prime\prime}(\bm{y}_i)}{\pa \bm{w}} \bigg) \nn \\
&= \sum_{i=1}^I \bm{h}(\bm{x}_i) \otimes (\bm{\mathcal{W}}_{i}^\mathrm{I} + \bm{\mathcal{W}}_{i}^\mathrm{II} + \bm{\mathcal{W}}_{i}^\mathrm{III} ). \nn
}

\subsection{Proof for Eq.~\eqref{eqq_h1_v_div_theorem}} \label{secc_H1_grad_v}
\subsubsection{Individual Gradients}
\begin{lemma}
\ali{
&\frac{\pa s^\prime(z) \| \bm{F}^{\prime}(\bm{y}) \widetilde{\bm{V}}^\top\|^2}{\pa \bm{v}}
= \Big(s^{\prime\prime}(z) \bm{h}(\bm{f}(\bm{y})) \widetilde{\bm{V}} \bm{F}^{\prime}(\bm{y}) \nn\\
&+ 2 s^\prime(z) \bm{F}^\prime_0(\bm{y}) \Big)  \bm{F}^{\prime}(\bm{y}) \widetilde{\bm{V}}^\top
 \in \mathbb{R}^{N+1}.
 \label{eqq_spzfpyv_v_div_fin}
}
\end{lemma}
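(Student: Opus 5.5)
The plan mirrors the proof of the corresponding $\bm{w}$-lemma (Eq.~\eqref{eqq_spzfpyv_w_div_fin}). Take the decomposition Eq.~\eqref{eqq_spz_nrm2_theta_div}, which holds for an arbitrary parameter vector, and specialize $\bm{\theta}$ to $\bm{v}$:
\ali{
\frac{\pa s^\prime(z) \| \bm{F}^{\prime}(\bm{y}) \widetilde{\bm{V}}^\top\|^2}{\pa \bm{v}}
= \bigg(\frac{\pa s^\prime(z)}{\pa \bm{v}} \widetilde{\bm{V}} \bm{F}^{\prime}(\bm{y})
+ 2 s^\prime(z) \frac{\pa \bm{F}^{\prime}(\bm{y}) \widetilde{\bm{V}}^\top }{\pa \bm{v}}\bigg) \bm{F}^{\prime}(\bm{y}) \widetilde{\bm{V}}^\top. \nn
}
It then suffices to identify the two inner gradients, $\pa s^\prime(z)/\pa \bm{v}$ and $\pa (\bm{F}^\prime(\bm{y})\widetilde{\bm{V}}^\top)/\pa \bm{v}$. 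The size check is immediate: the first is an $(N+1)\times 1$ vector times a $1\times N$ row, the second is an $(N+1)\times N$ Jacobian, and both are multiplied on the right by an $N$-vector, giving a result in $\mathbb{R}^{N+1}$.

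For the first gradient, observe that $\bm{y}=\bm{W}\bm{h}(\bm{x})$ does not depend on $\bm{v}$. Since $z=\bm{V}\bm{h}(\bm{r})$ with $\bm{r}=\bm{f}(\bm{y})$, and $\bm{v}$ is the column-stacking of the row vector $\bm{V}=[c\ \widetilde{\bm{V}}]$, we get $\pa z/\pa \bm{v}=\bm{h}(\bm{f}(\bm{y}))$ (this should coincide with the corresponding relation in~\cite{omaeWolkowiczStyan2026}). The chain rule then gives $\pa s^\prime(z)/\pa \bm{v} = s^{\prime\prime}(z)\,\bm{h}(\bm{f}(\bm{y}))$, which produces the first term of the claim.

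For the second gradient, note that $\bm{F}^\prime(\bm{y})\widetilde{\bm{V}}^\top = \bm{f}^\prime(\bm{y})\odot\widetilde{\bm{V}}^\top$. Because $\bm{f}^\prime(\bm{y})$ is independent of $\bm{v}$, only the factor $\widetilde{\bm{V}}^\top$ varies. The $n$-th component, $f^\prime(y_n)v_n$, has derivative $f^\prime(y_n)$ with respect to $v_n$, zero with respect to every other $v_{n'}$, and zero with respect to the bias $c$, which is the $0$-th entry of $\bm{v}$. In the denominator layout, the Jacobian is therefore $\bm{F}^\prime(\bm{y})$ with a zero row prepended, that is, $\bm{F}^\prime_0(\bm{y})$ as defined in Eq.~\eqref{eqq_bm_matvecF_k_0}. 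Substituting both gradients into the displayed decomposition gives exactly Eq.~\eqref{eqq_spzfpyv_v_div_fin}.

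The only delicate point is bookkeeping. We must confirm that the ordering of $\bm{v}$ puts $c$ first, so that the zero row of $\bm{F}^\prime_0$ and the leading $1$ of $\bm{h}(\bm{f}(\bm{y}))$ line up with the bias coordinate. We must also check that the Jacobian orientation in the denominator layout is $(N+1)\times N$ and not its transpose. I will check both against Eq.~(3) of~\cite{omaeWolkowiczStyan2026} and the layout conventions of Appendix~\ref{secc_layout}. Apart from this, the argument is a routine application of Eq.~\eqref{eqq_ripniz}.
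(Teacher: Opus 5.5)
Your proposal is correct and matches the paper's proof step for step. The paper also specializes Eq.~\eqref{eqq_spz_nrm2_theta_div} to $\bm{v}$, uses $\pa s^\prime(z)/\pa \bm{v} = s^{\prime\prime}(z)\,\bm{h}(\bm{f}(\bm{y}))$ (via Eq.~(55) of the predecessor study), and computes $\pa(\bm{F}^\prime(\bm{y})\widetilde{\bm{V}}^\top)/\pa\bm{v} = \mat{\bm{0}_N^\top \\ \bm{E}_N}\bm{F}^\prime(\bm{y}) = \bm{F}^\prime_0(\bm{y})$ exactly as you do. Your bookkeeping concerns resolve as you expect: $\bm{v}$ lists the bias $c$ first (the paper indexes it as $v_0$), and the denominator-layout Jacobian is $(N+1)\times N$.
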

\begin{proof}
Using Eq.~(55) in~\cite{omaeWolkowiczStyan2026}, we obtain
\ali{
\frac{\pa s^\prime(z)}{\pa \bm{v}} = \frac{\pa s^\prime(z)}{\pa z} \frac{\pa z}{\pa \bm{v}} 
= s^{\prime\prime}(z) \bm{h}(\bm{f}(\bm{y})). \label{eqq_szp_v}
}
We also have
\ali{
&\frac{\pa \bm{F}^{\prime}(\bm{y}) \widetilde{\bm{V}}^\top }{\pa \bm{v}}
= 
\frac{\pa \bm{f}^{\prime}(\bm{y}) \odot \widetilde{\bm{V}}^\top }{\pa \bm{v}} \nn\\
&=\mat{f^{\prime}(y_1) \frac{\pa v_1}{\pa \bm{v}} & \cdots & f^{\prime}(y_N)\frac{\pa v_N}{\pa \bm{v}}} \nn \\
&=\mat{\frac{\pa v_1 }{\pa \bm{v}} & \cdots & \frac{\pa v_N }{\pa \bm{v}} } 
\bm{F}^\prime(\bm{y})
= \mat{\bm{0}_N^\top \\ \bm{E}_N} \bm{F}^\prime(\bm{y})
= \bm{F}^\prime_0(\bm{y}).
\label{eqq_ypv_v}
}
Therefore, from Eq.~\eqref{eqq_spz_nrm2_theta_div},
\ali{
&\frac{\pa s^\prime(z) \| \bm{F}^{\prime}(\bm{y}) \widetilde{\bm{V}}^\top\|^2}{\pa \bm{v}}\nn \\
&= \bigg(\frac{\pa s^\prime(z)}{\pa \bm{v}} \widetilde{\bm{V}} \bm{F}^{\prime}(\bm{y})
+ 2 s^\prime(z) \frac{\pa \bm{F}^{\prime}(\bm{y}) \widetilde{\bm{V}}^\top }{\pa \bm{v}}\bigg) \bm{F}^{\prime}(\bm{y}) \widetilde{\bm{V}}^\top \nn \\
&= \Big(s^{\prime\prime}(z) \bm{h}(\bm{f}(\bm{y})) \widetilde{\bm{V}} \bm{F}^{\prime}(\bm{y}) 
+ 
 2 s^\prime(z) \bm{F}^\prime_0(\bm{y}) \Big)  \bm{F}^{\prime}(\bm{y}) \widetilde{\bm{V}}^\top \nn
}
is obtained.
\end{proof}

\begin{lemma}
\ali{
\frac{\pa \delta \widetilde{\bm{V}} \bm{f}^{\prime\prime}(\bm{y})}{\pa \bm{v}}
&= s^\prime(z) \bm{h}(\bm{f}(\bm{y})) \widetilde{\bm{V}} \bm{f}^{\prime\prime}(\bm{y})
+ \delta \bm{f}^{\prime\prime}_0(\bm{y}) \in \mathbb{R}^{N+1}. \label{eqq_dvfpp_v_div_fin}
}
\end{lemma}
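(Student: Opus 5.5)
The plan is to start from the product-rule decomposition already established in Eq.~\eqref{eqq_vfpp_y_div}, now with $\bm{\theta}$ replaced by $\bm{v}$:
\ali{
\frac{\pa \delta \widetilde{\bm{V}} \bm{f}^{\prime\prime}(\bm{y})}{\pa \bm{v}}
= \frac{\pa \delta}{\pa \bm{v}} \widetilde{\bm{V}} \bm{f}^{\prime\prime}(\bm{y})
+ \delta \frac{\pa \widetilde{\bm{V}}^\top}{\pa \bm{v}} \bm{f}^{\prime\prime}(\bm{y})
+ \delta \frac{\pa \bm{f}^{\prime\prime}(\bm{y})}{\pa \bm{v}} \widetilde{\bm{V}}^\top. \nn
}
Each of the three gradients will then be computed separately and substituted back.

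\textbf{First term.} Since $\delta = p - q$ and $q$ is a constant label, $\pa \delta / \pa \bm{v} = \pa p/\pa \bm{v}$. The chain rule through $z$ gives $(\pa z/\pa \bm{v})(\pa p/\pa z)$. Eq.~(55) of~\cite{omaeWolkowiczStyan2026}, which was already used for Eq.~\eqref{eqq_szp_v}, gives $\pa z / \pa \bm{v} = \bm{h}(\bm{f}(\bm{y}))$, because $z = \bm{V}\bm{h}(\bm{r})$ is linear in $\bm{v}$. Hence $\pa \delta/\pa \bm{v} = s^\prime(z)\bm{h}(\bm{f}(\bm{y}))$, exactly parallel to Eq.~\eqref{eqq_szp_v} with $s^{\prime\prime}$ replaced by $s^\prime$. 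Multiplying by the scalar $\widetilde{\bm{V}}\bm{f}^{\prime\prime}(\bm{y})$ yields the first summand of Eq.~\eqref{eqq_dvfpp_v_div_fin}.

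\textbf{Third term.} The vector $\bm{y} = \bm{W}\bm{h}(\bm{x})$ depends only on $\bm{w}$. Therefore $\pa \bm{f}^{\prime\prime}(\bm{y})/\pa \bm{v} = \bm{0}_{(N+1)\times N}$, and this term vanishes.

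\textbf{Second term (the only step needing care).} Here I use the ordering of $\bm{v}$ from Eq.~(3) of~\cite{omaeWolkowiczStyan2026}: $\bm{v} = [c \ \widetilde{\bm{V}}]^\top$. Under the denominator layout, the Jacobian $\pa \widetilde{\bm{V}}^\top / \pa \bm{v}$ is the $(N+1)\times N$ matrix $\mat{\bm{0}_N^\top \\ \bm{E}_N}$, the same matrix that appeared in Eq.~\eqref{eqq_ypv_v}. Multiplying by $\bm{f}^{\prime\prime}(\bm{y})$ prepends a zero, which by Eq.~\eqref{eqq_bm_matvecF_k_0} is $\bm{f}^{\prime\prime}_0(\bm{y})$. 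This gives the second summand $\delta \bm{f}^{\prime\prime}_0(\bm{y})$.

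Summing the three contributions establishes the claim. The dimension $N+1$ is immediate, since each summand is a multiple of an $(N+1)$-vector. I expect no real obstacle: the only point to verify is the layout and ordering convention for $\bm{v}$, so that the zero lands in the bias slot.
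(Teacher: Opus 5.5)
Your proposal is correct and follows essentially the same route as the paper: substitute $\pa \delta/\pa \bm{v} = s^\prime(z)\bm{h}(\bm{f}(\bm{y}))$, $\pa \bm{f}^{\prime\prime}(\bm{y})/\pa \bm{v} = \bm{0}_{(N+1)\times N}$, and $\pa \widetilde{\bm{V}}^\top/\pa \bm{v} = \mat{\bm{0}_N^\top \\ \bm{E}_N}$ into Eq.~\eqref{eqq_vfpp_y_div}. The only cosmetic difference is that you derive $\pa \delta/\pa \bm{v}$ by the chain rule through $z$ using Eq.~(55) of~\cite{omaeWolkowiczStyan2026}, whereas the paper cites Eq.~(62) there directly.
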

\begin{proof}
Since $\bm{f}^{\prime\prime}(\bm{y})$ does not depend on $\bm{v}$, and $\widetilde{\bm{V}}$ contains $\bm{v}$,
\ali{
\frac{\pa \bm{f}^{\prime\prime}(\bm{y})}{\pa \bm{v}} = \bm{0}_{(N+1) \times N}, \ 
\frac{\pa \widetilde{\bm{V}}^\top}{\pa \bm{v}} 
= \mat{\bm{0}_N^\top \\ \bm{E}_N}. \nn
}
Furthermore, from Eq.~(62) in~\cite{omaeWolkowiczStyan2026}, we obtain
\ali{
\frac{\pa \delta}{\pa \bm{v}} = \frac{\pa p}{\pa \bm{v}} = s^\prime(z) \bm{h}(\bm{f}(\bm{y})). \label{eqq_d_v_div}
}
Substituting these results into Eq.~\eqref{eqq_vfpp_y_div}, 
\ali{
\frac{\pa \delta \widetilde{\bm{V}} \bm{f}^{\prime\prime}(\bm{y})}{\pa \bm{v}}
& = \frac{\pa \delta}{\pa \bm{v}} \widetilde{\bm{V}} \bm{f}^{\prime\prime}(\bm{y})
+ \delta \frac{\pa \widetilde{\bm{V}}^\top}{\pa \bm{v}} \bm{f}^{\prime\prime}(\bm{y})
+ \delta \frac{\pa \bm{f}^{\prime\prime}(\bm{y})}{\pa \bm{v}} \widetilde{\bm{V}}^\top \nn \\
&= s^\prime(z) \bm{h}(\bm{f}(\bm{y})) \widetilde{\bm{V}} \bm{f}^{\prime\prime}(\bm{y})
+ \delta \mat{\bm{0}_N^\top \\ \bm{E}_N} \bm{f}^{\prime\prime}(\bm{y}) \nn \\
&= s^\prime(z) \bm{h}(\bm{f}(\bm{y})) \widetilde{\bm{V}} \bm{f}^{\prime\prime}(\bm{y})
+ \delta \bm{f}^{\prime\prime}_0(\bm{y}) \nn 
}
holds.
\end{proof}

\begin{lemma}
\ali{
&\frac{\partial s^\prime(z)(1+\bm{f}(\bm{y})^\top \bm{f}(\bm{y}) )}{\partial \bm{v}} \nn \\
&=s^{\prime\prime}(z) (1+\bm{f}(\bm{y})^\top \bm{f}(\bm{y}) ) \bm{h}(\bm{f}(\bm{y}))
 \in \mathbb{R}^{N+1}.
\label{eqq_szfyfy_v_div_fin}
}
\end{lemma}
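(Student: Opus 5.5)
The plan is to start from the product-rule decomposition already obtained in Eq.~\eqref{eqq_szinpro_theta_div}, specialized from $\bm{\theta}$ to $\bm{v}$:
\begin{align}
\frac{\partial s^\prime(z)(1+\bm{f}(\bm{y})^\top \bm{f}(\bm{y}) )}{\partial \bm{v}}
= \frac{\partial s^\prime(z)}{\partial \bm{v}} (1+\bm{f}(\bm{y})^\top \bm{f}(\bm{y}) )
+ 2 s^\prime(z) \frac{\partial \bm{f}(\bm{y})}{\partial \bm{v}} \bm{f}(\bm{y}). \nonumber
\end{align}
It then suffices to evaluate the two Jacobians appearing on the right-hand side.

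For the first term, I will reuse Eq.~\eqref{eqq_szp_v}, proved in the preceding lemma. It gives $\partial s^\prime(z)/\partial \bm{v} = s^{\prime\prime}(z)\bm{h}(\bm{f}(\bm{y}))$, which follows by the chain rule from $\partial z/\partial \bm{v} = \bm{h}(\bm{r}) = \bm{h}(\bm{f}(\bm{y}))$ (Eq.~(55) of \cite{omaeWolkowiczStyan2026}). Multiplying this $(N+1)$-vector by the scalar $1+\bm{f}(\bm{y})^\top\bm{f}(\bm{y})$ already produces the claimed right-hand side.

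For the second term, the key observation is that $\bm{y} = \bm{W}\bm{h}(\bm{x})$ depends only on $\bm{w}$ and not on $\bm{v}$, since $\bm{v}$ only enters downstream of the hidden layer through $z$. Hence $\partial \bm{f}(\bm{y})/\partial \bm{v} = \bm{0}_{(N+1)\times N}$, in the same way that $\partial \bm{f}^{\prime\prime}(\bm{y})/\partial \bm{v}$ was set to zero in the proof of Eq.~\eqref{eqq_dvfpp_v_div_fin}. The second term therefore vanishes.

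Combining the two pieces gives Eq.~\eqref{eqq_szfyfy_v_div_fin}. The dimension $\mathbb{R}^{N+1}$ follows because $\bm{h}(\bm{f}(\bm{y}))\in\mathbb{R}^{N+1}$. There is no real obstacle here; the only point that needs care is justifying that the hidden pre-activations are independent of the output-layer parameters, which comes straight from the forward model.
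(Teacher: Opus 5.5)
Your proposal is correct and takes the same route as the paper's proof. You specialize the product-rule decomposition \eqref{eqq_szinpro_theta_div} to $\bm{v}$, reuse $\partial s^\prime(z)/\partial \bm{v} = s^{\prime\prime}(z)\bm{h}(\bm{f}(\bm{y}))$ from \eqref{eqq_szp_v}, and eliminate the second term via $\partial \bm{f}(\bm{y})/\partial \bm{v} = \bm{0}_{(N+1)\times N}$, which is exactly the paper's Eq.~\eqref{eqq_pfy_v}.
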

\begin{proof}
Since $\bm{f}(\bm{y})$ does not depend on $\bm{v}$,
\ali{
\frac{\pa \bm{f}(\bm{y})}{\pa \bm{v}} = \bm{0}_{(N+1) \times N}. \label{eqq_pfy_v}
}
Substituting Eqs.~\eqref{eqq_pfy_v} and \eqref{eqq_szp_v} into Eq.~\eqref{eqq_szinpro_theta_div}, we obtain Eq.~\eqref{eqq_szfyfy_v_div_fin}.
\end{proof}

\subsubsection{Completion of the Proof}
Substituting Eqs.~\eqref{eqq_spzfpyv_v_div_fin}, \eqref{eqq_dvfpp_v_div_fin}, and \eqref{eqq_szfyfy_v_div_fin} into Eq.~\eqref{eqq_tr_h1_theta_div}, we obtain
\ali{
&\frac{\pa \mathrm{tr}(\bm{H}_L(\bm{\theta}))}{\pa \bm{v}} = 
\sum_{i=1}^I \frac{\pa s^\prime(z_i)(1+\bm{f}(\bm{y}_i)^\top\bm{f}(\bm{y}_i)) }{\pa \bm{v}}\nn \\
&+\sum_{i=1}^I (1+\bm{x}_i^\top \bm{x}_i) \bigg( \frac{\pa s^\prime(z_i) \| \bm{F}^{\prime}(\bm{y}_i) \widetilde{\bm{V}}^\top\|^2}{\pa \bm{v}} + \frac{\pa \delta \widetilde{\bm{V}} \bm{f}^{\prime\prime}(\bm{y}_i)}{\pa \bm{v}} \bigg) \nn \\
&= \sum_{i=1}^I (\bm{\mathcal{V}}_{i}^\mathrm{I} + \bm{\mathcal{V}}_{i}^\mathrm{II} + \bm{\mathcal{V}}_{i}^\mathrm{III} ), \nn
}
where Eqs.~\eqref{eqq_V_i_I}, \eqref{eqq_V_i_II}, and \eqref{eqq_V_i_III} were used.

\subsection{Proof for Eq.~\eqref{eqq_H2_w_div_fin}} \label{secc_H2_grad_w}
\subsubsection{Gradient Decomposition}
From Eq.~\eqref{eqq_tr_h2_origin}, 
\ali{
\frac{\pa \mathrm{tr}(\bm{H}_L(\bm{\theta})^2)}{\pa \bm{\theta}}
&= \sum_{i=1}^I \sum_{j=1}^I (1+ \bm{x}_i^\top\bm{x}_j)^2 \frac{\pa \phi_{ij} }{\pa \bm{\theta}}\nn \\
& + 2\sum_{i=1}^I \sum_{j=1}^I (1+ \bm{x}_i^\top\bm{x}_j) \frac{\pa \psi_{ij} }{\pa \bm{\theta}}\nn \\
& +
\sum_{i=1}^I \sum_{j=1}^I (1+ \bm{f}(\bm{y}_i)^\top\bm{f}(\bm{y}_j))^2 \frac{\pa \omega_{ij} }{\pa \bm{\theta}} \nn \\
&+\sum_{i=1}^I \sum_{j=1}^I \frac{\pa (1+ \bm{f}(\bm{y}_i)^\top\bm{f}(\bm{y}_j))^2 }{\pa \bm{\theta}} \omega_{ij} \label{eqq_H2_theta_div}
}
holds.
We decompose it into three parts
\ali{
&\Bigg[ \frac{\pa \mathrm{tr}(\bm{H}_L(\bm{\theta})^2)}{\pa \bm{\theta}^\dag} \Bigg]_\Phi
= \sum_{i=1}^I \sum_{j=1}^I (1+ \bm{x}_i^\top\bm{x}_j)^2 \frac{\pa \phi_{ij} }{\pa \bm{\theta}^\dag}, \label{eqq_phi_tr2_div}\\
&\Bigg[ \frac{\pa \mathrm{tr}(\bm{H}_L(\bm{\theta})^2)}{\pa \bm{\theta}^\dag} \Bigg]_\Psi
= 2\sum_{i=1}^I \sum_{j=1}^I (1+ \bm{x}_i^\top\bm{x}_j) \frac{\pa \psi_{ij} }{\pa \bm{\theta}^\dag}, \label{eqq_psi_tr2_div}\\
&\Bigg[ \frac{\pa \mathrm{tr}(\bm{H}_L(\bm{\theta})^2)}{\pa \bm{\theta}^\dag} \Bigg]_\Omega
= \sum_{i=1}^I \sum_{j=1}^I (1+ \bm{f}(\bm{y}_i)^\top\bm{f}(\bm{y}_j))^2 \frac{\pa \omega_{ij} }{\pa \bm{\theta}^\dag} \nn \\
&+\sum_{i=1}^I \sum_{j=1}^I \frac{\pa (1+ \bm{f}(\bm{y}_i)^\top\bm{f}(\bm{y}_j))^2 }{\pa \bm{\theta}^\dag} \omega_{ij}, \ 
\bm{\theta}^\dag \in \{\bm{w}, \bm{v} \}. \label{eqq_omega_tr2_div}
}

\begin{lemma}
\ali{
\frac{\pa \phi_{ij}}{\pa \bm{\theta}} &= \frac{\pa \bm{\phi}_{ij}}{\pa \bm{\theta}}(\bm{o}_i \otimes \bm{o}_j)
+ \frac{\pa \bm{o}_i}{\pa \bm{\theta}} \bm{\Phi}_{ij} \bm{o}_j
+ \frac{\pa \bm{o}_{j} }{\pa \bm{\theta}} \bm{\Phi}_{ji} \bm{o}_{i}, \label{eqq_phi_theta_dif}\\
\frac{\pa \psi_{ij} }{\pa \bm{\theta}} &= 
\frac{\pa \bm{\psi}_{ij}}{\pa \bm{\theta}}(\bm{o}_i \otimes \bm{o}_j)
+ \frac{\pa \bm{o}_i}{\pa \bm{\theta}} \bm{\Psi}_{ij} \bm{o}_j
+ \frac{\pa \bm{o}_{j} }{\pa \bm{\theta}} \bm{\Psi}_{ji} \bm{o}_{i}, \label{eqq_psi_theta_dif} \\
&\frac{\pa \phi_{ij}}{\pa \bm{\theta}}, \frac{\pa \psi_{ij} }{\pa \bm{\theta}} \in \mathbb{R}^{D}. \nn
}
\end{lemma}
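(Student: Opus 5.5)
The plan is to differentiate the bilinear form $\phi_{ij} = \bm{o}_i^\top \bm{\Phi}_{ij} \bm{o}_j$ by a three-factor product rule, treating $\bm{o}_i$, $\bm{\Phi}_{ij}$, and $\bm{o}_j$ as the three $\bm{\theta}$-dependent factors, and then to repeat the argument verbatim for $\psi_{ij}$ with $\bm{\Psi}_{ij}$ in place of $\bm{\Phi}_{ij}$. First write $\phi_{ij} = \sum_{k,l} (\bm{o}_i)_k (\bm{\Phi}_{ij})_{kl} (\bm{o}_j)_l$. Differentiating with $\bm{o}_i$ and $\bm{o}_j$ frozen gives $\sum_{k,l} (\bm{o}_i)_k(\bm{o}_j)_l \, \pa (\bm{\Phi}_{ij})_{kl}/\pa\bm{\theta}$. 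Define $\bm{\phi}_{ij} := \mathrm{vec}_{\mathrm{row}}(\bm{\Phi}_{ij}) = [(\bm{\Phi}_{ij})_{11}\ (\bm{\Phi}_{ij})_{12}\ (\bm{\Phi}_{ij})_{21}\ (\bm{\Phi}_{ij})_{22}]^\top \in \mathbb{R}^4$. Its Jacobian $\pa\bm{\phi}_{ij}/\pa\bm{\theta} \in \mathbb{R}^{D\times 4}$ is in the denominator layout of Appendix~\ref{secc_layout}. Since $\bm{o}_i\otimes\bm{o}_j = [(\bm{o}_i)_1(\bm{o}_j)_1,\ (\bm{o}_i)_1(\bm{o}_j)_2,\ (\bm{o}_i)_2(\bm{o}_j)_1,\ (\bm{o}_i)_2(\bm{o}_j)_2]^\top$, the frozen-$\bm{o}$ term equals $\frac{\pa \bm{\phi}_{ij}}{\pa \bm{\theta}}(\bm{o}_i\otimes\bm{o}_j)$. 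I will state this row-major vectorization explicitly, since it is the convention that makes the Kronecker ordering line up (and it matches the column ordering of $\bm{G}^\Phi_{ij}$ in Eq.~\eqref{eqq_G_phi_ij}).

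Second, freeze $\bm{\Phi}_{ij}$ and $\bm{o}_j$ and differentiate in $\bm{o}_i$. The scalar is the inner product $\bm{o}_i^\top(\bm{\Phi}_{ij}\bm{o}_j)$, so Eq.~\eqref{eqq_ripniz} gives $\frac{\pa\bm{o}_i}{\pa\bm{\theta}}\bm{\Phi}_{ij}\bm{o}_j$, where $\pa\bm{o}_i/\pa\bm{\theta}\in\mathbb{R}^{D\times 2}$. Third, freeze $\bm{\Phi}_{ij}$ and $\bm{o}_i$ and differentiate in $\bm{o}_j$. Rewriting the scalar as its transpose, $\bm{o}_j^\top \bm{\Phi}_{ij}^\top \bm{o}_i$, and applying Eq.~\eqref{eqq_ripniz} again gives $\frac{\pa\bm{o}_j}{\pa\bm{\theta}}\bm{\Phi}_{ij}^\top\bm{o}_i$.

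The only nontrivial point, and the step I expect to be the main obstacle, is identifying $\bm{\Phi}_{ij}^\top$ with $\bm{\Phi}_{ji}$ as the statement requires. I will check this entrywise from Eqs.~\eqref{eqq_Phi_11}--\eqref{eqq_Phi_22}:
\begin{itemize}
\item $(\bm{\Phi}_{ij})_{11}$ is symmetric in $i,j$, because the diagonal matrices $\bm{F}^\prime(\bm{y}_i)$ and $\bm{F}^\prime(\bm{y}_j)$ commute.
\item $(\bm{\Phi}_{ij})_{22}$ is symmetric in $i,j$ for the same reason, applied to $\bm{F}^{\prime\prime}$.
\item $(\bm{\Phi}_{ij})_{12}$ becomes $(\bm{\Phi}_{ij})_{21}$ under swapping $i \leftrightarrow j$, directly from their definitions.
\end{itemize}
Hence $(\bm{\Phi}_{ij}^\top)_{kl} = (\bm{\Phi}_{ij})_{lk} = (\bm{\Phi}_{ji})_{kl}$.

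For $\bm{\Psi}_{ij}$ the same check works. In $(\bm{\Psi}_{ij})_{11}$ the scalar factor $1+\bm{f}(\bm{y}_i)^\top\bm{f}(\bm{y}_j)$ and the diagonal product are both symmetric in $i,j$, and so is $(\bm{\Psi}_{ij})_{22} = \bm{f}^\prime(\bm{y}_i)^\top\bm{f}^\prime(\bm{y}_j)$. For the off-diagonal entries, $(\bm{\Psi}_{ji})_{12} = \bm{f}(\bm{y}_j)^\top\bm{F}^\prime(\bm{y}_i)\bm{F}^\prime(\bm{y}_j)\widetilde{\bm{V}}^\top$. Transposing this scalar and commuting the diagonal factors gives $\widetilde{\bm{V}}\bm{F}^\prime(\bm{y}_j)\bm{F}^\prime(\bm{y}_i)\bm{f}(\bm{y}_j) = (\bm{\Psi}_{ij})_{21}$, so $\bm{\Psi}_{ij}^\top = \bm{\Psi}_{ji}$.

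Summing the three partial contributions yields Eq.~\eqref{eqq_phi_theta_dif}, and with $\bm{\psi}_{ij}$ defined as the analogous row-major vectorization of $\bm{\Psi}_{ij}$, Eq.~\eqref{eqq_psi_theta_dif}. For the dimension claim: $\pa\bm{\phi}_{ij}/\pa\bm{\theta}$ is $D\times 4$ acting on a $4$-vector, and $\pa\bm{o}/\pa\bm{\theta}$ is $D\times 2$ acting on a $2$-vector. Every term is therefore in $\mathbb{R}^D$.
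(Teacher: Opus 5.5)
Your proof is correct and follows the paper's argument. The paper applies the Leibniz rule \eqref{eqq_ripniz} to $(\bm{\Phi}_{ij}^\top\bm{o}_i)^\top\bm{o}_j$ and then expands the first term entrywise into the $\pa\bm{\phi}_{ij}/\pa\bm{\theta}$ and $\pa\bm{o}_i/\pa\bm{\theta}$ pieces, which is your three-factor product rule carried out in two steps, and it finishes with the same identity $\bm{\Phi}_{ij}^\top=\bm{\Phi}_{ji}$ read off from Eqs.~\eqref{eqq_Phi_11}--\eqref{eqq_Phi_22}, treating $\bm{\Psi}_{ij}$ ``in the same manner''; your explicit entrywise check of both transpose identities is more detailed than what the paper writes at this point.
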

\begin{proof}
Since quadratic forms are equivalent to inner products, from Eq.~\eqref{eqq_ripniz} we have
\ali{
\frac{\pa \phi_{ij}}{\pa \bm{\theta}} 
&= \frac{\pa \bm{o}_{i}^\top \bm{\Phi}_{ij} \bm{o}_{j}}{\pa \bm{\theta}}
= \bigg(\frac{\pa (\bm{o}_{i}^\top \bm{\Phi}_{ij})^\top}{\pa \bm{\theta}} \bigg) \bm{o}_{j}
+ \bigg(\frac{\pa \bm{o}_{j} }{\pa \bm{\theta}} \bigg)(\bm{o}_{i}^\top \bm{\Phi}_{ij})^\top \nn \\
&= \bigg(\frac{\pa \bm{\Phi}_{ij}^\top \bm{o}_{i}}{\pa \bm{\theta}} \bigg) \bm{o}_{j}
+ \bigg(\frac{\pa \bm{o}_{j} }{\pa \bm{\theta}} \bigg)\bm{\Phi}_{ij}^\top \bm{o}_{i} \nn \\
&= \frac{\pa \bm{\phi}_{ij}}{\pa \bm{\theta}}(\bm{o}_i \otimes \bm{o}_j)
+ \frac{\pa \bm{o}_i}{\pa \bm{\theta}} \bm{\Phi}_{ij} \bm{o}_j
+ \frac{\pa \bm{o}_{j} }{\pa \bm{\theta}} \bm{\Phi}_{ij}^\top \bm{o}_{i} \nn \\
&= \frac{\pa \bm{\phi}_{ij}}{\pa \bm{\theta}}(\bm{o}_i \otimes \bm{o}_j)
+ \frac{\pa \bm{o}_i}{\pa \bm{\theta}} \bm{\Phi}_{ij} \bm{o}_j
+ \frac{\pa \bm{o}_{j} }{\pa \bm{\theta}} \bm{\Phi}_{ji} \bm{o}_{i}. \nn
}
Using Eqs.~\eqref{eqq_Phi_11}--\eqref{eqq_Phi_22}, we obtain
\ali{
\bm{\Phi}_{ij}^\top = \bm{\Phi}_{ji} \nn
}
and we also use
\ali{
\frac{\pa \bm{\Phi}_{ij}^\top \bm{o}_{i}}{\pa \bm{\theta}}\bm{o}_{j} &=
\frac{\pa \mat{(\bm{\Phi}_{ij}^\top)_{1:} \bm{o}_{i} \\ (\bm{\Phi}_{ij}^\top)_{2:} \bm{o}_{i}}}{\pa \bm{\theta}}\bm{o}_{j} \nn \\
& =\mat{
\frac{\pa (\bm{\Phi}_{ij}^\top)_{1:} \bm{o}_{i}}{\pa \bm{\theta}} & 
\frac{\pa (\bm{\Phi}_{ij}^\top)_{2:} \bm{o}_{i}}{\pa \bm{\theta}}
}
\mat{s^\prime(z_j) \\ \delta_j}\nn \\
&=
s^\prime(z_j) \frac{\pa (\bm{\Phi}_{ij}^\top)_{1:} \bm{o}_{i}}{\pa \bm{\theta}} +
\delta_j \frac{\pa (\bm{\Phi}_{ij}^\top)_{2:} \bm{o}_{i}}{\pa \bm{\theta}}\nn \\
&= 
s^\prime(z_j) \bigg( \frac{\pa (\bm{\Phi}_{ij})_{11} s^\prime(z_i)}{\pa \bm{\theta}} + \frac{\pa (\bm{\Phi}_{ij})_{21} \delta_i}{\pa \bm{\theta}} \bigg) \nn \\
&+
\delta_j \bigg( \frac{\pa (\bm{\Phi}_{ij})_{12} s^\prime(z_i)}{\pa \bm{\theta}} + \frac{\pa (\bm{\Phi}_{ij})_{22} \delta_i}{\pa \bm{\theta}} \bigg)
\nn \\
&= 
s^\prime(z_j) s^\prime(z_i) \frac{\pa (\bm{\Phi}_{ij})_{11}}{\pa \bm{\theta}} 
+ \delta_j s^\prime(z_i) \frac{\pa (\bm{\Phi}_{ij})_{12}}{\pa \bm{\theta}}  \nn \\
&+ s^\prime(z_j) \delta_i \frac{\pa (\bm{\Phi}_{ij})_{21}}{\pa \bm{\theta}} 
+ \delta_j \delta_i \frac{\pa (\bm{\Phi}_{ij})_{22}}{\pa \bm{\theta}} \nn \\
&+ s^\prime(z_j) (\bm{\Phi}_{ij})_{11} \frac{\pa s^\prime(z_i)}{\pa \bm{\theta}} 
+ s^\prime(z_j) (\bm{\Phi}_{ij})_{21} \frac{\pa \delta_i}{\pa \bm{\theta}} \nn \\
& + \delta_j (\bm{\Phi}_{ij})_{12} \frac{\pa s^\prime(z_i)}{\pa \bm{\theta}} 
+ \delta_j (\bm{\Phi}_{ij})_{22} \frac{\pa \delta_i}{\pa \bm{\theta}}\nn \\
&= 
\frac{\pa \bm{\phi}_{ij}}{\pa \bm{\theta}}(\bm{o}_i \otimes \bm{o}_j)
+ \frac{\pa \bm{o}_i}{\pa \bm{\theta}} \bm{\Phi}_{ij} \bm{o}_j. \nn
}
From these results, $\pa \psi_{ij} / \pa \bm{\theta}$ can be proved in the same manner.
Note that $\bm{\phi}_{ij}$ is the vector obtained by flattening $\bm{\Phi}_{ij}$, 
and $\bm{\psi}_{ij}$ is the vector obtained by flattening $\bm{\Psi}_{ij}$, respectively, i.e.,
\ali{
\bm{\phi}_{ij} &= \mat{
(\bm{\Phi}_{ij})_{11}&
(\bm{\Phi}_{ij})_{12}&
(\bm{\Phi}_{ij})_{21}&
(\bm{\Phi}_{ij})_{22}}^\top, \nn \\
\bm{\psi}_{ij} &= \mat{
(\bm{\Psi}_{ij})_{11}&
(\bm{\Psi}_{ij})_{12}&
(\bm{\Psi}_{ij})_{21}&
(\bm{\Psi}_{ij})_{22}}^\top. \nn
}
Also, the Jacobians are given by
\ali{
\frac{\pa \bm{\phi}_{ij}}{\pa \bm{\theta}} &= \mat{
\frac{\pa (\bm{\Phi}_{ij})_{11}}{\pa \bm{\theta}} &
\frac{\pa (\bm{\Phi}_{ij})_{12}}{\pa \bm{\theta}} &
\frac{\pa (\bm{\Phi}_{ij})_{21}}{\pa \bm{\theta}} &
\frac{\pa (\bm{\Phi}_{ij})_{22}}{\pa \bm{\theta}}}, \label{eqq_jac_phi_theta}\\
\frac{\pa \bm{\psi}_{ij}}{\pa \bm{\theta}} &= \mat{
\frac{\pa (\bm{\Psi}_{ij})_{11}}{\pa \bm{\theta}} &
\frac{\pa (\bm{\Psi}_{ij})_{12}}{\pa \bm{\theta}} &
\frac{\pa (\bm{\Psi}_{ij})_{21}}{\pa \bm{\theta}} &
\frac{\pa (\bm{\Psi}_{ij})_{22}}{\pa \bm{\theta}}}.  \label{eqq_jac_psi_theta}
}
\end{proof}

\begin{lemma}
\ali{
\frac{\pa \omega_{ij} }{\pa \bm{\theta}}
= \frac{\pa s^\prime(z_i)  }{\pa \bm{\theta}} s^\prime(z_j) 
+ \frac{\pa s^\prime(z_j)  }{\pa \bm{\theta}} s^\prime(z_i) \in \mathbb{R}^{D \times 1}.
}
\end{lemma}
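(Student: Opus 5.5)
The plan is to reduce $\omega_{ij}$ to an explicit scalar product and then apply the Leibniz rule in Eq.~\eqref{eqq_ripniz}. From Eq.~\eqref{eq_ome_ij}, $\omega_{ij} = \bm{o}_i^\top \bm{\Omega}_{ij} \bm{o}_j$ with $\bm{\Omega}_{ij} = \mathrm{diag}(1,0)$. The matrix $\bm{\Omega}_{ij}$ is constant and does not depend on $\bm{\theta}$. Writing out the quadratic form with Eq.~\eqref{eqq_o_def}, only the $(1,1)$ entry survives, giving
\ali{
\omega_{ij} = \mat{s^\prime(z_i) & \delta_i} \mat{1 & 0 \\ 0 & 0} \mat{s^\prime(z_j) \\ \delta_j} = s^\prime(z_i) s^\prime(z_j). \nn
}
So $\omega_{ij}$ is a product of two scalar functions of $\bm{\theta}$, and the dependence on $\delta_i, \delta_j$ drops out entirely.

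The second step differentiates this product. Treating the two scalars as one-dimensional vectors, Eq.~\eqref{eqq_ripniz} with $\mathcal{D}_1 = 1$ gives $\pa (s^\prime(z_i)s^\prime(z_j))/\pa\bm{\theta} = (\pa s^\prime(z_i)/\pa\bm{\theta})\, s^\prime(z_j) + (\pa s^\prime(z_j)/\pa\bm{\theta})\, s^\prime(z_i)$. This is exactly the claimed expression.

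As a consistency check, one can alternatively follow the template of the proof of Eq.~\eqref{eqq_phi_theta_dif}. In that form the Jacobian term $(\pa \bm{\omega}_{ij}/\pa\bm{\theta})(\bm{o}_i\otimes\bm{o}_j)$ vanishes because $\bm{\Omega}_{ij}$ is constant. The remaining two terms $(\pa\bm{o}_i/\pa\bm{\theta})\bm{\Omega}_{ij}\bm{o}_j$ and $(\pa\bm{o}_j/\pa\bm{\theta})\bm{\Omega}_{ji}\bm{o}_i$ reduce to the same two summands, since $\bm{\Omega}_{ij}=\bm{\Omega}_{ji}$ selects only the first column of $\pa\bm{o}/\pa\bm{\theta}$, namely $\pa s^\prime(z)/\pa\bm{\theta}$. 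Each summand is a $D$-vector times a scalar, so the result lies in $\mathbb{R}^{D\times 1}$ under the denominator layout of Appendix~\ref{secc_layout}.

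I do not expect a genuine obstacle here. The only point requiring care is noting that $\bm{\Omega}_{ij}$ is $\bm{\theta}$-independent, so no derivative of the middle matrix appears, unlike the $\Phi$ and $\Psi$ cases. The explicit forms of $\pa s^\prime(z)/\pa\bm{w}$ and $\pa s^\prime(z)/\pa\bm{v}$ (Eqs.~\eqref{eqq_spz_w} and \eqref{eqq_szp_v}) are not needed for this lemma itself. They would only be substituted afterward, when assembling $\bm{\mathcal{A}}^\Omega_{ij}$ and $\bm{\mathcal{C}}^\Omega_{ij}$.
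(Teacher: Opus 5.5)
Your proposal is correct and matches the paper's proof, which likewise expands $\omega_{ij}=\bm{o}_i^\top\bm{\Omega}_{ij}\bm{o}_j$ via Eq.~\eqref{eq_ome_ij} into $s^\prime(z_i)s^\prime(z_j)$ and then applies the product rule. Your extra consistency check through the template of Eq.~\eqref{eqq_phi_theta_dif} is valid but not needed.
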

\begin{proof}
From Eq.~\eqref{eq_ome_ij}, 
\ali{
\frac{\pa \omega_{ij} }{\pa \bm{\theta}}
&= \frac{\pa \bm{o}_{i}^\top \bm{\Omega}_{ij} \bm{o}_{j} }{\pa \bm{\theta}}
= \frac{\pa s^\prime(z_i)  s^\prime(z_j) }{\pa \bm{\theta}}\nn \\
&= \frac{\pa s^\prime(z_i)  }{\pa \bm{\theta}} s^\prime(z_j) 
+ \frac{\pa s^\prime(z_j)  }{\pa \bm{\theta}} s^\prime(z_i) \nn
}
is obtained.
\end{proof}

\subsubsection{Gradient with Respect to Affine Parameters from Input to Hidden Layer}
$\pa \phi_{ij}/\pa \bm{\theta}$ contains $\pa \bm{o} / \pa \bm{\theta}$.
Therefore, we show the Jacobian of $\bm{o}$ with respect to $\bm{w}$ here.
\begin{lemma}
\ali{
\frac{\pa \bm{o}}{\pa \bm{w}} 
= \bm{h}(\bm{x}) \otimes (\bm{F}^\prime (\bm{y}) \widetilde{\bm{V}}^\top \bm{s}^{\prime\prime/\prime}(z)^\top)
\in \mathbb{R}^{(M+1)N \times 2}. \label{eqq_o_w_dif}
}
\end{lemma}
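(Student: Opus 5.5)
The plan is to compute the Jacobian column by column. By definition $\bm{o} = [s^\prime(z) \ \ \delta]^\top$, so in the denominator layout
\ali{
\frac{\pa \bm{o}}{\pa \bm{w}} = \mat{\frac{\pa s^\prime(z)}{\pa \bm{w}} & \frac{\pa \delta}{\pa \bm{w}}} \in \mathbb{R}^{(M+1)N \times 2}. \nn
}
Both columns were already obtained while proving Eqs.~\eqref{eqq_spzfpyv_w_div_fin} and \eqref{eqq_dvfpp_w_div_fin}. From Eq.~\eqref{eqq_spz_w} we have $\pa s^\prime(z)/\pa \bm{w} = \bm{h}(\bm{x}) \otimes (s^{\prime\prime}(z) \bm{F}^{\prime}(\bm{y}) \widetilde{\bm{V}}^\top)$. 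From Eq.~\eqref{eqq_delta_Wm_div} we have $\pa \delta/\pa \bm{w} = \bm{h}(\bm{x}) \otimes (s^{\prime}(z) \bm{F}^{\prime}(\bm{y}) \widetilde{\bm{V}}^\top)$. Both rest on the chain rule through $z$, namely Eq.~\eqref{eqq_z_Wm_div}, together with $\pa\delta/\pa z = \pa p/\pa z = s^\prime(z)$, which holds because $q$ is a constant label.

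Next, I would assemble the two columns using the Kronecker identity Eq.~\eqref{eqq_kron_2}, which lets the common factor $\bm{h}(\bm{x})\otimes$ be pulled out of the horizontal concatenation:
\ali{
\frac{\pa \bm{o}}{\pa \bm{w}} = \bm{h}(\bm{x}) \otimes \mat{s^{\prime\prime}(z) \bm{F}^{\prime}(\bm{y}) \widetilde{\bm{V}}^\top & s^{\prime}(z) \bm{F}^{\prime}(\bm{y}) \widetilde{\bm{V}}^\top}. \nn
}

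Finally, I would recognize the bracket as an outer product. The $N\times 2$ matrix equals the column vector $\bm{F}^\prime(\bm{y})\widetilde{\bm{V}}^\top \in \mathbb{R}^N$ multiplied by the row vector $[s^{\prime\prime}(z) \ \ s^\prime(z)] = \bm{s}^{\prime\prime/\prime}(z)^\top$, using the definition in Eq.~\eqref{eqq_spp_sp}. This gives exactly Eq.~\eqref{eqq_o_w_dif}. The dimension check is immediate: $(M+1)\cdot N$ rows and $1\cdot 2$ columns.

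There is no real obstacle here. The only points needing care are the layout conventions: each gradient must be a column, so the Jacobian places the $s^\prime$ column first and the $\delta$ column second, matching the ordering in $\bm{o}$ and in $\bm{s}^{\prime\prime/\prime}$. One must also confirm that Eq.~\eqref{eqq_kron_2} applies with $\bm{x}$ replaced by the vector $\bm{h}(\bm{x})$, which it does.
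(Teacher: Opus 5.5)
Your proposal is correct and follows essentially the same route as the paper. Both write $\pa \bm{o}/\pa \bm{w}$ as the two columns given by Eqs.~\eqref{eqq_spz_w} and \eqref{eqq_delta_Wm_div}, pull $\bm{h}(\bm{x})\otimes$ out of the concatenation via Eq.~\eqref{eqq_kron_2}, and factor the $N\times 2$ block as the outer product $\bm{F}^\prime(\bm{y})\widetilde{\bm{V}}^\top \bm{s}^{\prime\prime/\prime}(z)^\top$.
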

\begin{proof}
From Eqs.~\eqref{eqq_spp_sp}, \eqref{eqq_kron_1}, \eqref{eqq_kron_2}, \eqref{eqq_spz_w}, and \eqref{eqq_delta_Wm_div},
\ali{
&\frac{\pa \bm{o}}{\pa \bm{w}} = 
\mat{\frac{\pa s^\prime(z)}{\pa \bm{w}} & \frac{\pa \delta}{\pa \bm{w}} } \nn \\
&= \mat{ 
\bm{h}(\bm{x}) \otimes (s^{\prime\prime}(z) \bm{F}^{\prime}(\bm{y}) \widetilde{\bm{V}}^\top)& 
\bm{h}(\bm{x}) \otimes (s^\prime(z) \bm{F}^\prime (\bm{y}) \widetilde{\bm{V}}^\top) 
} \nn \\
&= \bm{h}(\bm{x}) \otimes \mat{ 
 s^{\prime\prime}(z) \bm{F}^{\prime}(\bm{y}) \widetilde{\bm{V}}^\top& 
 s^\prime(z) \bm{F}^\prime (\bm{y}) \widetilde{\bm{V}}^\top
} \nn \\
&= \bm{h}(\bm{x}) \otimes \bm{F}^{\prime}(\bm{y}) \widetilde{\bm{V}}^\top 
\mat{ s^{\prime\prime}(z) & s^\prime(z) } \nn \\
&= \bm{h}(\bm{x}) \otimes (\bm{F}^\prime (\bm{y}) \widetilde{\bm{V}}^\top \bm{s}^{\prime\prime/\prime}(z)^\top) \nn
}
is obtained.
\end{proof}

\subsubsection{Gradients of the Phi Term}
Here, we clarify
$
[ \pa \mathrm{tr}(\bm{H}_L(\bm{\theta})^2) / \pa \bm{w} ]_\Phi
$
shown in Eq.~\eqref{eqq_phi_tr2_div}.

\begin{lemma}
\ali{
\frac{\pa (\bm{\Phi}_{ij})_{11}}{\pa \bm{w}}
&= \bm{h}(\bm{x}_{i}) \otimes \bm{\mathfrak{a}}^\Phi_{ij} + \bm{h}(\bm{x}_{j}) \otimes \bm{\mathfrak{a}}^\Phi_{ji}
 \in \mathbb{R}^{(M+1)N}.
\label{eqq_Phi_11_w_div}
}
\end{lemma}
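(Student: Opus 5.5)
The plan is to reduce the statement to a chain-rule computation on a scalar. By Eq.~\eqref{eqq_Phi_11}, $(\bm{\Phi}_{ij})_{11} = g_{ij}^2$ with
\begin{align}
g_{ij} = \widetilde{\bm{V}} \bm{F}^\prime(\bm{y}_j) \bm{F}^\prime(\bm{y}_i)\widetilde{\bm{V}}^\top = \bm{f}^\prime(\bm{y}_i)^\top \big(\bm{F}^\prime(\bm{y}_j)\widetilde{\bm{V}}^{\odot 2\top}\big) = \bm{f}^\prime(\bm{y}_j)^\top \big(\bm{F}^\prime(\bm{y}_i)\widetilde{\bm{V}}^{\odot 2\top}\big). \nn
\end{align}
The second and third forms hold because all the matrices involved are diagonal, so $g_{ij}=\sum_n v_n^2 f^\prime(y_{i,n}) f^\prime(y_{j,n})$. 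The outer step gives $\pa (\bm{\Phi}_{ij})_{11}/\pa \bm{w} = 2 g_{ij}\, \pa g_{ij}/\pa \bm{w}$. I will then identify $g_{ij}$ with $(\bm{\Phi}_{ij})_{11}^{1/2}$, following the convention used in Eq.~\eqref{eqq_a_phi_ij}.

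This identification is the one point requiring care, because $g_{ij}$ can be negative for some activations (e.g.\ if $f^\prime$ changes sign). The identity is exact only if $(\cdot)^{1/2}$ denotes the signed root $g_{ij}$ itself. I would state this reading explicitly rather than use $|g_{ij}|$. For monotone activations such as the sigmoid, $g_{ij}\ge 0$ and the issue disappears.

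For the inner gradient, $\widetilde{\bm{V}}$ does not depend on $\bm{w}$ (Eq.~\eqref{eqq_v_w_div}). Only $\bm{f}^\prime(\bm{y}_i)$ and $\bm{f}^\prime(\bm{y}_j)$ vary with $\bm{w}$, and each enters $g_{ij}$ linearly. I apply the Leibniz rule Eq.~\eqref{eqq_ripniz} to the inner-product form of $g_{ij}$, holding one factor fixed at a time. This gives
\begin{align}
\frac{\pa g_{ij}}{\pa \bm{w}} = \frac{\pa \bm{f}^\prime(\bm{y}_i)}{\pa \bm{w}} \bm{F}^\prime(\bm{y}_j)\widetilde{\bm{V}}^{\odot 2\top} + \frac{\pa \bm{f}^\prime(\bm{y}_j)}{\pa \bm{w}} \bm{F}^\prime(\bm{y}_i)\widetilde{\bm{V}}^{\odot 2\top}. \nn
\end{align}
By Eq.~\eqref{eqq_fpyy_and_yw_div}, applied separately to samples $i$ and $j$, we have $\pa \bm{f}^\prime(\bm{y}_k)/\pa \bm{w} = \bm{h}(\bm{x}_k)\otimes \bm{F}^{\prime\prime}(\bm{y}_k)$. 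Equation~\eqref{eqq_kron_1} then moves the right factor inside the Kronecker product, yielding $\bm{h}(\bm{x}_i)\otimes \bm{F}^{\prime\prime}(\bm{y}_i)\bm{F}^\prime(\bm{y}_j)\widetilde{\bm{V}}^{\odot 2\top}$ plus the same expression with $i$ and $j$ swapped.

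Finally, I multiply by $2g_{ij}$ and use Eq.~\eqref{eqq_kron_1_5} to pull the scalar inside each Kronecker product. Since $g_{ij}=g_{ji}$, the first term is $\bm{h}(\bm{x}_i)\otimes \bm{\mathfrak{a}}^\Phi_{ij}$. The second term is $\bm{h}(\bm{x}_j)\otimes \bm{\mathfrak{a}}^\Phi_{ji}$, because $\bm{\mathfrak{a}}^\Phi_{ji}$ uses $(\bm{\Phi}_{ji})_{11}^{1/2}=g_{ji}=g_{ij}$. Each term lies in $\mathbb{R}^{(M+1)N}$.

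The derivation is mechanical. Apart from the square-root convention, the only thing to verify is that the two diagonal factors commute, which is what allows $\widetilde{\bm{V}}^{\odot 2\top}$ to be grouped with $\bm{F}^\prime$.
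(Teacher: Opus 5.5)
Your proposal is correct and follows the paper's proof in all essentials. It uses the same split $(\bm{\Phi}_{ij})_{11} = ((\bm{\Phi}_{ij})_{11}^{1/2})^2$ with outer factor $2(\bm{\Phi}_{ij})_{11}^{1/2}$, and the same symmetry $(\bm{\Phi}_{ij})_{11} = (\bm{\Phi}_{ji})_{11}$ to produce $\bm{\mathfrak{a}}^\Phi_{ji}$; the only difference is that you get the inner gradient at the vector level from the Leibniz rule and $\pa \bm{f}^\prime(\bm{y}_k)/\pa \bm{w} = \bm{h}(\bm{x}_k)\otimes\bm{F}^{\prime\prime}(\bm{y}_k)$, whereas the paper differentiates entrywise in $w_{nm}$ and then reassembles the Kronecker structure.

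Your signed-root reading of $(\bm{\Phi}_{ij})_{11}^{1/2}$ is the convention the paper uses implicitly: it writes $(\bm{\Phi}_{ij})_{11}^{1/2} = \widetilde{\bm{V}} \bm{F}^\prime(\bm{y}_j) \bm{F}^\prime(\bm{y}_i) \widetilde{\bm{V}}^\top$ in the $\bm{v}$-gradient lemma. Making it explicit is a useful clarification for activations with sign-changing $f^\prime$, such as GELU, and is not a gap.
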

\begin{proof}
From $y_n = \sum_{m=1}^M w_{nm}x_m + b_n$, $w_{nm}$ appears only in $y_{n}$.
Also, from Eq.~\eqref{eqq_Phi_11},
\ali{
&\frac{\pa (\bm{\Phi}_{ij})_{11}^{1/2}}{\pa w_{nm}}
= v_n^2 \frac{\pa f^\prime(y_{jn}) f^\prime(y_{in})}{\pa w_{nm}}\nn \\
&= v_n^2 \bigg( \frac{\pa f^\prime(y_{jn})}{\pa w_{nm}} f^\prime(y_{in}) 
+ \frac{\pa f^\prime(y_{in})}{\pa w_{nm}} f^\prime(y_{jn})\bigg) \nn \\
&= v_n^2 \bigg( 
\frac{\pa f^\prime(y_{jn})}{\pa y_{jn}} \frac{\pa y_{jn}}{\pa w_{nm}} f^\prime(y_{in}) 
+ \frac{\pa f^\prime(y_{in})}{\pa y_{in}} \frac{\pa y_{in}}{\pa w_{nm}} f^\prime(y_{jn})
\bigg)\nn \\
&= v_n^2 \bigg( 
f^{\prime\prime}(y_{jn}) x_{jm} f^\prime(y_{in}) 
+ f^{\prime\prime}(y_{in}) x_{im} f^\prime(y_{jn})
\bigg) \nn
}
holds.
Therefore, 
\ali{
&\frac{\pa (\bm{\Phi}_{ij})_{11}^{1/2}}{\pa \bm{W}_{:m}} \nn \\
&= \widetilde{\bm{V}}^{\odot 2 \top} \odot \bigg( 
x_{im} \bm{f}^{\prime\prime}(\bm{y}_{i}) \odot \bm{f}^\prime(\bm{y}_{j}) 
+ 
x_{jm} \bm{f}^{\prime\prime}(\bm{y}_{j}) \odot \bm{f}^\prime(\bm{y}_{i}) \bigg) \nn \\
&= 
x_{im} \bm{F}^{\prime\prime}(\bm{y}_{i}) \bm{F}^\prime(\bm{y}_{j}) \widetilde{\bm{V}}^{\odot 2 \top}
+ 
x_{jm}  \bm{F}^{\prime\prime}(\bm{y}_{j}) \bm{F}^\prime(\bm{y}_{i}) \widetilde{\bm{V}}^{\odot 2 \top} \nn
}
is obtained, where $\widetilde{\bm{V}}^{\odot 2 \top} = \widetilde{\bm{V}}^\top \odot \widetilde{\bm{V}}^\top$.
Using this, we obtain
\ali{
\frac{\pa (\bm{\Phi}_{ij})_{11}^{1/2}}{\pa \bm{w}} &= 
\bm{h}(\bm{x}_{i}) \otimes ( \bm{F}^{\prime\prime}(\bm{y}_{i}) \bm{F}^\prime(\bm{y}_{j}) \widetilde{\bm{V}}^{\odot 2 \top} )\nn \\
&+
\bm{h}(\bm{x}_{j}) \otimes (\bm{F}^{\prime\prime}(\bm{y}_{j}) \bm{F}^\prime(\bm{y}_{i}) \widetilde{\bm{V}}^{\odot 2 \top} ).\nn 
}
Therefore, using Eq.~\eqref{eqq_a_phi_ij}, we obtain
\ali{
\frac{\pa (\bm{\Phi}_{ij})_{11}}{\pa \bm{w}} 
&= 
\frac{\pa (\bm{\Phi}_{ij})_{11}}{\pa (\bm{\Phi}_{ij})_{11}^{1/2}} 
\frac{\pa (\bm{\Phi}_{ij})_{11}^{1/2}}{\pa \bm{w}} \nn \\
&= 
2 (\bm{\Phi}_{ij})_{11}^{1/2} \Big(
\bm{h}(\bm{x}_{i}) \otimes ( \bm{F}^{\prime\prime}(\bm{y}_{i}) \bm{F}^\prime(\bm{y}_{j}) \widetilde{\bm{V}}^{\odot 2 \top} )\nn \\
&+
\bm{h}(\bm{x}_{j}) \otimes (\bm{F}^{\prime\prime}(\bm{y}_{j}) \bm{F}^\prime(\bm{y}_{i}) \widetilde{\bm{V}}^{\odot 2 \top} ) \Big) \nn \\
&=
\bm{h}(\bm{x}_{i}) \otimes ( 2 (\bm{\Phi}_{ij})_{11}^{1/2} \bm{F}^{\prime\prime}(\bm{y}_{i}) \bm{F}^\prime(\bm{y}_{j}) \widetilde{\bm{V}}^{\odot 2 \top} )\nn \\
&+\bm{h}(\bm{x}_{j}) \otimes ( 2 (\bm{\Phi}_{ji})_{11}^{1/2} \bm{F}^{\prime\prime}(\bm{y}_{j}) \bm{F}^\prime(\bm{y}_{i}) \widetilde{\bm{V}}^{\odot 2 \top} ) 
\nn \\
&= \bm{h}(\bm{x}_{i}) \otimes \bm{\mathfrak{a}}^\Phi_{ij} + \bm{h}(\bm{x}_{j}) \otimes \bm{\mathfrak{a}}^\Phi_{ji}. \nn
}
Note that we utilized $(\bm{\Phi}_{ij})_{11} = (\bm{\Phi}_{ji})_{11}$ from Eq.~\eqref{eqq_Phi_11}.
\end{proof}

\begin{lemma}
\ali{
&\frac{\pa (\bm{\Phi}_{ij})_{12}}{\pa \bm{w}}
= \bm{h}(\bm{x}_{i}) \otimes \bm{\mathfrak{b}}^\Phi_{ij} + \bm{h}(\bm{x}_{j}) \otimes \bm{\mathfrak{c}}^\Phi_{ji}\in \mathbb{R}^{(M+1)N},
\label{eqq_Phi_12_w_div} \\
&\frac{\pa (\bm{\Phi}_{ij})_{21}}{\pa \bm{w}}
= \bm{h}(\bm{x}_{i}) \otimes \bm{\mathfrak{c}}^\Phi_{ij} + \bm{h}(\bm{x}_{j}) \otimes \bm{\mathfrak{b}}^\Phi_{ji} \in \mathbb{R}^{(M+1)N}.
\label{eqq_Phi_21_w_div}
}
\end{lemma}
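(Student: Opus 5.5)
The plan is to reduce $(\bm{\Phi}_{ij})_{12}$ to a scalar sum over hidden units, differentiate componentwise with respect to $w_{nm}$, and then reassemble the result with the Kronecker product, exactly as in the proof of Eq.~\eqref{eqq_Phi_11_w_div}. All matrices $\bm{F}^{(k)}(\bm{y})$ and $\mathrm{diag}(\widetilde{\bm{V}}^\top)$ are diagonal, so Eq.~\eqref{eqq_Phi_12} collapses to
\ali{
(\bm{\Phi}_{ij})_{12} = \sum_{n=1}^N v_n^3 f^\prime(y_{in})^2 f^{\prime\prime}(y_{jn}). \nn
}
As before, $w_{nm}$ enters only through $y_{in}$ and $y_{jn}$. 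We use the convention $x_{i0}=1$ for the bias column, which is what $\bm{h}(\bm{x})$ encodes.

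Differentiating by the product and chain rules gives
\ali{
\frac{\pa (\bm{\Phi}_{ij})_{12}}{\pa w_{nm}} = v_n^3 \Big( 2 x_{im} f^\prime(y_{in}) f^{\prime\prime}(y_{in}) f^{\prime\prime}(y_{jn}) + x_{jm} f^\prime(y_{in})^2 f^{\prime\prime\prime}(y_{jn}) \Big). \nn
}
Stacking over $n$ expresses each block $\pa/\pa \bm{W}_{:m}$ as a product of diagonal matrices applied to $\widetilde{\bm{V}}^{\odot 3\top}$:
\ali{
\frac{\pa (\bm{\Phi}_{ij})_{12}}{\pa \bm{W}_{:m}} = x_{im}\, 2\bm{F}^\prime(\bm{y}_i)\bm{F}^{\prime\prime}(\bm{y}_i)\bm{F}^{\prime\prime}(\bm{y}_j)\widetilde{\bm{V}}^{\odot 3\top} + x_{jm}\, \bm{F}^{\prime\prime\prime}(\bm{y}_j)\bm{F}^\prime(\bm{y}_i)^2 \widetilde{\bm{V}}^{\odot 3\top}. \nn
}
Stacking over $m=0,\dots,M$, as in Eq.~\eqref{eqq_y_w_div}, turns the coefficients $x_{im}$ and $x_{jm}$ into $\bm{h}(\bm{x}_i)\otimes$ and $\bm{h}(\bm{x}_j)\otimes$. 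Comparing with Eqs.~\eqref{eqq_b_phi_ij} and \eqref{eqq_c_phi_ij}, the first vector is $\bm{\mathfrak{b}}^\Phi_{ij}$. The second is $\bm{\mathfrak{c}}^\Phi_{ji}$, which is $\bm{\mathfrak{c}}^\Phi_{ij}$ with the indices swapped. This gives Eq.~\eqref{eqq_Phi_12_w_div}.

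For Eq.~\eqref{eqq_Phi_21_w_div}, no new computation is needed. Comparing Eqs.~\eqref{eqq_Phi_12} and \eqref{eqq_Phi_21} shows that $(\bm{\Phi}_{ij})_{21}=(\bm{\Phi}_{ji})_{12}$. Applying the result just proved with $i$ and $j$ interchanged gives $\bm{h}(\bm{x}_j)\otimes\bm{\mathfrak{b}}^\Phi_{ji}+\bm{h}(\bm{x}_i)\otimes\bm{\mathfrak{c}}^\Phi_{ij}$.

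The main obstacle is bookkeeping rather than analysis. One must check that the factor $2$ from differentiating $f^\prime(y_{in})^2$ sits only in $\bm{\mathfrak{b}}^\Phi$. One must also check that the index order of $\bm{\mathfrak{c}}^\Phi$ matches its definition, namely that $\bm{F}^{\prime\prime\prime}$ carries the first subscript. Finally, the bias components $m=0$ must be handled uniformly through $\bm{h}(\bm{x})$.
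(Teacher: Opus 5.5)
Your proposal is correct and follows essentially the same route as the paper: you collapse $(\bm{\Phi}_{ij})_{12}$ to $\sum_n v_n^3 f^\prime(y_{in})^2 f^{\prime\prime}(y_{jn})$, differentiate in $w_{nm}$, stack into $\bm{h}(\bm{x}_i)\otimes\bm{\mathfrak{b}}^\Phi_{ij}+\bm{h}(\bm{x}_j)\otimes\bm{\mathfrak{c}}^\Phi_{ji}$, and obtain the second identity from $(\bm{\Phi}_{ij})_{21}=(\bm{\Phi}_{ji})_{12}$. Your explicit convention $x_{i0}=1$ for the bias block is slightly more careful than the paper, which writes out only the $w_{nm}$ case.
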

\begin{proof}
From $y_n = \sum_{m=1}^M w_{nm}x_m + b_n$, $w_{nm}$ appears only in $y_{n}$.
Also, from Eq.~\eqref{eqq_Phi_12},
\ali{
&\frac{\pa (\bm{\Phi}_{ij})_{12}}{\pa w_{nm}}
= v_n^3 \frac{\pa f^\prime(y_{in})^2 f^{\prime\prime}(y_{jn})}{\pa w_{nm}}\nn \\
&= v_n^3 \bigg( \frac{\pa f^\prime(y_{in})^2}{\pa w_{nm}} f^{\prime\prime}(y_{jn}) + 
\frac{\pa f^{\prime\prime}(y_{jn})}{\pa w_{nm}} f^\prime(y_{in})^2 \bigg)\nn \\
&=  
2 x_{im} v_n^3 f^\prime(y_{in}) f^{\prime\prime}(y_{in}) f^{\prime\prime}(y_{jn}) + 
x_{jm} v_n^3 f^{\prime\prime\prime}(y_{jn}) f^\prime(y_{in})^2, \nn
}
where
\ali{
\frac{\pa f^\prime(y_{in})^2}{\pa w_{nm}} 
&= \frac{\pa f^\prime(y_{in})^2}{\pa f^\prime(y_{in})}
\frac{\pa f^\prime(y_{in})}{\pa y_{in}}
\frac{\pa y_{in}}{\pa w_{nm}}, \nn \\
\frac{\pa f^{\prime\prime}(y_{jn})}{\pa w_{nm}} 
&=\frac{\pa f^{\prime\prime}(y_{jn})}{\pa y_{jn}} 
\frac{\pa y_{jn}}{\pa w_{nm}} \nn
}
are utilized.
Thus, we have
\ali{
\frac{\pa (\bm{\Phi}_{ij})_{12}}{\pa \bm{W}_{:m}}
&= 2 x_{im} \widetilde{\bm{V}}^{\odot 3 \top} \odot
\bm{f}^\prime(\bm{y}_{i}) \odot
\bm{f}^{\prime\prime}(\bm{y}_{i}) \odot
\bm{f}^{\prime\prime}(\bm{y}_{j}) \nn \\
&+ 
x_{jm}  \widetilde{\bm{V}}^{\odot 3 \top} \odot \bm{f}^{\prime\prime\prime}(\bm{y}_{j}) \odot \bm{f}^\prime(\bm{y}_{i})^{\odot 2}\nn \\
&= 2 x_{im} 
\bm{F}^\prime(\bm{y}_{i})
\bm{F}^{\prime\prime}(\bm{y}_{i}) 
\bm{F}^{\prime\prime}(\bm{y}_{j}) 
\widetilde{\bm{V}}^{\odot 3 \top}  \nn \\
&+ x_{jm} 
\bm{F}^{\prime\prime\prime}(\bm{y}_{j})
\bm{F}^\prime(\bm{y}_{i})^{2}
\widetilde{\bm{V}}^{\odot 3 \top} \nn
}
and using Eqs.~\eqref{eqq_b_phi_ij} and \eqref{eqq_c_phi_ij} yields
\ali{
\frac{\pa (\bm{\Phi}_{ij})_{12}}{\pa \bm{w}}
&= \bm{h}(\bm{x}_{i}) \otimes \big( 2
\bm{F}^\prime(\bm{y}_{i})
\bm{F}^{\prime\prime}(\bm{y}_{i}) 
\bm{F}^{\prime\prime}(\bm{y}_{j}) 
\widetilde{\bm{V}}^{\odot 3 \top}
\big)\nn \\
& + 
\bm{h}(\bm{x}_{j}) \otimes \big( 
\bm{F}^{\prime\prime\prime}(\bm{y}_{j})
\bm{F}^\prime(\bm{y}_{i})^{2}
\widetilde{\bm{V}}^{\odot 3 \top}
\big)\nn \\
&= \bm{h}(\bm{x}_{i}) \otimes \bm{\mathfrak{b}}^\Phi_{ij} + \bm{h}(\bm{x}_{j}) \otimes \bm{\mathfrak{c}}^\Phi_{ji}. \nn
}
Also, since $(\bm{\Phi}_{ij})_{21} = (\bm{\Phi}_{ji})_{12}$ from Eqs.~\eqref{eqq_Phi_12} and \eqref{eqq_Phi_21}, 
\ali{
\frac{\pa (\bm{\Phi}_{ij})_{21}}{\pa \bm{w}}
= \frac{\pa (\bm{\Phi}_{ji})_{12}}{\pa \bm{w}}
= \bm{h}(\bm{x}_{i}) \otimes \bm{\mathfrak{c}}^\Phi_{ij} + \bm{h}(\bm{x}_{j}) \otimes \bm{\mathfrak{b}}^\Phi_{ji} \nn
}
is obtained.
\end{proof}

\begin{lemma}
\ali{
\frac{\pa (\bm{\Phi}_{ij})_{22}}{\pa \bm{w}}
&= \bm{h}(\bm{x}_{i}) \otimes \bm{\mathfrak{d}}^\Phi_{ij} + \bm{h}(\bm{x}_{j}) \otimes \bm{\mathfrak{d}}^\Phi_{ji} 
 \in \mathbb{R}^{(M+1)N}.
\label{eqq_Phi_22_w_div}
}
\end{lemma}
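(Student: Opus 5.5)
We follow the same route as the preceding lemmas for $(\bm{\Phi}_{ij})_{11}$ and $(\bm{\Phi}_{ij})_{12}$: differentiate componentwise with respect to $w_{nm}$, then reassemble the result into Kronecker form.

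First, we write $(\bm{\Phi}_{ij})_{22}$ from Eq.~\eqref{eqq_Phi_22} as a sum over hidden units. Since $\bm{F}^{\prime\prime}(\bm{y}_i)$ and $\bm{F}^{\prime\prime}(\bm{y}_j)$ are diagonal, this gives
\ali{
(\bm{\Phi}_{ij})_{22} = \sum_{n=1}^N v_n^2 f^{\prime\prime}(y_{in}) f^{\prime\prime}(y_{jn}). \nn
}
As before, $w_{nm}$ appears only in $y_{in}$ and $y_{jn}$, with $\pa y_{in}/\pa w_{nm} = x_{im}$. By the product rule and the chain rule,
\ali{
\frac{\pa (\bm{\Phi}_{ij})_{22}}{\pa w_{nm}} = v_n^2\big( x_{im} f^{\prime\prime\prime}(y_{in}) f^{\prime\prime}(y_{jn}) + x_{jm} f^{\prime\prime\prime}(y_{jn}) f^{\prime\prime}(y_{in}) \big). \nn
}

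Next, we collect over $n$ for fixed column $m$. This gives
\ali{
\frac{\pa (\bm{\Phi}_{ij})_{22}}{\pa \bm{W}_{:m}} = x_{im}\bm{F}^{\prime\prime\prime}(\bm{y}_i)\bm{F}^{\prime\prime}(\bm{y}_j)\widetilde{\bm{V}}^{\odot 2\top} + x_{jm}\bm{F}^{\prime\prime\prime}(\bm{y}_j)\bm{F}^{\prime\prime}(\bm{y}_i)\widetilde{\bm{V}}^{\odot 2\top}. \nn
}
We then stack the columns $m=0,\dots,M$, with $x_0=1$ accounting for the bias $\bm{b}$, in the ordering of $\bm{w}$ used in Eq.~\eqref{eqq_y_w_div}. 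The factor $x_{im}$ becomes $\bm{h}(\bm{x}_i)\otimes(\cdot)$ and $x_{jm}$ becomes $\bm{h}(\bm{x}_j)\otimes(\cdot)$, with Eq.~\eqref{eqq_kron_1_5} handling the sum. Comparing with Eq.~\eqref{eqq_d_phi_ij}, the first term is $\bm{h}(\bm{x}_i)\otimes\bm{\mathfrak{d}}^\Phi_{ij}$, and the second is $\bm{\mathfrak{d}}^\Phi_{ji}$ with indices swapped. This yields Eq.~\eqref{eqq_Phi_22_w_div}.

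There is no real obstacle here. The one point needing care is bookkeeping: the bias column must be treated consistently ($m=0$, $x_0=1$), so that the Kronecker factor is $\bm{h}(\bm{x})$ rather than $\bm{x}$. One should also note the symmetry $(\bm{\Phi}_{ij})_{22}=(\bm{\Phi}_{ji})_{22}$, which is why the second term is exactly $\bm{\mathfrak{d}}^\Phi_{ji}$.
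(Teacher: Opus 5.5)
Your proposal is correct and follows the paper's route: you expand $(\bm{\Phi}_{ij})_{22}=\sum_{n} v_n^2 f^{\prime\prime}(y_{in}) f^{\prime\prime}(y_{jn})$, differentiate with respect to $w_{nm}$, collect the result into $\pa/\pa \bm{W}_{:m}$, and stack the columns into $\bm{h}(\bm{x}_i)\otimes\bm{\mathfrak{d}}^\Phi_{ij}+\bm{h}(\bm{x}_j)\otimes\bm{\mathfrak{d}}^\Phi_{ji}$. One minor point: you do not need the symmetry $(\bm{\Phi}_{ij})_{22}=(\bm{\Phi}_{ji})_{22}$, because the second term already equals $\bm{\mathfrak{d}}^\Phi_{ji}$ by the definition in Eq.~\eqref{eqq_d_phi_ij} with $i$ and $j$ interchanged.
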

\begin{proof}
From $y_n = \sum_{m=1}^M w_{nm}x_m + b_n$, $w_{nm}$ appears only in $y_{n}$.
Also, from Eq.~\eqref{eqq_Phi_22}, since
\ali{
&\frac{\pa (\bm{\Phi}_{ij})_{22}}{\pa w_{nm}}
=  v_n^2 \frac{\pa f^{\prime\prime}(y_{in}) f^{\prime\prime}(y_{jn})}{\pa w_{nm}}\nn \\
&= v_n^2 \bigg( \frac{\pa f^{\prime\prime}(y_{in})}{\pa w_{nm}} f^{\prime\prime}(y_{jn}) + 
\frac{\pa f^{\prime\prime}(y_{jn})}{\pa w_{nm}} f^{\prime\prime}(y_{in})  \bigg) \nn \\
&= v_n^2 \bigg( 
\frac{\pa f^{\prime\prime}(y_{in})}{\pa y_{in}}
\frac{\pa y_{in}}{\pa w_{nm}} f^{\prime\prime}(y_{jn}) + 
\frac{\pa f^{\prime\prime}(y_{jn})}{\pa y_{jn}}
\frac{\pa y_{jn}}{\pa w_{nm}}
f^{\prime\prime}(y_{in}) 
 \bigg)\nn \\
&= v_n^2 \big( 
x_{im}
f^{\prime\prime\prime}(y_{in})
f^{\prime\prime}(y_{jn}) + 
x_{jm}
f^{\prime\prime\prime}(y_{jn})
f^{\prime\prime}(y_{in}) 
 \big), \nn
}
we have
\ali{
\frac{\pa (\bm{\Phi}_{ij})_{22}}{\pa \bm{W}_{:m}}
&=
x_{im}
\bm{F}^{\prime\prime\prime}(\bm{y}_{i})
\bm{F}^{\prime\prime}(\bm{y}_{j})
\widetilde{\bm{V}}^{\odot 2 \top} \nn \\
&+ x_{jm}
\bm{F}^{\prime\prime\prime}(\bm{y}_{j})
\bm{F}^{\prime\prime}(\bm{y}_{i})
\widetilde{\bm{V}}^{\odot 2 \top} \nn
}
and using Eq.~\eqref{eqq_d_phi_ij}, 
\ali{
\frac{\pa (\bm{\Phi}_{ij})_{22}}{\pa \bm{w}}
&= 
\bm{h}(\bm{x}_{i}) \otimes
\big(
\bm{F}^{\prime\prime\prime}(\bm{y}_{i})
\bm{F}^{\prime\prime}(\bm{y}_{j})
\widetilde{\bm{V}}^{\odot 2 \top}
\big) \nn \\
&+ 
\bm{h}(\bm{x}_{j}) \otimes
\big(
\bm{F}^{\prime\prime\prime}(\bm{y}_{j})
\bm{F}^{\prime\prime}(\bm{y}_{i})
\widetilde{\bm{V}}^{\odot 2 \top}
\big)\nn \\
&= \bm{h}(\bm{x}_{i}) \otimes \bm{\mathfrak{d}}^\Phi_{ij} + \bm{h}(\bm{x}_{j}) \otimes \bm{\mathfrak{d}}^\Phi_{ji} \nn
}
is obtained.
\end{proof}

\begin{lemma}
\ali{
&\Bigg[ \frac{\pa \mathrm{tr}(\bm{H}_L(\bm{\theta})^2)}{\pa \bm{w}} \Bigg]_\Phi
= \sum_{i=1}^I \sum_{j=1}^I\nn \\
&\bigg( 
\bm{h}(\bm{x}_i) \otimes (\bm{\mathcal{A}}_{ij}^\Phi + \bm{\mathcal{B}}_{ij}^\Phi)  + \bm{h}(\bm{x}_j) \otimes (\bm{\mathcal{A}}_{ji}^\Phi + \bm{\mathcal{B}}_{ji}^\Phi) \bigg). \label{eqq_w_Phi_fin}
}
\end{lemma}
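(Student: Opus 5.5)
We start from the definition in Eq.~\eqref{eqq_phi_tr2_div} with $\bm{\theta}^\dag=\bm{w}$ and substitute Eq.~\eqref{eqq_phi_theta_dif}. For each pair $(i,j)$ this splits $\pa\phi_{ij}/\pa\bm{w}$ into three contributions:
\[
\frac{\pa\bm{\phi}_{ij}}{\pa\bm{w}}(\bm{o}_i\otimes\bm{o}_j), \qquad
\frac{\pa\bm{o}_i}{\pa\bm{w}}\bm{\Phi}_{ij}\bm{o}_j, \qquad
\frac{\pa\bm{o}_j}{\pa\bm{w}}\bm{\Phi}_{ji}\bm{o}_i .
\]
Each is then rewritten as $\bm{h}(\bm{x}_i)\otimes(\cdot)$ or $\bm{h}(\bm{x}_j)\otimes(\cdot)$, and the result is symmetrized over the double sum using $(1+\bm{x}_i^\top\bm{x}_j)^2=(1+\bm{x}_j^\top\bm{x}_i)^2$.

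The $\bm{o}$-derivative terms are direct. By Eq.~\eqref{eqq_o_w_dif} and Eq.~\eqref{eqq_kron_1},
\[
\frac{\pa\bm{o}_i}{\pa\bm{w}}\bm{\Phi}_{ij}\bm{o}_j=\bm{h}(\bm{x}_i)\otimes\big(\bm{F}^\prime(\bm{y}_i)\widetilde{\bm{V}}^\top\bm{s}^{\prime\prime/\prime}(z_i)^\top\bm{\Phi}_{ij}\bm{o}_j\big).
\]
After multiplying by $(1+\bm{x}_i^\top\bm{x}_j)^2$, this is exactly $\bm{h}(\bm{x}_i)\otimes\bm{\mathcal{B}}^\Phi_{ij}$ by Eq.~\eqref{eqq_Bij_Phi}. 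The third contribution is the same expression with $i$ and $j$ interchanged, so it equals $\bm{h}(\bm{x}_j)\otimes\bm{\mathcal{B}}^\Phi_{ji}$.

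For the Jacobian term, we assemble $\pa\bm{\phi}_{ij}/\pa\bm{w}$ column by column from Eq.~\eqref{eqq_jac_phi_theta}, using Eqs.~\eqref{eqq_Phi_11_w_div}, \eqref{eqq_Phi_12_w_div}, \eqref{eqq_Phi_21_w_div} and \eqref{eqq_Phi_22_w_div}. Grouping by Kronecker prefactor and applying Eq.~\eqref{eqq_kron_2} gives
\[
\frac{\pa\bm{\phi}_{ij}}{\pa\bm{w}}=\bm{h}(\bm{x}_i)\otimes[\bm{\mathfrak{a}}^\Phi_{ij}\ \bm{\mathfrak{b}}^\Phi_{ij}\ \bm{\mathfrak{c}}^\Phi_{ij}\ \bm{\mathfrak{d}}^\Phi_{ij}]
+\bm{h}(\bm{x}_j)\otimes[\bm{\mathfrak{a}}^\Phi_{ji}\ \bm{\mathfrak{c}}^\Phi_{ji}\ \bm{\mathfrak{b}}^\Phi_{ji}\ \bm{\mathfrak{d}}^\Phi_{ji}].
\]
The first bracket is $\bm{G}^\Phi_{ij}$ from Eq.~\eqref{eqq_G_phi_ij}. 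The second bracket is $\bm{G}^\Phi_{ji}$ with its middle two columns swapped, i.e. $\bm{G}^\Phi_{ji}\bm{P}$, where $\bm{P}$ is the $4\times4$ commutation permutation. Since $\bm{P}(\bm{o}_i\otimes\bm{o}_j)=\bm{o}_j\otimes\bm{o}_i$, Eq.~\eqref{eqq_kron_1} yields
\[
\frac{\pa\bm{\phi}_{ij}}{\pa\bm{w}}(\bm{o}_i\otimes\bm{o}_j)=\bm{h}(\bm{x}_i)\otimes\bm{G}^\Phi_{ij}(\bm{o}_i\otimes\bm{o}_j)+\bm{h}(\bm{x}_j)\otimes\bm{G}^\Phi_{ji}(\bm{o}_j\otimes\bm{o}_i).
\]
Multiplying by $(1+\bm{x}_i^\top\bm{x}_j)^2$ turns these into $\bm{h}(\bm{x}_i)\otimes\bm{\mathcal{A}}^\Phi_{ij}+\bm{h}(\bm{x}_j)\otimes\bm{\mathcal{A}}^\Phi_{ji}$ by Eq.~\eqref{eqq_Aij_Phi}. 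Summing all three contributions over $i,j$ and using Eq.~\eqref{eqq_kron_1_5} then gives Eq.~\eqref{eqq_w_Phi_fin}.

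The main obstacle is the index bookkeeping in this Jacobian step. One must check that the $\bm{h}(\bm{x}_j)$-parts of the $(12)$ and $(21)$ entries ($\bm{\mathfrak{c}}^\Phi_{ji}$ and $\bm{\mathfrak{b}}^\Phi_{ji}$) land in the correct slots. This amounts to checking that the entry ordering of $\bm{o}_i\otimes\bm{o}_j=[s'_is'_j,\ s'_i\delta_j,\ \delta_is'_j,\ \delta_i\delta_j]^\top$ matches the column ordering $(11),(12),(21),(22)$ of the flattened Jacobian. We will verify this explicitly by expanding both products entrywise.
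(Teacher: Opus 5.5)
Your proposal is correct and follows essentially the same route as the paper: split via Eq.~\eqref{eqq_phi_theta_dif}, assemble the Jacobian from the four entry-wise lemmas, realign the $\bm{h}(\bm{x}_j)$-part by swapping the middle columns (your commutation matrix $\bm{P}$ is exactly the paper's identity Eq.~\eqref{eqq_cb_change}), and match the $\bm{o}$-derivative terms to $\bm{\mathcal{B}}^\Phi_{ij}$ and $\bm{\mathcal{B}}^\Phi_{ji}$ via Eq.~\eqref{eqq_o_w_dif}. The one step the paper leaves implicit, and you state, is the symmetry $(1+\bm{x}_i^\top\bm{x}_j)^2=(1+\bm{x}_j^\top\bm{x}_i)^2$, which lets the $ji$-terms absorb the scalar factor without re-indexing the double sum.
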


\begin{proof}
Since the Kronecker product of $\bm{o}_i$ and $\bm{o}_j$ is given by
\ali{
\bm{o}_i \otimes \bm{o}_j = \mat{
s^\prime(z_i) s^\prime(z_j)&
s^\prime(z_i) \delta_j&
\delta_i s^\prime(z_j)&
\delta_i \delta_j}^\top, \label{eqq_oi_oj}
}
we have
\ali{
&\mat{
\bm{\mathfrak{a}}^\Phi_{ji} &
\bm{\mathfrak{c}}^\Phi_{ji} & 
\bm{\mathfrak{b}}^\Phi_{ji} & 
\bm{\mathfrak{d}}^\Phi_{ji}  
}(\bm{o}_i \otimes \bm{o}_j) \nn \\
&=
\mat{
\bm{\mathfrak{a}}^\Phi_{ji} &
\bm{\mathfrak{b}}^\Phi_{ji} & 
\bm{\mathfrak{c}}^\Phi_{ji} & 
\bm{\mathfrak{d}}^\Phi_{ji}  
}(\bm{o}_j \otimes \bm{o}_i) \in \mathbb{R}^{N}. \label{eqq_cb_change}
}

From Eqs.~\eqref{eqq_G_phi_ij}, 
\eqref{eqq_jac_phi_theta}, 
\eqref{eqq_Phi_11_w_div}, 
\eqref{eqq_Phi_12_w_div}, 
\eqref{eqq_Phi_21_w_div}, 
\eqref{eqq_Phi_22_w_div}, 
\eqref{eqq_oi_oj}, and \eqref{eqq_cb_change}, 
the first term of Eq.~\eqref{eqq_phi_theta_dif} becomes
\ali{
&\frac{\pa \bm{\phi}_{ij}}{\pa \bm{w}}(\bm{o}_i \otimes \bm{o}_j) \nn \\
&= 
\mat{
\frac{\pa (\bm{\Phi}_{ij})_{11}}{\pa \bm{w}} &
\frac{\pa (\bm{\Phi}_{ij})_{12}}{\pa \bm{w}} &
\frac{\pa (\bm{\Phi}_{ij})_{21}}{\pa \bm{w}} &
\frac{\pa (\bm{\Phi}_{ij})_{22}}{\pa \bm{w}}}(\bm{o}_i \otimes \bm{o}_j) \nn \\
&=
\bm{h}(\bm{x}_{i}) \otimes
\mat{
\bm{\mathfrak{a}}^\Phi_{ij} &
\bm{\mathfrak{b}}^\Phi_{ij} & 
\bm{\mathfrak{c}}^\Phi_{ij} & 
\bm{\mathfrak{d}}^\Phi_{ij}  
}(\bm{o}_i \otimes \bm{o}_j) \nn \\
&+ 
\bm{h}(\bm{x}_{j})\otimes 
\mat{
\bm{\mathfrak{a}}^\Phi_{ji} &
\bm{\mathfrak{c}}^\Phi_{ji} & 
\bm{\mathfrak{b}}^\Phi_{ji} & 
\bm{\mathfrak{d}}^\Phi_{ji}  
}(\bm{o}_i \otimes \bm{o}_j) \nn \\
&= 
\bm{h}(\bm{x}_{i}) \otimes
\mat{
\bm{\mathfrak{a}}^\Phi_{ij} &
\bm{\mathfrak{b}}^\Phi_{ij} & 
\bm{\mathfrak{c}}^\Phi_{ij} & 
\bm{\mathfrak{d}}^\Phi_{ij}  
}(\bm{o}_i \otimes \bm{o}_j)  \nn \\
&+ 
\bm{h}(\bm{x}_{j})\otimes 
\mat{
\bm{\mathfrak{a}}^\Phi_{ji} &
\bm{\mathfrak{b}}^\Phi_{ji} & 
\bm{\mathfrak{c}}^\Phi_{ji} & 
\bm{\mathfrak{d}}^\Phi_{ji}  
}(\bm{o}_j \otimes \bm{o}_i) \nn\\
&= \bm{h}(\bm{x}_{i}) \otimes \bm{G}^\Phi_{ij} (\bm{o}_i \otimes \bm{o}_j)
+ \bm{h}(\bm{x}_{j}) \otimes \bm{G}^\Phi_{ji} (\bm{o}_j \otimes \bm{o}_i). \label{eqq_phi_change_bc}
}
From Eqs.~\eqref{eqq_kron_1} and \eqref{eqq_o_w_dif}, the second and third terms of Eq.~\eqref{eqq_phi_theta_dif} become
\ali{
\frac{\pa \bm{o}_i}{\pa \bm{w}} \bm{\Phi}_{ij} \bm{o}_j
&= \bm{h}(\bm{x}_i) \otimes (\bm{F}^\prime (\bm{y}_i) \widetilde{\bm{V}}^\top \bm{s}^{\prime\prime/\prime}(z_i)^\top \bm{\Phi}_{ij} \bm{o}_j), \label{eqq_oi_w_div} \\
\frac{\pa \bm{o}_{j} }{\pa \bm{w}} \bm{\Phi}_{ji} \bm{o}_{i}
&= \bm{h}(\bm{x}_j) \otimes (\bm{F}^\prime (\bm{y}_j) \widetilde{\bm{V}}^\top \bm{s}^{\prime\prime/\prime}(z_j)^\top \bm{\Phi}_{ji} \bm{o}_i). \label{eqq_oj_w_div}
}
From Eqs.~\eqref{eqq_Aij_Phi}, \eqref{eqq_Bij_Phi}, \eqref{eqq_kron_1_5}, \eqref{eqq_phi_tr2_div}, \eqref{eqq_phi_theta_dif}, \eqref{eqq_phi_change_bc}, \eqref{eqq_oi_w_div}, and \eqref{eqq_oj_w_div}, 
\ali{
&\Bigg[ \frac{\pa \mathrm{tr}(\bm{H}_L(\bm{\theta})^2)}{\pa \bm{w}} \Bigg]_\Phi
= \sum_{i=1}^I \sum_{j=1}^I 
(1+ \bm{x}_i^\top\bm{x}_j)^2 \frac{\pa \phi_{ij}}{\pa \bm{w}} \nn \\
&= \sum_{i=1}^I \sum_{j=1}^I \bigg( \bm{h}(\bm{x}_i) \otimes (\bm{\mathcal{A}}_{ij}^\Phi + \bm{\mathcal{B}}_{ij}^\Phi) + \bm{h}(\bm{x}_j) \otimes (\bm{\mathcal{A}}_{ji}^\Phi + \bm{\mathcal{B}}_{ji}^\Phi) \bigg)\nn
}
holds.
\end{proof}

\subsubsection{Gradients of the Psi Term}
Here, we clarify
$
[ \pa \mathrm{tr}(\bm{H}_L(\bm{\theta})^2) / \pa \bm{w} ]_\Psi
$
shown in Eq.~\eqref{eqq_psi_tr2_div}.

\begin{lemma}
\ali{
\frac{\pa (\bm{\Psi}_{ij})_{11} }{\pa \bm{w}} 
&= \bm{h}(\bm{x}_{i}) \otimes \bm{\mathfrak{a}}^\Psi_{ij} + \bm{h}(\bm{x}_{j}) \otimes \bm{\mathfrak{a}}^\Psi_{ji}  \in \mathbb{R}^{(M+1)N}. \label{eqq_Psi_11_w_div}
}
\end{lemma}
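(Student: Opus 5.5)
The plan is to treat $(\bm{\Psi}_{ij})_{11}$ in Eq.~\eqref{eqq_Psi_11} as a product of two scalars,
\begin{align}
(\bm{\Psi}_{ij})_{11} = g_{ij}\, k_{ij}, \quad
g_{ij} = 1+\bm{f}(\bm{y}_i)^\top \bm{f}(\bm{y}_j), \quad
k_{ij} = \widetilde{\bm{V}} \bm{F}^\prime(\bm{y}_j) \bm{F}^\prime(\bm{y}_i) \widetilde{\bm{V}}^\top, \nn
\end{align}
and apply the product rule:
\begin{align}
\frac{\pa (\bm{\Psi}_{ij})_{11}}{\pa \bm{w}} = \frac{\pa g_{ij}}{\pa \bm{w}} k_{ij} + g_{ij} \frac{\pa k_{ij}}{\pa \bm{w}}. \nn
\end{align}
Both factors are symmetric under $i \leftrightarrow j$. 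I will therefore aim to write each derivative as an $\bm{h}(\bm{x}_i)\otimes(\cdot)$ part plus its mirror image. The mirror parts should then assemble into $\bm{h}(\bm{x}_j)\otimes \bm{\mathfrak{a}}^\Psi_{ji}$.

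For $g_{ij}$, I would use the Leibniz rule Eq.~\eqref{eqq_ripniz} together with Eq.~\eqref{eqq_pfy_w} applied to samples $i$ and $j$ separately. This gives
\begin{align}
\frac{\pa g_{ij}}{\pa \bm{w}} = \bm{h}(\bm{x}_i)\otimes \bm{F}^\prime(\bm{y}_i)\bm{f}(\bm{y}_j) + \bm{h}(\bm{x}_j)\otimes \bm{F}^\prime(\bm{y}_j)\bm{f}(\bm{y}_i). \nn
\end{align}
The scalar $k_{ij}$ is $\sum_n v_n^2 f^\prime(y_{jn}) f^\prime(y_{in})$, which is exactly $(\bm{\Phi}_{ij})_{11}^{1/2}$. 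I can therefore reuse the intermediate result from the proof of Eq.~\eqref{eqq_Phi_11_w_div}:
\begin{align}
\frac{\pa k_{ij}}{\pa \bm{w}} = \bm{h}(\bm{x}_i)\otimes \bm{F}^{\prime\prime}(\bm{y}_i)\bm{F}^\prime(\bm{y}_j)\widetilde{\bm{V}}^{\odot 2\top} + \bm{h}(\bm{x}_j)\otimes \bm{F}^{\prime\prime}(\bm{y}_j)\bm{F}^\prime(\bm{y}_i)\widetilde{\bm{V}}^{\odot 2\top}. \nn
\end{align}

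Next I multiply by the scalars and collect terms using Eq.~\eqref{eqq_kron_1_5}. The coefficient of $\bm{h}(\bm{x}_i)$ is then
\begin{align}
\bm{F}^\prime(\bm{y}_i)\bm{f}(\bm{y}_j)\, k_{ij} + g_{ij}\, \bm{F}^{\prime\prime}(\bm{y}_i)\bm{F}^\prime(\bm{y}_j)\widetilde{\bm{V}}^{\odot 2\top}. \nn
\end{align}
The main obstacle is matching this to the form of $\bm{\mathfrak{a}}^\Psi_{ij}$ in Eq.~\eqref{eqq_a_psi_ij}, which is only bookkeeping with diagonal matrices and needs two identities.

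First, since all $\bm{F}^{(k)}$ are diagonal and therefore commute, $k_{ij} = \widetilde{\bm{V}}\bm{F}^\prime(\bm{y}_i)\bm{F}^\prime(\bm{y}_j)\widetilde{\bm{V}}^\top$. The first summand is then $\bm{F}^\prime(\bm{y}_i)\bm{f}(\bm{y}_j)\widetilde{\bm{V}}\bm{F}^\prime(\bm{y}_i)\bm{F}^\prime(\bm{y}_j)\widetilde{\bm{V}}^\top$.

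Second, $\widetilde{\bm{V}}^{\odot 2\top} = \mathrm{diag}(\widetilde{\bm{V}}^\top)\widetilde{\bm{V}}^\top$, and diagonal matrices commute. Hence $\bm{F}^{\prime\prime}(\bm{y}_i)\bm{F}^\prime(\bm{y}_j)\widetilde{\bm{V}}^{\odot 2\top} = \mathrm{diag}(\widetilde{\bm{V}}^\top)\bm{F}^{\prime\prime}(\bm{y}_i)\bm{F}^\prime(\bm{y}_j)\widetilde{\bm{V}}^\top$.

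Factoring out the trailing $\bm{F}^\prime(\bm{y}_j)\widetilde{\bm{V}}^\top$ then gives exactly $\bm{\mathfrak{a}}^\Psi_{ij}$. Finally, because $g_{ij}=g_{ji}$ and $k_{ij}=k_{ji}$, the $\bm{h}(\bm{x}_j)$ coefficient is the same expression with $i$ and $j$ swapped, namely $\bm{\mathfrak{a}}^\Psi_{ji}$. This completes Eq.~\eqref{eqq_Psi_11_w_div}.
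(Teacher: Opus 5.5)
Your proof is correct and follows essentially the paper's route: the product rule on $g_{ij}k_{ij}$, with $\partial g_{ij}/\partial\bm{w}$ obtained from the Leibniz rule and Eq.~\eqref{eqq_pfy_w}, then commuting diagonal factors to reach $\bm{\mathfrak{a}}^\Psi_{ij}$ and reading off the mirror term as $\bm{\mathfrak{a}}^\Psi_{ji}$. The only difference is how you get $\partial k_{ij}/\partial\bm{w}$: you recycle the componentwise computation from the proof of Eq.~\eqref{eqq_Phi_11_w_div} and convert via $\widetilde{\bm{V}}^{\odot 2\top}=\mathrm{diag}(\widetilde{\bm{V}}^\top)\widetilde{\bm{V}}^\top$, whereas the paper applies the inner-product Leibniz rule to $k_{ij}$ together with Eq.~\eqref{eqq_fyv_w}, which lands directly in the $\mathrm{diag}(\widetilde{\bm{V}}^\top)\bm{F}^{\prime\prime}$ form (and, being stated for general $\bm{\theta}$ in Eq.~\eqref{eq_psi_11_theta}, is reused later for the $\bm{v}$-gradient).
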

\begin{proof}
From Eq.~\eqref{eqq_Psi_11}, 
\ali{
&\frac{\pa (\bm{\Psi}_{ij})_{11} }{\pa \bm{\theta}} 
= \frac{\pa (1+\bm{f}(\bm{y}_i)^\top \bm{f}(\bm{y}_j)) \widetilde{\bm{V}} \bm{F}^\prime(\bm{y}_j) \bm{F}^\prime(\bm{y}_i) \widetilde{\bm{V}}^\top }{\pa \bm{\theta}} \nn \\
&= 
\frac{\pa \bm{f}(\bm{y}_i)^\top \bm{f}(\bm{y}_j)}{\pa \bm{\theta}} \widetilde{\bm{V}} \bm{F}^\prime(\bm{y}_j) \bm{F}^\prime(\bm{y}_i) \widetilde{\bm{V}}^\top\nn \\
&+ 
(1+\bm{f}(\bm{y}_i)^\top \bm{f}(\bm{y}_j)) \frac{\pa \widetilde{\bm{V}} \bm{F}^\prime(\bm{y}_j) \bm{F}^\prime(\bm{y}_i) \widetilde{\bm{V}}^\top}{\pa \bm{\theta}} \nn \\
&=\bigg( \frac{\pa \bm{f}(\bm{y}_i)}{\pa \bm{\theta}} \bm{f}(\bm{y}_j) + \frac{\pa \bm{f}(\bm{y}_j)}{\pa \bm{\theta}}\bm{f}(\bm{y}_i) \bigg) \widetilde{\bm{V}} \bm{F}^\prime(\bm{y}_j) \bm{F}^\prime(\bm{y}_i) \widetilde{\bm{V}}^\top\nn \\
&+ 
(1+\bm{f}(\bm{y}_i)^\top \bm{f}(\bm{y}_j)) \nn\\
&\times \bigg( 
\frac{\pa \bm{F}^\prime(\bm{y}_j) \widetilde{\bm{V}}^\top}{\pa \bm{\theta}} \bm{F}^\prime(\bm{y}_i) + \frac{\pa \bm{F}^\prime(\bm{y}_i) \widetilde{\bm{V}}^\top}{\pa \bm{\theta}}  \bm{F}^\prime(\bm{y}_j) \bigg) \widetilde{\bm{V}}^\top \label{eq_psi_11_theta}
}
holds.
Note that $\widetilde{\bm{V}} \bm{F}^\prime(\bm{y}_j) \bm{F}^\prime(\bm{y}_i) \widetilde{\bm{V}}^\top$ is an inner product because $\widetilde{\bm{V}} \bm{F}^\prime(\bm{y}_j)$ is a row vector and $\bm{F}^\prime(\bm{y}_i) \widetilde{\bm{V}}^\top$ is a column vector.
Therefore, Eq.~\eqref{eqq_ripniz} is applied.
From Eqs.~\eqref{eqq_a_psi_ij}, \eqref{eqq_kron_1}, \eqref{eqq_fyv_w}, \eqref{eqq_pfy_w}, and \eqref{eq_psi_11_theta},
\ali{
&\frac{\pa (\bm{\Psi}_{ij})_{11} }{\pa \bm{w}} =\Big( \bm{h}(\bm{x}_i) \otimes (\bm{F}^\prime(\bm{y}_i) \bm{f}(\bm{y}_j) ) \nn \\
&+ \bm{h}(\bm{x}_j) \otimes (\bm{F}^\prime(\bm{y}_j)\bm{f}(\bm{y}_i) )\Big) \widetilde{\bm{V}} \bm{F}^\prime(\bm{y}_j) \bm{F}^\prime(\bm{y}_i) \widetilde{\bm{V}}^\top \nn \\
&+ 
(1+\bm{f}(\bm{y}_i)^\top \bm{f}(\bm{y}_j)) \Big( 
\bm{h}(\bm{x}_j) \otimes (\mathrm{diag}(\widetilde{\bm{V}}^\top) \bm{F}^{\prime\prime}(\bm{y}_j) \bm{F}^\prime(\bm{y}_i) ) \nn \\
&+ 
\bm{h}(\bm{x}_i) \otimes (\mathrm{diag}(\widetilde{\bm{V}}^\top) \bm{F}^{\prime\prime}(\bm{y}_i) \bm{F}^\prime(\bm{y}_j) )
\Big) \widetilde{\bm{V}}^\top \nn \\
&=
\bm{h}(\bm{x}_i) \otimes \Big(\big(\bm{F}^\prime(\bm{y}_i) \bm{f}(\bm{y}_j) \widetilde{\bm{V}} \bm{F}^\prime(\bm{y}_i) \nn \\
&+ \mathrm{diag}(\widetilde{\bm{V}}^\top) \bm{F}^{\prime\prime}(\bm{y}_i)(1+\bm{f}(\bm{y}_i)^\top \bm{f}(\bm{y}_j))\big) \bm{F}^\prime(\bm{y}_j)\widetilde{\bm{V}}^\top\Big) \nn \\
&+\bm{h}(\bm{x}_j) \otimes \Big((
\bm{F}^\prime(\bm{y}_j)\bm{f}(\bm{y}_i)\widetilde{\bm{V}} \bm{F}^\prime(\bm{y}_j)\nn \\
&+
\mathrm{diag}(\widetilde{\bm{V}}^\top) \bm{F}^{\prime\prime}(\bm{y}_j) (1+\bm{f}(\bm{y}_j)^\top \bm{f}(\bm{y}_i)) ) \bm{F}^\prime(\bm{y}_i) \widetilde{\bm{V}}^\top\Big)\nn\\
&= \bm{h}(\bm{x}_{i}) \otimes \bm{\mathfrak{a}}^\Psi_{ij} + \bm{h}(\bm{x}_{j}) \otimes \bm{\mathfrak{a}}^\Psi_{ji}\nn
}
is obtained.
\end{proof}

\begin{lemma}
\ali{
&\frac{\pa (\bm{\Psi}_{ij})_{12}}{\pa \bm{w}} 
= \bm{h}(\bm{x}_{i}) \otimes \bm{\mathfrak{b}}^\Psi_{ij} + \bm{h}(\bm{x}_{j}) \otimes \bm{\mathfrak{c}}^\Psi_{ji} \in \mathbb{R}^{(M+1)N}, \label{eqq_Psi_12_w_div} \\
&\frac{\pa (\bm{\Psi}_{ij})_{21}}{\pa \bm{w}} 
= \bm{h}(\bm{x}_{i}) \otimes \bm{\mathfrak{c}}^\Psi_{ij} + \bm{h}(\bm{x}_{j}) \otimes \bm{\mathfrak{b}}^\Psi_{ji} \in \mathbb{R}^{(M+1)N}. \label{eqq_Psi_21_w_div}
}
\end{lemma}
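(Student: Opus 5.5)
The plan is to follow the pattern used for Eqs.~\eqref{eqq_Phi_12_w_div}--\eqref{eqq_Phi_21_w_div}: compute the scalar derivative with respect to a single weight $w_{nm}$, stack the results column by column, and then use a symmetry to obtain the $(2,1)$ entry from the $(1,2)$ entry.

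\emph{Componentwise form.} Because $\bm{F}^\prime(\bm{y}_i)$ and $\bm{F}^\prime(\bm{y}_j)$ are diagonal, Eq.~\eqref{eqq_Psi_12} reduces to
\begin{align}
(\bm{\Psi}_{ij})_{12} = \sum_{n=1}^N v_n\, f(y_{in}) f^\prime(y_{in}) f^\prime(y_{jn}). \nn
\end{align}
Since $y_n = \sum_m w_{nm}x_m + b_n$, the weight $w_{nm}$ enters only $y_{in}$ and $y_{jn}$. I will include the bias by taking $x_0 = 1$, which is exactly what $\bm{h}(\bm{x})$ does.

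\emph{Derivative with respect to $w_{nm}$.} By the product and chain rules,
\begin{align}
\frac{\pa (\bm{\Psi}_{ij})_{12}}{\pa w_{nm}}
&= x_{im} v_n \big(f^\prime(y_{in})^2 + f(y_{in}) f^{\prime\prime}(y_{in})\big) f^\prime(y_{jn}) \nn \\
&+ x_{jm} v_n f(y_{in}) f^\prime(y_{in}) f^{\prime\prime}(y_{jn}). \nn
\end{align}

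\emph{Stacking over $n$.} Collecting these derivatives over $n$ and writing products of diagonal matrices gives
\begin{align}
\frac{\pa (\bm{\Psi}_{ij})_{12}}{\pa \bm{W}_{:m}}
&= x_{im} (\bm{F}^\prime(\bm{y}_i)^2 + \bm{F}^{\prime\prime}(\bm{y}_i)\bm{F}(\bm{y}_i)) \bm{F}^\prime(\bm{y}_j)\widetilde{\bm{V}}^\top \nn \\
&+ x_{jm}\bm{F}^{\prime\prime}(\bm{y}_j)\bm{F}(\bm{y}_i)\bm{F}^\prime(\bm{y}_i)\widetilde{\bm{V}}^\top. \nn
\end{align}
Here $\bm{F}(\bm{y})$ denotes $\mathrm{diag}(\bm{f}(\bm{y}))$, the $k=0$ case of Eq.~\eqref{eqq_bm_matF_k}. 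Comparing with Eqs.~\eqref{eqq_b_psi_ij} and \eqref{eqq_c_psi_ij}, the first term is $x_{im}\bm{\mathfrak{b}}^\Psi_{ij}$ and the second is $x_{jm}\bm{\mathfrak{c}}^\Psi_{ji}$; for the latter, swap $i\leftrightarrow j$ in the definition of $\bm{\mathfrak{c}}^\Psi$ and use that diagonal matrices commute. Stacking the columns $\bm{W}_{:m}$ in the order of $\bm{w}$ then yields Eq.~\eqref{eqq_Psi_12_w_div}, exactly as in the proof of Eq.~\eqref{eqq_Phi_11_w_div}.

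\emph{The $(2,1)$ entry.} From Eqs.~\eqref{eqq_Psi_12} and \eqref{eqq_Psi_21}, both entries equal a sum over $n$ of $v_n f^\prime f^\prime f$, so
\begin{align}
(\bm{\Psi}_{ij})_{21} = \sum_n v_n f^\prime(y_{jn}) f^\prime(y_{in}) f(y_{jn}) = (\bm{\Psi}_{ji})_{12}. \nn
\end{align}
Applying Eq.~\eqref{eqq_Psi_12_w_div} with $i$ and $j$ interchanged then gives Eq.~\eqref{eqq_Psi_21_w_div} immediately.

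The derivative itself is routine. The step needing the most care is the bookkeeping: confirming that $\bm{F}(\bm{y})$ is to be read as $\mathrm{diag}(\bm{f}(\bm{y}))$, and checking that the index swap in $\bm{\mathfrak{c}}^\Psi_{ji}$ matches the ordering of the diagonal factors.
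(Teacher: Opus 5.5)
Your proof is correct and is essentially the paper's argument: the same symmetry $(\bm{\Psi}_{ij})_{21} = (\bm{\Psi}_{ji})_{12}$ gives the second identity, and your column-by-column stacking over $\bm{W}_{:m}$ with $x_0 = 1$ reproduces the paper's $\bm{h}(\bm{x})\otimes(\cdot)$ form. The only difference is organizational. You differentiate the full scalar sum $\sum_n v_n f(y_{in}) f^\prime(y_{in}) f^\prime(y_{jn})$ entrywise in one pass (as the paper does for $(\bm{\Phi}_{ij})_{12}$); the paper instead applies the Leibniz rule, reuses the known Jacobian of $\bm{F}^\prime(\bm{y}_i)\widetilde{\bm{V}}^\top$ from Eq.~\eqref{eqq_fyv_w}, computes only $\partial(\bm{F}^\prime(\bm{y}_j)\bm{f}(\bm{y}_i))/\partial\bm{w}$ componentwise, and merges terms via $\mathrm{diag}(\widetilde{\bm{V}}^\top)\bm{F}^{\prime\prime}(\bm{y}_i)\bm{F}^\prime(\bm{y}_j)\bm{f}(\bm{y}_i) = \bm{F}^{\prime\prime}(\bm{y}_i)\bm{F}(\bm{y}_i)\bm{F}^\prime(\bm{y}_j)\widetilde{\bm{V}}^\top$, a step your route never needs.
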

\begin{proof}
From Eq.~\eqref{eqq_Psi_12}, 
\ali{
&\frac{\pa (\bm{\Psi}_{ij})_{12}}{\pa \bm{\theta}} 
= \frac{\pa \bm{f}(\bm{y}_i)^\top \bm{F}^\prime(\bm{y}_j) \bm{F}^\prime(\bm{y}_i) \widetilde{\bm{V}}^\top}{\pa \bm{\theta}} \nn \\
&=\frac{\pa \bm{F}^\prime(\bm{y}_j) \bm{f}(\bm{y}_i)}{\pa \bm{\theta}}\bm{F}^\prime(\bm{y}_i) \widetilde{\bm{V}}^\top 
+\frac{\pa \bm{F}^\prime(\bm{y}_i)  \widetilde{\bm{V}}^\top}{\pa \bm{\theta}}\bm{F}^\prime(\bm{y}_j) \bm{f}(\bm{y}_i)
\label{eqq_Psi_12_theta_div}
}
holds.
Note that $\bm{f}(\bm{y}_i)^\top \bm{F}^\prime(\bm{y}_j) \bm{F}^\prime(\bm{y}_i) \widetilde{\bm{V}}^\top$ is an inner product because $\bm{f}(\bm{y}_i)^\top \bm{F}^\prime(\bm{y}_j)$ is a row vector and $\bm{F}^\prime(\bm{y}_i) \widetilde{\bm{V}}^\top$ is a column vector.
Therefore, Eq.~\eqref{eqq_ripniz} is applied.
Since $\bm{F}^\prime(\bm{y}_j) \bm{f}(\bm{y}_i) = \mat{f^\prime(y_{j1}) f(y_{i1}) \cdots f^\prime(y_{jN}) f(y_{iN})}^\top$, deriving the partial derivative of $f^\prime(y_{jn}) f(y_{in})$ with respect to $w_{nm}$ yields
\ali{
&\frac{\pa f^\prime(y_{jn}) f(y_{in})}{\pa w_{nm}} 
= \frac{\pa f^\prime(y_{jn})}{\pa w_{nm}} f(y_{in})
+\frac{\pa f(y_{in})}{\pa w_{nm}} f^\prime(y_{jn})\nn\\
&= \frac{\pa f^\prime(y_{jn})}{\pa y_{jn}} \frac{\pa y_{jn}}{\pa w_{nm}} f(y_{in})
+\frac{\pa f(y_{in})}{\pa y_{in}} \frac{\pa y_{in}}{\pa w_{nm}} f^\prime(y_{jn}) \nn \\
&= x_{jm} f^{\prime\prime}(y_{jn}) f(y_{in})
+ x_{im} f^{\prime}(y_{in}) f^\prime(y_{jn}). \nn
}
Therefore, 
\ali{
&\frac{\pa \bm{F}^\prime(\bm{y}_j) \bm{f}(\bm{y}_i)}{\pa w_{nm}}\nn \\
&= \mat{\frac{\pa f^\prime(y_{j1}) f(y_{i1}) }{\pa w_{nm}}  \cdots  \frac{\pa f^\prime(y_{jn}) f(y_{in}) }{\pa w_{nm}}  \cdots   \frac{\pa f^\prime(y_{jN}) f(y_{iN}) }{\pa w_{nm}}} \nn \\
&= \mat{ 0  \cdots  x_{jm} f^{\prime\prime}(y_{jn}) f(y_{in})
+ x_{im} f^{\prime}(y_{in}) f^\prime(y_{jn})  \cdots  0}, \nn
}
and 
\ali{
\frac{\pa \bm{F}^\prime(\bm{y}_j) \bm{f}(\bm{y}_i)}{\pa \bm{W}_{:m}}
&= 
x_{im} \bm{F}^{\prime}(\bm{y}_j) \bm{F}^\prime(\bm{y}_i)
+
x_{jm} \bm{F}^{\prime\prime}(\bm{y}_j) \bm{F}(\bm{y}_i),
\nn \\
\frac{\pa \bm{F}^\prime(\bm{y}_j) \bm{f}(\bm{y}_i)}{\pa \bm{w}} 
&= \bm{h}(\bm{x}_{i}) \otimes (\bm{F}^{\prime}(\bm{y}_j) \bm{F}^\prime(\bm{y}_i))\nn \\
&+ 
\bm{h}(\bm{x}_{j}) \otimes (\bm{F}^{\prime\prime}(\bm{y}_j) \bm{F}(\bm{y}_i))
\label{eqq_fyjyi_w_div}
}
are obtained.
Finally, from Eqs.~\eqref{eqq_b_psi_ij}, \eqref{eqq_c_psi_ij}, \eqref{eqq_kron_1}, \eqref{eqq_kron_1_5}, \eqref{eqq_fyv_w}, \eqref{eqq_Psi_12_theta_div}, and \eqref{eqq_fyjyi_w_div}, we obtain
\ali{
\frac{\pa (\bm{\Psi}_{ij})_{12}}{\pa \bm{w}} 
&=\Big( \bm{h}(\bm{x}_{i}) \otimes (\bm{F}^{\prime}(\bm{y}_j) \bm{F}^\prime(\bm{y}_i)) \nn \\
&+ \bm{h}(\bm{x}_{j}) \otimes (\bm{F}^{\prime\prime}(\bm{y}_j) \bm{F}(\bm{y}_i)) \Big)
\bm{F}^\prime(\bm{y}_i) \widetilde{\bm{V}}^\top \nn \\
&+\bm{h}(\bm{x}_i) \otimes (\mathrm{diag}(\widetilde{\bm{V}}^\top) \bm{F}^{\prime\prime}(\bm{y}_i) \bm{F}^\prime(\bm{y}_j) \bm{f}(\bm{y}_i) ) \nn \\
&=\bm{h}(\bm{x}_{i}) \otimes ( (
\bm{F}^\prime(\bm{y}_i)^2 + \bm{F}^{\prime\prime}(\bm{y}_i) \bm{F}(\bm{y}_i) ) \bm{F}^{\prime}(\bm{y}_j) \widetilde{\bm{V}}^\top)\nn \\
&+
\bm{h}(\bm{x}_{j}) \otimes (\bm{F}^{\prime\prime}(\bm{y}_j) \bm{F}(\bm{y}_i) \bm{F}^\prime(\bm{y}_i) \widetilde{\bm{V}}^\top) \nn \\
&= \bm{h}(\bm{x}_{i}) \otimes \bm{\mathfrak{b}}^\Psi_{ij} + \bm{h}(\bm{x}_{j}) \otimes \bm{\mathfrak{c}}^\Psi_{ji}, 
}
where 
\ali{
\mathrm{diag}(\widetilde{\bm{V}}^\top) \bm{F}^{\prime\prime}(\bm{y}_i) \bm{F}^\prime(\bm{y}_j) \bm{f}(\bm{y}_i) = \bm{F}^{\prime\prime}(\bm{y}_i) \bm{F}(\bm{y}_i) \bm{F}^{\prime}(\bm{y}_j) \widetilde{\bm{V}}^\top\nn
} is used.
Also, from Eqs.~\eqref{eqq_Psi_12} and \eqref{eqq_Psi_21}, 
\ali{
(\bm{\Psi}_{ij})_{21} &= \widetilde{\bm{V}} \bm{F}^\prime(\bm{y}_j) \bm{F}^\prime(\bm{y}_i) \bm{f}(\bm{y}_j)\nn \\
&= \bm{f}(\bm{y}_j)^\top \bm{F}^\prime(\bm{y}_i) \bm{F}^\prime(\bm{y}_j) \widetilde{\bm{V}}^\top 
= (\bm{\Psi}_{ji})_{12} \label{eqq_Psi_12_21_change}
}
holds.
Therefore, from Eq.~\eqref{eqq_Psi_12_w_div},
\ali{
\frac{\pa (\bm{\Psi}_{ij})_{21}}{\pa \bm{w}} = \frac{\pa (\bm{\Psi}_{ji})_{12}}{\pa \bm{w}}
= \bm{h}(\bm{x}_{i}) \otimes \bm{\mathfrak{c}}^\Psi_{ij} + \bm{h}(\bm{x}_{j}) \otimes \bm{\mathfrak{b}}^\Psi_{ji}\nn
}
is obtained.
\end{proof}

\begin{lemma}
\ali{
\frac{\pa (\bm{\Psi}_{ij})_{22} }{\pa \bm{w}} 
&= \bm{h}(\bm{x}_{i}) \otimes \bm{\mathfrak{d}}^\Psi_{ij} + \bm{h}(\bm{x}_{j}) \otimes \bm{\mathfrak{d}}^\Psi_{ji}
\in \mathbb{R}^{(M+1)N}.
\label{eqq_Psi_22_w_div}
}
\end{lemma}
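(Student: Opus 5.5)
The plan follows the componentwise template used for Eqs.~\eqref{eqq_Phi_11_w_div} and \eqref{eqq_Phi_22_w_div}. From Eq.~\eqref{eqq_Psi_22}, $(\bm{\Psi}_{ij})_{22} = \bm{f}^\prime(\bm{y}_i)^\top \bm{f}^\prime(\bm{y}_j) = \sum_{n=1}^N f^\prime(y_{in}) f^\prime(y_{jn})$. Because $y_{in} = \sum_{m} w_{nm} x_{im} + b_n$, the parameter $w_{nm}$ (with $x_{i0}=1$ for the bias) enters only the $n$-th summand. Differentiating that summand by the product rule and the chain rule gives
\begin{align}
\frac{\pa (\bm{\Psi}_{ij})_{22}}{\pa w_{nm}} = x_{im} f^{\prime\prime}(y_{in}) f^\prime(y_{jn}) + x_{jm} f^{\prime\prime}(y_{jn}) f^\prime(y_{in}). \nn
\end{align}

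Next I stack over $n$ to form the column-block derivative:
\begin{align}
\frac{\pa (\bm{\Psi}_{ij})_{22}}{\pa \bm{W}_{:m}} = x_{im} \bm{F}^{\prime\prime}(\bm{y}_i) \bm{f}^\prime(\bm{y}_j) + x_{jm} \bm{F}^{\prime\prime}(\bm{y}_j) \bm{f}^\prime(\bm{y}_i). \nn
\end{align}
The same result can be obtained more compactly by applying Eq.~\eqref{eqq_ripniz} to the inner product $\bm{f}^\prime(\bm{y}_i)^\top \bm{f}^\prime(\bm{y}_j)$ and substituting Eq.~\eqref{eqq_fpyy_and_yw_div} for each data index. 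That gives
\begin{align}
\big(\bm{h}(\bm{x}_i)\otimes \bm{F}^{\prime\prime}(\bm{y}_i)\big)\bm{f}^\prime(\bm{y}_j) + \big(\bm{h}(\bm{x}_j)\otimes \bm{F}^{\prime\prime}(\bm{y}_j)\big)\bm{f}^\prime(\bm{y}_i), \nn
\end{align}
and Eq.~\eqref{eqq_kron_1} then moves the right factors inside the Kronecker products.

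Finally, I stack over $m=0,\dots,M$ in the ordering of $\bm{w}$. This turns the coefficients $x_{im}$ and $x_{jm}$ into $\bm{h}(\bm{x}_i)\otimes(\cdot)$ and $\bm{h}(\bm{x}_j)\otimes(\cdot)$. Comparing with Eq.~\eqref{eqq_d_psi_ij} identifies $\bm{F}^{\prime\prime}(\bm{y}_i)\bm{f}^\prime(\bm{y}_j) = \bm{\mathfrak{d}}^\Psi_{ij}$ and $\bm{F}^{\prime\prime}(\bm{y}_j)\bm{f}^\prime(\bm{y}_i) = \bm{\mathfrak{d}}^\Psi_{ji}$, which is the claim.

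There is no real obstacle. The only point needing care is matching the stacking order of $\bm{w}$ with the Kronecker convention, and Eq.~\eqref{eqq_y_w_div} already fixes this. Note also that, unlike $(\bm{\Psi}_{ij})_{11}$, this entry has no $\widetilde{\bm{V}}$ factor, so no $\mathrm{diag}(\widetilde{\bm{V}}^\top)$ terms arise.
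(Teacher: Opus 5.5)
Your proposal is correct and is essentially the paper's proof. The paper applies the Leibniz rule Eq.~\eqref{eqq_ripniz} to $\bm{f}^\prime(\bm{y}_i)^\top \bm{f}^\prime(\bm{y}_j)$, substitutes Eq.~\eqref{eqq_fpyy_and_yw_div} for each sample, and uses Eq.~\eqref{eqq_kron_1} to move the right factors inside the Kronecker products, which is exactly your ``compact'' route; your componentwise computation over $w_{nm}$ is a valid but redundant check that in effect re-derives Eq.~\eqref{eqq_fpyy_and_yw_div} inline.
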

\begin{proof}
From Eqs.~\eqref{eqq_d_psi_ij}, \eqref{eqq_ripniz}, and \eqref{eqq_fpyy_and_yw_div},
\ali{
&\frac{\pa (\bm{\Psi}_{ij})_{22} }{\pa \bm{w}} 
= \frac{\pa \bm{f}^\prime(\bm{y}_i)^\top \bm{f}^\prime(\bm{y}_j) }{\pa \bm{w}}\nn \\
&= \frac{\pa \bm{f}^\prime(\bm{y}_i) }{\pa \bm{w}} \bm{f}^\prime(\bm{y}_j)
+ \frac{\pa \bm{f}^\prime(\bm{y}_j) }{\pa \bm{w}} \bm{f}^\prime(\bm{y}_i) \nn \\
&= \bm{h}(\bm{x}_i)\otimes (\bm{F}^{\prime\prime}(\bm{y}_i) \bm{f}^\prime(\bm{y}_j) )
+ \bm{h}(\bm{x}_j)\otimes ( \bm{F}^{\prime\prime}(\bm{y}_j) \bm{f}^\prime(\bm{y}_i) ) \nn \\
&= \bm{h}(\bm{x}_{i}) \otimes \bm{\mathfrak{d}}^\Psi_{ij} + \bm{h}(\bm{x}_{j}) \otimes \bm{\mathfrak{d}}^\Psi_{ji}\nn
}
is obtained.
\end{proof}

\begin{lemma}
\ali{
&\Bigg[ \frac{\pa \mathrm{tr}(\bm{H}_L(\bm{\theta})^2)}{\pa \bm{w}} \Bigg]_\Psi
= \sum_{i=1}^I \sum_{j=1}^I \nn \\
&\bigg(
\bm{h}(\bm{x}_i) \otimes (\bm{\mathcal{A}}_{ij}^\Psi + \bm{\mathcal{B}}_{ij}^\Psi) + \bm{h}(\bm{x}_j) \otimes (\bm{\mathcal{A}}_{ji}^\Psi + \bm{\mathcal{B}}_{ji}^\Psi) \bigg).  \label{eqq_w_Psi_fin}
}
\end{lemma}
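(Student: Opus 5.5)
I will mirror the proof of Eq.~\eqref{eqq_w_Phi_fin}, since Eq.~\eqref{eqq_psi_theta_dif} has exactly the same structure as Eq.~\eqref{eqq_phi_theta_dif}. Starting from Eq.~\eqref{eqq_psi_tr2_div}, I substitute Eq.~\eqref{eqq_psi_theta_dif} with $\bm{\theta}=\bm{w}$. This splits each summand into three pieces:
\ali{
\frac{\pa \bm{\psi}_{ij}}{\pa \bm{w}}(\bm{o}_i \otimes \bm{o}_j), \quad
\frac{\pa \bm{o}_i}{\pa \bm{w}} \bm{\Psi}_{ij} \bm{o}_j, \quad
\frac{\pa \bm{o}_j}{\pa \bm{w}} \bm{\Psi}_{ji} \bm{o}_i. \nn
}
These will be identified, respectively, with the $\bm{\mathcal{A}}^\Psi$ part and with the $i$-side and $j$-side $\bm{\mathcal{B}}^\Psi$ parts.

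For the first piece, I assemble the Jacobian \eqref{eqq_jac_psi_theta} column by column using Eqs.~\eqref{eqq_Psi_11_w_div}, \eqref{eqq_Psi_12_w_div}, \eqref{eqq_Psi_21_w_div} and \eqref{eqq_Psi_22_w_div}. Collecting the $\bm{h}(\bm{x}_i)$ and $\bm{h}(\bm{x}_j)$ components with Eq.~\eqref{eqq_kron_2} gives
\ali{
\bm{h}(\bm{x}_i)\otimes \bm{G}^\Psi_{ij}(\bm{o}_i\otimes\bm{o}_j)
+\bm{h}(\bm{x}_j)\otimes \mat{\bm{\mathfrak{a}}^\Psi_{ji} & \bm{\mathfrak{c}}^\Psi_{ji} & \bm{\mathfrak{b}}^\Psi_{ji} & \bm{\mathfrak{d}}^\Psi_{ji}}(\bm{o}_i\otimes\bm{o}_j). \nn
}
The swap of the middle two columns is absorbed exactly as in Eq.~\eqref{eqq_cb_change}. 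Exchanging $i\leftrightarrow j$ in $\bm{o}_i\otimes\bm{o}_j$ permutes precisely its second and third entries, by Eq.~\eqref{eqq_oi_oj}. Hence the second term equals $\bm{h}(\bm{x}_j)\otimes\bm{G}^\Psi_{ji}(\bm{o}_j\otimes\bm{o}_i)$, which is the $(j,i)$ analogue of the first term.

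For the second and third pieces, I use Eq.~\eqref{eqq_o_w_dif} together with Eq.~\eqref{eqq_kron_1}. This gives
\ali{
\bm{h}(\bm{x}_i)\otimes(\bm{F}^\prime(\bm{y}_i)\widetilde{\bm{V}}^\top\bm{s}^{\prime\prime/\prime}(z_i)^\top\bm{\Psi}_{ij}\bm{o}_j) \nn
}
and its counterpart with $i$ and $j$ interchanged, mirroring Eqs.~\eqref{eqq_oi_w_div} and \eqref{eqq_oj_w_div}. Multiplying everything by $2(1+\bm{x}_i^\top\bm{x}_j)$ and noting that this factor is symmetric in $i,j$, Eq.~\eqref{eqq_kron_1_5} lets me recognize $\bm{\mathcal{A}}^\Psi_{ij}$ and $\bm{\mathcal{B}}^\Psi_{ij}$ from Eqs.~\eqref{eqq_Aij_Psi} and \eqref{eqq_Bij_Psi} attached to $\bm{h}(\bm{x}_i)$, and $\bm{\mathcal{A}}^\Psi_{ji}$ and $\bm{\mathcal{B}}^\Psi_{ji}$ attached to $\bm{h}(\bm{x}_j)$. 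Summing over $i,j$ then yields Eq.~\eqref{eqq_w_Psi_fin}.

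The main obstacle is index bookkeeping. The Jacobian columns for $(\bm{\Psi}_{ij})_{12}$ and $(\bm{\Psi}_{ij})_{21}$ place $\bm{\mathfrak{c}}^\Psi_{ji}$ and $\bm{\mathfrak{b}}^\Psi_{ji}$ in transposed slots, so I must check that the resulting matrix really is $\bm{G}^\Psi_{ji}$ with its middle columns swapped. This relies on the symmetry $(\bm{\Psi}_{ij})_{21}=(\bm{\Psi}_{ji})_{12}$ from Eq.~\eqref{eqq_Psi_12_21_change}. In the same way, the third piece requires the role of $\bm{\Phi}_{ij}^\top=\bm{\Phi}_{ji}$ to be played by $\bm{\Psi}_{ij}^\top=\bm{\Psi}_{ji}$. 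I will verify this entrywise:
\begin{itemize}
\item the $(1,1)$ entry of Eq.~\eqref{eqq_Psi_11} is symmetric under $i\leftrightarrow j$, since diagonal matrices commute;
\item the off-diagonal entries follow from Eq.~\eqref{eqq_Psi_12_21_change};
\item the $(2,2)$ entry of Eq.~\eqref{eqq_Psi_22} is an inner product and hence symmetric.
\end{itemize}
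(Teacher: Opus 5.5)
Your proposal is correct and follows the paper's proof essentially step for step. The steps match: the three-piece split via Eq.~\eqref{eqq_psi_theta_dif}, the assembly of the Jacobian whose swapped middle columns Eq.~\eqref{eqq_cb_change} absorbs into $\bm{h}(\bm{x}_j)\otimes\bm{G}^\Psi_{ji}(\bm{o}_j\otimes\bm{o}_i)$, the second and third pieces from Eqs.~\eqref{eqq_o_w_dif} and \eqref{eqq_kron_1}, and the final identification with $\bm{\mathcal{A}}^\Psi$ and $\bm{\mathcal{B}}^\Psi$ using the symmetry of $2(1+\bm{x}_i^\top\bm{x}_j)$; your entrywise check that $\bm{\Psi}_{ij}^\top=\bm{\Psi}_{ji}$ is a small but welcome addition, since the paper only says the $\Psi$ version of Eq.~\eqref{eqq_psi_theta_dif} is proved ``in the same manner'' as the $\Phi$ case without verifying that symmetry.
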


\begin{proof}
From Eqs.~\eqref{eqq_G_psi_ij}, \eqref{eqq_jac_psi_theta}, \eqref{eqq_cb_change}, \eqref{eqq_Psi_11_w_div}, \eqref{eqq_Psi_12_w_div}, \eqref{eqq_Psi_21_w_div}, and \eqref{eqq_Psi_22_w_div}, the first term of Eq.~\eqref{eqq_psi_theta_dif} becomes
\ali{
&\frac{\pa \bm{\psi}_{ij}}{\pa \bm{w}}(\bm{o}_i \otimes \bm{o}_j) \nn \\
&= 
\mat{
\frac{\pa (\bm{\Psi}_{ij})_{11}}{\pa \bm{w}} &
\frac{\pa (\bm{\Psi}_{ij})_{12}}{\pa \bm{w}} &
\frac{\pa (\bm{\Psi}_{ij})_{21}}{\pa \bm{w}} &
\frac{\pa (\bm{\Psi}_{ij})_{22}}{\pa \bm{w}}}(\bm{o}_i \otimes \bm{o}_j) \nn \\
&= \bm{h}(\bm{x}_{i}) \otimes
\mat{
\bm{\mathfrak{a}}^\Psi_{ij} &
\bm{\mathfrak{b}}^\Psi_{ij} & 
\bm{\mathfrak{c}}^\Psi_{ij} & 
\bm{\mathfrak{d}}^\Psi_{ij}  
}(\bm{o}_i \otimes \bm{o}_j)\nn \\
& + 
\bm{h}(\bm{x}_{j})\otimes 
\mat{
\bm{\mathfrak{a}}^\Psi_{ji} &
\bm{\mathfrak{c}}^\Psi_{ji} & 
\bm{\mathfrak{b}}^\Psi_{ji} & 
\bm{\mathfrak{d}}^\Psi_{ji}  
}(\bm{o}_i \otimes \bm{o}_j) \nn \\
&= 
\bm{h}(\bm{x}_{i}) \otimes
\mat{
\bm{\mathfrak{a}}^\Psi_{ij} &
\bm{\mathfrak{b}}^\Psi_{ij} & 
\bm{\mathfrak{c}}^\Psi_{ij} & 
\bm{\mathfrak{d}}^\Psi_{ij}  
}(\bm{o}_i \otimes \bm{o}_j) \nn \\
&+ 
\bm{h}(\bm{x}_{j})\otimes 
\mat{
\bm{\mathfrak{a}}^\Psi_{ji} &
\bm{\mathfrak{b}}^\Psi_{ji} & 
\bm{\mathfrak{c}}^\Psi_{ji} & 
\bm{\mathfrak{d}}^\Psi_{ji}  
}(\bm{o}_j \otimes \bm{o}_i) \nn \\
&= \bm{h}(\bm{x}_{i}) \otimes \bm{G}^\Psi_{ij} (\bm{o}_i \otimes \bm{o}_j)
+ \bm{h}(\bm{x}_{j}) \otimes \bm{G}^\Psi_{ji} (\bm{o}_j \otimes \bm{o}_i). \label{eqq_psi_w_oi_oj}
}
From Eqs.~\eqref{eqq_kron_1} and \eqref{eqq_o_w_dif}, the second and third terms of Eq.~\eqref{eqq_psi_theta_dif} become
\ali{
\frac{\pa \bm{o}_i}{\pa \bm{w}} \bm{\Psi}_{ij} \bm{o}_j
&= \bm{h}(\bm{x}_i) \otimes (\bm{F}^\prime (\bm{y}_i) \widetilde{\bm{V}}^\top \bm{s}^{\prime\prime/\prime}(z_i)^\top \bm{\Psi}_{ij} \bm{o}_j),\label{eqq_oi_w_psi_oj} \\
\frac{\pa \bm{o}_{j} }{\pa \bm{w}} \bm{\Psi}_{ji} \bm{o}_{i}
&= \bm{h}(\bm{x}_j) \otimes (\bm{F}^\prime (\bm{y}_j) \widetilde{\bm{V}}^\top \bm{s}^{\prime\prime/\prime}(z_j)^\top \bm{\Psi}_{ji} \bm{o}_i). \label{eqq_oj_w_psi_oi}
}
From Eqs.~\eqref{eqq_Aij_Psi}, \eqref{eqq_Bij_Psi}, \eqref{eqq_kron_1_5}, \eqref{eqq_psi_tr2_div}, \eqref{eqq_psi_theta_dif}, \eqref{eqq_psi_w_oi_oj}, \eqref{eqq_oi_w_psi_oj}, and \eqref{eqq_oj_w_psi_oi}, 
\ali{
&\Bigg[ \frac{\pa \mathrm{tr}(\bm{H}_L(\bm{\theta})^2)}{\pa \bm{w}} \Bigg]_\Psi
= 2\sum_{i=1}^I \sum_{j=1}^I 
(1+ \bm{x}_i^\top\bm{x}_j) \frac{\pa \psi_{ij}}{\pa \bm{w}} \nn \\
&= \sum_{i=1}^I \sum_{j=1}^I \bigg( \bm{h}(\bm{x}_i) \otimes (\bm{\mathcal{A}}_{ij}^\Psi + \bm{\mathcal{B}}_{ij}^\Psi) + \bm{h}(\bm{x}_j) \otimes (\bm{\mathcal{A}}_{ji}^\Psi + \bm{\mathcal{B}}_{ji}^\Psi) \bigg)\nn
}
holds.
\end{proof}

\subsubsection{Gradient of the Omega Term}
With respect to $\omega_{ij}$, the following holds.
\begin{lemma}
\ali{
\omega_{ij} = \omega_{ji}. \label{eqq_omega_change_ij}
}
\end{lemma}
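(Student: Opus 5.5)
The plan is to compute $\omega_{ij}$ explicitly from its definition in Eq.~\eqref{eq_ome_ij} and observe that the result is symmetric in $i$ and $j$.

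First, by Eq.~\eqref{eqq_o_def} we have $\bm{o}_i = \mat{s^\prime(z_i) & \delta_i}^\top$. Since $\bm{\Omega}_{ij}$ has a single nonzero entry, $(\bm{\Omega}_{ij})_{11}=1$, the quadratic form picks out only the first components of $\bm{o}_i$ and $\bm{o}_j$:
\ali{
\omega_{ij} = \bm{o}_i^\top \bm{\Omega}_{ij} \bm{o}_j = s^\prime(z_i)\, s^\prime(z_j). \nn
}
This is the same reduction already used in the proof of the lemma giving $\pa \omega_{ij}/\pa \bm{\theta}$.

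Second, real multiplication is commutative, so
\ali{
s^\prime(z_i)s^\prime(z_j) = s^\prime(z_j)s^\prime(z_i) = \bm{o}_j^\top \bm{\Omega}_{ji} \bm{o}_i = \omega_{ji}. \nn
}
The middle equality uses that $\bm{\Omega}_{ji}=\bm{\Omega}_{ij}$, since by Eq.~\eqref{eq_ome_ij} this matrix is the same constant for every index pair.

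An equivalent route is to note that $\bm{\Omega}_{ij}$ is symmetric and constant, so $\omega_{ij}=\omega_{ij}^\top=\bm{o}_j^\top\bm{\Omega}_{ij}^\top\bm{o}_i=\omega_{ji}$. This mirrors the relation $\bm{\Phi}_{ij}^\top=\bm{\Phi}_{ji}$ used earlier.

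There is no real obstacle here. The only point needing care is to state explicitly that $\bm{\Omega}_{ij}$ does not depend on $(i,j)$ and is symmetric. The lemma will then be used, like Eq.~\eqref{eqq_cb_change} for the $\Phi$ and $\Psi$ terms, to merge the $i$- and $j$-indexed halves of the $\Omega$ gradient into a single double sum. That merging is what produces the factor $2$ in Eq.~\eqref{eqq_H2_w_div_fin}.
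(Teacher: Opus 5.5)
Your proof is correct and matches the paper's argument. The paper likewise observes that $\bm{\Omega}_{ij}=\bm{\Omega}_{ji}$ is the same constant matrix, swaps $\bm{o}_i$ and $\bm{o}_j$ in the quadratic form, and notes that $\omega_{ij}=s^\prime(z_i)s^\prime(z_j)$; your explicit remark that $\bm{\Omega}_{ij}$ is also symmetric, not merely independent of $(i,j)$, fills in the transposition step the paper leaves implicit.
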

\begin{proof}

From Eq.~\eqref{eq_ome_ij}, $\bm{\Omega}_{ij} = \bm{\Omega}_{ji}$ holds. Therefore, we obtain
\ali{
\omega_{ij} = \bm{o}_{i}^\top \bm{\Omega}_{ij} \bm{o}_{j} 
= \bm{o}_{j}^\top \bm{\Omega}_{ji} \bm{o}_{i} = \omega_{ji}. \nn
}
Note that $\omega_{ij} = s^\prime(z_i) s^\prime(z_j)$ from Eq.~\eqref{eqq_o_def}.
\end{proof}

\begin{lemma}
\ali{
&\Bigg[ \frac{\pa \mathrm{tr}(\bm{H}_L(\bm{\theta})^2)}{\pa \bm{w}} \Bigg]_\Omega
= \sum_{i=1}^I \sum_{j=1}^I\nn \\
&\bigg( 
\bm{h}(\bm{x}_i) \otimes (\bm{\mathcal{A}}_{ij}^\Omega + \bm{\mathcal{B}}_{ij}^\Omega) + \bm{h}(\bm{x}_j) \otimes (\bm{\mathcal{A}}_{ji}^\Omega + \bm{\mathcal{B}}_{ji}^\Omega) \bigg). \label{eqq_w_Omega_fin}
}
\end{lemma}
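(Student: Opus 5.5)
We start from the definition of the $\Omega$ part in Eq.~\eqref{eqq_omega_tr2_div} with $\bm{\theta}^\dag=\bm{w}$. It has two double sums: one carrying $\pa\omega_{ij}/\pa\bm{w}$ and one carrying $\pa(1+\bm{f}(\bm{y}_i)^\top\bm{f}(\bm{y}_j))^2/\pa\bm{w}$. We treat them separately and then symmetrize in $(i,j)$.

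For the first sum, we use the lemma $\pa\omega_{ij}/\pa\bm{\theta}=s^{\prime\prime}$-type expansion, i.e. $\pa\omega_{ij}/\pa\bm{w}=\frac{\pa s^\prime(z_i)}{\pa\bm{w}}s^\prime(z_j)+\frac{\pa s^\prime(z_j)}{\pa\bm{w}}s^\prime(z_i)$. Substituting Eq.~\eqref{eqq_spz_w}, this becomes
\[
\bm{h}(\bm{x}_i)\otimes\bigl(s^{\prime\prime}(z_i)s^\prime(z_j)\bm{F}^\prime(\bm{y}_i)\widetilde{\bm{V}}^\top\bigr)+\bm{h}(\bm{x}_j)\otimes\bigl(s^{\prime\prime}(z_j)s^\prime(z_i)\bm{F}^\prime(\bm{y}_j)\widetilde{\bm{V}}^\top\bigr).
\]
We then multiply by the scalar $(1+\bm{f}(\bm{y}_i)^\top\bm{f}(\bm{y}_j))^2$. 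This scalar is symmetric in $i,j$, and Eq.~\eqref{eqq_kron_1} lets us move it inside the Kronecker product. The result is exactly $\bm{h}(\bm{x}_i)\otimes\bm{\mathcal{A}}^\Omega_{ij}+\bm{h}(\bm{x}_j)\otimes\bm{\mathcal{A}}^\Omega_{ji}$ by Eq.~\eqref{eqq_Aij_Ome}.

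For the second sum, we apply the chain rule:
\[
\pa(1+\bm{f}_i^\top\bm{f}_j)^2/\pa\bm{w}=2(1+\bm{f}_i^\top\bm{f}_j)\,\pa(\bm{f}(\bm{y}_i)^\top\bm{f}(\bm{y}_j))/\pa\bm{w}.
\]
The Leibniz rule Eq.~\eqref{eqq_ripniz} together with Eq.~\eqref{eqq_pfy_w} gives
\[
\pa(\bm{f}(\bm{y}_i)^\top\bm{f}(\bm{y}_j))/\pa\bm{w}=\bm{h}(\bm{x}_i)\otimes\bm{F}^\prime(\bm{y}_i)\bm{f}(\bm{y}_j)+\bm{h}(\bm{x}_j)\otimes\bm{F}^\prime(\bm{y}_j)\bm{f}(\bm{y}_i),
\]
after using Eq.~\eqref{eqq_kron_1} to pull the vectors $\bm{f}(\bm{y}_j)$ and $\bm{f}(\bm{y}_i)$ inside. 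Next we multiply by $\omega_{ij}=s^\prime(z_i)s^\prime(z_j)$, using the symmetry $\omega_{ij}=\omega_{ji}$ from Eq.~\eqref{eqq_omega_change_ij}. Comparing with Eq.~\eqref{eqq_Bij_Ome} then yields $\bm{h}(\bm{x}_i)\otimes\bm{\mathcal{B}}^\Omega_{ij}+\bm{h}(\bm{x}_j)\otimes\bm{\mathcal{B}}^\Omega_{ji}$.

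Adding the two pieces with Eq.~\eqref{eqq_kron_1_5} gives the claimed summand. The only delicate point is bookkeeping in the $(j,i)$ terms. For these we must check that every scalar factor is symmetric: $(1+\bm{f}_i^\top\bm{f}_j)$ and $s^\prime(z_i)s^\prime(z_j)$ are, and $s^{\prime\prime}(z_j)s^\prime(z_i)$ is precisely the $\bm{\mathcal{A}}^\Omega_{ji}$ coefficient. The $j$-indexed contributions therefore match the definitions with the indices swapped, rather than needing reindexing of the double sum. Nothing beyond these substitutions is required.
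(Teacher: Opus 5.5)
Your proposal is correct and follows essentially the same route as the paper. You split the $\Omega$ part of Eq.~\eqref{eqq_omega_tr2_div} into the $\pa\omega_{ij}/\pa\bm{w}$ sum, which gives the $\bm{\mathcal{A}}^\Omega$ terms via Eq.~\eqref{eqq_spz_w}, and the $\pa(1+\bm{f}(\bm{y}_i)^\top\bm{f}(\bm{y}_j))^2/\pa\bm{w}$ sum, which gives the $\bm{\mathcal{B}}^\Omega$ terms via the chain rule, Eq.~\eqref{eqq_ripniz} and Eq.~\eqref{eqq_pfy_w}; you also note, as the paper's summand reflects, that the $(j,i)$ terms match the definitions directly without reindexing.
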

\begin{proof}
From Eqs.~\eqref{eq_ome_ij} and \eqref{eqq_spz_w}, 
\ali{
\frac{\pa \omega_{ij}}{\pa \bm{w}}
&= \frac{\pa \bm{o}_{i}^\top \bm{\Omega}_{ij} \bm{o}_{j} }{\pa \bm{w}}
= \frac{\pa s^\prime(z_i) s^\prime(z_j) }{\pa \bm{w}}\nn \\
&= \frac{\pa s^\prime(z_i) }{\pa \bm{w}} s^\prime(z_j) + \frac{\pa s^\prime(z_j) }{\pa \bm{w}} s^\prime(z_i) \nn \\
&= s^{\prime\prime}(z_i) s^\prime(z_j) \bm{h}(\bm{x}_i) 
\otimes (\bm{F}^{\prime}(\bm{y}_i) \widetilde{\bm{V}}^\top) \nn \\
&+ s^{\prime\prime}(z_j)s^\prime(z_i) \bm{h}(\bm{x}_j) 
 \otimes (\bm{F}^{\prime}(\bm{y}_j) \widetilde{\bm{V}}^\top) \nn
}
holds.
From Eq.~\eqref{eqq_Aij_Ome} we obtain
\ali{
&(1+ \bm{f}(\bm{y}_i)^\top\bm{f}(\bm{y}_j))^2 \frac{\pa \omega_{ij}}{\pa \bm{w}}\nn \\
&= \bm{h}(\bm{x}_i) 
\otimes ((1+ \bm{f}(\bm{y}_i)^\top\bm{f}(\bm{y}_j))^2 s^{\prime\prime}(z_i) s^\prime(z_j) \bm{F}^{\prime}(\bm{y}_i) \widetilde{\bm{V}}^\top) \nn \\
&+  \bm{h}(\bm{x}_j) 
 \otimes ((1+ \bm{f}(\bm{y}_j)^\top\bm{f}(\bm{y}_i))^2 s^{\prime\prime}(z_j)s^\prime(z_i) \bm{F}^{\prime}(\bm{y}_j) \widetilde{\bm{V}}^\top)\nn \\ 
&=
\bm{h}(\bm{x}_i) \otimes \bm{\mathcal{A}}_{ij}^\Omega + \bm{h}(\bm{x}_j) \otimes \bm{\mathcal{A}}_{ji}^\Omega. 
\nn
}
From Eqs.~\eqref{eqq_Bij_Ome}, \eqref{eqq_ripniz}, and \eqref{eqq_pfy_w},
\ali{
&\omega_{ij} \frac{\pa (1+ \bm{f}(\bm{y}_i)^\top\bm{f}(\bm{y}_j))^2 }{\pa \bm{w}}\nn \\
&= \omega_{ij} \frac{\pa (1+ \bm{f}(\bm{y}_i)^\top\bm{f}(\bm{y}_j))^2 }{\pa (1+ \bm{f}(\bm{y}_i)^\top\bm{f}(\bm{y}_j))} 
\frac{\pa \bm{f}(\bm{y}_i)^\top\bm{f}(\bm{y}_j) }{\pa \bm{w}}\nn \\
&= 2 \omega_{ij} (1+ \bm{f}(\bm{y}_i)^\top\bm{f}(\bm{y}_j))
\bigg( \frac{\pa \bm{f}(\bm{y}_i) }{\pa \bm{w}} \bm{f}(\bm{y}_j) + \frac{\pa \bm{f}(\bm{y}_j) }{\pa \bm{w}} \bm{f}(\bm{y}_i) \bigg) \nn \\
&=  
2 \omega_{ij} (1+ \bm{f}(\bm{y}_i)^\top \bm{f}(\bm{y}_j))\nn \\
&\times  (\bm{h}(\bm{x}_i) \otimes \bm{F}^\prime(\bm{y}_i) \bm{f}(\bm{y}_j) + 
 \bm{h}(\bm{x}_j) \otimes \bm{F}^\prime(\bm{y}_j) \bm{f}(\bm{y}_i) ) \nn\\
&= \bm{h}(\bm{x}_i) \otimes \bm{\mathcal{B}}_{ij}^\Omega + \bm{h}(\bm{x}_j) \otimes \bm{\mathcal{B}}_{ji}^\Omega
\nn
}
holds.
Therefore, from Eqs.~\eqref{eqq_kron_1_5} and \eqref{eqq_omega_tr2_div}, 
\ali{
&\Bigg[ \frac{\pa \mathrm{tr}(\bm{H}_L(\bm{\theta})^2)}{\pa \bm{w}} \Bigg]_\Omega
= \sum_{i=1}^I \sum_{j=1}^I 
(1+ \bm{f}(\bm{y}_i)^\top\bm{f}(\bm{y}_j))^2 \frac{\pa \omega_{ij}}{\pa \bm{w}} \nn \\
&+ 
\sum_{i=1}^I \sum_{j=1}^I 
\omega_{ij} \frac{\pa (1+ \bm{f}(\bm{y}_i)^\top\bm{f}(\bm{y}_j))^2 }{\pa \bm{w}} \nn \\
&= 
 \sum_{i=1}^I \sum_{j=1}^I \bigg(
\bm{h}(\bm{x}_i) \otimes (\bm{\mathcal{A}}_{ij}^\Omega + \bm{\mathcal{B}}_{ij}^\Omega)+ \bm{h}(\bm{x}_j) \otimes (\bm{\mathcal{A}}_{ji}^\Omega + \bm{\mathcal{B}}_{ji}^\Omega) \bigg)\nn
}
is obtained.
\end{proof}

\subsubsection{Completion of the Proof}
From 
Eqs.~\eqref{eqq_H2_theta_div}, 
\eqref{eqq_phi_tr2_div}, 
\eqref{eqq_psi_tr2_div}, 
\eqref{eqq_omega_tr2_div}, 
\eqref{eqq_w_Phi_fin}, 
\eqref{eqq_w_Psi_fin}, and
\eqref{eqq_w_Omega_fin}, we have
\ali{
&\frac{\pa \mathrm{tr}(\bm{H}_L(\bm{\theta})^2)}{\pa \bm{w}} = \sum_{a \in \mathfrak{B}} \Bigg[ \frac{\pa \mathrm{tr}(\bm{H}_L(\bm{\theta})^2)}{\pa \bm{w}} \Bigg]_a \nn \\
&=  \sum_{i=1}^I \sum_{j=1}^I \bm{h}(\bm{x}_i) \otimes (\bm{\mathcal{A}}_{ij}^\Phi + \bm{\mathcal{A}}_{ij}^\Psi
+ \bm{\mathcal{A}}_{ij}^\Omega + \bm{\mathcal{B}}_{ij}^\Phi + \bm{\mathcal{B}}_{ij}^\Psi + \bm{\mathcal{B}}_{ij}^\Omega) \nn\\
&+  \sum_{i=1}^I \sum_{j=1}^I \bm{h}(\bm{x}_j) \otimes (\bm{\mathcal{A}}_{ji}^\Phi + \bm{\mathcal{A}}_{ji}^\Psi
+ \bm{\mathcal{A}}_{ji}^\Omega + \bm{\mathcal{B}}_{ji}^\Phi + \bm{\mathcal{B}}_{ji}^\Psi + \bm{\mathcal{B}}_{ji}^\Omega) \nn \\
&= 2 \sum_{i=1}^I \sum_{j=1}^I \bm{h}(\bm{x}_i) \otimes (\bm{\mathcal{W}}_{ij}^\Phi + \bm{\mathcal{W}}_{ij}^\Psi
+ \bm{\mathcal{W}}_{ij}^\Omega). \nn
}
Therefore, Eq.~\eqref{eqq_H2_w_div_fin} holds.

\subsection{Proof for Eq. \eqref{eqq_H2_v_div_fin}} \label{secc_H2_grad_v}
\subsubsection{Gradient with Respect to Affine Parameters from Hidden to Output Layer}
$\pa \phi_{ij}/\pa \bm{\theta}$ contains $\pa \bm{o} / \pa \bm{\theta}$. Therefore, we present the Jacobian of $\bm{o}$ with respect to $\bm{v}$ here.

\begin{lemma}
\ali{
\frac{\pa \bm{o}}{\pa \bm{v}} = \bm{h}(\bm{f}(\bm{y})) \bm{s}^{\prime\prime/\prime}(z)^\top \in \mathbb{R}^{(N+1) \times 2}. \label{eqq_o_v_div}
}
\end{lemma}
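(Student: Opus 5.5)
The plan is to mirror the proof of Eq.~\eqref{eqq_o_w_dif} for $\bm{w}$, replacing the input-layer Jacobians with their output-layer counterparts. By Eq.~\eqref{eqq_o_def}, $\bm{o} = [s^\prime(z) \ \ \delta]^\top$. In the denominator layout of Appendix~\ref{secc_layout}, the Jacobian $\pa \bm{o}/\pa \bm{v}$ is therefore the $(N+1)\times 2$ matrix whose columns are the two gradients:
\ali{
\frac{\pa \bm{o}}{\pa \bm{v}} = \mat{\frac{\pa s^\prime(z)}{\pa \bm{v}} & \frac{\pa \delta}{\pa \bm{v}}}. \nn
}

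Both columns are already available from the proof of Theorem~\ref{theoo_h1_v}. Eq.~\eqref{eqq_szp_v} gives $\pa s^\prime(z)/\pa \bm{v} = s^{\prime\prime}(z)\bm{h}(\bm{f}(\bm{y}))$. This follows from the chain rule through $z = \bm{V}\bm{h}(\bm{r})$, whose gradient with respect to $\bm{v}$ is $\bm{h}(\bm{f}(\bm{y}))$ by Eq.~(55) in~\cite{omaeWolkowiczStyan2026}. Eq.~\eqref{eqq_d_v_div} gives $\pa\delta/\pa\bm{v} = \pa p/\pa \bm{v} = s^\prime(z)\bm{h}(\bm{f}(\bm{y}))$, using that $q$ is constant.

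Substituting these two results, both columns share the common column vector $\bm{h}(\bm{f}(\bm{y}))$. The Jacobian then factors as an outer product:
\ali{
\mat{s^{\prime\prime}(z)\bm{h}(\bm{f}(\bm{y})) & s^{\prime}(z)\bm{h}(\bm{f}(\bm{y}))} = \bm{h}(\bm{f}(\bm{y}))\mat{s^{\prime\prime}(z) & s^\prime(z)}. \nn
}
By Eq.~\eqref{eqq_spp_sp}, the row factor is $\bm{s}^{\prime\prime/\prime}(z)^\top$, which yields Eq.~\eqref{eqq_o_v_div}. The dimension $(N+1)\times 2$ follows from $\bm{h}(\bm{f}(\bm{y}))\in\mathbb{R}^{N+1}$.

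No Kronecker structure is needed here, unlike for $\bm{w}$, because $\bm{v}$ enters only through $z$ linearly. The only point needing care is the layout convention: the columns must be indexed by the components of $\bm{o}$ and not transposed. I will therefore check consistency with how $\pa\bm{o}_i/\pa\bm{v}$ is multiplied by $\bm{\Phi}_{ij}\bm{o}_j$ in Eq.~\eqref{eqq_phi_theta_dif}. Beyond that check, the derivation is routine.
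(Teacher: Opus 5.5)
Your proposal is correct and matches the paper's proof. The paper likewise writes $\pa\bm{o}/\pa\bm{v}$ column-wise as the gradients of $s^\prime(z)$ and $\delta$, substitutes Eqs.~\eqref{eqq_szp_v} and \eqref{eqq_d_v_div}, factors out $\bm{h}(\bm{f}(\bm{y}))$, and identifies the remaining row vector as $\bm{s}^{\prime\prime/\prime}(z)^\top$ via Eq.~\eqref{eqq_spp_sp}.
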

\begin{proof}
From Eqs.~\eqref{eqq_spp_sp}, \eqref{eqq_szp_v}, and \eqref{eqq_d_v_div}, 
\ali{
\frac{\pa \bm{o}}{\pa \bm{v}}
&= \mat{\frac{\pa s^\prime(z)}{\pa \bm{v}} & \frac{\pa \delta}{\pa \bm{v}}} = \bm{h}(\bm{f}(\bm{y})) \mat{s^{\prime\prime}(z) & s^{\prime}(z)} \nn \\
&= \bm{h}(\bm{f}(\bm{y})) \bm{s}^{\prime\prime/\prime}(z)^\top \nn
}
holds, which is the Jacobian of $\bm{o}$ with respect to $\bm{v}$.
\end{proof}

\subsubsection{Gradient of the Phi Term}
\begin{lemma}
\ali{
\frac{\pa (\bm{\Phi}_{ij})_{11}}{\pa \bm{v}}
&= \bm{\mathfrak{e}}^{\Phi}_{ij} + \bm{\mathfrak{e}}^{\Phi}_{ji} 
\in \mathbb{R}^{N+1}.
\label{eqq_Phi_11_v_div}
}
\end{lemma}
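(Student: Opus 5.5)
The plan mirrors the proof of Eq.~\eqref{eqq_Phi_11_w_div}, except that now the dependence on $\bm{v}$ enters only through $\widetilde{\bm{V}}$, since $\bm{y}_i$ and $\bm{y}_j$ depend only on $\bm{w}$.

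First I will write the square root of the $(1,1)$ entry componentwise. From Eq.~\eqref{eqq_Phi_11} we have
\[
(\bm{\Phi}_{ij})_{11}^{1/2}=\widetilde{\bm{V}} \bm{F}^\prime(\bm{y}_j) \bm{F}^\prime(\bm{y}_i)\widetilde{\bm{V}}^\top=\sum_{n=1}^N v_n^2 f^\prime(y_{jn})f^\prime(y_{in}).
\]
(Strictly, the square root should be read as the signed quantity inside the square; the chain rule below only uses $(\bm{\Phi}_{ij})_{11}=((\bm{\Phi}_{ij})_{11}^{1/2})^2$, exactly as in Eq.~\eqref{eqq_Phi_11_w_div}.) This expression does not depend on the bias $c=v_0$, so the $0$-th component of its gradient is $0$. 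For $n\ge1$ it gives
\[
\frac{\partial (\bm{\Phi}_{ij})_{11}^{1/2}}{\partial v_n}=2v_n f^\prime(y_{in})f^\prime(y_{jn}).
\]
Stacking these components yields the vector $2\bm{V}^\top\odot\bm{f}^\prime_0(\bm{y}_j)\odot\bm{f}^\prime_0(\bm{y}_i)$. Here the prepended zero in $\bm{f}^\prime_0$ (Eq.~\eqref{eqq_bm_matvecF_k_0}) kills the entry involving $c$, which justifies writing $\bm{V}^\top$ in place of $\mat{0 & \widetilde{\bm{V}}}^\top$.

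Next I will apply the chain rule $\partial (\bm{\Phi}_{ij})_{11}/\partial\bm{v}=2(\bm{\Phi}_{ij})_{11}^{1/2}\,\partial(\bm{\Phi}_{ij})_{11}^{1/2}/\partial\bm{v}$. This gives
\[
4(\bm{\Phi}_{ij})_{11}^{1/2}\,\bm{V}^\top\odot\bm{f}^\prime_0(\bm{y}_j)\odot\bm{f}^\prime_0(\bm{y}_i).
\]
I will then split this as twice $\bm{\mathfrak{e}}^\Phi_{ij}$ from Eq.~\eqref{eqq_e_phi_ij}. To show that twice $\bm{\mathfrak{e}}^\Phi_{ij}$ equals $\bm{\mathfrak{e}}^\Phi_{ij}+\bm{\mathfrak{e}}^\Phi_{ji}$, I need $\bm{\mathfrak{e}}^\Phi_{ij}=\bm{\mathfrak{e}}^\Phi_{ji}$. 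This follows from two facts: $(\bm{\Phi}_{ij})_{11}=(\bm{\Phi}_{ji})_{11}$, already noted after Eq.~\eqref{eqq_Phi_11_w_div}, and the commutativity of the Hadamard product.

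The computation itself is routine. The only point needing care is the bookkeeping of the bias coordinate $v_0=c$, i.e., checking that $\partial\widetilde{\bm{V}}^\top/\partial\bm{v}=\mat{\bm{0}_N^\top\\ \bm{E}_N}$ (used in Eq.~\eqref{eqq_ypv_v}) is consistent with the Hadamard expression built from $\bm{V}^\top$ and the zero-padded $\bm{f}^\prime_0$. Writing the symmetric form $\bm{\mathfrak{e}}^\Phi_{ij}+\bm{\mathfrak{e}}^\Phi_{ji}$, rather than $2\bm{\mathfrak{e}}^\Phi_{ij}$, is only to match the $i\leftrightarrow j$ structure used later in assembling $\bm{K}^\Phi_{ij}$.
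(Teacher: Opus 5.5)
Your proof is correct and follows the paper's argument step for step: the componentwise expansion $(\bm{\Phi}_{ij})_{11}^{1/2}=\sum_{n=1}^N v_n^2 f^\prime(y_{jn})f^\prime(y_{in})$, the zero bias component absorbed by the zero entry of $\bm{f}^\prime_0$, the chain rule giving $4(\bm{\Phi}_{ij})_{11}^{1/2}\bm{V}^\top\odot\bm{f}^\prime_0(\bm{y}_j)\odot\bm{f}^\prime_0(\bm{y}_i)=2\bm{\mathfrak{e}}^\Phi_{ij}$, and the symmetric split into $\bm{\mathfrak{e}}^\Phi_{ij}+\bm{\mathfrak{e}}^\Phi_{ji}$. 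One small refinement: under your signed reading of the square root, $(\bm{\Phi}_{ij})_{11}=(\bm{\Phi}_{ji})_{11}$ alone does not make the signed roots equal, so appeal instead to $\widetilde{\bm{V}}\bm{F}^\prime(\bm{y}_j)\bm{F}^\prime(\bm{y}_i)\widetilde{\bm{V}}^\top=\widetilde{\bm{V}}\bm{F}^\prime(\bm{y}_i)\bm{F}^\prime(\bm{y}_j)\widetilde{\bm{V}}^\top$, which holds because diagonal matrices commute.
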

\begin{proof}
From Eq.~\eqref{eqq_Phi_11}, we have
\ali{
(\bm{\Phi}_{ij})_{11}^{1/2} = \widetilde{\bm{V}} \bm{F}^\prime(\bm{y}_j) \bm{F}^\prime(\bm{y}_i) \widetilde{\bm{V}}^\top
= \sum_{n=1}^N v_n^2 f^\prime(y_{jn}) f^\prime(y_{in}). \nn
}
Therefore, 
\ali{
\frac{\pa (\bm{\Phi}_{ij})_{11}^{1/2}}{\pa v_n}
= \begin{cases}
0, & n = 0 \\
2 v_n f^\prime(y_{jn}) f^\prime(y_{in}), & n \in \mathbb{N}_{\le N}
\end{cases}\nn
}
holds.
Using Eq.~\eqref{eqq_e_phi_ij}, we obtain
\ali{
\frac{\pa (\bm{\Phi}_{ij})_{11}}{\pa \bm{v}}
&= 
\frac{\pa (\bm{\Phi}_{ij})_{11}}{\pa (\bm{\Phi}_{ij})_{11}^{1/2}}\frac{\pa (\bm{\Phi}_{ij})_{11}^{1/2}}{\pa \bm{v}}\nn \\
&= 
4 (\bm{\Phi}_{ij})_{11}^{1/2} \bm{V}^\top \odot\bm{f}^{\prime}_0(\bm{y}_j)
\odot 
\bm{f}^{\prime}_0(\bm{y}_i) \nn \\
&= 
2 (\bm{\Phi}_{ij})_{11}^{1/2} \bm{V}^\top \odot\bm{f}^{\prime}_0(\bm{y}_j)
\odot 
\bm{f}^{\prime}_0(\bm{y}_i)\nn \\
&+
2 (\bm{\Phi}_{ji})_{11}^{1/2} \bm{V}^\top \odot\bm{f}^{\prime}_0(\bm{y}_i)
\odot 
\bm{f}^{\prime}_0(\bm{y}_j) \nn \\
&= \bm{\mathfrak{e}}^{\Phi}_{ij} + \bm{\mathfrak{e}}^{\Phi}_{ji}, \nn
}
where $(\bm{\Phi}_{ij})_{11} = (\bm{\Phi}_{ji})_{11}$ from Eq.~\eqref{eqq_Phi_11} is utilized.
\end{proof}

\begin{lemma}
\ali{
\frac{\pa (\bm{\Phi}_{ij})_{12}}{\pa \bm{v}} &= \bm{\mathfrak{f}}^{\Phi}_{ij}\in \mathbb{R}^{N+1}, \label{eqq_Phi_12_v_div} \\ 
\frac{\pa (\bm{\Phi}_{ij})_{21}}{\pa \bm{v}} &= \bm{\mathfrak{f}}^{\Phi}_{ji}\in \mathbb{R}^{N+1}. \label{eqq_Phi_21_v_div}
}
\end{lemma}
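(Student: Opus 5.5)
The plan is to compute these two gradients componentwise, exactly as in the proof of Eq.~\eqref{eqq_Phi_11_v_div}. First, I write $(\bm{\Phi}_{ij})_{12}$ from Eq.~\eqref{eqq_Phi_12} as a scalar sum. The factors $\bm{F}^\prime(\bm{y}_i)$, $\mathrm{diag}(\widetilde{\bm{V}}^\top)$, $\bm{F}^{\prime\prime}(\bm{y}_j)$ and $\bm{F}^\prime(\bm{y}_i)$ are all diagonal, so the product collapses to
\ali{
(\bm{\Phi}_{ij})_{12} = \sum_{n=1}^N v_n^3 f^\prime(y_{in})^2 f^{\prime\prime}(y_{jn}). \nn
}
This is the same expression that was already used in the proof of Eq.~\eqref{eqq_Phi_12_w_div}.

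Next, I use the fact that $\bm{y}_i$ and $\bm{y}_j$ depend only on $\bm{w}$ and not on $\bm{v}$. Hence the factors $f^\prime(y_{in})^2 f^{\prime\prime}(y_{jn})$ are constants with respect to $\bm{v}$. Differentiating with respect to each component of $\bm{v}=[c\ v_1\ \cdots\ v_N]^\top$ gives $0$ for the bias component $c$ (index $n=0$) and $3v_n^2 f^\prime(y_{in})^2 f^{\prime\prime}(y_{jn})$ for $n\in\mathbb{N}_{\le N}$.

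To assemble this into vector form, I prepend a zero, using the padded vectors $\bm{f}^\prime_0$ and $\bm{f}^{\prime\prime}_0$ of Eq.~\eqref{eqq_bm_matvecF_k_0}. Their zero $0$-th entry kills the bias coordinate automatically, so I may write $\bm{V}^{\odot 2\top}$ in place of a padded $\widetilde{\bm{V}}$ even though its $0$-th entry is $c^2$. The result is
\ali{
\frac{\pa (\bm{\Phi}_{ij})_{12}}{\pa \bm{v}} = 3\bm{V}^{\odot 2\top}\odot \bm{f}^\prime_0(\bm{y}_i)^{\odot 2}\odot \bm{f}^{\prime\prime}_0(\bm{y}_j), \nn
}
which is $\bm{\mathfrak{f}}^\Phi_{ij}$ by Eq.~\eqref{eqq_f_phi_ij}.

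For the second identity, I use the symmetry $(\bm{\Phi}_{ij})_{21}=(\bm{\Phi}_{ji})_{12}$, which follows from Eqs.~\eqref{eqq_Phi_12} and \eqref{eqq_Phi_21} and was already noted in the proof of Eq.~\eqref{eqq_Phi_21_w_div}. Swapping $i$ and $j$ in the first result then gives $\bm{\mathfrak{f}}^\Phi_{ji}$.

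The only delicate step is the bookkeeping of the $0$-th (bias) coordinate: one must check that the padding in $\bm{f}_0$ makes $\bm{V}$ and $\widetilde{\bm{V}}$ interchangeable there. Apart from that, the computation is routine.
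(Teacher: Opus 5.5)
Your proposal is correct and follows essentially the same route as the paper: expand $(\bm{\Phi}_{ij})_{12}=\sum_{n=1}^N v_n^3 f^\prime(y_{in})^2 f^{\prime\prime}(y_{jn})$, differentiate componentwise with respect to $\bm{v}$ (the bias coordinate gives zero, and $\bm{y}_i,\bm{y}_j$ do not depend on $\bm{v}$), assemble the result into $\bm{\mathfrak{f}}^\Phi_{ij}$ using the zero-padded vectors, and obtain the second identity from $(\bm{\Phi}_{ij})_{21}=(\bm{\Phi}_{ji})_{12}$. You also note explicitly that the zero $0$-th entries of $\bm{f}^\prime_0$ and $\bm{f}^{\prime\prime}_0$ cancel the $c^2$ entry of $\bm{V}^{\odot 2\top}$, a detail the paper leaves implicit.
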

\begin{proof}
From Eq.~\eqref{eqq_Phi_12}, since
\ali{
(\bm{\Phi}_{ij})_{12} &= \widetilde{\bm{V}}  \bm{F}^\prime(\bm{y}_i) \mathrm{diag}(\widetilde{\bm{V}}^\top)\bm{F}^{\prime\prime}(\bm{y}_j) \bm{F}^\prime(\bm{y}_i) \widetilde{\bm{V}}^\top \nn \\
&= \sum_{n=1}^N v_n^3 f^\prime(y_{in})^2 f^{\prime\prime}(y_{jn}), \nn
}
we obtain
\ali{
\frac{\pa (\bm{\Phi}_{ij})_{12}}{\pa v_n} =
\begin{cases}
0, & n = 0 \\
3v_n^2 f^\prime(y_{in})^2 f^{\prime\prime}(y_{jn}), & n \in \mathbb{N}_{\le N}
\end{cases}. \nn
}
Therefore, from Eq.~\eqref{eqq_f_phi_ij} we have
\ali{
\frac{\pa (\bm{\Phi}_{ij})_{12}}{\pa \bm{v}} =
3 \bm{V}^{\odot 2\top} \odot \bm{f}^{\prime}_0(\bm{y}_i)^{\odot 2} \odot \bm{f}^{\prime\prime}_0(\bm{y}_j)
= \bm{\mathfrak{f}}^{\Phi}_{ij}. \nn
}
Also, from Eqs.~\eqref{eqq_Phi_12} and \eqref{eqq_Phi_21}, 
\ali{
\frac{\pa (\bm{\Phi}_{ij})_{21}}{\pa \bm{v}} 
= \frac{\pa (\bm{\Phi}_{ji})_{12}}{\pa \bm{v}} 
= \bm{\mathfrak{f}}^{\Phi}_{ji} \nn
}
holds.
\end{proof}

\begin{lemma}
\ali{
\frac{\pa (\bm{\Phi}_{ij})_{22}}{\pa \bm{v}}
&= \bm{\mathfrak{g}}^{\Phi}_{ij} + \bm{\mathfrak{g}}^{\Phi}_{ji} \in \mathbb{R}^{N+1}. \label{eqq_Phi_22_v_div} 
}
\end{lemma}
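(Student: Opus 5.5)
We follow the same pattern as the proofs of Eqs.~\eqref{eqq_Phi_11_v_div} and \eqref{eqq_Phi_12_v_div}: rewrite $(\bm{\Phi}_{ij})_{22}$ as an explicit scalar sum over hidden units, differentiate componentwise with respect to each entry $v_n$ of $\bm{v}$, and then repackage the result using the zero-prepended vectors of Eq.~\eqref{eqq_bm_matvecF_k_0}. The first step is to use Eq.~\eqref{eqq_Phi_22} together with the diagonality of $\bm{F}^{\prime\prime}(\bm{y})$ from Eq.~\eqref{eqq_bm_matF_k}. This gives
\ali{
(\bm{\Phi}_{ij})_{22} = \widetilde{\bm{V}} \bm{F}^{\prime\prime}(\bm{y}_i) \bm{F}^{\prime\prime}(\bm{y}_j) \widetilde{\bm{V}}^\top = \sum_{n=1}^N v_n^2 f^{\prime\prime}(y_{in}) f^{\prime\prime}(y_{jn}). \nn
}
Here we note that $\bm{y}_i,\bm{y}_j$ depend only on $\bm{w}$. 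Hence the factors $f^{\prime\prime}(y_{in})f^{\prime\prime}(y_{jn})$ are constants with respect to $\bm{v}$.

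Differentiating term by term, we get $\pa(\bm{\Phi}_{ij})_{22}/\pa v_0 = 0$, because the bias $c=v_0$ does not appear. For $n\in\mathbb{N}_{\le N}$ we get $\pa(\bm{\Phi}_{ij})_{22}/\pa v_n = 2 v_n f^{\prime\prime}(y_{in}) f^{\prime\prime}(y_{jn})$. Stacking these $N+1$ entries, the zero in the $0$-th slot is produced automatically by the prepended zeros of $\bm{f}^{\prime\prime}_0(\bm{y}_i)$ and $\bm{f}^{\prime\prime}_0(\bm{y}_j)$. The first entry of $\bm{V}^\top$, which is $c$, is therefore annihilated. This yields
\ali{
\frac{\pa (\bm{\Phi}_{ij})_{22}}{\pa \bm{v}} = 2\,\bm{V}^{\top} \odot \bm{f}^{\prime\prime}_0(\bm{y}_i) \odot \bm{f}^{\prime\prime}_0(\bm{y}_j). \nn
}

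To finish, we split the factor $2$ into two identical halves. By commutativity of $\odot$, the second half equals $\bm{V}^{\top} \odot \bm{f}^{\prime\prime}_0(\bm{y}_j) \odot \bm{f}^{\prime\prime}_0(\bm{y}_i)$. By Eq.~\eqref{eqq_g_phi_ij}, the two halves are exactly $\bm{\mathfrak{g}}^{\Phi}_{ij}$ and $\bm{\mathfrak{g}}^{\Phi}_{ji}$, which gives the claim. This symmetric splitting plays the same role as the use of $(\bm{\Phi}_{ij})_{11}=(\bm{\Phi}_{ji})_{11}$ in the proof of Eq.~\eqref{eqq_Phi_11_v_div}.

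There is no real obstacle here. The only point needing care is the bookkeeping of the $0$-th (bias) component: we must check that the $\bm{f}^{\prime\prime}_0$ padding correctly gives the zero derivative with respect to $c$, even though $\bm{V}^\top$ itself has nonzero first entry $c$.
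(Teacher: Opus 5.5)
Your proposal is correct and follows the paper's proof essentially step for step. Like the paper, you expand $(\bm{\Phi}_{ij})_{22}$ as $\sum_{n=1}^N v_n^2 f^{\prime\prime}(y_{in}) f^{\prime\prime}(y_{jn})$, differentiate entrywise (zero for the bias component $n=0$), write the result as $2\bm{V}^\top \odot \bm{f}^{\prime\prime}_0(\bm{y}_i) \odot \bm{f}^{\prime\prime}_0(\bm{y}_j)$, and split the factor $2$ symmetrically into $\bm{\mathfrak{g}}^{\Phi}_{ij} + \bm{\mathfrak{g}}^{\Phi}_{ji}$; your check that the zero padding of $\bm{f}^{\prime\prime}_0$ kills the $c$ entry of $\bm{V}^\top$ is the same bookkeeping the paper's case split at $n=0$ encodes.
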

\begin{proof}
From Eq.~\eqref{eqq_Phi_22} we have
\ali{
\frac{\pa (\bm{\Phi}_{ij})_{22}}{\pa v_n} 
&= \frac{\pa \sum_{n=1}^N v_n^2 f^{\prime\prime}(y_{in}) f^{\prime\prime}(y_{jn})}{\pa v_n} \nn \\
&= 
\begin{cases}
0, & n = 0 \\
2v_n f^{\prime\prime}(y_{in}) f^{\prime\prime}(y_{jn}), & n \in \mathbb{N}_{\le N}
\end{cases}. \nn
}
Then, from this and Eq.~\eqref{eqq_g_phi_ij}, 
\ali{
&\frac{\pa (\bm{\Phi}_{ij})_{22}}{\pa \bm{v}} 
= 2 \bm{V}^{\top} \odot \bm{f}^{\prime\prime}_0(\bm{y}_{i}) \odot \bm{f}^{\prime\prime}_0(\bm{y}_{j}) \nn \\
&= \bm{V}^{\top} \odot \bm{f}^{\prime\prime}_0(\bm{y}_{i}) \odot \bm{f}^{\prime\prime}_0(\bm{y}_{j})
+ \bm{V}^{\top} \odot \bm{f}^{\prime\prime}_0(\bm{y}_{j}) \odot \bm{f}^{\prime\prime}_0(\bm{y}_{i}) \nn \\
&= \bm{\mathfrak{g}}^{\Phi}_{ij} + \bm{\mathfrak{g}}^{\Phi}_{ji} \nn
}
holds.
\end{proof}

\begin{lemma}
\ali{
\Bigg[ \frac{\pa \mathrm{tr}(\bm{H}_L(\bm{\theta})^2)}{\pa \bm{v}} \Bigg]_\Phi
&= \sum_{i=1}^I \sum_{j=1}^I
\Big( 
\bm{\mathcal{C}}_{ij}^\Phi + \bm{\mathcal{D}}_{ij}^\Phi + \bm{\mathcal{C}}_{ji}^\Phi + \bm{\mathcal{D}}_{ji}^\Phi \Big) \label{eqq_v_Phi_fin}
}
\end{lemma}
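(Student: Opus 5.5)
The plan mirrors the proof of Eq.~\eqref{eqq_w_Phi_fin}, with $\bm{w}$ replaced by $\bm{v}$ and the Kronecker factors $\bm{h}(\bm{x}_i)\otimes$ removed. The starting point is Eq.~\eqref{eqq_phi_tr2_div} with $\bm{\theta}^\dag=\bm{v}$, namely $\sum_{i,j}(1+\bm{x}_i^\top\bm{x}_j)^2\,\pa\phi_{ij}/\pa\bm{v}$. The inputs $\bm{x}$ do not depend on $\bm{v}$, so the prefactor passes through the derivative. Next I expand $\pa\phi_{ij}/\pa\bm{v}$ by Eq.~\eqref{eqq_phi_theta_dif} into three pieces: the Jacobian piece $(\pa\bm{\phi}_{ij}/\pa\bm{v})(\bm{o}_i\otimes\bm{o}_j)$, and the two $\pa\bm{o}/\pa\bm{v}$ pieces.

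\textbf{Jacobian piece.} By Eq.~\eqref{eqq_jac_phi_theta} together with Eqs.~\eqref{eqq_Phi_11_v_div}, \eqref{eqq_Phi_12_v_div}, \eqref{eqq_Phi_21_v_div} and \eqref{eqq_Phi_22_v_div}, the Jacobian is
\[
\mat{\bm{\mathfrak{e}}^\Phi_{ij}+\bm{\mathfrak{e}}^\Phi_{ji} & \bm{\mathfrak{f}}^\Phi_{ij} & \bm{\mathfrak{f}}^\Phi_{ji} & \bm{\mathfrak{g}}^\Phi_{ij}+\bm{\mathfrak{g}}^\Phi_{ji}}.
\]
I split it into an ``$ij$'' matrix $\mat{\bm{\mathfrak{e}}^\Phi_{ij} & \bm{\mathfrak{f}}^\Phi_{ij} & \bm{0} & \bm{\mathfrak{g}}^\Phi_{ij}}=\bm{K}^\Phi_{ij}$ and a ``$ji$'' matrix $\mat{\bm{\mathfrak{e}}^\Phi_{ji} & \bm{0} & \bm{\mathfrak{f}}^\Phi_{ji} & \bm{\mathfrak{g}}^\Phi_{ji}}$.

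The $ji$ matrix has its middle two columns swapped relative to $\bm{K}^\Phi_{ji}$. Exactly as in Eq.~\eqref{eqq_cb_change}, the componentwise form Eq.~\eqref{eqq_oi_oj} shows that swapping columns 2 and 3 is equivalent to replacing $\bm{o}_i\otimes\bm{o}_j$ by $\bm{o}_j\otimes\bm{o}_i$. Hence
\[
\frac{\pa\bm{\phi}_{ij}}{\pa\bm{v}}(\bm{o}_i\otimes\bm{o}_j)=\bm{K}^\Phi_{ij}(\bm{o}_i\otimes\bm{o}_j)+\bm{K}^\Phi_{ji}(\bm{o}_j\otimes\bm{o}_i).
\]
Multiplying by $(1+\bm{x}_i^\top\bm{x}_j)^2$, which is symmetric in $i,j$, gives $\bm{\mathcal{C}}^\Phi_{ij}+\bm{\mathcal{C}}^\Phi_{ji}$ by Eq.~\eqref{eqq_Cij_Phi}.

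\textbf{The $\pa\bm{o}/\pa\bm{v}$ pieces.} Using Eq.~\eqref{eqq_o_v_div},
\[
\frac{\pa\bm{o}_i}{\pa\bm{v}}\bm{\Phi}_{ij}\bm{o}_j=\bm{h}(\bm{f}(\bm{y}_i))\,\bm{s}^{\prime\prime/\prime}(z_i)^\top\bm{\Phi}_{ij}\bm{o}_j,
\]
and likewise for the $j$ term with $\bm{\Phi}_{ji}\bm{o}_i$. After the prefactor these become $\bm{\mathcal{D}}^\Phi_{ij}$ and $\bm{\mathcal{D}}^\Phi_{ji}$ by Eq.~\eqref{eqq_Dij_Phi}. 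Summing over $i,j$ yields Eq.~\eqref{eqq_v_Phi_fin}.

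\textbf{Main obstacle.} The hard part is bookkeeping, not analysis. I need to check that the column placement in $\bm{K}^\Phi_{ij}$ (zero third column) matches the split above, and that the swap identity of Eq.~\eqref{eqq_cb_change} holds verbatim for $(N+1)$-row blocks. It does, because the identity uses only the ordering of the entries of $\bm{o}_i\otimes\bm{o}_j$.

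I would also double-check the factor conventions in Eqs.~\eqref{eqq_e_phi_ij}--\eqref{eqq_g_phi_ij}, since the lemmas assert $\pa(\bm{\Phi}_{ij})_{22}/\pa\bm{v}=\bm{\mathfrak{g}}^\Phi_{ij}+\bm{\mathfrak{g}}^\Phi_{ji}$ but $\pa(\bm{\Phi}_{ij})_{12}/\pa\bm{v}=\bm{\mathfrak{f}}^\Phi_{ij}$ alone. The asymmetric assignment of the $\bm{\mathfrak{f}}$ terms into columns 2 and 3 must be consistent with that. I take those lemmas as given.
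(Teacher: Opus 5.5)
Your proposal is correct and follows essentially the same route as the paper. You expand $\pa\phi_{ij}/\pa\bm{v}$ via Eq.~\eqref{eqq_phi_theta_dif}, regroup the Jacobian term as $\bm{K}^\Phi_{ij}(\bm{o}_i\otimes\bm{o}_j)+\bm{K}^\Phi_{ji}(\bm{o}_j\otimes\bm{o}_i)$, and identify the two $\pa\bm{o}/\pa\bm{v}$ terms with $\bm{\mathcal{D}}^\Phi_{ij}$ and $\bm{\mathcal{D}}^\Phi_{ji}$ using the symmetry of $(1+\bm{x}_i^\top\bm{x}_j)^2$. The only cosmetic difference is that you get the regrouping from the column-swap identity of Eq.~\eqref{eqq_cb_change}, whereas the paper writes out the four scalar products from Eq.~\eqref{eqq_oi_oj} and regroups them by hand; your worry about where the $\bm{\mathfrak{f}}$ columns go is unfounded, since that placement is consistent.
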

\begin{proof}
From Eqs.~\eqref{eqq_K_Phi_ij}, 
\eqref{eqq_jac_phi_theta}, 
\eqref{eqq_oi_oj}, 
\eqref{eqq_Phi_11_v_div}, 
\eqref{eqq_Phi_12_v_div}, 
\eqref{eqq_Phi_21_v_div}, and 
\eqref{eqq_Phi_22_v_div}, the first term of Eq.~\eqref{eqq_phi_theta_dif} becomes
\ali{
&\frac{\pa \bm{\phi}_{ij}}{\pa \bm{v}}(\bm{o}_i \otimes \bm{o}_j) \nn \\
&= \mat{
\frac{\pa (\bm{\Phi}_{ij})_{11}}{\pa \bm{v}} &
\frac{\pa (\bm{\Phi}_{ij})_{12}}{\pa \bm{v}} &
\frac{\pa (\bm{\Phi}_{ij})_{21}}{\pa \bm{v}} &
\frac{\pa (\bm{\Phi}_{ij})_{22}}{\pa \bm{v}}}(\bm{o}_i \otimes \bm{o}_j) \nn \\
&= 
s^\prime(z_i) s^\prime(z_j) ( \bm{\mathfrak{e}}^{\Phi}_{ij} + \bm{\mathfrak{e}}^{\Phi}_{ji}) +
s^\prime(z_i) \delta_j \bm{\mathfrak{f}}^{\Phi}_{ij}  \nn \\
&+
\delta_i s^\prime(z_j) \bm{\mathfrak{f}}^{\Phi}_{ji} +
\delta_i \delta_j (\bm{\mathfrak{g}}^{\Phi}_{ij} + \bm{\mathfrak{g}}^{\Phi}_{ji})\nn \\
&= 
\big( s^\prime(z_i) s^\prime(z_j) \bm{\mathfrak{e}}^{\Phi}_{ij} +
s^\prime(z_i) \delta_j \bm{\mathfrak{f}}^{\Phi}_{ij} +
\delta_i \delta_j \bm{\mathfrak{g}}^{\Phi}_{ij} \big)\nn \\
&+ 
\big( s^\prime(z_j) s^\prime(z_i) \bm{\mathfrak{e}}^{\Phi}_{ji} +
s^\prime(z_j) \delta_i \bm{\mathfrak{f}}^{\Phi}_{ji} +
\delta_j \delta_i \bm{\mathfrak{g}}^{\Phi}_{ji} \big)\nn \\
&= \mat{\bm{\mathfrak{e}}^{\Phi}_{ij} & \bm{\mathfrak{f}}^{\Phi}_{ij} & \bm{0}_{N+1} &\bm{\mathfrak{g}}^{\Phi}_{ij}}(\bm{o}_i \otimes \bm{o}_j) \nn \\
&+ \mat{\bm{\mathfrak{e}}^{\Phi}_{ji} & \bm{\mathfrak{f}}^{\Phi}_{ji} & \bm{0}_{N+1} &\bm{\mathfrak{g}}^{\Phi}_{ji}}(\bm{o}_j \otimes \bm{o}_i) \nn \\ 
&= \bm{K}_{ij}^\Phi (\bm{o}_i \otimes \bm{o}_j) + \bm{K}_{ji}^\Phi (\bm{o}_j \otimes \bm{o}_i). \nn
} 
Also, from Eq.~\eqref{eqq_o_v_div}, the second and third terms of Eq.~\eqref{eqq_phi_theta_dif} become
\ali{
&\frac{\pa \bm{o}_i}{\pa \bm{v}} \bm{\Phi}_{ij} \bm{o}_j
+ \frac{\pa \bm{o}_{j} }{\pa \bm{v}} \bm{\Phi}_{ji} \bm{o}_{i}\nn \\
&= \bm{h}(\bm{f}(\bm{y}_i)) \bm{s}^{\prime\prime, \prime}(z_i)^\top \bm{\Phi}_{ij} \bm{o}_j
+ \bm{h}(\bm{f}(\bm{y}_j)) \bm{s}^{\prime\prime, \prime}(z_j)^\top \bm{\Phi}_{ji} \bm{o}_i. \nn
}
Finally, from Eqs.~\eqref{eqq_Cij_Phi}, \eqref{eqq_Dij_Phi}, \eqref{eqq_phi_tr2_div}, and \eqref{eqq_phi_theta_dif}, 
\ali{
\Bigg[ \frac{\pa \mathrm{tr}(\bm{H}_L(\bm{\theta})^2)}{\pa \bm{v}} \Bigg]_\Phi
&= \sum_{i=1}^I \sum_{j=1}^I 
(1+ \bm{x}_i^\top\bm{x}_j)^2 \frac{\pa \phi_{ij}}{\pa \bm{v}} \nn \\
&= \sum_{i=1}^I \sum_{j=1}^I \Big( 
\bm{\mathcal{C}}_{ij}^\Phi + \bm{\mathcal{D}}_{ij}^\Phi + 
\bm{\mathcal{C}}_{ji}^\Phi + \bm{\mathcal{D}}_{ji}^\Phi \Big)\nn
}
is obtained.
\end{proof}

\subsubsection{Gradient of the Psi Term}
\begin{lemma}
\ali{
\frac{\pa (\bm{\Psi}_{ij})_{11} }{\pa \bm{v}} 
&= \bm{\mathfrak{e}}^{\Psi}_{ij} + \bm{\mathfrak{e}}^{\Psi}_{ji} \in \mathbb{R}^{N+1}.
\label{eqq_Psi_11_v_div}
}
\end{lemma}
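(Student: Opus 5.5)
We prove the claim $\pa (\bm{\Psi}_{ij})_{11}/\pa \bm{v} = \bm{\mathfrak{e}}^{\Psi}_{ij} + \bm{\mathfrak{e}}^{\Psi}_{ji}$ in the same way as Eq.~\eqref{eqq_Psi_11_w_div}, starting from the general product-rule expansion Eq.~\eqref{eq_psi_11_theta} and specializing $\bm{\theta}^\dag = \bm{v}$. In that expansion, $(\bm{\Psi}_{ij})_{11}$ is the product of the scalar $1+\bm{f}(\bm{y}_i)^\top \bm{f}(\bm{y}_j)$ and the inner product $\widetilde{\bm{V}} \bm{F}^\prime(\bm{y}_j) \bm{F}^\prime(\bm{y}_i) \widetilde{\bm{V}}^\top$. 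Since $\bm{y}_i,\bm{y}_j$ depend only on $\bm{w}$, Eq.~\eqref{eqq_pfy_v} gives $\pa \bm{f}(\bm{y}_i)/\pa \bm{v} = \pa \bm{f}(\bm{y}_j)/\pa \bm{v} = \bm{0}_{(N+1)\times N}$, so the first bracket of Eq.~\eqref{eq_psi_11_theta} vanishes.

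Only the second bracket survives:
\ali{
\frac{\pa (\bm{\Psi}_{ij})_{11}}{\pa \bm{v}} = (1+\bm{f}(\bm{y}_i)^\top \bm{f}(\bm{y}_j)) \bigg( \frac{\pa \bm{F}^\prime(\bm{y}_j) \widetilde{\bm{V}}^\top}{\pa \bm{v}} \bm{F}^\prime(\bm{y}_i) + \frac{\pa \bm{F}^\prime(\bm{y}_i) \widetilde{\bm{V}}^\top}{\pa \bm{v}} \bm{F}^\prime(\bm{y}_j) \bigg) \widetilde{\bm{V}}^\top. \nn
}
We then substitute Eq.~\eqref{eqq_ypv_v}, namely $\pa \bm{F}^\prime(\bm{y}) \widetilde{\bm{V}}^\top / \pa \bm{v} = \bm{F}^\prime_0(\bm{y})$, applied at $\bm{y}_j$ and $\bm{y}_i$. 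This turns the two terms into $\bm{F}^\prime_0(\bm{y}_j)\bm{F}^\prime(\bm{y}_i)\widetilde{\bm{V}}^\top$ and $\bm{F}^\prime_0(\bm{y}_i)\bm{F}^\prime(\bm{y}_j)\widetilde{\bm{V}}^\top$, each multiplied by the common scalar factor.

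Matching these with Eq.~\eqref{eqq_e_psi_ij} requires the symmetry $1+\bm{f}(\bm{y}_i)^\top \bm{f}(\bm{y}_j) = 1+\bm{f}(\bm{y}_j)^\top \bm{f}(\bm{y}_i)$. With it, the $\bm{F}^\prime_0(\bm{y}_i)$ term is exactly $\bm{\mathfrak{e}}^{\Psi}_{ij}$ and the $\bm{F}^\prime_0(\bm{y}_j)$ term is $\bm{\mathfrak{e}}^{\Psi}_{ji}$. Their sum gives the statement, and the dimension $N+1$ follows because $\bm{F}^\prime_0 \in \mathbb{R}^{(N+1)\times N}$.

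The only delicate step is bookkeeping: the bias component $c$ (index $0$ of $\bm{v}$) must receive a zero derivative. This holds because $\widetilde{\bm{V}}$ excludes $c$, and it is encoded by the zero first row of $\bm{F}^\prime_0$. As a check, the $n$-th component is $2v_n f^\prime(y_{in})f^\prime(y_{jn})(1+\bm{f}(\bm{y}_i)^\top \bm{f}(\bm{y}_j))$ for $n\ge 1$ and $0$ for $n=0$, which agrees with the formula.
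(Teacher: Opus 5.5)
Your proposal is correct and follows the paper's proof essentially verbatim: substitute Eqs.~\eqref{eqq_ypv_v} and \eqref{eqq_pfy_v} into Eq.~\eqref{eq_psi_11_theta}, then use the symmetry $\bm{f}(\bm{y}_i)^\top \bm{f}(\bm{y}_j) = \bm{f}(\bm{y}_j)^\top \bm{f}(\bm{y}_i)$ to split the result into $\bm{\mathfrak{e}}^{\Psi}_{ij} + \bm{\mathfrak{e}}^{\Psi}_{ji}$. Your componentwise check (zero at the bias index $0$, and $2v_n f^\prime(y_{in})f^\prime(y_{jn})(1+\bm{f}(\bm{y}_i)^\top \bm{f}(\bm{y}_j))$ for $n\ge 1$) is a useful sanity check that the paper does not spell out.
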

\begin{proof}
Substituting Eqs.~\eqref{eqq_ypv_v} and \eqref{eqq_pfy_v} into Eq.~\eqref{eq_psi_11_theta} yields
\ali{
\frac{\pa (\bm{\Psi}_{ij})_{11} }{\pa \bm{v}} 
&=(1+\bm{f}(\bm{y}_i)^\top \bm{f}(\bm{y}_j)) \nn \\
&\times \big( 
\bm{F}^\prime_0(\bm{y}_j) \bm{F}^\prime(\bm{y}_i)  + 
\bm{F}^\prime_0(\bm{y}_i) \bm{F}^\prime(\bm{y}_j)  
\big) \widetilde{\bm{V}}^\top \nn \\
&=(1+\bm{f}(\bm{y}_i)^\top \bm{f}(\bm{y}_j)) 
\bm{F}^\prime_0(\bm{y}_i) \bm{F}^\prime(\bm{y}_j) 
\widetilde{\bm{V}}^\top  \nn \\
&+ (1+\bm{f}(\bm{y}_j)^\top \bm{f}(\bm{y}_i)) 
\bm{F}^\prime_0(\bm{y}_j) \bm{F}^\prime(\bm{y}_i) 
\widetilde{\bm{V}}^\top \nn\\
&= \bm{\mathfrak{e}}^{\Psi}_{ij} + \bm{\mathfrak{e}}^{\Psi}_{ji}, \nn
}
where Eqs.~\eqref{eqq_e_psi_ij} are utilized.
\end{proof}

\begin{lemma}
\ali{
\frac{\pa (\bm{\Psi}_{ij})_{12}}{\pa \bm{v}} 
&= \bm{\mathfrak{f}}^{\Psi}_{ij}   \in \mathbb{R}^{N+1}, \label{eqq_Psi_12_v_div} \\
\frac{\pa (\bm{\Psi}_{ij})_{21}}{\pa \bm{v}} 
&= \bm{\mathfrak{f}}^{\Psi}_{ji}  \in \mathbb{R}^{N+1}. \label{eqq_Psi_21_v_div}
}
\end{lemma}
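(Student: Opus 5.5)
We compute the gradient of $(\bm{\Psi}_{ij})_{12}$ with respect to $\bm{v}$ directly from its scalar form. This mirrors the $\bm{w}$-gradient lemma above, starting from the product-rule decomposition in Eq.~\eqref{eqq_Psi_12_theta_div}. That equation writes $(\bm{\Psi}_{ij})_{12}$ as an inner product of $\bm{F}^\prime(\bm{y}_j)\bm{f}(\bm{y}_i)$ and $\bm{F}^\prime(\bm{y}_i)\widetilde{\bm{V}}^\top$, so its gradient is
\[
\frac{\pa \bm{F}^\prime(\bm{y}_j) \bm{f}(\bm{y}_i)}{\pa \bm{v}}\bm{F}^\prime(\bm{y}_i) \widetilde{\bm{V}}^\top
+\frac{\pa \bm{F}^\prime(\bm{y}_i)  \widetilde{\bm{V}}^\top}{\pa \bm{v}}\bm{F}^\prime(\bm{y}_j) \bm{f}(\bm{y}_i).
\]
Since $\bm{y}_i,\bm{y}_j$ depend only on $\bm{w}$, the vector $\bm{F}^\prime(\bm{y}_j)\bm{f}(\bm{y}_i)$ does not depend on $\bm{v}$. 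Its Jacobian is therefore $\bm{0}_{(N+1)\times N}$, in the same way as Eq.~\eqref{eqq_pfy_v}, and the first term vanishes.

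For the second term we use Eq.~\eqref{eqq_ypv_v}, namely $\pa \bm{F}^\prime(\bm{y}_i)\widetilde{\bm{V}}^\top/\pa \bm{v} = \bm{F}^\prime_0(\bm{y}_i)$. This gives
\[
\frac{\pa (\bm{\Psi}_{ij})_{12}}{\pa \bm{v}} = \bm{F}^\prime_0(\bm{y}_i)\bm{F}^\prime(\bm{y}_j)\bm{f}(\bm{y}_i),
\]
which is exactly $\bm{\mathfrak{f}}^{\Psi}_{ij}$ in Eq.~\eqref{eqq_f_psi_ij}. As a sanity check, we can also write componentwise $(\bm{\Psi}_{ij})_{12}=\sum_{n=1}^N v_n f(y_{in})f^\prime(y_{jn})f^\prime(y_{in})$. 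Its derivative with respect to $v_n$ is $f(y_{in})f^\prime(y_{jn})f^\prime(y_{in})$ for $n\ge 1$ and $0$ for the bias $c$, which matches the zero first row of $\bm{F}^\prime_0$.

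For the second identity, we invoke Eq.~\eqref{eqq_Psi_12_21_change}, $(\bm{\Psi}_{ij})_{21}=(\bm{\Psi}_{ji})_{12}$. Applying the first identity with $i$ and $j$ swapped then yields $\pa(\bm{\Psi}_{ij})_{21}/\pa\bm{v}=\bm{\mathfrak{f}}^{\Psi}_{ji}$.

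There is no real obstacle here. The only points needing care are keeping the denominator layout consistent, so that the dimension is $N+1$ with the bias coordinate giving zero, and confirming that the index order $i,j$ in $\bm{\mathfrak{f}}^{\Psi}_{ij}$ matches the definition.
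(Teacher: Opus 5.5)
Your proof is correct and follows the paper's argument essentially step for step. It uses the product-rule decomposition in Eq.~\eqref{eqq_Psi_12_theta_div}, the vanishing Jacobian of $\bm{F}^\prime(\bm{y}_j)\bm{f}(\bm{y}_i)$ with respect to $\bm{v}$, Eq.~\eqref{eqq_ypv_v} for the surviving term, and the symmetry $(\bm{\Psi}_{ij})_{21}=(\bm{\Psi}_{ji})_{12}$ from Eq.~\eqref{eqq_Psi_12_21_change} for the second identity. Your componentwise check, which also confirms the zero bias entry, is a small addition the paper does not make.
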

\begin{proof}
Since $\bm{F}^\prime(\bm{y}_j) \bm{f}(\bm{y}_i)$ does not depend on $\bm{v}$, its Jacobian becomes
\ali{
\frac{\pa \bm{F}^\prime(\bm{y}_j) \bm{f}(\bm{y}_i)}{\pa \bm{v}} = \bm{0}_{(N+1) \times N}. \nn
}
From Eqs.~\eqref{eqq_f_psi_ij}, \eqref{eqq_ypv_v}, and \eqref{eqq_Psi_12_theta_div}, we have
\ali{
\frac{\pa (\bm{\Psi}_{ij})_{12}}{\pa \bm{v}} 
& 
=\bm{0}_{(N+1) \times N}\bm{F}^\prime(\bm{y}_i) \widetilde{\bm{V}}^\top 
+\bm{F}^\prime_0(\bm{y}_i) \bm{F}^\prime(\bm{y}_j) \bm{f}(\bm{y}_i)\nn \\
&=\bm{F}^\prime_0(\bm{y}_i) \bm{F}^\prime(\bm{y}_j) \bm{f}(\bm{y}_i)
= \bm{\mathfrak{f}}^{\Psi}_{ij}. \nn
}
Using Eq.~\eqref{eqq_Psi_12_21_change}, 
\ali{
\frac{\pa (\bm{\Psi}_{ij})_{21}}{\pa \bm{v}} 
= \frac{\pa (\bm{\Psi}_{ji})_{12}}{\pa \bm{v}} 
= \bm{\mathfrak{f}}^{\Psi}_{ji}\nn
}
is obtained.
\end{proof}

\begin{lemma}
\ali{
\frac{\pa (\bm{\Psi}_{ij})_{22} }{\pa \bm{v}} &= \bm{0}_{N+1}. \label{eqq_Psi_22_v_div}
}
\end{lemma}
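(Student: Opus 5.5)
The plan is to start from the definition in Eq.~\eqref{eqq_Psi_22}, which gives $(\bm{\Psi}_{ij})_{22} = \bm{f}^\prime(\bm{y}_i)^\top \bm{f}^\prime(\bm{y}_j)$, and to apply the Leibniz rule Eq.~\eqref{eqq_ripniz} for inner products exactly as in the proof of Eq.~\eqref{eqq_Psi_22_w_div}, but now with respect to $\bm{v}$:
\ali{
\frac{\pa (\bm{\Psi}_{ij})_{22} }{\pa \bm{v}}
= \frac{\pa \bm{f}^\prime(\bm{y}_i) }{\pa \bm{v}} \bm{f}^\prime(\bm{y}_j)
+ \frac{\pa \bm{f}^\prime(\bm{y}_j) }{\pa \bm{v}} \bm{f}^\prime(\bm{y}_i). \nn
}

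The key observation is that $\bm{y}_i = \bm{W}\bm{h}(\bm{x}_i)$ depends only on $\bm{w}$, so every elementwise function of $\bm{y}_i$ is independent of $\bm{v}$. This is the same reasoning already used for Eq.~\eqref{eqq_pfy_v}, where $\pa \bm{f}(\bm{y})/\pa \bm{v} = \bm{0}_{(N+1)\times N}$, and in the proof of Eq.~\eqref{eqq_dvfpp_v_div_fin} for $\bm{f}^{\prime\prime}$. It gives $\pa \bm{f}^\prime(\bm{y}_i)/\pa \bm{v} = \pa \bm{f}^\prime(\bm{y}_j)/\pa \bm{v} = \bm{0}_{(N+1)\times N}$.

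Substituting these zero Jacobians into the expansion above, both terms vanish, and the result is the zero vector $\bm{0}_{N+1}$. The dimension matches because an $(N+1)\times N$ matrix is multiplied by an $N$-vector.

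There is no real obstacle here. The only point to state explicitly is the $\bm{v}$-independence of $\bm{f}^\prime(\bm{y})$. It follows from the forward model, since $\bm{v}$ enters only through $z = \bm{V}\bm{h}(\bm{r})$, downstream of $\bm{y}$.
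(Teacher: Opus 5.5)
Your proof is correct and rests on the same observation as the paper's: $(\bm{\Psi}_{ij})_{22} = \bm{f}^\prime(\bm{y}_i)^\top \bm{f}^\prime(\bm{y}_j)$ depends on the parameters only through $\bm{y}_i$ and $\bm{y}_j$, which are functions of $\bm{w}$ alone. The paper states this $\bm{v}$-independence directly, so your Leibniz-rule expansion is a harmless but unnecessary intermediate step.
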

\begin{proof}
As it can be seen from Eq.~\eqref{eqq_Psi_22}, $(\bm{\Psi}_{ij})_{22}$ does not depend on $\bm{v}$.
\end{proof}

\begin{lemma}
\ali{
\Bigg[ \frac{\pa \mathrm{tr}(\bm{H}_L(\bm{\theta})^2)}{\pa \bm{v}} \Bigg]_\Psi
&= \sum_{i=1}^I \sum_{j=1}^I
\Big( 
\bm{\mathcal{C}}_{ij}^\Psi + \bm{\mathcal{D}}_{ij}^\Psi + \bm{\mathcal{C}}_{ji}^\Psi + \bm{\mathcal{D}}_{ji}^\Psi \Big). \label{eqq_v_Psi_fin}
}
\end{lemma}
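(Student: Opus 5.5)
We mirror the proof of the $\Phi$-term lemma in Eq.~\eqref{eqq_v_Phi_fin}, now using Eq.~\eqref{eqq_psi_theta_dif} with $\bm{\theta}$ replaced by $\bm{v}$. By Eq.~\eqref{eqq_psi_tr2_div},
\ali{
\Bigg[ \frac{\pa \mathrm{tr}(\bm{H}_L(\bm{\theta})^2)}{\pa \bm{v}} \Bigg]_\Psi = 2\sum_{i=1}^I \sum_{j=1}^I (1+ \bm{x}_i^\top\bm{x}_j) \frac{\pa \psi_{ij} }{\pa \bm{v}}, \nn
}
so it suffices to split $\pa \psi_{ij}/\pa \bm{v}$ into a Jacobian term $\frac{\pa \bm{\psi}_{ij}}{\pa \bm{v}}(\bm{o}_i \otimes \bm{o}_j)$ and the two terms carrying $\pa \bm{o}/\pa \bm{v}$. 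The prefactor $1+\bm{x}_i^\top\bm{x}_j$ is symmetric in $i,j$, so it can be attached to either the $(i,j)$ or $(j,i)$ piece.

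\textbf{Jacobian term.} Assemble the Jacobian \eqref{eqq_jac_psi_theta} from Eqs.~\eqref{eqq_Psi_11_v_div}, \eqref{eqq_Psi_12_v_div}, \eqref{eqq_Psi_21_v_div} and \eqref{eqq_Psi_22_v_div}. Its columns are $\bm{\mathfrak{e}}^\Psi_{ij}+\bm{\mathfrak{e}}^\Psi_{ji}$, $\bm{\mathfrak{f}}^\Psi_{ij}$, $\bm{\mathfrak{f}}^\Psi_{ji}$ and $\bm{0}_{N+1}$. Multiply by $\bm{o}_i\otimes\bm{o}_j$ written out as in Eq.~\eqref{eqq_oi_oj}, which gives
\ali{
s^\prime(z_i)s^\prime(z_j)(\bm{\mathfrak{e}}^\Psi_{ij}+\bm{\mathfrak{e}}^\Psi_{ji}) + s^\prime(z_i)\delta_j\bm{\mathfrak{f}}^\Psi_{ij} + \delta_i s^\prime(z_j)\bm{\mathfrak{f}}^\Psi_{ji}. \nn
}
Regroup these into $ij$-indexed and $ji$-indexed parts. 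Since the first and second entries of $\bm{o}_j\otimes\bm{o}_i$ are $s^\prime(z_j)s^\prime(z_i)$ and $s^\prime(z_j)\delta_i$, the result is $\bm{K}^\Psi_{ij}(\bm{o}_i\otimes\bm{o}_j)+\bm{K}^\Psi_{ji}(\bm{o}_j\otimes\bm{o}_i)$, with $\bm{K}^\Psi$ as in Eq.~\eqref{eqq_K_Psi_ij}. The zero third and fourth columns of $\bm{K}^\Psi$ account for the missing $\delta_i s^\prime(z_j)$ and $\delta_i\delta_j$ slots. This bookkeeping, that each cross term lands in the correct slot of $\bm{K}^\Psi_{ij}$ or $\bm{K}^\Psi_{ji}$, is the only delicate point, exactly as in the $\Phi$ case.

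\textbf{Terms carrying $\pa \bm{o}/\pa \bm{v}$.} By Eq.~\eqref{eqq_o_v_div}, the remaining two terms equal
\ali{
\bm{h}(\bm{f}(\bm{y}_i))\bm{s}^{\prime\prime/\prime}(z_i)^\top\bm{\Psi}_{ij}\bm{o}_j + \bm{h}(\bm{f}(\bm{y}_j))\bm{s}^{\prime\prime/\prime}(z_j)^\top\bm{\Psi}_{ji}\bm{o}_i. \nn
}

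\textbf{Assembly.} Multiply both groups by $2(1+\bm{x}_i^\top\bm{x}_j)=2(1+\bm{x}_j^\top\bm{x}_i)$ and compare with Eqs.~\eqref{eqq_Cij_Psi} and \eqref{eqq_Dij_Psi}. The $ij$-indexed pieces give $\bm{\mathcal{C}}^\Psi_{ij}+\bm{\mathcal{D}}^\Psi_{ij}$ and the $ji$-indexed pieces give $\bm{\mathcal{C}}^\Psi_{ji}+\bm{\mathcal{D}}^\Psi_{ji}$. Summing over $i,j$ yields Eq.~\eqref{eqq_v_Psi_fin}.
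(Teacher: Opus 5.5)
Your proposal is correct and follows essentially the same route as the paper. Both proofs build the Jacobian term from the four entry-wise gradients and regroup it as $\bm{K}^\Psi_{ij}(\bm{o}_i\otimes\bm{o}_j)+\bm{K}^\Psi_{ji}(\bm{o}_j\otimes\bm{o}_i)$, add the two $\pa\bm{o}/\pa\bm{v}$ terms, and match against $\bm{\mathcal{C}}^\Psi$ and $\bm{\mathcal{D}}^\Psi$ using the symmetry of $1+\bm{x}_i^\top\bm{x}_j$; you state that symmetry explicitly, whereas the paper uses it without comment.
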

\begin{proof}
From Eqs.~\eqref{eqq_K_Psi_ij}, \eqref{eqq_jac_psi_theta}, \eqref{eqq_oi_oj}, \eqref{eqq_Psi_11_v_div}, \eqref{eqq_Psi_12_v_div}, \eqref{eqq_Psi_21_v_div}, and \eqref{eqq_Psi_22_v_div}, the first term of Eq.~\eqref{eqq_psi_theta_dif} becomes
\ali{
&\frac{\pa \bm{\psi}_{ij}}{\pa \bm{v}}(\bm{o}_i \otimes \bm{o}_j) \nn \\
&= \mat{
\frac{\pa (\bm{\Psi}_{ij})_{11}}{\pa \bm{v}} &
\frac{\pa (\bm{\Psi}_{ij})_{12}}{\pa \bm{v}} &
\frac{\pa (\bm{\Psi}_{ij})_{21}}{\pa \bm{v}} &
\frac{\pa (\bm{\Psi}_{ij})_{22}}{\pa \bm{v}}}(\bm{o}_i \otimes \bm{o}_j) \nn \\
&=
s^\prime(z_i) s^\prime(z_j) ( \bm{\mathfrak{e}}^{\Psi}_{ij} + \bm{\mathfrak{e}}^{\Psi}_{ji}) +
s^\prime(z_i) \delta_j \bm{\mathfrak{f}}^{\Psi}_{ij} +
\delta_i s^\prime(z_j) \bm{\mathfrak{f}}^{\Psi}_{ji} )\nn \\
&= 
\big( s^\prime(z_i) s^\prime(z_j) \bm{\mathfrak{e}}^{\Psi}_{ij} +
s^\prime(z_i) \delta_j \bm{\mathfrak{f}}^{\Psi}_{ij}\big)\nn \\
&+ 
\big( s^\prime(z_j) s^\prime(z_i) \bm{\mathfrak{e}}^{\Psi}_{ji} +
s^\prime(z_j) \delta_i \bm{\mathfrak{f}}^{\Psi}_{ji} \big) \nn\\
&= \mat{\bm{\mathfrak{e}}^{\Psi}_{ij} & \bm{\mathfrak{f}}^{\Psi}_{ij} & \bm{0}_{N+1} &\bm{0}_{N+1}}(\bm{o}_i \otimes \bm{o}_j)  \nn\\
&+ \mat{\bm{\mathfrak{e}}^{\Psi}_{ji} & \bm{\mathfrak{f}}^{\Psi}_{ji} & \bm{0}_{N+1} &\bm{0}_{N+1}}(\bm{o}_j \otimes \bm{o}_i)\nn \\
&= \bm{K}_{ij}^\Psi (\bm{o}_i \otimes \bm{o}_j) + \bm{K}_{ji}^\Psi (\bm{o}_j \otimes \bm{o}_i). \nn
}
Also, from Eq.~\eqref{eqq_o_v_div}, the second and third terms of Eq.~\eqref{eqq_psi_theta_dif} become
\ali{
&\frac{\pa \bm{o}_i}{\pa \bm{v}} \bm{\Psi}_{ij} \bm{o}_j
+ \frac{\pa \bm{o}_{j} }{\pa \bm{v}} \bm{\Psi}_{ji} \bm{o}_{i}\nn \\
&= \bm{h}(\bm{f}(\bm{y}_i)) \bm{s}^{\prime\prime, \prime}(z_i)^\top \bm{\Psi}_{ij} \bm{o}_j
+ \bm{h}(\bm{f}(\bm{y}_j)) \bm{s}^{\prime\prime, \prime}(z_j)^\top \bm{\Psi}_{ji} \bm{o}_i. \nn
}
From Eqs.~\eqref{eqq_Cij_Psi}, \eqref{eqq_Dij_Psi}, \eqref{eqq_psi_tr2_div}, and \eqref{eqq_psi_theta_dif}, 
\ali{
\Bigg[ \frac{\pa \mathrm{tr}(\bm{H}_L(\bm{\theta})^2)}{\pa \bm{v}} \Bigg]_\Psi
&= 2\sum_{i=1}^I \sum_{j=1}^I 
(1+ \bm{x}_i^\top\bm{x}_j) \frac{\pa \psi_{ij}}{\pa \bm{v}} \nn \\
&= \sum_{i=1}^I \sum_{j=1}^I \Big( 
\bm{\mathcal{C}}_{ij}^\Psi + \bm{\mathcal{D}}_{ij}^\Psi + 
\bm{\mathcal{C}}_{ji}^\Psi + \bm{\mathcal{D}}_{ji}^\Psi \Big) \nn
}
is obtained.
\end{proof}

\subsubsection{Gradient of the Omega Term}
\begin{lemma}
\ali{
\Bigg[ \frac{\pa \mathrm{tr}(\bm{H}_L(\bm{\theta})^2)}{\pa \bm{v}} \Bigg]_\Omega
&= \sum_{i=1}^I \sum_{j=1}^I
\Big( \bm{\mathcal{C}}_{ij}^\Omega +\bm{\mathcal{D}}_{ij}^\Omega
+ \bm{\mathcal{C}}_{ji}^\Omega + \bm{\mathcal{D}}_{ji}^\Omega \Big). \label{eqq_v_Omega_fin}
}
\end{lemma}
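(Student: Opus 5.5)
The plan is to work directly from the definition of the Omega block in Eq.~\eqref{eqq_omega_tr2_div} with $\bm{\theta}^\dag = \bm{v}$. That block has two double sums.

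The first sum contains $\pa \omega_{ij}/\pa \bm{v}$. By the lemma on $\pa\omega_{ij}/\pa\bm{\theta}$ together with Eq.~\eqref{eqq_szp_v}, it becomes
\ali{
\frac{\pa \omega_{ij}}{\pa \bm{v}} = s^{\prime\prime}(z_i) s^\prime(z_j) \bm{h}(\bm{f}(\bm{y}_i)) + s^{\prime\prime}(z_j) s^\prime(z_i) \bm{h}(\bm{f}(\bm{y}_j)). \nn
}
Multiplying by $(1+\bm{f}(\bm{y}_i)^\top\bm{f}(\bm{y}_j))^2$ gives exactly the two terms in Eq.~\eqref{eqq_Cij_Omega}. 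Because the inner product $\bm{f}(\bm{y}_i)^\top\bm{f}(\bm{y}_j)$ is symmetric in $i,j$, the second term is $\bm{\mathcal{C}}_{ji}^\Omega$. The first sum therefore equals $\sum_{i,j}(\bm{\mathcal{C}}_{ij}^\Omega + \bm{\mathcal{C}}_{ji}^\Omega)$.

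The second sum contains $\omega_{ij}\, \pa (1+\bm{f}(\bm{y}_i)^\top\bm{f}(\bm{y}_j))^2/\pa \bm{v}$. Apply the chain rule and the Leibniz rule Eq.~\eqref{eqq_ripniz}; this leaves terms with $\pa \bm{f}(\bm{y}_i)/\pa \bm{v}$ and $\pa \bm{f}(\bm{y}_j)/\pa \bm{v}$. Since $\bm{y} = \bm{W}\bm{h}(\bm{x})$ does not involve $\bm{v}$, both Jacobians vanish by Eq.~\eqref{eqq_pfy_v}. The whole second sum is therefore zero, and we may write it as $\sum_{i,j}(\bm{\mathcal{D}}_{ij}^\Omega + \bm{\mathcal{D}}_{ji}^\Omega)$ with $\bm{\mathcal{D}}^\Omega_{ij} = \bm{0}_{N+1}$, as in Eq.~\eqref{eqq_Dij_Omega}.

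Adding the two pieces gives Eq.~\eqref{eqq_v_Omega_fin}. No step is really hard. The only point to watch is the bookkeeping in the first sum: the second summand of $\pa\omega_{ij}/\pa\bm{v}$ must be identified with $\bm{\mathcal{C}}_{ji}^\Omega$, not $\bm{\mathcal{C}}_{ij}^\Omega$. This relies on the symmetry of $(1+\bm{f}(\bm{y}_i)^\top\bm{f}(\bm{y}_j))^2$ under $i\leftrightarrow j$ and on the symmetry $\omega_{ij}=\omega_{ji}$ from Eq.~\eqref{eqq_omega_change_ij}.
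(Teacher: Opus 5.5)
Your proposal is correct and follows the paper's proof essentially step for step. You differentiate $\omega_{ij}=s^\prime(z_i)s^\prime(z_j)$ via Eq.~\eqref{eqq_szp_v} and multiply by $(1+\bm{f}(\bm{y}_i)^\top\bm{f}(\bm{y}_j))^2$ to get $\bm{\mathcal{C}}_{ij}^\Omega+\bm{\mathcal{C}}_{ji}^\Omega$; the second sum then vanishes because $\bm{f}(\bm{y})$ does not depend on $\bm{v}$. One small remark: identifying the second summand with $\bm{\mathcal{C}}_{ji}^\Omega$ needs only the symmetry of $\bm{f}(\bm{y}_i)^\top\bm{f}(\bm{y}_j)$, whereas the paper invokes $\omega_{ij}=\omega_{ji}$ only for the trivially zero $\bm{\mathcal{D}}^\Omega$ part.
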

\begin{proof}
From Eqs.~\eqref{eq_ome_ij} and \eqref{eqq_szp_v}, we have
\ali{
\frac{\pa \omega_{ij} }{\pa \bm{v}} 
&= \frac{\pa \bm{o}_{i}^\top \bm{\Omega}_{ij} \bm{o}_{j} }{\pa \bm{v}}
= \frac{\pa s^\prime(z_i) s^\prime(z_j) }{\pa \bm{v}}\nn \\
&= \frac{\pa s^\prime(z_i) }{\pa \bm{v}} s^\prime(z_j)
+\frac{\pa s^\prime(z_j) }{\pa \bm{v}} s^\prime(z_i)\nn \\
&=s^{\prime\prime}(z_i) s^\prime(z_j) \bm{h}(\bm{f}(\bm{y}_i))
+ s^{\prime\prime}(z_j) s^\prime(z_i) \bm{h}(\bm{f}(\bm{y}_j)). \nn
}
From this and Eq.~\eqref{eqq_Cij_Omega}, we obtain
\ali{
&(1+ \bm{f}(\bm{y}_i)^\top\bm{f}(\bm{y}_j))^2 \frac{\pa \omega_{ij} }{\pa \bm{v}}\nn \\
&= s^{\prime\prime}(z_i)s^\prime(z_j) (1+ \bm{f}(\bm{y}_i)^\top\bm{f}(\bm{y}_j))^2 \bm{h}(\bm{f}(\bm{y}_i)) \nn \\
&+ s^{\prime\prime}(z_j)s^\prime(z_i) (1+ \bm{f}(\bm{y}_j)^\top\bm{f}(\bm{y}_i))^2 \bm{h}(\bm{f}(\bm{y}_j))\nn \\
&=\bm{\mathcal{C}}_{ij}^\Omega + \bm{\mathcal{C}}_{ji}^\Omega. \nn
}
Using the fact that $\bm{f}(\bm{y})$ does not depend on $\bm{v}$ and Eq.~\eqref{eqq_Dij_Omega},
\ali{
\omega_{ij} \frac{\pa (1+ \bm{f}(\bm{y}_i)^\top\bm{f}(\bm{y}_j))^2 }{\pa \bm{v}} 
&= \omega_{ij} \bm{0}_{N+1} + \omega_{ji} \bm{0}_{N+1} \nn \\
&= \bm{\mathcal{D}}_{ij}^\Omega + \bm{\mathcal{D}}_{ji}^\Omega\nn
}
holds, where Eq.~\eqref{eqq_omega_change_ij} is utilized.
From these and Eq.~\eqref{eqq_omega_tr2_div}, 
\ali{
&\Bigg[ \frac{\pa \mathrm{tr}(\bm{H}_L(\bm{\theta})^2)}{\pa \bm{v}} \Bigg]_\Omega
= \sum_{i=1}^I \sum_{j=1}^I \nn \\
&\bigg(
(1+ \bm{f}(\bm{y}_i)^\top\bm{f}(\bm{y}_j))^2 \frac{\pa \omega_{ij}}{\pa \bm{v}} + 
\omega_{ij} \frac{\pa (1+ \bm{f}(\bm{y}_i)^\top\bm{f}(\bm{y}_j))^2 }{\pa \bm{v}} 
\bigg) \nn \\
&= 
\sum_{i=1}^I \sum_{j=1}^I 
\Big( \bm{\mathcal{C}}_{ij}^\Omega +\bm{\mathcal{D}}_{ij}^\Omega
+ \bm{\mathcal{C}}_{ji}^\Omega + \bm{\mathcal{D}}_{ji}^\Omega \Big) \nn
}
is obtained.
\end{proof}

\subsubsection{Completion of the Proof}
From Eqs.~\eqref{eqq_H2_theta_div}, 
\eqref{eqq_phi_tr2_div}, 
\eqref{eqq_psi_tr2_div}, 
\eqref{eqq_omega_tr2_div}, 
\eqref{eqq_v_Phi_fin}, 
\eqref{eqq_v_Psi_fin}, 
\eqref{eqq_v_Omega_fin}, we have
\ali{
&\frac{\pa \mathrm{tr}(\bm{H}_L(\bm{\theta})^2)}{\pa \bm{v}}
= \sum_{a \in \mathfrak{B}} \Bigg[ \frac{\pa \mathrm{tr}(\bm{H}_L(\bm{\theta})^2)}{\pa \bm{v}} \Bigg]_a \nn \\
&=  \sum_{i=1}^I \sum_{j=1}^I (\bm{\mathcal{C}}_{ij}^\Phi + \bm{\mathcal{C}}_{ij}^\Psi
+ \bm{\mathcal{C}}_{ij}^\Omega + \bm{\mathcal{D}}_{ij}^\Phi + \bm{\mathcal{D}}_{ij}^\Psi + \bm{\mathcal{D}}_{ij}^\Omega) \nn \\
&+  \sum_{i=1}^I \sum_{j=1}^I (\bm{\mathcal{C}}_{ji}^\Phi + \bm{\mathcal{C}}_{ji}^\Psi
+ \bm{\mathcal{C}}_{ji}^\Omega + \bm{\mathcal{D}}_{ji}^\Phi + \bm{\mathcal{D}}_{ji}^\Psi + \bm{\mathcal{D}}_{ji}^\Omega) \nn \\
&= 2 \sum_{i=1}^I \sum_{j=1}^I (\bm{\mathcal{V}}_{ij}^\Phi + \bm{\mathcal{V}}_{ij}^\Psi
+ \bm{\mathcal{V}}_{ij}^\Omega). \nn
}
Therefore, Eq.~\eqref{eqq_H2_v_div_fin} holds.

\subsection{Proof for Eq. \eqref{eqq_data_i_conv} }\label{secc_data_i_conv}
Each term in the gradient of $\mathrm{tr}(\bm{H}_L(\bm{\theta}))$ becomes
\ali{
&p_i \rightarrow q_i  \nn \\
&\Rightarrow (s(z_i) \rightarrow 0 \lor 1) \land (\delta_i \rightarrow 0) \Rightarrow s^{\prime}(z_i) \rightarrow 0 \land s^{\prime\prime}(z_i) \rightarrow 0 \nn \\
&\Rightarrow 
\bigg( \bm{h}(\bm{x}_i) \otimes \sum_{a \in \mathfrak{A}} \bm{\mathcal{W}}_{i}^a  \rightarrow \bm{0}_{(M+1)N} \bigg)
\land
\bigg( \sum_{a \in \mathfrak{A}} \bm{\mathcal{V}}_{i}^a \rightarrow \bm{0}_{N+1} \bigg). \nn
}
Similarly, each term in the gradient of $\mathrm{tr}(\bm{H}_L(\bm{\theta})^2)$ becomes
\ali{
p_i \rightarrow q_i 
&\Rightarrow (s(z_i) \rightarrow 0 \lor 1) \land (\delta_i \rightarrow 0) \nn \\
&\Rightarrow (s^{\prime}(z_i) \rightarrow 0) \land (s^{\prime\prime}(z_i) \rightarrow 0 ) \nn \\
&\Rightarrow (\bm{o}_i \otimes \bm{o}_j \rightarrow \bm{0}_4) \land (\bm{s}^{\prime\prime/\prime}(z) \rightarrow \bm{0}_2) \nn \\
&\Rightarrow (\bm{\mathcal{A}}_{ij}^a \rightarrow \bm{0}_N) 
\land (\bm{\mathcal{B}}_{ij}^a \rightarrow \bm{0}_N)
\land (\bm{\mathcal{C}}_{ij}^a \rightarrow \bm{0}_{N+1}) \nn \\ 
&\land (\bm{\mathcal{D}}_{ij}^b \rightarrow \bm{0}_{N+1}), \forall a \in \mathfrak{B}, \forall b \in \mathfrak{B}\setminus\{\Omega\} \nn \\
&\Rightarrow \bigg( \bm{h}(\bm{x}_i) \otimes  \sum_{a \in \mathfrak{B}}
\bm{\mathcal{W}}_{ij}^a
 \rightarrow \bm{0}_{(M+1)N} \bigg) \nn \\ 
& \land
\bigg( \sum_{a \in \mathfrak{B}}
\bm{\mathcal{V}}_{ij}^a
 \rightarrow \bm{0}_{N+1} \bigg). \nn
}
Therefore, Eq.~\eqref{eqq_data_i_conv} holds.

\subsection{Proof for Eq.~\eqref{eqq_inf_sup_rel}}\label{secc_pr_sup_inf}
First, we focus on the upper bound. 
From Eq.~\eqref{eq_main_theorem}, since
\ali{
&\Big( \sigma(\bm{\theta}^{(t+1)}) < \sigma(\bm{\theta}^{(t)}) \Big) \land 
\Big( \mu(\bm{\theta}^{(t+1)}) = \mu(\bm{\theta}^{(t)}) \Big)
\Rightarrow \nn \\
&\lambda_{\mathrm{sup}}(\bm{\theta}^{(t+1)}) 
= \mu(\bm{\theta}^{(t+1)}) + \sqrt{D-1} \sigma(\bm{\theta}^{(t+1)}) \nn \\
&< \mu(\bm{\theta}^{(t+1)}) + \sqrt{D-1} \sigma(\bm{\theta}^{(t)}) \nn \\
&= \mu(\bm{\theta}^{(t)}) + \sqrt{D-1} \sigma(\bm{\theta}^{(t)})
= \lambda_{\mathrm{sup}}(\bm{\theta}^{(t)}) \nn
}
holds, we obtain $\lambda_{\mathrm{sup}}(\bm{\theta}^{(t+1)}) < \lambda_{\mathrm{sup}}(\bm{\theta}^{(t)})$.
Next, we focus on the lower bound. 
From Eq.~\eqref{eqq_lambda_inf}, since
\ali{
&\Big( \sigma(\bm{\theta}^{(t+1)}) < \sigma(\bm{\theta}^{(t)}) \Big) \land 
\Big( \mu(\bm{\theta}^{(t+1)}) = \mu(\bm{\theta}^{(t)}) \Big)
\Rightarrow \nn \\
&\lambda_{\mathrm{inf}}(\bm{\theta}^{(t+1)}) = \mu(\bm{\theta}^{(t+1)}) - \sqrt{D-1} \sigma(\bm{\theta}^{(t+1)}) \nn \\
&> \mu(\bm{\theta}^{(t+1)}) - \sqrt{D-1} \sigma(\bm{\theta}^{(t)})\nn \\
&= \mu(\bm{\theta}^{(t)}) - \sqrt{D-1} \sigma(\bm{\theta}^{(t)})
= \lambda_{\mathrm{inf}}(\bm{\theta}^{(t)}) \nn
}
holds, we obtain $\lambda_{\mathrm{inf}}(\bm{\theta}^{(t+1)}) > \lambda_{\mathrm{inf}}(\bm{\theta}^{(t)})$.


\bibliographystyle{elsarticle-num}
\bibliography{refs}


\end{document}